%% file: main.tex
\documentclass{article} 
\usepackage{iclr2026_arxiv,times}

\usepackage{microtype}
\usepackage{graphicx}
\usepackage{subfig}
\usepackage{booktabs} 

\input{math_commands}

\usepackage{amsmath}
\usepackage{amssymb}
\usepackage{mathtools}
\usepackage{amsthm}
\usepackage{wrapfig}
\usepackage[utf8]{inputenc} 
\usepackage[T1]{fontenc}    
\usepackage{amsfonts}       
\usepackage{nicefrac}       
\usepackage{xcolor}
\usepackage{tikz}
\usetikzlibrary{shapes, arrows.meta, positioning, decorations.markings}
\usepackage{subcaption,siunitx}
\usepackage[colorlinks=true,
            linkcolor=red,
            citecolor=blue1,
            filecolor=blue1,
            urlcolor=blue1]{hyperref}
\usepackage{url}
\usepackage{MnSymbol}
\usepackage{tcolorbox}

\usepackage{dsfont}
\usepackage{enumitem}
\usepackage{longtable}
\usepackage{makecell}
\usepackage{multirow}

\usepackage[capitalize,noabbrev]{cleveref}

\newtheorem{theorem}{Theorem}
\newtheorem{lemma}[theorem]{Lemma}

\newtheorem{definition}{Definition}
\newtheorem{assumption}{Assumption}
\newtheorem{remark}{Remark}
\newtheorem{proposition}{Proposition}

\newtheorem{principle}{Principle}

\usepackage{algorithm}
\usepackage{algpseudocode}
\usepackage{caption}

\usepackage[textsize=tiny]{todonotes}

\usepackage{minitoc}

\title{Observationally Informed Adaptive Causal Experimental Design}

\author{
  Erdun Gao\textsuperscript{1}, Liang Zhang\textsuperscript{2,$\dagger$}, Jake Fawkes\textsuperscript{3}, Aoqi Zuo\textsuperscript{4}, Wenqin Liu\textsuperscript{5}, Haoxuan Li\textsuperscript{6}, \\ \textbf{Mingming Gong\textsuperscript{5} \& Dino Sejdinovic\textsuperscript{1}} \\
  \textsuperscript{1}Australian Institute for Machine Learning, Adelaide University \\ \textsuperscript{2}Hong Kong University of Science and Technology (Guangzhou) \\ 
  \textsuperscript{3}University College London \quad
  \textsuperscript{4}The University of Sydney \\
  \textsuperscript{5}The University of Melbourne \quad
  \textsuperscript{6}Peking University \\
  \texttt{\{erdun.gao,dino.sejdinovic\}@adelaide.edu.au} \\
  \texttt{liangzhang@hkust-gz.edu.cn}, \texttt{j.fawkes@ucl.ac.uk} \\ \texttt{aoqi.zuo@sydney.edu.au}, \texttt{wenqinl@student.unimelb.edu.au} \\ \texttt{hxli@stu.pku.edu.cn}, \texttt{mingming.gong@unimelb.edu.au}
}

\iclrfinalcopy 
\begin{document}

\maketitle
\begingroup
  \renewcommand{\thefootnote}{$\dagger$}
  \footnotetext{Corresponding author.}
\endgroup

\doparttoc 
\faketableofcontents

\input{Pages/Abstract}
\input{Pages/Introduction}
\input{Pages/Preliminaries}
\input{Pages/Problem}
\input{Pages/Framework}
\input{Pages/Theory}
\input{Pages/Experiments}
\input{Pages/Related_works}

\input{Pages/Conclusion}

\clearpage
\bibliography{reference}
\bibliographystyle{iclr2026_conference}

\newpage
\appendix
\onecolumn
\part{Appendix} 
\parttoc

\input{Pages/Appendix/Additional_related_works}
\input{Pages/Appendix/Models}
\input{Pages/Appendix/Acq_functions}
\input{Pages/Appendix/Convergence}
\input{Pages/Appendix/Results}
\input{Pages/Appendix/Discussions}

\end{document}

%% file: math_commands.tex
\usepackage{amsmath,amsfonts,bm}

\def\eqref#1{equation~\ref{#1}}

\def\1{\bm{1}}

\def\rr{{\textnormal{r}}}
\def\rs{{\textnormal{s}}}
\def\rt{{\textnormal{t}}}

\def\ry{{\textnormal{y}}}

\def\rvu{{\mathbf{i}}}

\def\rvu{{\mathbf{u}}}

\def\rvx{{\mathbf{x}}}

\def\vp{{\bm{p}}}

\def\vx{{\bm{x}}}

\def\mI{{\bm{I}}}

\def\mK{{\bm{K}}}

\def\mX{{\bm{X}}}

\DeclareMathAlphabet{\mathsfit}{\encodingdefault}{\sfdefault}{m}{sl}
\SetMathAlphabet{\mathsfit}{bold}{\encodingdefault}{\sfdefault}{bx}{n}

\def\gA{{\mathcal{A}}}
\def\gB{{\mathcal{B}}}
\def\gC{{\mathcal{C}}}
\def\gD{{\mathcal{D}}}

\def\gH{{\mathcal{H}}}

\def\gM{{\mathcal{M}}}
\def\gN{{\mathcal{N}}}
\def\gO{{\mathcal{O}}}
\def\gP{{\mathcal{P}}}

\def\gS{{\mathcal{S}}}

\def\gU{{\mathcal{U}}}

\def\gX{{\mathcal{X}}}
\def\gY{{\mathcal{Y}}}
\def\gZ{{\mathcal{Z}}}

\def\sI{{\mathbb{I}}}

\def\sP{{\mathbb{P}}}

\def\sV{{\mathbb{V}}}

\newcommand{\E}{\mathbb{E}}

\newcommand{\R}{\mathbb{R}}

\newcommand{\entropy}{\mathrm{H}}
\newcommand{\mi}{\mathrm{I}}

\newcommand{\tar}{\textup{tar}}

\DeclareMathOperator*{\argmax}{arg\,max}
\DeclareMathOperator*{\argmin}{arg\,min}

\usepackage{aligned-overset}

\usepackage{fontawesome5}
\usepackage[svgnames]{xcolor}
\usepackage[most]{tcolorbox}

\definecolor{mygreen}{HTML}{79BF41}
\definecolor{myblue}{HTML}{4DBCC9}
\definecolor{myred}{named}{FireBrick} 
\definecolor{codegreen}{rgb}{0,0.6,0}
\definecolor{codegray}{rgb}{0.5,0.5,0.5}
\definecolor{codepurple}{rgb}{0.58,0,0.82}
\definecolor{backcolour}{rgb}{0.95,0.95,0.92}
\definecolor{blue1}{HTML}{2E86AB}

\lstdefinestyle{mystyle}{
    backgroundcolor=\color{backcolour},   
    commentstyle=\color{codegreen},
    keywordstyle=\color{magenta},
    numberstyle=\tiny\color{codegray},
    stringstyle=\color{codepurple},
    basicstyle=\ttfamily\scriptsize,
    breakatwhitespace=false,         
    breaklines=true,                 
    captionpos=b,                    
    keepspaces=true,                                 
    numbersep=1pt,                  
    showspaces=false,                
    showstringspaces=false,
    showtabs=false,                  
    tabsize=1
}
\tcbset {
  base/.style={
    arc=0mm, 
    bottomtitle=0.5mm,
    boxrule=0mm,
    colbacktitle=black!10!white, 
    coltitle=black, 
    fonttitle=\bfseries, 
    left=1mm,
    leftrule=1mm,
    right=1mm,
    top=1mm,
    bottom=1mm,
    title={#1},
    toptitle=0.75mm, 
    width=\linewidth
  }
}

\newtcolorbox{greybox}[1][]{%
    colback=blue!3!white,    
    colframe=blue!3!white,   
    sharp corners,           
    #1
}

\newtcolorbox{leftbarbox}[1][]{%
    blanker,                 
    left=3mm,                
    borderline west={1mm}{0pt}{blue!40!black}, 
    #1
}

\newtcolorbox{cleanframe}[1][]{%
    colback=white,           
    colframe=black!50,       
    boxrule=0.5pt,           
    sharp corners,           
    #1
}

\newtcolorbox{promptbox}[1]{
  colframe=black!15!white,
  base={#1},
  leftrule=0mm,
  breakable,
}

\newtcolorbox{bluebox}[1]{
  colframe=myblue!50!white,
  colback=myblue!15!white,
  base={#1},
  breakable
}

\newtcolorbox{greenbox}[1]{
  colframe=mygreen!50!white,
  colback=mygreen!15!white,
  base={#1},
  breakable
}

\newtcolorbox{redbox}[1]{
  colframe=myred!50!white,
  colback=myred!15!white,
  base={#1},
  breakable
}

\newcommand{\circlednumorange}[1]{%
  \tikz[baseline=(char.base)]{
    \node[shape=circle,draw=orange!50!black,fill=orange!10,inner sep=1pt] (char) {\small \textbf{#1}};
  }%
}
\newcommand{\circlednumblue}[1]{%
  \tikz[baseline=(char.base)]{
    \node[shape=circle,draw=blue!50!black,fill=blue!10,inner sep=1pt] (char) {\small \textbf{#1}};
  }%
}

%% file: Pages/Abstract.tex
\begin{abstract}
Randomized Controlled Trials (RCTs) represent the gold standard for causal inference yet remain a scarce resource. While large-scale observational data is often available, it is utilized only for retrospective fusion, and remains discarded in prospective trial design due to bias concerns. We argue this "tabula rasa" data acquisition strategy is fundamentally inefficient. In this work, we propose \textit{Active Residual Learning}, a new paradigm that leverages the observational model as a foundational prior. This approach shifts the experimental focus from learning target causal quantities from scratch to efficiently estimating the residuals required to correct observational bias. To operationalize this, we introduce the R-Design framework. Theoretically, we establish two key advantages: (1) a structural efficiency gap, proving that estimating smooth residual contrasts admits strictly faster convergence rates than reconstructing full outcomes; and (2) information efficiency, where we quantify the redundancy in standard parameter-based acquisition (e.g., BALD), demonstrating that such baselines waste budget on task-irrelevant nuisance uncertainty. We propose R-EPIG (Residual Expected Predictive Information Gain), a unified criterion that directly targets the causal estimand, minimizing residual uncertainty for estimation or clarifying decision boundaries for policy. Experiments on synthetic and semi-synthetic benchmarks demonstrate that R-Design significantly outperforms baselines, confirming that repairing a biased model is far more efficient than learning one from scratch.
\end{abstract}

%% file: Pages/Introduction.tex
\section{Introduction}
\label{sec:introduction}

Precise estimation of individual treatment effects~\citep{wager2018estimation} is fundamental to personalized decision-making in domains ranging from economics~\citep{heckman2000causal} and recommendation systems~\citep{gao2024causal} to healthcare~\citep{foster2011subgroup}. While the Average Treatment Effect (ATE) offers a coarse summary, effective policy requires the granularity of the Conditional Average Treatment Effect (CATE), which quantifies how interventions vary across individuals~\citep{kunzel2019metalearners}. However, accurately estimating CATE is fundamentally constrained by the data dichotomy. Large-scale observational studies offer population representativeness~\citep{athey2019estimating} but suffer from hidden confounding~\citep{pearl2009causality}, whereas Randomized Controlled Trials (RCTs) guarantee unbiasedness but are crippled by prohibitive costs and limited sample sizes~\citep{frieden2017evidence, degtiar2023review,feuerriegel2024causal}. This trade-off has motivated a growing literature on \textit{post-hoc data fusion}~\citep{kallus2018removing, colnet2024causal, zhou2025two, fawkes2025the}. While these methods successfully demonstrate the benefits of integrating both modalities, they remain fundamentally \textit{retrospective}~\citep{song2024ace, cha2025abc3, zhang2025active}. They treat the experimental data collection as a fixed step, attempting to repair statistical power after the fact. Consequently, this passive paradigm fails to optimize the data acquisition process itself, leaving the potential synergy between observational priors and experimental design largely unexplored.

\begin{figure*}[t]
\centering
\subfloat{
    \includegraphics[width=0.23\textwidth]{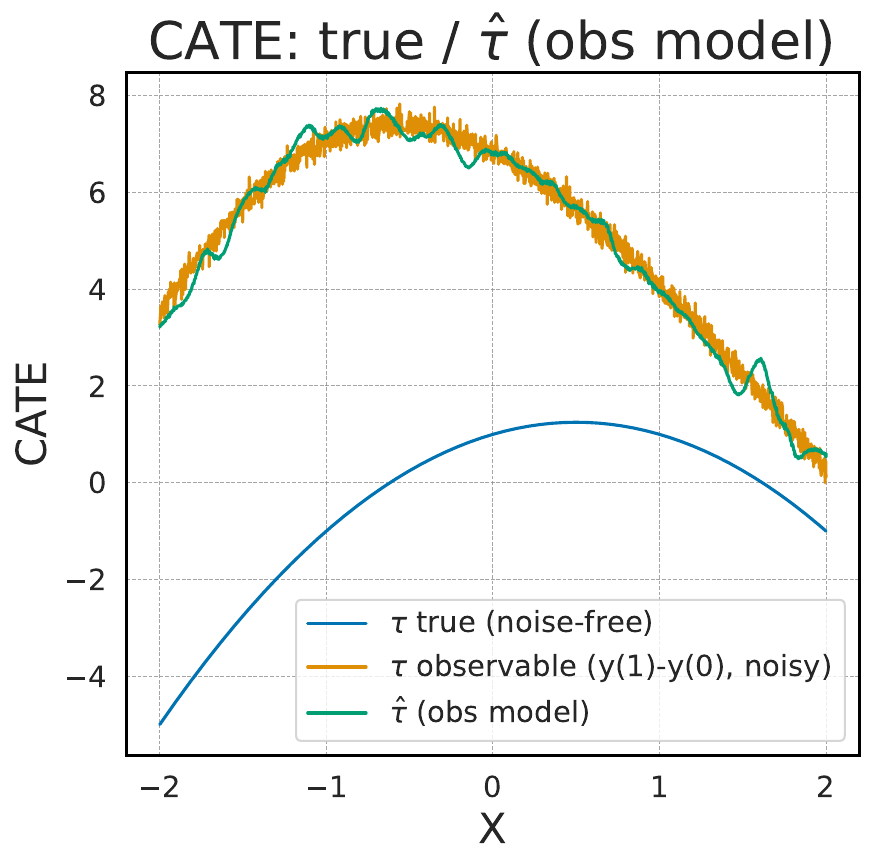}
    \label{fig:vs1}
}
\hfill
\subfloat{
    \includegraphics[width=0.23\textwidth]{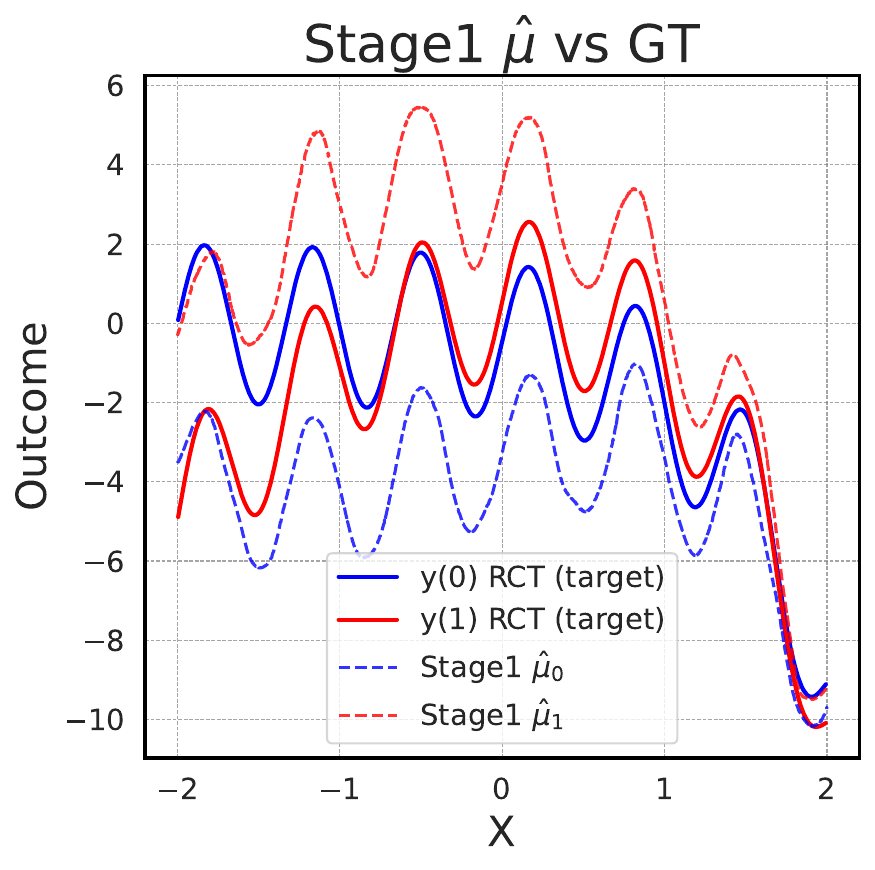}
    \label{fig:vs2}
}
\hfill
\subfloat{
    \includegraphics[width=0.23\textwidth]{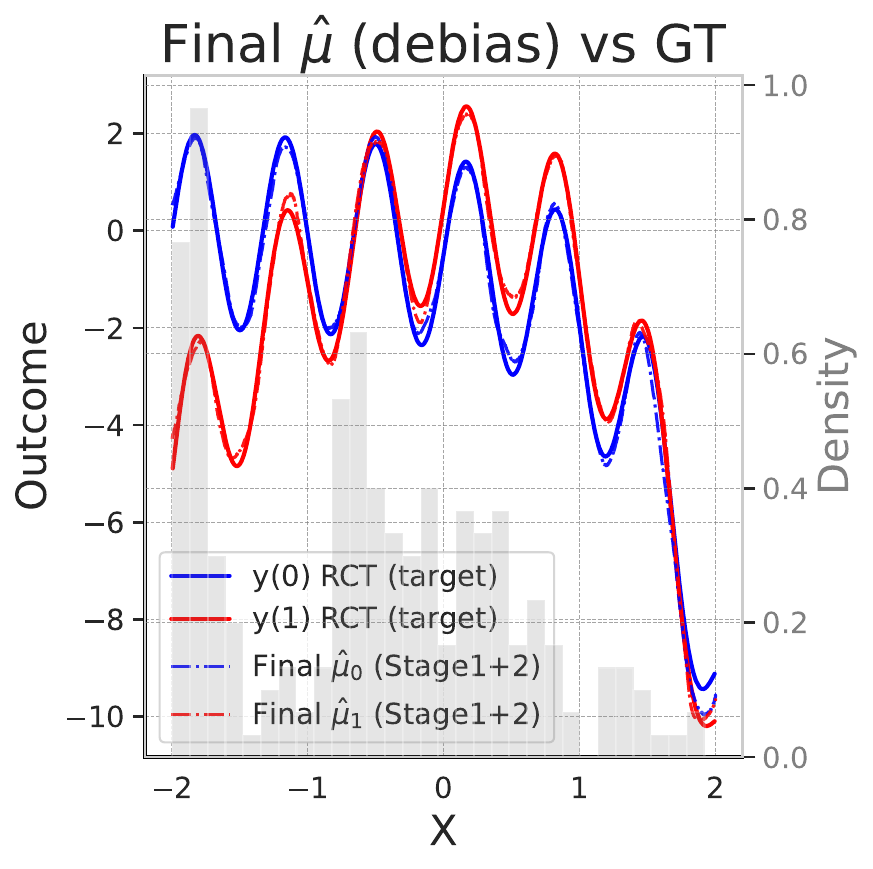}
    \label{fig:vs3}
}
\hfill
\subfloat{
    \includegraphics[width=0.23\textwidth]{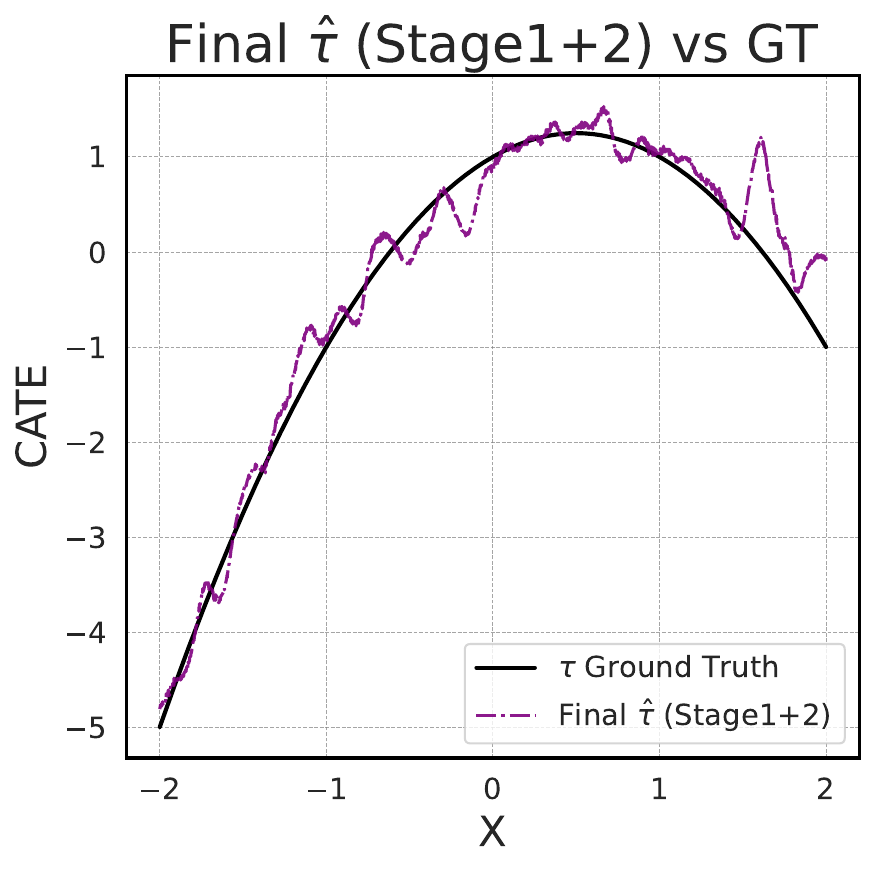}
    \label{fig:vs4}
}
\caption{R-Design intuition on 1D synthetic data. The observational prior is biased (Left) but captures high-frequency structure (Middle-Left). Treating it as a fixed offset, R-Design learns a simpler residual function (Middle-Right), exploiting the complexity gap to recover the true CATE with few samples (Right), while the naive observational model fails due to uncorrected bias.}
\label{fig:intuition}
\end{figure*}

\textbf{Motivation.} In many real-world application domains, large amounts of observational data are frequently available prior to experimentation~\citep{rosenman2021designing, epanomeritakis2025choosing}. Yet, standard causal experimental design for CATE estimation typically adopts a tabula rasa approach, ignoring this prior information to learn causal mechanisms from scratch~\citep{addanki2022sample, ghadiri2023finite}. We argue this is statistically inefficient for two reasons. First, it overlooks the target population distribution delineated by the large-scale observational data, which characterizes the real-world covariate support where the causal estimand is ultimately evaluated. Second, it discards the observational model itself, which, despite hidden confounding, often successfully captures the global baseline structure of the outcomes~\citep{kallus2018removing, hatt2022combining}. This suggests a critical opportunity to transform experimental design. Rather than treating the confounded observational model as a nuisance to be discarded, it could serve as a foundational informative prior (as illustrated in Fig.~\ref{fig:intuition}). This perspective motivates the core research question:
\begin{bluebox}{}
\faThumbtack \ \textbf{Key Question:} How can we leverage the confounded observational prior to guide causal experimental design, shifting the goal from exploring outcomes from scratch to adaptively learning the necessary corrections for the observational bias?
\label{key_question}
\end{bluebox}
\textbf{Challenges.} However, operationalizing this adaptive correction strategy is non-trivial and faces three fundamental challenges. \circlednumblue{1} \textit{Disentangling Signal from Bias.} The observational prior intrinsically conflates rich structural information (signal) with hidden confounding (bias). The difficulty lies in designing a mechanism that selectively transfers the global structural knowledge without inheriting the local confounding errors, effectively isolating the useful prior shape from the biases we seek to correct. \circlednumblue{2} \textit{Objective Misalignment.} There is an inherent discrepancy between identifying the global observational bias and maximizing downstream performance. The regions where the observational model is most biased (high error magnitude) may not overlap with the target population or the decision boundaries. A naive design might waste budget correcting biases in irrelevant regions. \circlednumblue{3} \textit{Lack of Residual-Aware Acquisition.} Standard causal experimental design metrics are ill-suited here. While Bayesian strategies can incorporate priors, they typically target the uncertainty of the full treatment effect. Consequently, they fail to focus the budget strictly on learning the residual contrast required for bias correction.

\textbf{Contributions.} To address these gaps, this paper makes the following contributions. 
\circlednumorange{1} \textit{A New Paradigm.} We formally define the problem of observationally informed adaptive causal experimental design. We argue that when informative (albeit biased) observational data exists, the design objective should fundamentally shift from reconstructing outcome surfaces from scratch to strictly identifying and correcting observational biases. \circlednumorange{2} \textit{The R-Design Framework.} We propose a unified methodology comprising two core components: R-EPIG (Sec.~\ref{sec:acquisition_function}), a task-aware information-theoretic criterion that adapts to estimation or policy objectives, and the Two-Stage Residual (TSR) strategy (Sec.~\ref{sec:method_tsr}), which structurally decouples base effect estimation from uncertainty quantification to ensure scalability. \circlednumorange{3} \textit{Theoretical Foundations.} We provide a rigorous theoretical analysis (Sec.~\ref{sec:theory}) anchored by four key results. We formally establish the \textit{Structural Efficiency Gap} (Lemma~\ref{lemma:complexity_decomp_hoelder}), proving that learning the residual function is structurally more sample-efficient than learning from scratch. We demonstrate the \textit{Objective Alignment} (Prop.~\ref{prop:objective_alignment}), verifying the equivalence between our acquisition objective and Bayesian CATE risk minimization. Crucially, we quantify the \textit{Information Redundancy} of standard baselines (Prop.~\ref{prop:redundancy}), proving that parameter-based methods waste budget on nuisance uncertainty. Finally, we derive uniform convergence rates for our greedy strategy (Thm.~\ref{thm:epig_bound}). \circlednumorange{4} \textit{Extensive Empirical Validation.} We conduct comprehensive experiments across synthetic and semi-synthetic benchmarks (Sec.~\ref{sec:experiments}). Our results demonstrate that R-Design consistently outperforms state-of-the-art baselines, providing strong empirical evidence that repairing a biased model is significantly more efficient than tabula rasa acquisition.

%% file: Pages/Preliminaries.tex
\section{Preliminaries and Problem Setup}
\label{sec:preliminaries}

In this section, we establish the mathematical foundations of our framework. We first formalize the causal inference problem and the CATE estimand within the potential outcomes framework. Subsequently, we review the necessary background on disparate data integration and the specific active learning setup used in adaptive causal experimental design.

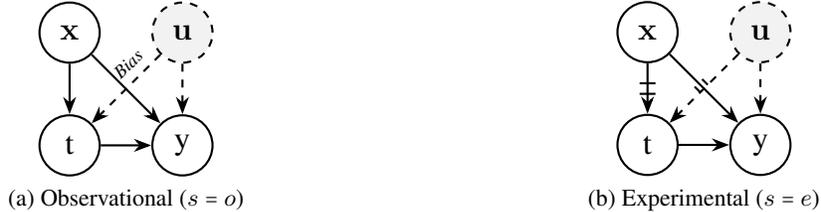
\begin{figure}[t]
    \centering
    \tikzset{
        severed/.style={
            decoration={markings, mark=at position 0.5 with {
                \draw[thick, -] (-2pt, -3pt) -- (-2pt, 3pt);
                \draw[thick, -] (2pt, -3pt) -- (2pt, 3pt);
            }},
            postaction={decorate}
        }
    }

    \begin{minipage}{0.45\linewidth}
        \centering
        \begin{tikzpicture}[scale=0.6,
            node_style/.style={circle, draw=black, thick, minimum size=0.8cm, font=\large},
            dashed_node/.style={circle, draw=black, dashed, thick, minimum size=0.8cm, fill=gray!10, font=\large},
            arrow/.style={->, >=Stealth, thick},
            dashed_arrow/.style={->, >=Stealth, thick, dashed}
        ]
            \node[node_style] (T) at (0,0) {$\rt$};
            \node[node_style] (Y) at (2.5,0) {$\ry$};
            \node[node_style] (X) at (0,2.5) {$\rvx$};
            \node[dashed_node] (U) at (2.5,2.5) {$\rvu$};

            \draw[arrow] (X) -- (T);
            \draw[dashed_arrow] (U) -- (Y);
            \draw[arrow] (T) -- (Y);
            \draw[arrow] (X) -- (Y);
            \draw[dashed_arrow] (U) -- node[pos=0.3, sloped, above, font=\scriptsize] {\textit{Bias}} (T);
        \end{tikzpicture}
        \vspace{0.2em}
        \centerline{\small (a) Observational ($s=o$)}
    \end{minipage}
    \hfill 
    \begin{minipage}{0.45\linewidth}
        \centering
        \begin{tikzpicture}[scale=0.6,
            node_style/.style={circle, draw=black, thick, minimum size=0.8cm, font=\large},
            dashed_node/.style={circle, draw=black, dashed, thick, minimum size=0.8cm, fill=gray!10, font=\large},
            arrow/.style={->, >=Stealth, thick},
            dashed_arrow/.style={->, >=Stealth, thick, dashed}
        ]
            \node[node_style] (T) at (0,0) {$\rt$};
            \node[node_style] (Y) at (2.5,0) {$\ry$};
            \node[node_style] (X) at (0,2.5) {$\rvx$};
            \node[dashed_node] (U) at (2.5,2.5) {$\rvu$};

            \draw[dashed_arrow] (U) -- (Y);
            \draw[arrow] (T) -- (Y);
            \draw[arrow] (X) -- (Y);

            \draw[arrow, severed] (X) -- (T);
            \draw[dashed_arrow, severed] (U) -- (T);
            
        \end{tikzpicture}
        \vspace{0.2em}
        \centerline{\small (b) Experimental ($s=e$)}
    \end{minipage}

    \caption{Causal diagrams of data-generating processes.}
    \label{fig:causal_graph_comparison}
\end{figure}

\subsection{Causal Estimands and Targets}

\textbf{Potential Outcome Framework.} Our analytical framework is grounded in the Neyman-Rubin potential outcomes model~\citep{rubin2005causal,ding2024first}. Let $\rvx \in \gX$ denote the covariates, $\rt \in \{0,1\}$ the binary treatment assignment, and $\ry \in \gY$ the observed outcome. We use $\vx$, $t$, and $y$ to denote realizations of these variables. The two potential outcomes are $\ry(0)$ and $\ry(1)$, corresponding to the potential outcome under control and treatment, respectively. To accommodate diverse downstream objectives, we define a general target quantity of interest, denoted by $\Phi(\vx)$, evaluated over a target population distribution $p_{\text{tar}}(\vx)$ (typically the observational population $p_o(\vx)$~\citep{kallus2018removing, colnet2024causal}). We use a hat to denote an estimator of a target quantity, e.g., $\hat{\Phi}$. In this paper, we focus on two primary specifications:

\textbf{CATE Estimation.} When the primary objective is to quantify the heterogeneity of treatment effects across the population, we target the Conditional Average Treatment Effect (CATE). Formally, let $\mu(\vx, t) \coloneqq \E[\ry(t) \mid \rvx=\vx]$ denote the expected potential outcome surface for arm $t$. The CATE is defined as the pointwise difference between the treated and control response surfaces:
\begin{equation}
    \tau(\vx) \coloneqq \mu(\vx, 1) - \mu(\vx, 0).
\end{equation}
This estimand captures how the treatment effect varies as a function of the covariates $\vx$. The estimation quality is measured by the Precision in Estimation of Heterogeneous Effects (PEHE)~\citep{hill2011bayesian}, which quantifies the expected squared error (or $L_2$ risk) of the estimator over the target distribution:
\begin{equation}
    \epsilon_{\text{PEHE}}(\hat\tau) = \E_{\rvx \sim p_{\text{tar}}(\cdot)}\left[(\hat{\tau}(\rvx) - \tau(\rvx))^2\right].
    \label{eq:pehe}
\end{equation}
\textbf{Decision Making.} Alternatively, the objective may be policy optimization, where the goal is to determine the optimal treatment assignment for each unit to maximize the total population welfare~\citep{fernandez2022causal}. This is equivalent to identifying the \textit{sign} of the treatment effect rather than estimating its continuous magnitude. The target estimand is the binary optimal policy $\Phi(\vx) = \pi(\vx) = \sI(\tau(\vx) > 0)$, which prescribes the treatment arm yielding the higher expected outcome. Unlike CATE estimation, this task implies that high precision is primarily required near the decision boundary (where $\tau(\vx) \approx 0$). The performance is measured by the Average Policy Error (APE)~\citep{gao2024variational}, defined as the probability of assigning a sub-optimal treatment (misclassification rate) over the target population:
\begin{equation}
    \epsilon_{\text{APE}}(\hat\pi) = \E_{\rvx \sim p_{\text{tar}}(\cdot)}\left[ \sI(\hat\pi(\rvx) \neq \pi(\rvx)) \right].
    \label{eq:ape}
\end{equation}

\subsection{Disparate Data Sources}
We consider a scenario involving two distinct data sources: a large observational study (OS) and a smaller experimental dataset, as shown in Fig.~\ref{fig:causal_graph_comparison}. We denote the source using a random variable $\rs \in \{o, e\}$. For each source $s$, we define the source-specific conditional means as $\mu_s(\vx,t) \coloneqq \E[\ry \mid \rvx=\vx, \rt=t, \rs=s]$. The source-specific contrast for source $s$ is $\tau_s(\vx) \coloneqq \mu_s(\vx,1) - \mu_s(\vx,0)$. This quantity measures the observed conditional mean difference. For the OS, the contrast $\tau_o(\vx)$ is generally biased due to hidden confounding~\citep{fawkes2025the}. For the experimental source, the controlled assignment mechanism ensures that $\tau_e(\vx)$ identifies the true CATE within the experimental support $\gX_e \subseteq \gX$:
\begin{equation}
    \tau(\vx) = \tau_e(\vx), \quad \text{for } \vx \in \gX_e.
\end{equation}
To integrate these disparate sources, we characterize the structural relationship between the biased observational contrast $\tau_o(\vx)$ and the true causal effect $\tau(\vx)$ through the following decomposition:
\begin{equation}
\label{eq:residual_decomp}
    \tau(\vx) = \tau_o(\vx) + \tau_\delta(\vx), \quad \text{where } \tau_\delta(\vx) \coloneqq \delta(\vx, 1) - \delta(\vx, 0).
\end{equation}
Here, $\tau_\delta(\vx)$ represents the residual contrast (debiasing correction) that captures the discrepancy between the observational correlations and the underlying causal mechanism. To ensure that this decomposition yields a valid and identifiable causal target, we impose the following assumptions on the data generation mechanisms.
\begin{assumption}
\label{ass:identifiability}
\textit{Stable Unit Treatment Value Assumption:} $\ry = \rt \ry(1) + (1-\rt)\ry(0)$, and no interference between units. \textit{Unconfoundedness by Design:} The treatment assignment in the experiment is controlled by the design policy and depends solely on observed information, ensuring $\rt \perp \{\ry(0), \ry(1)\} \mid (\rvx, \rs=e)$. \textit{Positivity of the Design Space:} The experimental design allows for the exploration of both treatment arms within the region of interest, $0 < p(\rt=1 \mid \rvx, \rs=e) < 1$. \textit{Effect Invariance (Transportability):} The underlying causal mechanism is invariant across sources, i.e.,
$\forall s \in \{o, e\}, \E[\ry(1) - \ry(0) \mid \rvx=\vx, \rs=s] = \tau(\vx)$. This rules out unnatural shifts in CATE where the experimental protocol itself, rather than the treatment, interferes with the outcome mechanism. \textit{Support Inclusion:} The support of the target population is contained within the feasible experimental region, $\text{supp}(p_{\text{tar}}) \subseteq \gX_e$.
\end{assumption}
Assump.~\ref{ass:identifiability} ensures that $\tau_\delta(\vx)$ is a structurally consistent quantity representing pure observational bias. Transportability guarantees that $\tau_\delta(\vx)$ captures the confounding mechanism rather than a shift in the underlying CATE, rendering the causal target identifiable through residual correction.

\subsection{Problem Setup}
\label{sec:problem_statement}

Unlike retrospective fusion where data collection is fixed, we address the dynamic allocation of a limited experimental budget $n_B$. We assume access to a static, labeled observational dataset $\gD_O = \{(\vx_i, t_i, y_i, s_i=o)\}_{i=1}^{n_O}$. We aim to optimize the CATE estimator for a target population, characterized by the marginal covariate distribution $p_{\text{tar}}(\vx)$. Following standard conventions, we assume the observational covariates are representative of this target (i.e., the empirical distribution of $\gD_O$ approximates $p_{\text{tar}}$)~\citep{kallus2018removing, colnet2024causal,yang2025data}. For the experimental phase, we have a finite candidate pool of recruitable units $\gD_P = \{\vx_j\}_{j=1}^{n_P}$. The experimental dataset $\gD_E$ is initially empty. The design process proceeds in sequential stages $k=1, \dots, n_B$. At each stage: \circlednumblue{1} The learner jointly selects a unit $\vx_k \in \gD_P$ and a treatment $t_k \in \{0, 1\}$; \circlednumblue{2} The learner observes the experimental outcome $y_k$; \circlednumblue{3} The datasets are updated: $\gD_E^{(k)} \leftarrow \gD_E^{(k-1)} \cup \{(\vx_k, t_k, y_k, \rs_k=e)\}$ and the unit is removed from the candidate pool $\gD_P$.
\begin{bluebox}{}
\faThumbtack \ \textbf{Design Objective.}
The central challenge of this task is to design a principled utility function, $U(\cdot \mid \gD_O, \gD_E^{(k-1)})$, that quantifies the information gain of a specific query $(\vx, t)$. The acquisition strategy at stage $k$ maximizes this utility over the joint product space:
\begin{equation}
(\vx_k^*, t_k^*) = \argmax_{(\vx, t) \in \gD_P \times \{0, 1\}} U(\vx, t \mid \gD_O, \gD_E^{(k-1)}).
\label{eq:utility_function}
\end{equation}
\label{key_objective}
\end{bluebox}

%% file: Pages/Problem.tex
\section{Principles for Observationally Informed Causal Experimental Design}
\label{sec:principles}

\textbf{From Outcome Exploration to Residual Correction.} Standard experimental design typically adopts a \textit{tabula rasa} way, expending budget to learn the full outcome surface from scratch. We argue this is structurally inefficient. Given the decomposition $\tau(\vx) = \tau_o(\vx) + \tau_\delta(\vx)$ where the observational contrast $\tau_o(\vx)$ is estimated prior to experimentation, the conditional entropy of the total effect is equivalent to that of the residual (treating $\tau_o$ as a fixed offset): $\entropy(\tau(\vx) \mid \gD_E) \equiv \entropy(\tau_\delta(\vx) \mid \gD_E)$. Consequently, any acquisition that re-learns the pre-estimated signal $\tau_o(\vx)$ yields zero marginal utility. This structural equivalence dictates our first principle:
\begin{principle}[The Residual Uncertainty Principle]
\label{principle:fusion}
Experimental design should target the epistemic uncertainty of the residual contrast $\tau_\delta(\vx)$. The acquisition function should treat the estimated observational contrast $\tau_o(\vx)$ as a fixed mean function, utilizing the budget to resolve the structural discrepancy between the biased prior and the interventional truth, rather than re-learning the full outcome surface.
\end{principle}
\textbf{From Global Exploration to Estimand Alignment.} While Principle~\ref{principle:fusion} identifies the \textit{structural target} (the residual), it remains silent on the \textit{spatial allocation} of the budget. Classical active learning criteria typically aim to minimize model variance globally, implicitly assuming a uniform utility over the domain~\citep{houlsby2011bayesian}. However, causal inference is intrinsically objective-driven; its validity is defined solely relative to a specific estimand (e.g., CATE or Policy) and a target population $p_{\text{tar}}(\vx)$~\citep{hernan2023causal, smith2023prediction}. Consequently, reducing residual uncertainty in regions that are irrelevant to the target density or inconsequential for the decision boundary provides negligible value. Pure epistemic uncertainty does not equate to utility.
\begin{principle}[Estimand-Aligned Acquisition]
\label{principle:alignment}
Optimal experimental design must be risk-centric rather than uncertainty-centric. The utility of a design point $(\vx, t)$ is not intrinsic, but is quantified solely by the expected reduction in the target-specific risk. This mandates that the experimental budget be concentrated non-uniformly on regions where the residual uncertainty is both high and consequential for the specified estimand.
\end{principle}

%% file: Pages/Framework.tex
\section{The R-Design Framework}
\label{sec:R-Design}

In this section, we introduce R-Design, a framework that transforms causal experimental design into active residual learning. Leveraging the decomposition $\tau(\vx) = \hat{\tau}_o(\vx) + \tau_\delta(\vx)$, where the observational contrast $\hat{\tau}_o$ acts as a pre-computed offset, R-Design explicitly targets the identification of the residual $\tau_\delta$. We first derive the residual-based acquisition criterion, R-EPIG, and then present the Active Learning (AL) strategy that enables its tractable computation.

\subsection{The R-Design Criterion: R-EPIG}
\label{sec:acquisition_function}

Our framework operationalizes Principle~\ref{principle:alignment} by explicitly minimizing the epistemic uncertainty of the specific downstream target $\Phi(\cdot)$. Critically, since the observational base $\hat{\mu}_o(\vx, t)$ is treated as a fixed mean function, the information content of an experimental query $(\vx, t)$ resides solely in the \textit{residual outcome}, denoted as the random variable $\rr \coloneqq \ry - \hat{\mu}_o(\vx, t)$. We propose the Residual Expected Predicted Information Gain (R-EPIG). Unlike standard criteria operating on raw outcomes~\citep{smith2023prediction}, R-EPIG selects experiments to maximize the expected information gain regarding the target $\Phi(\cdot)$ specifically via the observed residual $\rr$. Let $\gH_k$ be the history at step $k$. The general acquisition objective is:
\begin{bluebox}{}
    \begin{equation}
        \alpha_{\text{R-EPIG}}(\vx, t) \coloneqq \E_{\rvx^* \sim p_{\text{tar}}(\cdot)} \Big[ \mi\big(\rr;\, \Phi(\rvx^*) \mid \vx, t, \gH_k \big) \Big].
    \label{eq:repig_general}
    \end{equation}
\end{bluebox}
Here, the randomness in the target $\Phi(\cdot)$ is induced solely by the posterior of the residual processes. The outer expectation averages this gain over the target population.
\begin{remark}
While $\mi(\rr; \Phi) \equiv \mi(\ry; \Phi)$, formulating the objective over $\rr$ aligns with our Residual GP. This structurally decouples inference from $\gD_O$, ensuring complexity scales solely with the experimental budget $n_B$ rather than the observational size $n_O$.
\end{remark}
By specifying $\Phi(\cdot)$, this framework yields two task-specific strategies: \textit{R-EPIG-Est} for estimation and \textit{R-EPIG-Policy} for policy.

\subsubsection{Design for Estimation (R-EPIG-Est)}
When the goal is minimizing PEHE, we target the magnitude of the correction terms. We propose two specifications:

\textbf{Joint-Outcome Residuals (R-EPIG-$\mu$).} 
To recover the full potential outcome surfaces, we define the target as the \textit{residual vector} $\Phi(\vx) = \boldsymbol{\delta}(\vx) \coloneqq [\delta(\vx, 0), \delta(\vx, 1)]^\top$. Since the observational base $\hat{\mu}_o$ is treated as a fixed offset, reducing uncertainty in $\boldsymbol{\delta}(\vx)$ directly reduces uncertainty in the absolute potential outcomes $\mu_e(\vx, t)$. The criterion is:
\begin{equation}
    \alpha_{\text{R-EPIG}}^{\mu}(\vx, t) \coloneqq \E_{\rvx^* \sim p_{\text{tar}}(\cdot)} \Big[ \mi\big(\rr;\, \boldsymbol{\delta}(\rvx^*) \mid \vx, t, \gH_k \big) \Big].
\label{eq:repig_mu}
\end{equation}
This strategy is conservative but robust: it identifies regions where the observational prior is unreliable for either arm.

\textbf{Direct CATE Residuals (R-EPIG-$\tau$).} 
To learn the CATE, we can define the target strictly as the residual contrast $\Phi(\vx) = \tau_\delta(\vx) \coloneqq \delta(\vx, 1) - \delta(\vx, 0)$. The criterion is:
\begin{equation}
    \alpha_{\text{R-EPIG}}^{\tau}(\vx, t) \coloneqq \E_{\rvx^* \sim p_{\text{tar}}(\cdot)} \Big[ \mi\big(\rr;\, \tau_\delta(\rvx^*) \mid \vx, t, \gH_k \big) \Big].
\end{equation}
This variant is theoretically more sample-efficient. By targeting the difference, it naturally ignores shared baseline errors that cancel out in the subtraction, avoiding budget waste on variations irrelevant to the treatment effect.

\subsubsection{Design for Policy (R-EPIG-Policy)}
When minimizing APE, the estimand collapses to the binary decision boundary. We define the target as the policy indicator $\Phi(\vx) = \pi(\vx) \coloneqq \sI(\hat{\tau}_o(\vx) + \tau_\delta(\vx) > 0)$. The criterion targets the information gain regarding this sign:
\begin{equation}
    \alpha_{\text{R-EPIG}}^{\pi}(\vx, t) \coloneqq \E_{\rvx^* \sim p_{\text{tar}}(\cdot)} \Big[ \mi\big(\rr;\, \pi(\rvx^*) \mid \vx, t, \gH_k \big) \Big].
\label{eq:repig_pi}
\end{equation}
Unlike variance reduction, R-EPIG-Policy is \textit{decision-aware}: it ignores regions where the sign of the total effect is determined with high confidence, concentrating the budget strictly on resolving ambiguities near the decision boundary.
\begin{remark}[Information Gain vs. Direct Error Reduction]
While minimizing APE is the ultimate goal, direct error reduction suffers from sparse signals: for candidates far from the current boundary, the probability of altering the policy is negligible, yielding zero-utility plateaus. In contrast, entropy reduction on $\pi$ serves as a smooth surrogate, providing informative guidance that drives the learner toward the boundary even before a sign flip is probable.
\end{remark}

\subsection{Model Architecture: The TSR Strategy}
\label{sec:method_tsr}

To instantiate the probabilistic terms in R-EPIG, we introduce the Two-Stage Regression (TSR) strategy. TSR structurally decouples the \textit{capacity} requirements from the \textit{uncertainty} requirements.

\textbf{Stage 1: The Observational Base Learner.}
We first exploit the abundant observational data $\gD_O$ to learn a high-capacity base estimate of the potential outcomes. This stage is model-agnostic, permitting the use of state-of-the-art estimators $\gM_O$ (e.g., TabPFN~\citep{hollmann2025tabpfn}, CausalPFN~\citep{balazadeh2025causalpfn}) to capture complex, global structural signals. The resulting outcome surfaces $\hat{\mu}_o(\vx, t)$ are treated as pre-computed functional offsets. This base model acts as a fixed mean function for the subsequent residual model, relieving the active learner from reconstructing the global outcome surface and allowing it to focus its limited capacity on correcting local deviations.

\textbf{Stage 2: Bayesian Residual Learning.}
The sequential phase strictly models the epistemic uncertainty of the residual vector $\boldsymbol{\delta}(\vx)$. Learning $\boldsymbol{\delta}(\vx)$ faces the fundamental counterfactual challenge: for any unit, we observe only the factual residual $r_k = y_k - \hat{\mu}_o(\vx_k, t_k)$.
To address this, we adopt a Multi-task GP (MTGP)~\citep{bonilla2007multi} as the backend inference engine. We choose MTGP because its kernel structure explicitly encodes causal correlations, allowing information sharing across treatment arms to infer missing counterfactual residuals:
\begin{equation}
    \boldsymbol{\delta}(\vx) \sim \mathcal{GP}(\mathbf{0}, \mK(\vx, \vx')), \quad r \mid \vx, t \sim \mathcal{N}(\delta(\vx, t), \sigma^2).
    \label{eq:tsr_mtgp}
\end{equation}
By decomposing the kernel $\mK$ into task-specific and input-specific components (e.g., via CMGP~\citep{alaa2017bayesian}, NSGP~\citep{alaa2018limits}), the model captures both the smoothness of the bias and the correlation between potential outcome residuals. \textit{Scalability.} Standard GPs scale cubically, $\mathcal{O}(n^3)$. TSR resolves this bottleneck: the GP fits \textit{only} the residuals on the small experimental set $\gD_E$. The complexity scales with the budget $n_E \ll n_O$, not the massive observational size, ensuring real-time feasibility for sequential acquisition.

\begin{remark}[The Intuition of Structural Efficiency]
\label{rmk:complexity_gap}
The efficiency of TSR relies on a key structural insight: while the full outcome mechanism (e.g., biological response) is often complex and data-hungry, the confounding bias (e.g., clinical guidelines) typically varies more smoothly across the population. By offloading the complex base learning to $\gD_O$, TSR focuses the expensive experimental budget solely on this simpler residual, enabling faster convergence.
\end{remark}

\subsection{Budgeted Acquisition Algorithm}
The R-EPIG utility drives our active learning strategy for CATE estimation under a fixed budget (Alg.~\ref{alg:r_design_loop}). The process employs a two-stage architecture. First, observational models are trained on $\gD_O$ and frozen to serve as functional offsets. Second, in the adaptive residual learning phase, we iteratively select batches of high-utility points. Unlike greedy approaches, we use softmax-weighted sampling on the utility scores to ensure diversity in the acquired batch. Experimental outcomes are queried, immediately transformed into residuals by subtracting the frozen offsets, and used to update the residual model $\hat{\boldsymbol{\delta}}$. This cycle continues until the budget is exhausted, yielding a final estimator that superimposes the learned residual contrast onto the observational baseline.

%% file: Pages/Theory.tex
\section{Theoretical Analysis}
\label{sec:theory}

We justify our proposed R-Design framework by establishing four theoretical pillars. 
First, we identify a fundamental \textit{Structural Advantage} (Lemma~\ref{lemma:complexity_decomp_hoelder}), proving that the residual contrast admits a strictly faster minimax convergence rate than the full outcome surface. 
Second, to exploit this advantage, we derive an \textit{Objective Alignment} result (Prop.~\ref{prop:objective_alignment}), verifying that minimizing residual uncertainty is mathematically equivalent to minimizing the Bayesian PEHE risk. 
Third, we demonstrate the \textit{Information Efficiency} of our approach (Prop.~\ref{prop:redundancy}), formally proving that parameter-based baselines (e.g., BALD) optimize a loose bound containing task-irrelevant nuisance information, whereas R-EPIG targets the estimand directly. Finally, we provide an \textit{Algorithmic Uncertainty Convergence} guarantee (Thm.~\ref{thm:epig_bound}), showing that our greedy strategy effectively realizes the accelerated rates predicted by our complexity analysis.

\subsection{Sample Complexity: The Structural Efficiency Gap}
\label{sec:theory_convergence}
We quantify the efficiency of R-Design via minimax convergence rates over Hölder spaces~\citep{yang2015minimax}. Adopting a joint-function perspective on $\gZ = \gX \times \{0,1\}$, we decompose the true potential outcomes into a fixed observational base and a learnable residual: $\mu_e(\vx, t) = \mu_o(\vx, t) + \delta(\vx, t)$. Here, $\mu_o$ captures the correlations accessible via observational data, while $\delta$ represents the causal correction required to recover $\mu_e$. Let $\alpha_{\cdot, t}$ and $d_{\cdot, t}$ denote the smoothness and effective dimension for arm $t$.

\begin{lemma}
\label{lemma:complexity_decomp_hoelder}
Assume the observational base learner $\hat{\mu}_o$ and the experimental residual learner $\hat{\delta}$ employ priors that are optimally adapted to the smoothness of their respective target functions. Under standard assumptions (overlap and experimental unconfoundedness), the expected PEHE risk $\psi(\hat{\tau})$ is bounded by:
\begin{equation}
\label{eq:excess_risk_bound_hoelder}
\small
    \psi(\hat{\tau}) \leq \, C_1 \underbrace{\max_{t \in \{0,1\}} n_E^{-\frac{2\alpha_{\delta, t}}{2\alpha_{\delta, t} + d_{\delta, t}}}}_{\text{Residual Convergence}} + C_2 \underbrace{\max_{t \in \{0,1\}} n_O^{-\frac{2\alpha_{\mu_o, t}}{2\alpha_{\mu_o, t} + d_{\mu_o, t}}}}_{\text{Observational Baseline}} + \epsilon_{\textup{approx}},
\end{equation}
where $C_1, C_2 > 0$ are domain-dependent constants and $\epsilon_{\textup{approx}}$ denotes the irreducible approximation error.
\end{lemma}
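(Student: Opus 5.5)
The plan is to decompose the estimation error of $\hat\tau = \hat\tau_o + \hat\tau_\delta$ into an observational part and a residual part, and to bound each part with a standard minimax (or posterior contraction) rate for Hölder classes.

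\textbf{Step 1: error decomposition.} Write $\hat\tau(\vx) - \tau(\vx) = \sum_{t}(2t-1)\big[(\hat\mu_o(\vx,t) - \mu_o(\vx,t)) + (\hat\delta(\vx,t) - \delta(\vx,t))\big]$. This uses $\mu_e = \mu_o + \delta$ together with transportability and experimental unconfoundedness (Assumption~\ref{ass:identifiability}), which give $\tau = \mu_e(\cdot,1)-\mu_e(\cdot,0)$ on $\mathrm{supp}(p_{\tar})$. Applying $(a_1+\dots+a_4)^2 \le 4\sum a_i^2$ then reduces $\psi(\hat\tau)$ to four $L_2(p_{\tar})$ risks, one per arm and per component. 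Each of these is bounded by twice its maximum over $t$.

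\textbf{Step 2: observational term.} $\hat\mu_o(\cdot,t)$ is fit on the $n_O$ observational points. Its target is the observational regression $\mu_o(\cdot,t)$ itself, not the causal one, so confounding does not bias this step. Invoke the standard adaptive rate for nonparametric regression over a Hölder ball of smoothness $\alpha_{\mu_o,t}$ in effective dimension $d_{\mu_o,t}$, e.g. posterior contraction for optimally adapted priors (van der Vaart--van Zanten type results, or the rates cited via \citet{yang2015minimax}). Overlap ensures $p_o(\vx,t)$ has density bounded below relative to $p_{\tar}$, and this lets the risk transfer to $L_2(p_{\tar})$ with a constant. The result is $C_2 n_O^{-2\alpha_{\mu_o,t}/(2\alpha_{\mu_o,t}+d_{\mu_o,t})}$.

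\textbf{Step 3: residual term, and the main obstacle.} The residual learner regresses $r = y - \hat\mu_o(\vx,t)$, whose mean is $\delta(\vx,t) + (\mu_o - \hat\mu_o)(\vx,t)$, not $\delta$. I will therefore write the residual learner's target as $\delta + e_o$, where $e_o$ is the Stage-1 error.

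The first option is to treat $\hat\mu_o$ as fixed, which is possible since $\gD_O$ is independent of $\gD_E$. Conditioning on $\gD_O$, I apply the Hölder rate in $n_E$ to $\delta$ and absorb the contribution of $e_o$ into the observational term. For a linear GP smoother this absorption goes through because the smoother is an $L_2$-bounded operator.

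If that fails, the fallback is to push any remaining misfit of $e_o$ into $\epsilon_{\mathrm{approx}}$. The same term also collects the model-misspecification and bias incurred when the true smoothness exceeds what the priors can represent.

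The design distribution of $\gD_E$ is adaptive. By positivity and support inclusion, I assume its empirical design density stays comparable to $p_{\tar}$, which yields $C_1 n_E^{-2\alpha_{\delta,t}/(2\alpha_{\delta,t}+d_{\delta,t})}$.

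\textbf{Step 4: combine.} Summing the bounds from Steps 2 and 3 and taking maxima over $t$ gives \eqref{eq:excess_risk_bound_hoelder}. The constants $C_1, C_2$ collect the Hölder radii, the noise level $\sigma^2$, the density ratios coming from overlap, and the factor from Step 1.
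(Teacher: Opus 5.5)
Your proposal is correct at the paper's level of rigor and follows the same skeleton. Both arguments separate a Stage-1 error $\|\hat\mu_o-\mu_o\|^2$ at the $n_O$ Hölder rate from a Stage-2 error at the $n_E$ rate, and both recognize that the residual learner really targets $\tilde\delta\coloneqq\mu_e-\hat\mu_o=\delta+e_o$. The routes differ on the residual term. Write $S$ for the (linear) GP smoother, so that $\hat\delta=S(\delta+e_o+\varepsilon)$. The paper splits $\hat\delta-\delta=(\hat\delta-\tilde\delta)+(\tilde\delta-\delta)$, identifies the second piece with the Stage-1 error, and asserts $\|\hat\delta-\tilde\delta\|^2\lesssim n_E^{-2\alpha_{\delta,t}/(2\alpha_{\delta,t}+d_{\delta,t})}$ from a black-box contraction rate. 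That is modular: it would apply verbatim to non-linear Stage-2 learners such as BART or deep ensembles. It is also the weak link. Since $\hat\mu_e-\mu_e=\hat\delta-\tilde\delta$ identically, the claim taken literally would make the $n_O$ term superfluous. In fact $\hat\delta-\tilde\delta=(S\delta+S\varepsilon-\delta)-(I-S)e_o$, and the second part does not decay at the $\alpha_{\delta,t}$ rate when $e_o$ is rough; the paper's parenthetical note gestures at this without accounting for it. Your split $\hat\delta-\delta=(S\delta+S\varepsilon-\delta)+Se_o$ instead bounds the first piece by the regression rate for $\delta\in H^{\alpha_{\delta,t}}$ and the second by a constant times $\|e_o\|$, which folds into the $C_2$ term. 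This supplies exactly the justification the paper's step lacks. The price is that its hypotheses should be stated explicitly:
\begin{itemize}
\item fixed rather than marginal-likelihood-refitted hyperparameters, so that $S$ is linear;
\item a bound on $S$ from the empirical norm at the design points to $L_2(p_{\textup{tar}})$;
\item a design chosen without looking at residual values, whose empirical distribution stays comparable to $p_{\textup{tar}}$.
\end{itemize}
The first half of the last condition holds for R-EPIG-$\mu$ and R-EPIG-$\tau$ with fixed hyperparameters, since GP posterior covariances do not depend on the observed $r$. The comparability, however, is an extra assumption, not a consequence of positivity and support inclusion, though the paper tacitly makes it as well. Finally, drop the fallback of moving $e_o$ into $\epsilon_{\textup{approx}}$: $e_o$ vanishes as $n_O$ grows, so it is not irreducible.
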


\textbf{Interpretation.} Lemma~\ref{lemma:complexity_decomp_hoelder} governs the risk via residual complexity. \textit{Regime Dominance:} Since $n_E \ll n_O$, the observational error acts as a static baseline, making the convergence dominated by Term I. \textit{Efficiency Gap:} Standard "Tabula Rasa" methods are bottlenecked by the raw outcome complexity $\gO(n_E^{-\frac{2\alpha_{\mu_e}}{2\alpha_{\mu_e} + d_{\mu_e}}})$. In contrast, R-Design exploits the structural property that residuals are typically smoother ($\alpha_{\delta} > \alpha_{\mu_e}$), yielding a strictly faster decay rate and lower sample complexity. The detailed proof, adapting the bounds from~\citet[Thm.~2]{alaa2018limits} to our residual decomposition, is provided in App.~\ref{app:proof_lemma_complexity_hoelder}.

\subsection{Objective: Optimality of Residual Learning}
\label{sec:theory_objective}

Lemma~\ref{lemma:complexity_decomp_hoelder} establishes that the residual is the theoretically superior target for learning. We now show that explicitly targeting the reduction of residual uncertainty is a valid proxy for minimizing the causal error. Since the observational contrast $\hat{\tau}_o(\vx)$ is deterministic conditioned on $\gD_O$, the posterior uncertainty of $\tau = \hat{\tau}_o + \tau_\delta$ is identical to that of the residual contrast: $\sV[\tau \mid \gD_E, \gD_O] \equiv \sV[\tau_\delta \mid \gD_E]$.

\begin{proposition}[Objective Alignment]
\label{prop:objective_alignment}
Assume the CATE estimator is the posterior mean $\hat{\tau}_k(\vx) = \E[\tau(\vx) \mid \gH_{k-1}]$. For active CATE estimation targeting the population $p_{\textup{tar}}$, the optimal choice to minimize the expected Bayesian PEHE risk simplifies, under the R-Design decomposition, to minimizing the integrated posterior variance of the residual contrast $\tau_\delta(\vx)$:
\begin{equation}
\label{eq:ivr_objective}
\begin{aligned}
    (\vx_k^\dagger, t_k^\dagger) &= \argmin_{(\vx, t)} \E_{\ry} \Big[ \E_{\rvx^* \sim p_{\textup{tar}}} \big[ (\hat{\tau}_{k}(\rvx^*) - \tau(\rvx^*))^2 \mid \gD_+ \big] \Big] \\
    &\equiv \argmin_{(\vx, t)} \E_{\ry} \Big[ \int \sV[\tau_\delta(\vx^*) \mid \gD_+] \, d p_{\textup{tar}}(\vx^*) \Big],
\end{aligned}
\end{equation}
where $\gD_+ = \gD_E^{(k-1)} \cup \{(\vx, t, y)\}$ is the updated history, and the outer expectation is over the predictive posterior $p(y \mid \vx, t, \gH_{k-1})$.
\end{proposition}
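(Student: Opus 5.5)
The argument is the standard bias--variance identity for Bayesian posterior means, combined with the fact that the observational offset is deterministic. I would carry it out in three steps.

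\textbf{Step 1: conditional risk equals posterior variance.} Fix a candidate $(\vx, t)$ and an outcome $y$, and set $\gD_+ = \gD_E^{(k-1)} \cup \{(\vx,t,y)\}$. The estimator after incorporating the new point is the posterior mean $\hat\tau_k(\vx^*) = \E[\tau(\vx^*) \mid \gD_+, \gD_O]$. For every $\vx^*$, expand
\[
\E\big[(\hat\tau_k(\vx^*) - \tau(\vx^*))^2 \mid \gD_+\big] = \sV[\tau(\vx^*) \mid \gD_+] .
\]
This holds because the cross term vanishes: $\hat\tau_k(\vx^*)$ is $\gD_+$-measurable and equals the conditional mean of $\tau(\vx^*)$. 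Integrating over $\vx^* \sim p_{\textup{tar}}$ and applying Tonelli's theorem, which is justified since all integrands are nonnegative, gives
\[
\E_{\rvx^*}\big[(\hat\tau_k - \tau)^2 \mid \gD_+\big] = \int \sV[\tau(\vx^*)\mid \gD_+]\, dp_{\textup{tar}}(\vx^*).
\]

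\textbf{Step 2: remove the observational offset.} Use the decomposition $\tau = \hat\tau_o + \tau_\delta$. Conditional on $\gD_O$, the term $\hat\tau_o(\vx^*)$ is a constant. Variance is invariant to additive constants, so
\[
\sV[\tau(\vx^*)\mid \gD_+,\gD_O] = \sV[\tau_\delta(\vx^*)\mid \gD_+] .
\]
In the TSR model, the residual process $\boldsymbol\delta$ has a prior (Eq.~\ref{eq:tsr_mtgp}) that does not depend on $\gD_O$. The experimental data enter only through the residuals $r = y - \hat\mu_o(\vx,t)$, and the map $y \mapsto r$ is a bijection once $\gD_O$ is fixed. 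Hence conditioning on $\gD_+$ is the same as conditioning on the residualized data. Substituting into Step~1 gives the integrand on the right-hand side of Eq.~\ref{eq:ivr_objective}.

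\textbf{Step 3: take the outer expectation and minimize.} Take the expectation over $y \sim p(y\mid \vx,t,\gH_{k-1})$ on both sides. Equal objective functions have equal argmin sets, which yields the stated equivalence. For the MTGP with Gaussian noise there is an extra simplification: the posterior variance does not depend on the observed value of $y$. The outer expectation is therefore trivial, and the criterion is a deterministic integrated-variance reduction. I would mention this but it is not needed.

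\textbf{Main obstacle.} The delicate point is interpretive rather than technical. The estimator $\hat\tau_k$ must be read as the posterior mean \emph{after} incorporating the candidate point, i.e.\ conditioned on $\gD_+$, so that the bias--variance identity applies. The statement writes $\hat\tau_k = \E[\tau\mid\gH_{k-1}]$, which indexes the history differently, and I would resolve this by the convention that $\hat\tau_k$ uses the updated history $\gD_+$. A second point needs care: the risk is the \emph{Bayesian} risk under the model posterior, so ``truth'' $\tau$ is treated as a random draw from the posterior. With respect to a fixed frequentist truth, a model-misspecification bias term would remain and the equivalence would not hold.
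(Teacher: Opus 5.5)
Your proposal is correct and follows essentially the same route as the paper's proof. The paper also shows that the pointwise Bayesian risk of the posterior mean equals the posterior variance, and then removes $\hat{\tau}_o$ by translation invariance of variance, since $\hat{\tau}_o$ is fixed given $\gD_O$. Your reading of the estimator as the posterior mean given the augmented history matches the paper, which defines it with respect to $\gH_{k+1}$. Your extra remarks (Tonelli, the residual bijection, the $y$-independence of the GP posterior variance) are sound refinements.
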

\textbf{Practical Surrogate.} While Eq.~\ref{eq:ivr_objective} (Integrated Variance Reduction) is the gold standard, it is computationally intractable. R-Design instead maximizes the information gain $\mi(\rr; \tau_\delta)$, which serves as a computationally efficient surrogate. This relaxation is justified by the monotonicity between entropy reduction and variance minimization in Gaussian processes (see App.~\ref{app:entropy_proxy}).

\subsection{Surpassing Parameter-Based Acquisition}
\label{sec:theory_efficiency_gap}

Given the structural advantage of learning residuals (Lemma \ref{lemma:complexity_decomp_hoelder}), a natural question arises regarding the acquisition strategy: could standard parameter-based active learning methods (e.g., BALD targeting the residual model parameters $\theta$) achieve optimal efficiency? We provide a negative answer by quantifying the information-theoretic gap between parameter learning and estimand learning.

\begin{proposition}[Information Redundancy]
\label{prop:redundancy}
Let $\theta$ be the parameters of the residual model and $\tau_\delta(\vx^*)$ be the residual contrast at a target $\vx^*$. The parameter-oriented utility (Residual-BALD) decomposes into the target utility (R-EPIG) plus a non-negative redundancy term:
\begin{equation}
    \underbrace{\mi(y; \theta \mid \gD)}_{\text{Residual-BALD}} \;=\; \underbrace{\mi(y; \tau_\delta(\vx^*) \mid \gD)}_{\text{Target Utility (R-EPIG)}} \;+\; \underbrace{\mi(y; \theta \mid \tau_\delta(\vx^*), \gD)}_{\text{Nuisance Redundancy}}.
\end{equation}
\end{proposition}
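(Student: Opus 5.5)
The plan is to combine the chain rule for mutual information with one structural fact: the target $\tau_\delta(\vx^*)$ is a deterministic function of the parameters $\theta$. Throughout, conditioning on $\gD$ is carried along and treated as fixed. The query point $(\vx,t)$ is also implicit, so $y$ (equivalently $\rr$, since $\hat{\mu}_o$ is a fixed offset and $\mi(\rr;\cdot)\equiv\mi(y;\cdot)$) is the new observation.

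\textbf{Step 1 (chain rule).} Apply the chain rule for mutual information to the pair $(\theta, \tau_\delta(\vx^*))$ in two orders:
\begin{equation*}
\mi\big(y; \theta, \tau_\delta(\vx^*) \mid \gD\big) = \mi\big(y; \tau_\delta(\vx^*) \mid \gD\big) + \mi\big(y; \theta \mid \tau_\delta(\vx^*), \gD\big) = \mi\big(y; \theta \mid \gD\big) + \mi\big(y; \tau_\delta(\vx^*) \mid \theta, \gD\big).
\end{equation*}

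\textbf{Step 2 (determinism).} In the residual model, $\tau_\delta(\vx^*) = \delta_\theta(\vx^*,1)-\delta_\theta(\vx^*,0)$ is a measurable function of $\theta$. Given $\theta$, $\tau_\delta(\vx^*)$ is therefore degenerate, and $\mi(y;\tau_\delta(\vx^*)\mid\theta,\gD)=0$. Substituting this into the second expression of Step 1 gives the claimed identity.

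\textbf{Step 3 (non-negativity).} The redundancy term is a conditional mutual information, so it is $\ge 0$. It follows that Residual-BALD upper-bounds R-EPIG.

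\textbf{Main obstacle.} The delicate point is Step 2 when $\theta$ is infinite-dimensional, for example the latent function $\boldsymbol{\delta}$ of the MTGP in Eq.~\ref{eq:tsr_mtgp}, or when $\tau_\delta$ is a continuous variable. There, mutual information must be defined through Radon--Nikodym derivatives or as a supremum over finite partitions, rather than through differential entropies. Differential entropies of a degenerate conditional are $-\infty$, so they cannot be used. I would first try to justify the identity with the general measure-theoretic chain rule (Kolmogorov/Dobrushin), which holds whenever the quantities involved are finite. Concretely, $\mi(y;\theta\mid\gD)$ is finite under Gaussian observation noise $\sigma^2>0$, because $y$ depends on $\theta$ only through the scalar $\delta(\vx,t)$. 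This finiteness makes every term well defined.

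A cleaner fallback is to identify $\theta$ with the finite vector $(\delta(\vx,t), \tau_\delta(\vx^*), \text{rest})$. Because $y\perp\theta\mid\delta(\vx,t)$, the mutual information with the full parameter reduces to the mutual information with finitely many coordinates, and the chain rule then applies to finite-dimensional Gaussians.
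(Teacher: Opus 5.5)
Your proof is correct and follows the paper's argument. You expand $\mi(y;\theta,\tau_\delta(\vx^*)\mid\gD)$ by the chain rule in both orders, eliminate $\mi(y;\tau_\delta(\vx^*)\mid\theta,\gD)$ because $\tau_\delta(\vx^*)$ is a deterministic function of $\theta$, and get non-negativity from the remaining conditional mutual information. The paper adds only a final averaging over $\vx^*\sim p_{\textup{tar}}$, which is immediate since the left side does not depend on $\vx^*$. Your measure-theoretic remarks are extra rigor the paper omits. In fact the general chain rule holds as an identity in $[0,\infty]$ and your conclusion involves no subtraction, so even your finiteness check is not strictly needed.
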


\textbf{The Cost of Over-Modeling.} The term $\Delta = \mi(y; \theta \mid \tau_\delta(\vx^*), \gD)$ represents \textit{nuisance information}: bits acquired about the model's internal structure (e.g., high-frequency basis coefficients or weight permutations) that effectively cancel out when computing the contrast $\tau_\delta$. 
By maximizing the LHS, parameter-based methods inevitably waste a portion of the experimental budget on reducing this irrelevant uncertainty. In contrast, R-EPIG targets the first term directly, ensuring that every acquired sample contributes exclusively to refining the causal estimand on the target population. We provide the rigorous proof based on the chain rule of mutual information in App.~\ref{app:theory_comparisons}.

\subsection{Information-Theoretic Convergence}
\label{sec:theory_epig_convergence}

Having established the efficiency of the R-EPIG objective, we finally analyze whether it realizes the accelerated convergence predicted in Lemma~\ref{lemma:complexity_decomp_hoelder}. We invoke the transductive AL bound linking error reduction to the maximum information capacity $\gamma_{n}(\cdot)$~\citep{hubotter2024transductive}.

\begin{theorem}[Uncertainty Convergence]
\label{thm:epig_bound}
Let $\gamma_{n_B}(\tau_\delta)$ denote the maximum information capacity of the residual contrast function. Under the greedy R-EPIG strategy (assuming $\alpha$-weak submodularity), the posterior variance of the residual contrast converges uniformly. Specifically, with high probability, for all target points $\vx^* \in \gX$:
\begin{equation}
    \sV[\tau_\delta(\vx^*) \mid \gD_{n_B}] \le \eta^2_{\textup{feas}}(\vx^*) + \tilde{\gO}\left( \frac{\gamma_{n_B}(\tau_\delta)}{\sqrt{n_B}} \right),
\end{equation}
where $\eta^2_{\textup{feas}}(\cdot)$ represents the feasible irreducible uncertainty. Since the rate term depends only on $n_B$ and capacities, this implies uniform convergence to the irreducible floor.
\end{theorem}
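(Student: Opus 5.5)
The plan is to cast the bound as an instance of the transductive active learning guarantee of \citet{hubotter2024transductive}. The residual contrast $\tau_\delta = \delta(\cdot,1)-\delta(\cdot,0)$ is a linear functional of the MTGP in Eq.~\ref{eq:tsr_mtgp}, so $\tau_\delta$ is itself a GP. Its kernel is
\[
k_{\tau}(\vx,\vx') = K_{11}(\vx,\vx') - K_{10}(\vx,\vx') - K_{01}(\vx,\vx') + K_{00}(\vx,\vx').
\]
Each query $(\vx,t)$ yields a Gaussian observation $r = \delta(\vx,t)+\varepsilon$ of the joint process on $\gZ$. Hence, for any target $\vx^*$, the quantities $\mi(\rr;\tau_\delta(\vx^*)\mid\gH_k)$ and $\sV[\tau_\delta(\vx^*)\mid \gD]$ have closed forms. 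In particular
\[
\sV[\tau_\delta(\vx^*)\mid\gD] = \sigma_\tau^2(\vx^*)\exp\bigl(-2\,\mi(\tau_\delta(\vx^*);\gD)\bigr),
\]
which is the same entropy--variance monotonicity invoked after Prop.~\ref{prop:objective_alignment}. I will define the \emph{feasible irreducible uncertainty} as the posterior variance attainable by conditioning on noiseless observations at every point of the feasible pool $\gD_P\times\{0,1\}$:
\[
\eta^2_{\textup{feas}}(\vx^*) \coloneqq \sV[\tau_\delta(\vx^*)\mid \delta(\gZ_P)].
\]
This is the floor that no sampling scheme restricted to the pool can beat.

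The key steps, in order, are as follows.

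\textbf{Variance decomposition.} By the law of total variance for jointly Gaussian variables,
\[
\sV[\tau_\delta(\vx^*)\mid\gD_n] = \eta^2_{\textup{feas}}(\vx^*) + \bigl(\sV[\tau_\delta(\vx^*)\mid\gD_n] - \eta^2_{\textup{feas}}(\vx^*)\bigr).
\]
The second term is the variance of the projection $\E[\tau_\delta(\vx^*)\mid \delta(\gZ_P)]$ that remains unexplained by $\gD_n$. It is therefore controlled by the residual information $\mi(\tau_\delta(\vx^*);\delta(\gZ_P)\mid\gD_n)$.

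\textbf{Greedy accounting.} Under the assumed $\alpha$-weak submodularity of $S\mapsto \mi(\tau_\delta(\mathcal{A}); y_S)$ for a target set $\mathcal{A}$, the greedy R-EPIG rule attains at least a $(1-e^{-\alpha})$ fraction of the optimal information of any batch of the same size. Combining this with a telescoping sum of per-step marginal gains yields the key inequality
\[
\sum_{i\le n} \Delta_i \le \gamma_n(\tau_\delta),
\]
where $\Delta_i$ is the gain at step $i$ and $\gamma_n$ is the maximum information capacity of $\tau_\delta$. By pigeonhole, some, and by monotonicity the last, marginal gain is at most $\gamma_n/(\alpha n)$.

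\textbf{From gain to variance.} The per-step gain lower-bounds how much the excess variance at $\vx^*$ could still be reduced by the best single feasible query. This is where the argument of \citet[Thm.~3.3]{hubotter2024transductive} enters: it relates the excess variance to a sum of marginal gains, through Cauchy--Schwarz over $n$ steps, producing the $\gamma_n/\sqrt{n}$ rate instead of $\gamma_n/n$.

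\textbf{Uniformity and high probability.} To pass from a fixed $\vx^*$ (or the $p_{\text{tar}}$-average actually optimized) to all $\vx^*\in\gX$, I will take a finite cover of $\gX$ with Lipschitz continuity of $k_\tau$ and apply a union bound. This contributes logarithmic factors absorbed into $\tilde{\gO}$. The high-probability qualifier is needed only if hyperparameters, and hence $\gamma_n$, are estimated, or if the argument is phrased via confidence sets; for fixed kernels the variance bound is deterministic.

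The main obstacle is the third step, because R-EPIG maximizes an expectation over $p_{\text{tar}}$ rather than the worst-case target. My first attempt is to work with the sup-norm version of the transductive bound applied to the target set $\mathcal{A}=\text{supp}(p_{\text{tar}})$, which is covered by Support Inclusion in Assumption~\ref{ass:identifiability}. I would then bound the worst-case excess variance by the averaged one using a density lower bound on $p_{\text{tar}}$ over the cover, with the resulting constant absorbed into $\tilde{\gO}$. If that fails, I will restate the guarantee for $\vx^*$ in the support of $p_{\text{tar}}$ and extend it to the rest of $\gX$ only via the definition of $\eta^2_{\textup{feas}}$.
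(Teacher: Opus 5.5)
There is a genuine gap, and it sits in your definition of the floor. You set $\eta^2_{\textup{feas}}(\vx^*)=\sV[\tau_\delta(\vx^*)\mid\delta(\gZ_P)]$, i.e.\ noiseless knowledge of \emph{both} arms at every pool unit. That is the irreducible uncertainty of standard transductive active learning. It is approachable there only because every point of the sample space can be queried repeatedly. In this problem each unit yields a single noisy draw of a single arm and is then removed from $\gD_P$. Your floor is therefore generally unreachable, and the inequality you set out to prove fails.

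Concretely, let $\gX=\{a,b\}$, let $\delta(a,0),\delta(a,1),\delta(b,1)$ be i.i.d.\ $\mathcal{N}(0,1)$ a priori with $\delta(b,0)\equiv 0$, and let the pool hold one unit at $a$ and $N$ units at $b$. Then $\gA\mapsto\mi(\tau_\delta;\ry_{\gA})$ is $\alpha$-weakly submodular with $\alpha$ independent of $N$, and $\gamma_{n_B}(\tau_\delta)=O(\log n_B)$. So your right-hand side at $a$ tends to $0$ as $N,n_B\to\infty$. Yet every admissible design observes at most one arm at $a$, once, and the unobserved arm keeps its full prior variance. Hence $\sV[\tau_\delta(a)\mid\gD_{n_B}]\ge 1$ for all $n_B$.

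The paper builds the causal constraint into the floor instead: $\eta^2_{\textup{feas}}(\vx^*)=\min_{\gD\in\mathbb{S}_{\textup{max}}}\sV[\tau_\delta(\vx^*)\mid\gD]$. Here $\mathbb{S}_{\textup{max}}$ is the set of designs with exactly one (noisy) outcome per unit, i.e.\ the partition-matroid constraint on $\tilde{\gD}_P$. The example also shows where your third step breaks: once the unit at $a$ is used, no available query can reduce the excess variance there, so future marginal gains cannot control it.

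Even with the right floor, your passage from gain to variance lacks the inequality that carries the argument. You need an \emph{upper} bound on the excess variance in terms of the maximal marginal gain, and a small best single-query gain does not give one by itself. Resolving $\tau_\delta(\vx^*)$ may require many jointly informative queries, e.g.\ opposite arms at several neighbouring units, each weak on its own.

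The paper supplies this via an approximate Markov boundary $\mathcal{B}_\epsilon(\vx^*)\subset\tilde{\gD}_P$ of size $k_\epsilon$ with $\mi(\tau_\delta(\vx^*);\ry_{\tilde{\gD}_P}\mid\ry_{\mathcal{B}_\epsilon})\le\epsilon$. This gives $\sV[\tau_\delta(\vx^*)\mid\gD]\le 2\sigma^2\,\mi(\tau_\delta(\vx^*);\ry_{\mathcal{B}_\epsilon}\mid\gD)+\eta^2_{\textup{feas}}(\vx^*)+C_1\epsilon$. Weak submodularity then bounds the boundary information after greedy selection by $(k_\epsilon/\alpha)\Gamma_{n_B}$, and your $\gamma_{n_B}/(\alpha n_B)$ bound on the last gain finishes. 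The $\sqrt{n_B}$ arises from balancing $k_\epsilon$ against $\epsilon$ ($\epsilon\asymp n_B^{-1/2}$, $k_\epsilon\lesssim\sqrt{n_B}$), not from an unspecified Cauchy--Schwarz over steps.

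Two smaller points. Your concern that R-EPIG optimizes a $p_{\textup{tar}}$-average rather than the worst case is legitimate; the paper passes over it, and restricting the claim to $\mathrm{supp}(p_{\textup{tar}})$ is a sensible repair. Your covering and union bound are unnecessary, because the Gaussian variance bound is deterministic and pointwise. Neither point repairs the two issues above.
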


\begin{figure*}[t]
\centering
\subfloat{
    \includegraphics[width=0.48\linewidth]{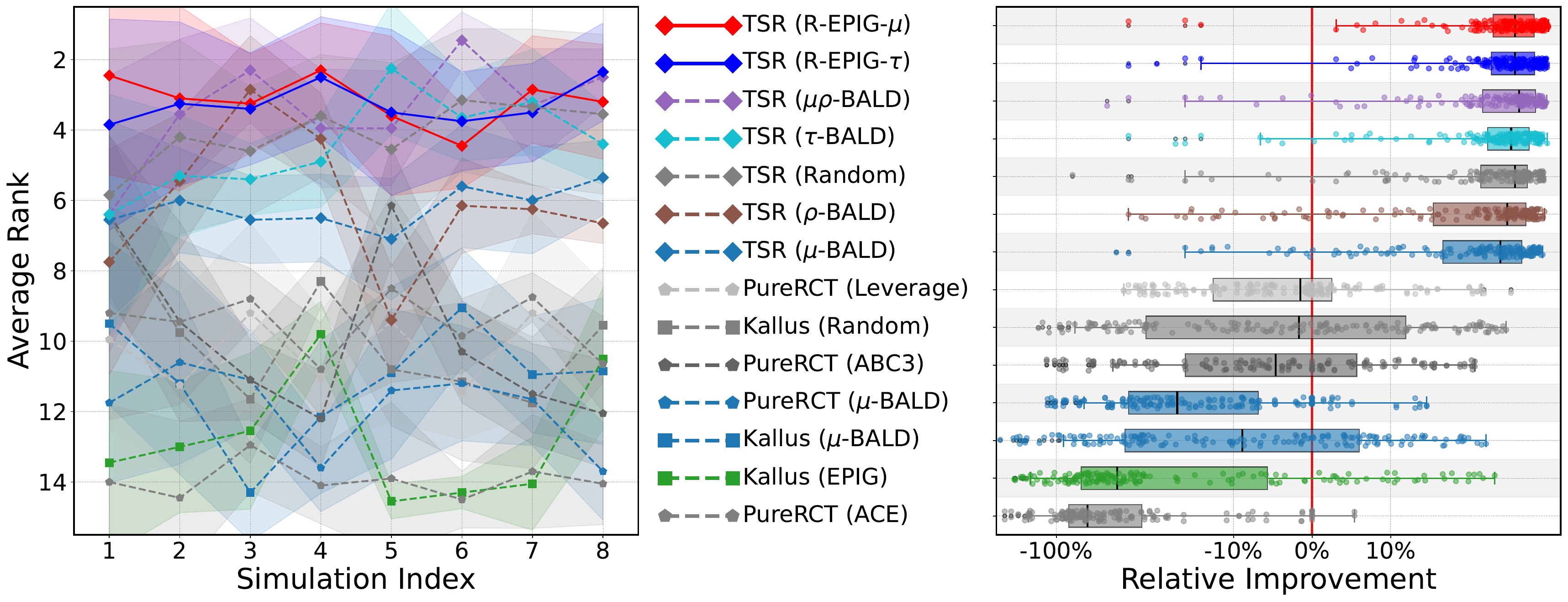}
}
\hfill
\subfloat{
    \includegraphics[width=0.48\linewidth]{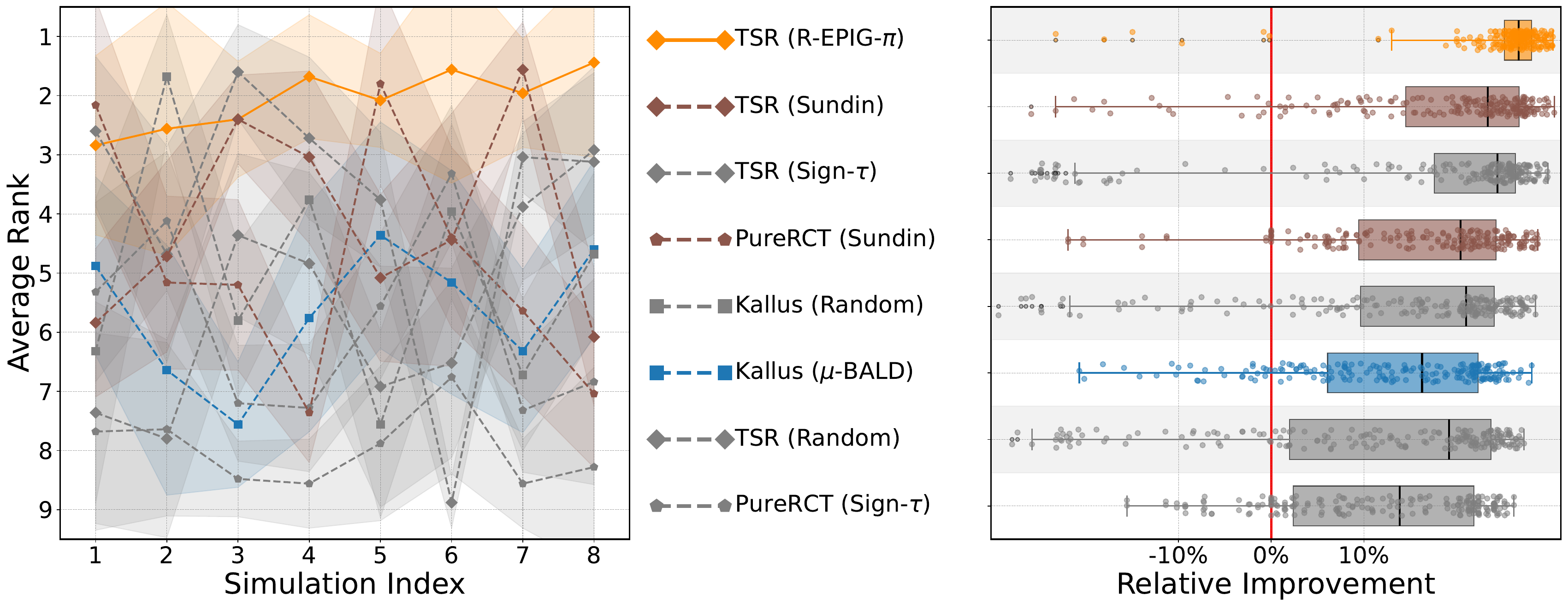}
}
\caption{Comparison of acquisition functions using the average rank metric and relative performance improvements for effect estimation and policy learning tasks over eight combinations of base effect functions, bias functions, and CATE functions.}
\label{fig:univariate_results}
\end{figure*}

\textbf{Efficiency Gap.} A standard active learner targeting the full surface $\tau$ obeys an identical bound governed by the larger capacity $\gamma_{n_B}(\tau)$. Crucially, since $\tau$ and $\tau_\delta$ share the exact same irreducible uncertainty floor ($\sV[\tau|\gD] \equiv \sV[\tau_\delta|\gD]$), the performance gap is strictly determined by the capacity ratio. As established in Lemma~\ref{lemma:complexity_decomp_hoelder}, since $\alpha_\delta > \alpha_{\mu_e}$, the residual capacity $\gamma_{n_B}(\tau_\delta)$ grows significantly slower than the full outcome capacity $\gamma_{n_B}(\tau)$ (e.g., logarithmic vs.\ polynomial). This structurally guarantees that R-Design achieves a superior convergence rate. We provide the detailed proof, establishing uniform convergence via spectral decay and treating approximate submodularity, in App.~\ref{app:proof_thm_epig}.

%% file: Pages/Experiments.tex
\section{Experimental Results}
\label{sec:experiments}

In this section, we report the empirical results to verify the effectiveness of our framework in acquiring data points, across multiple synthetic datasets, two semi-synthetic datasets, specifically the Infant Health and Development Program (IHDP)~\citep{hill2011bayesian} and The AIDS Clinical Trials Group Study 175 (ACTG-175)~\citep{hammer1996trial}. Details of all datasets are provided in App.~\ref{app:datasets}.

\begin{figure*}[t]
\centering
\subfloat{
    \includegraphics[width=0.23\textwidth]{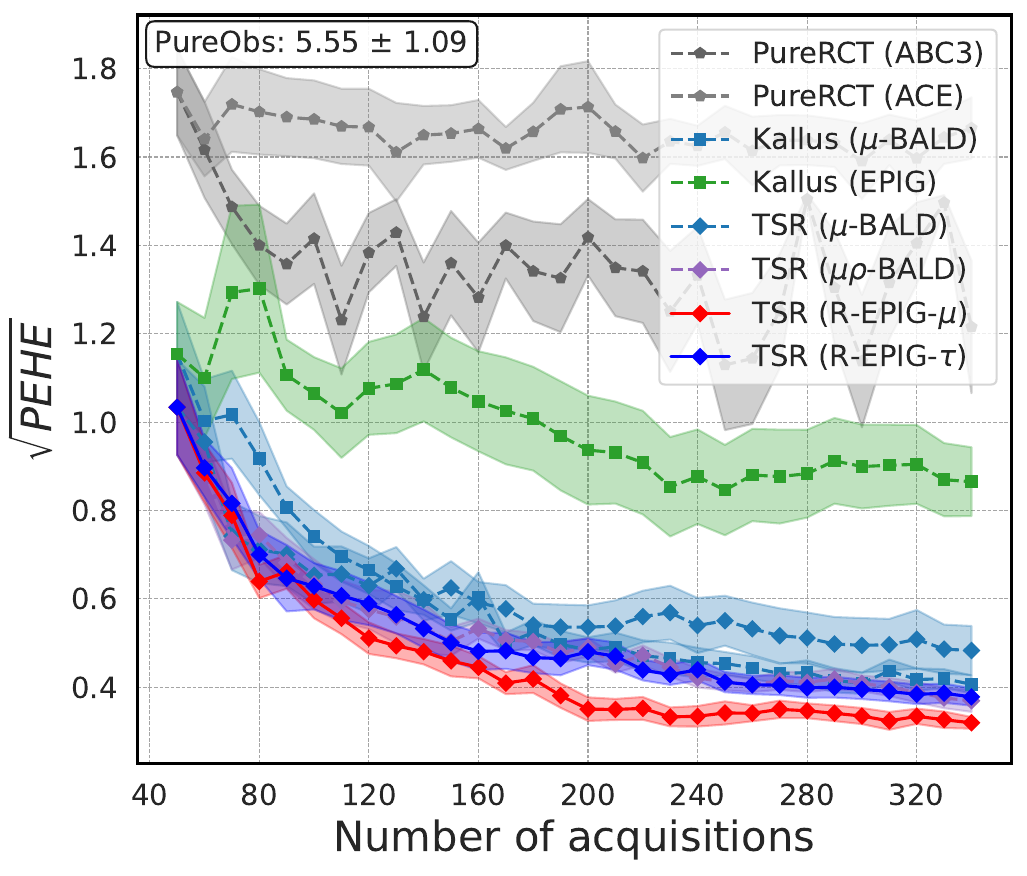}
}
\hfill
\subfloat{
    \includegraphics[width=0.23\textwidth]{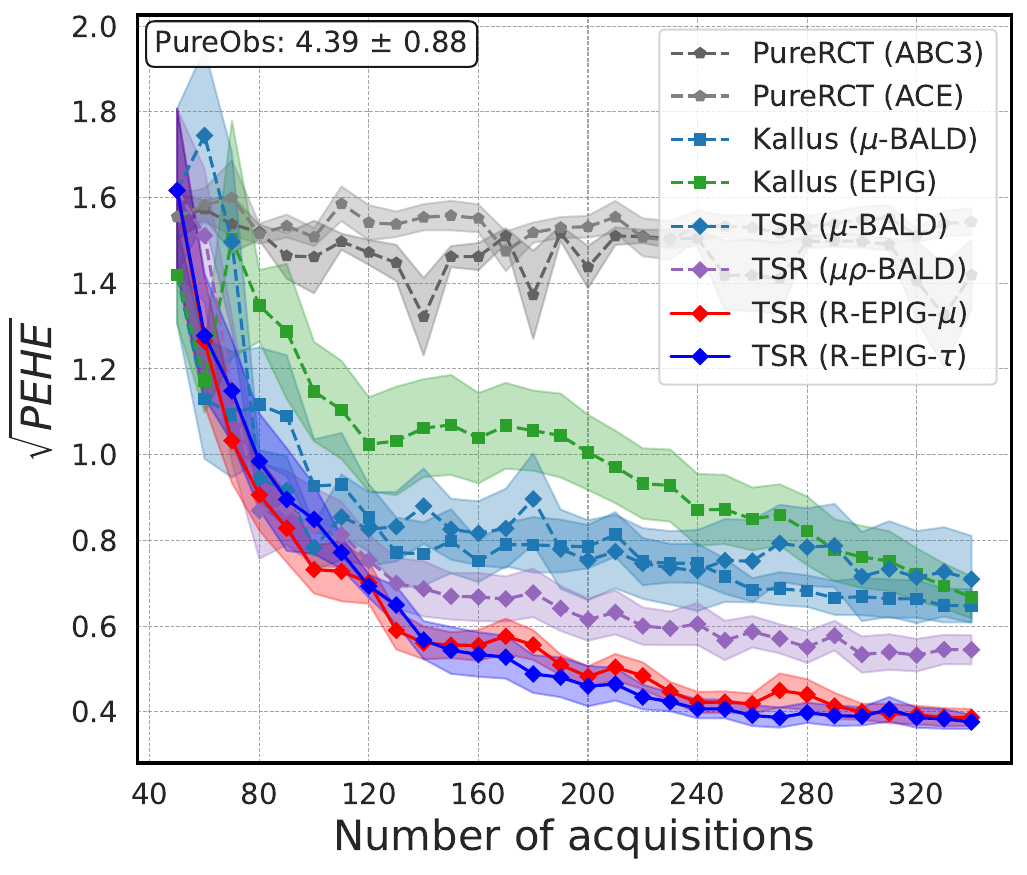}
}
\hfill
\subfloat{
    \includegraphics[width=0.23\textwidth]{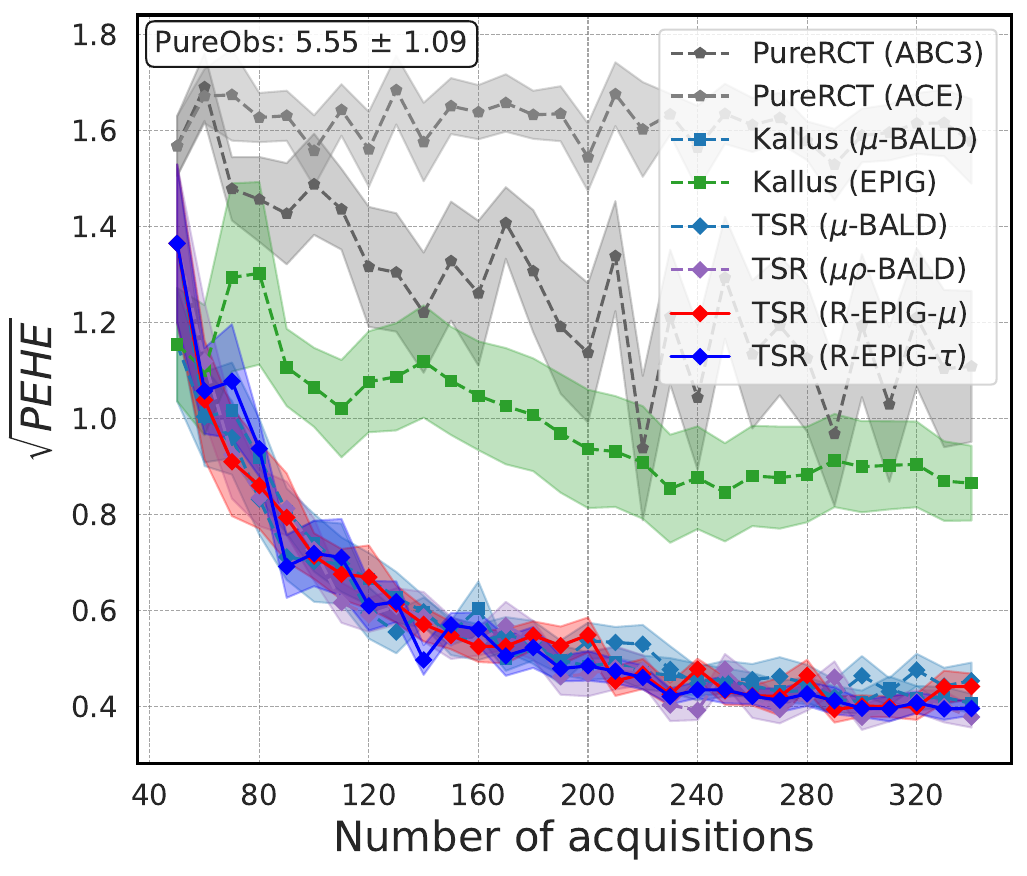}
}
\hfill
\subfloat{
    \includegraphics[width=0.23\textwidth]{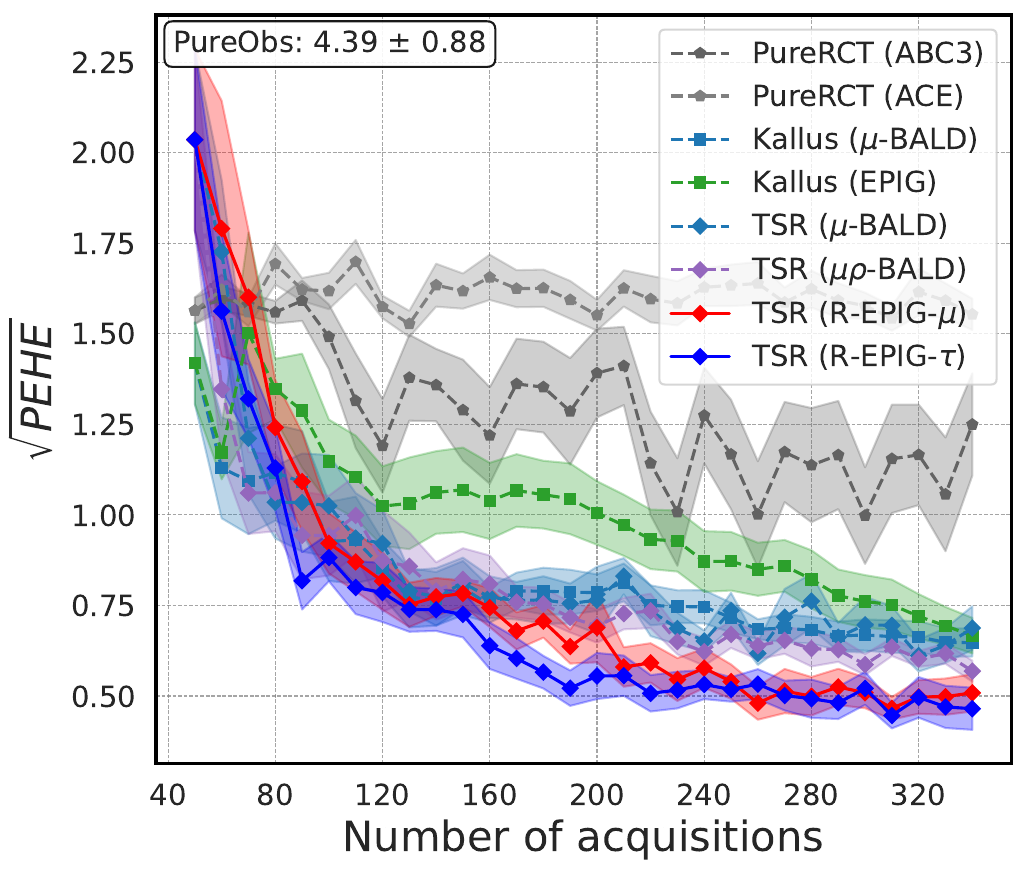}
}
\caption{Comparison of $\sqrt{\text{PEHE}}$ on simulation dataset of two trial estimators (CMGP and NSGP). (1,3) CMGP and NSGP, (2,4) CMGP and NSGP with heavy covariate shift.}
\label{fig:main_results}
\end{figure*}

\textbf{Baselines Acquisition Fuctions.} We benchmark our method against a diverse set of acquisition strategies, ranging from random sampling to advanced baselines: Leverage~\citep{addanki2022sample}, ACE~\citep{song2024ace}, ABC3~\citep{cha2025abc3}, Causal-BALD~\citep{jesson2021causal}. For decision-making settings, we additionally include Sundin~\citep{sundin2019active}, DEIG~\citep{filstroff2024targeted}, and RFAN~\citep{klein2025towards}.

\textbf{Inference Architectures and Backbones.} We compare our proposed R-Design (TSR) framework against two baseline architectures: (1) PureRCT, a standard approach that trains causal estimators solely on experimental data, ignoring observational priors; and (2) Kallus~\citep{kallus2018removing}, a data fusion method that pools observational and experimental data but corrects for confounding via importance weighting. All frameworks are instantiated using the same set of robust Bayesian learners. For the observational base (Stage 1), we primarily utilize the TabPFN-v2.5~\citep{grinsztajn2025tabpfn} due to its strong performance on tabular data, while also evaluating CausalPFN~\citep{balazadeh2025causalpfn}. For the experimental/residual learning phase (Stage 2), we employ established Bayesian estimators capable of providing valid joint posteriors, including CMGP~\citep{alaa2017bayesian}, NSGP~\citep{alaa2018limits}, BART~\citep{hill2011bayesian}, BCF~\citep{hahn2020bayesian}, and CMDE~\citep{jiang2023estimating}.

\textbf{Metrics.} We evaluate performance using two primary criteria. First, we report the PEHE (Eq.~\ref{eq:pehe}), which quantifies the accuracy of estimated treatment effects relative to the ground truth. Second, for decision-making tasks, we report the APE (Eq.~\ref{eq:ape}) and the Average Regret over the observational population~\citep{gao2024variational, fernandez2022causal}. All results are reported as mean $\pm$ standard deviation over $10$ independent runs.

\subsection{Synthetic Data}
We evaluate different acquisition functions using two simulation studies: a univariate and a multivariate setting. The univariate study consists of $8$ datasets spanning simple and complex combinations of base functions, bias functions, and CATE functions, and evaluates performance on both CATE estimation and policy learning tasks. Additional simulation results are provided in App.~\ref{app:more_results}, demonstrating the robustness and competitive performance of R-Design across a wide range of settings.

\textbf{Results.} Figs~\ref{fig:main_results} and~\ref{fig:synthetic_acquisition_comparison} summarize the main results on synthetic datasets with $\text{dim}=6$ covariates, demonstrating the strong performance of our proposed R-EPIG acquisition criteria within the TSR framework across both effect estimation and policy learning tasks.

\textit{Effect Estimation Performance.} As shown in Fig.~\ref{fig:main_results}, our proposed R-EPIG criteria consistently achieve substantial PEHE reductions compared to all baseline methods across the entire acquisition trajectory. Among all $15$ methods evaluated, R-EPIG-$\mu$ achieves the best average rank of $3.00$ and R-EPIG-$\tau$ ranks second with an average rank of $3.60$ in the CD diagram based on the Nemenyi post-hoc test. The critical difference threshold ($\text{CD} = 3.03$ at $\alpha = 0.05$) confirms that both R-EPIG variants are statistically indistinguishable from each other while being significantly better than all non-TSR baselines. Notably, all TSR-based methods (ranks $1$--$7$) substantially outperform Kallus-based methods (ranks $8$--$10$) and PureRCT methods (ranks $11$--$15$), validating the effectiveness of our two-stage framework that leverages observational data for target distribution estimation. Pairwise Wilcoxon signed-rank tests further confirm that R-EPIG-$\mu$ significantly outperforms $14$ of $15$ baselines ($p < 0.05$) with an $84.5\%$ win rate, and R-EPIG-$\tau$ outperforms $13$ baselines with an $80.5\%$ win rate. Importantly, neither R-EPIG variant loses significantly to any baseline method.

\begin{figure}[t]
\centering
\begin{minipage}{0.48\textwidth}
    \centering
    \subfloat{
        \includegraphics[width=0.45\linewidth]{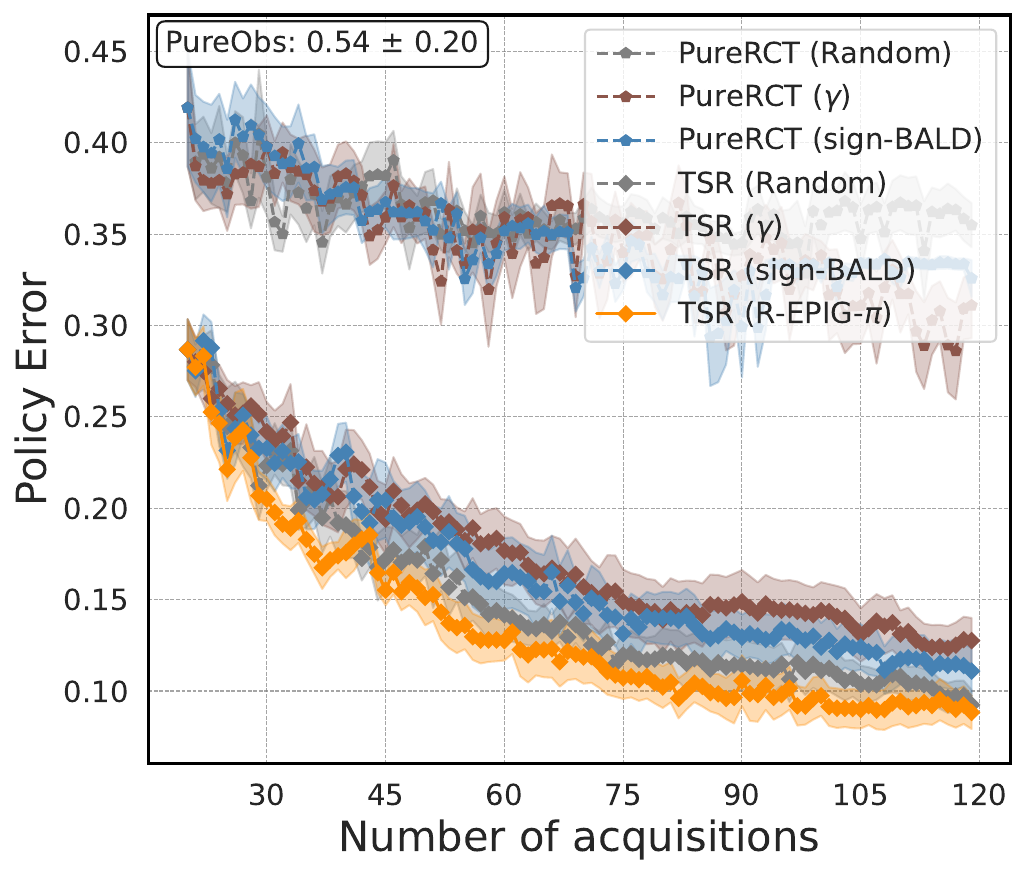}
    }
    \hfill
    \subfloat{
        \includegraphics[width=0.45\linewidth]{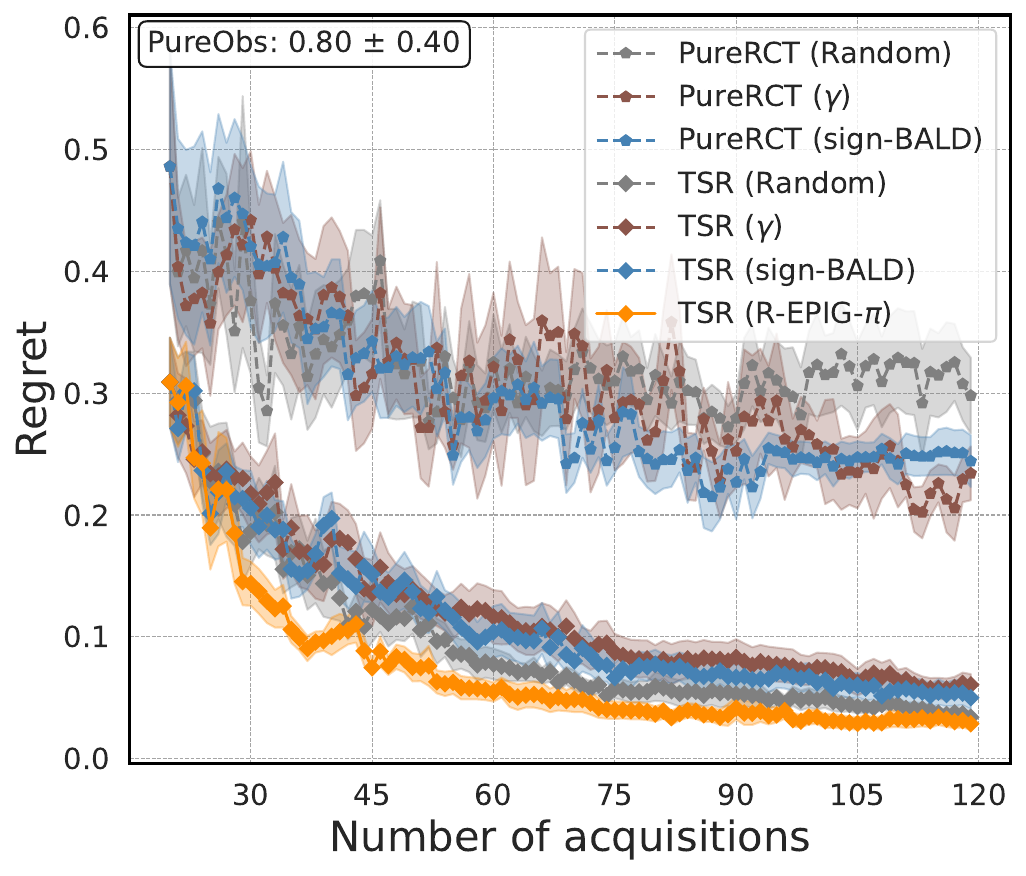}
    }
    \caption{Performance comparison of all methods on policy learning task (a) APE (b) AR.}
    \label{fig:policy_results}
\end{minipage}
\hfill
\begin{minipage}{0.48\textwidth}
    \centering
    \subfloat{
        \includegraphics[width=0.45\linewidth]{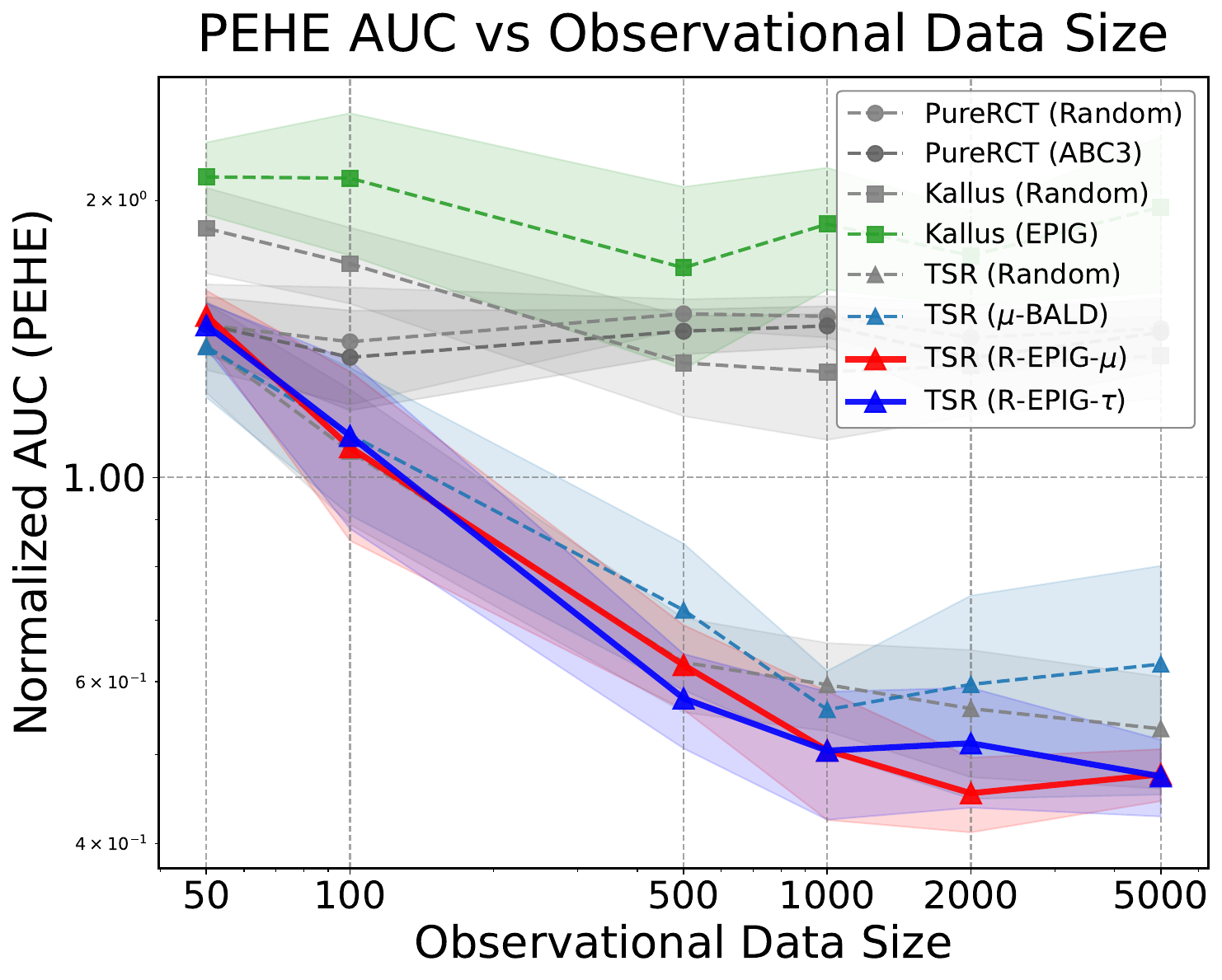}
    }
    \hfill
    \subfloat{
        \includegraphics[width=0.45\linewidth]{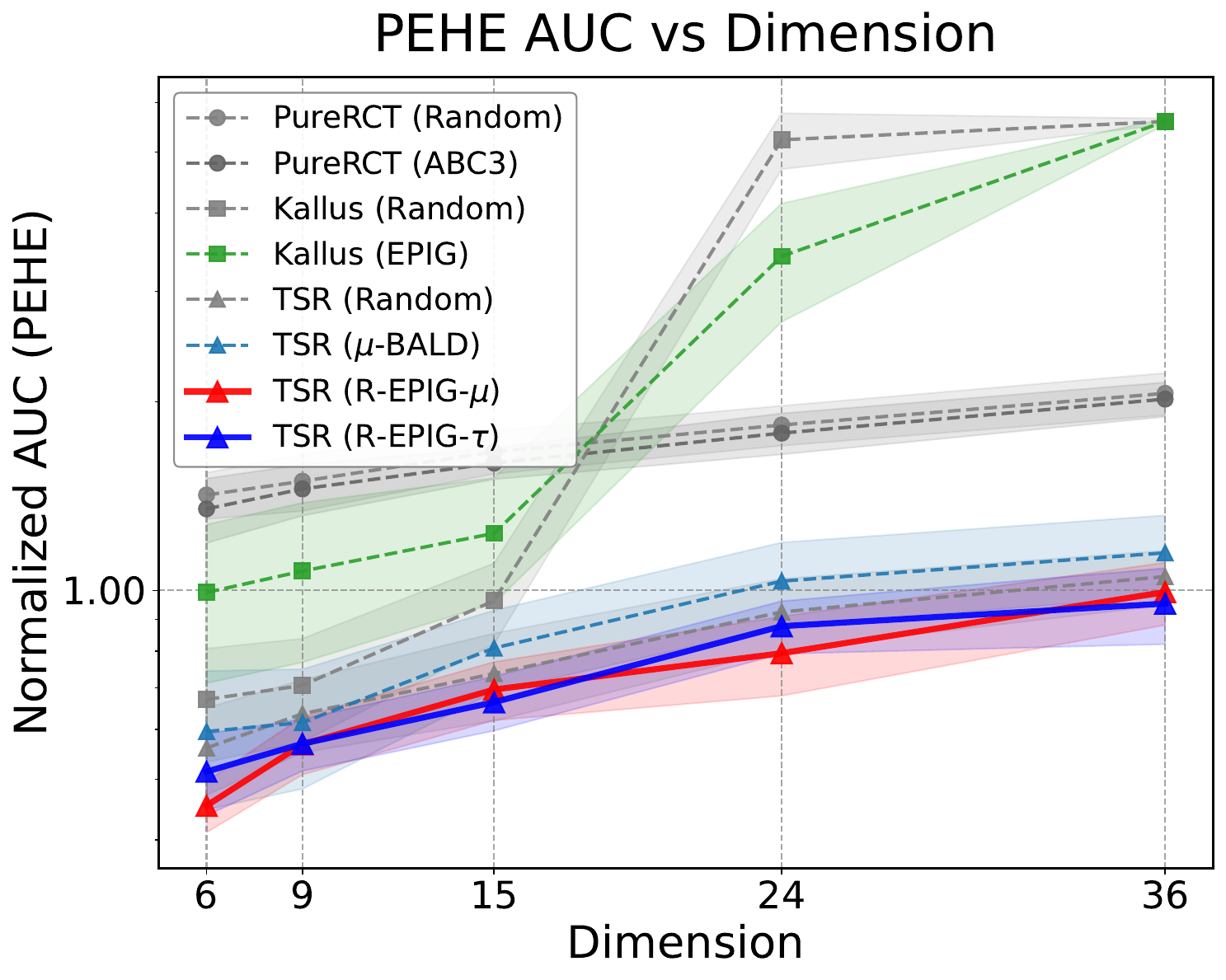}
    }
    \caption{AUC under varying prior sizes and covariate dims.}
    \label{fig:different_size_dim}
\end{minipage}
\end{figure}

\textit{Policy Learning Performance.} For treatment policy optimization, we evaluate methods using APE and Average Regret (AR) metrics. As illustrated in Fig.~\ref{fig:policy_results}, R-EPIG-$\pi$, our policy-aware acquisition function specifically designed for decision-making, outperforms all baseline acquisition functions on both metrics. This demonstrates the benefit of directly targeting the policy-relevant information gain rather than relying on CATE estimation accuracy as a proxy for policy quality.

\textit{Scalability Analysis.} Fig.~\ref{fig:different_size_dim} examines method performance as the covariate dimensionality and observational dataset size vary. As the dimension increases from $6$ to $36$, the performance gap between TSR methods and baselines widens, with R-EPIG-$\mu$ maintaining its top rank (avg.~rank $3.00$) across all dimensions. This robustness stems from R-EPIG's explicit awareness of the target distribution, which becomes increasingly important under severe distribution shift in higher-dimensional spaces. When varying the observational data size from $50$ to $5000$ samples, TSR methods demonstrate consistent improvements with more observational data, while the relative ordering of acquisition functions remains stable.

\textit{Robustness Across Data Generating Processes.} Fig.~\ref{fig:univariate_results} evaluates the robustness of R-EPIG criteria across $27$ diverse combinations of base effect functions (linear, polynomial, sinusoidal), confounding bias patterns (uniform, moderate, strong), and CATE structures (constant, heterogeneous, interaction). Across all configurations, R-EPIG-$\mu$ and R-EPIG-$\tau$ consistently rank among the top performers, demonstrating that their effectiveness is not limited to specific functional forms but generalizes across a broad spectrum of causal data generating processes. Comprehensive numerical results, additional convergence plots, and detailed ablation studies examining the sensitivity to hyperparameters, kernel choices, and model specifications are provided in App.~\ref{app:more_results}.
\begin{table}[h]
\centering
\caption{Index of additional ablation studies.}
\label{tab:ablation_index}
\small
\setlength{\tabcolsep}{3pt}
\begin{tabular}{@{}l p{5.8cm} c@{}}
\toprule
\textbf{Dimension} & \textbf{Variations Evaluated} & \textbf{Ref.} \\ \midrule

\multicolumn{3}{@{}l}{\textit{\textbf{1. Active Learning Configuration}}} \\
\quad Batch Size & $n_b \in \{2, 3, 5, 10, 20, 30\}$ & \S\ref{fig:different_step_sizes} \\
\quad Pool Size & $n_P \in \{500, 1000, 1500, 2000, 2500\}$ & \S\ref{fig:different_pool_sizes} \\\midrule

\multicolumn{3}{@{}l}{\textit{\textbf{2. Model Agnosticism (Robustness)}}} \\
\quad Stage 1 (Prior) & TabPFN, CMGP, NSGP, CausalPFN & \S\ref{fig:different_obs_models} \\
\quad Stage 2 (Trial) & CMGP, NSGP, BART, BCF, CMDE & \S\ref{fig:different_trial_models} \\
\quad Kernel Type & RBF, Mat\'{e}rn, RQ & \S\ref{fig:different_kernels} \\ \bottomrule
\end{tabular}
\end{table}

\textit{More Ablation Studies.} Tab.~\ref{tab:ablation_index} summarizes additional ablation studies examining the sensitivity of our framework to various hyperparameters and model choices. These include AL configurations (batch size, pool size, and temperature) as well as model backbone variations for both Stage~1 (prior models) and Stage~2 (trial models). Detailed experimental setups, results, and analyses for each ablation are provided the corresponding appendix sections.

\begin{figure}[t]
\centering
\begin{minipage}{0.48\textwidth}
    \centering
    \subfloat{
        \includegraphics[width=0.45\linewidth]{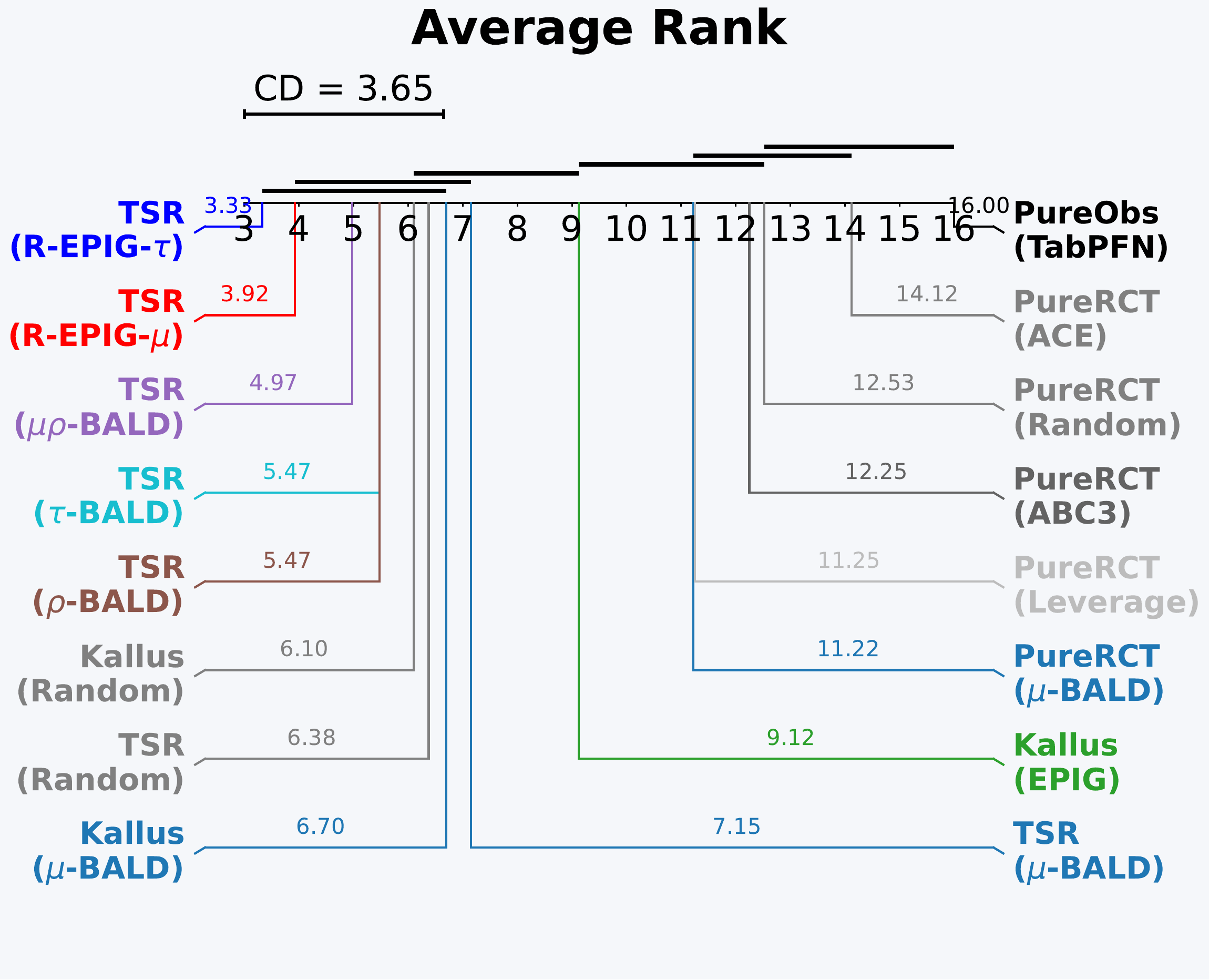}
    }
    \hfill
    \subfloat{
        \includegraphics[width=0.45\linewidth]{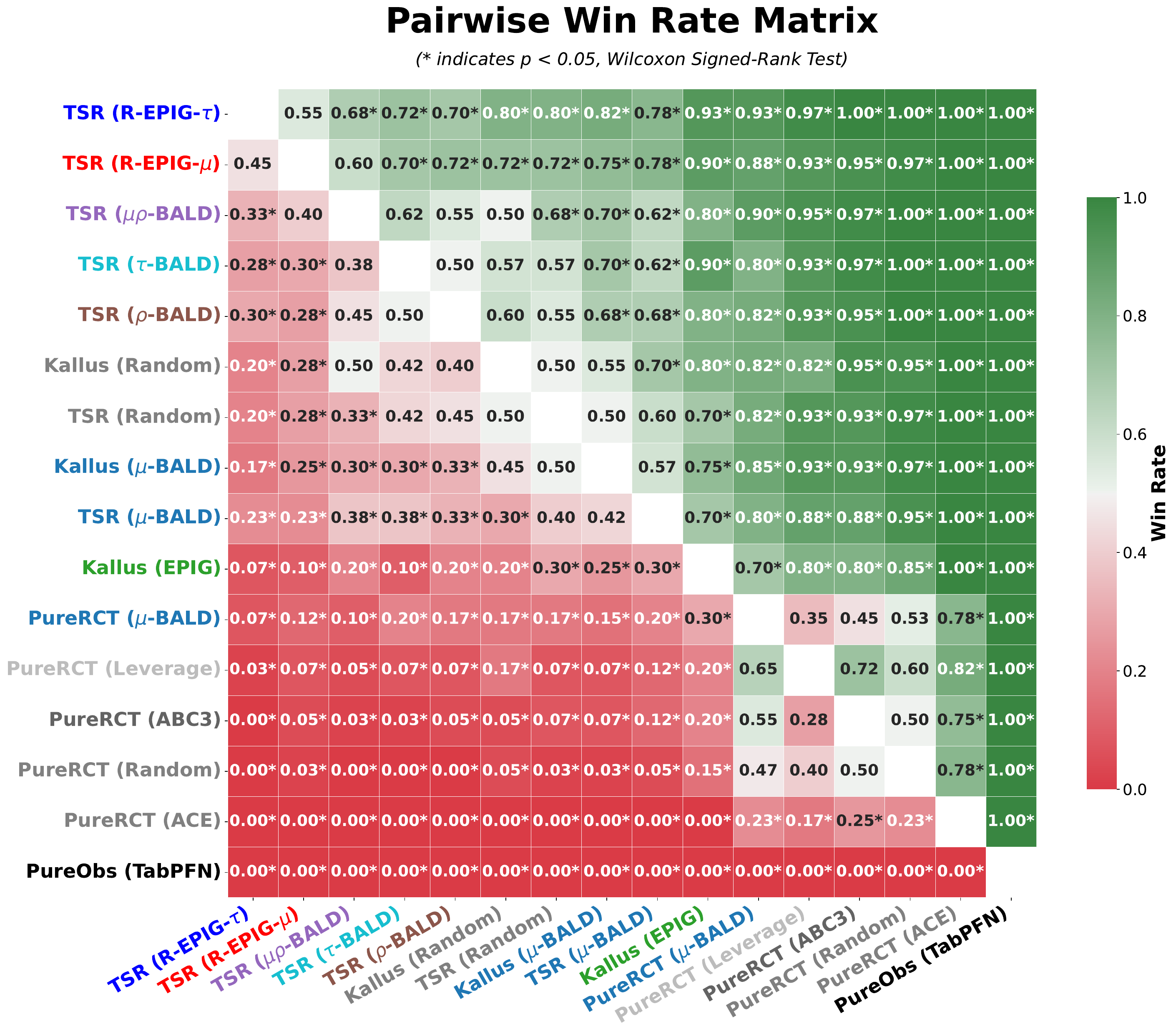}
    }
    \caption{Comparison of acquisition functions on synthetic datasets: (a) critical difference diagram across all settings, (b) pairwise win rate matrix with statistical significance}
    \label{fig:synthetic_acquisition_comparison}
\end{minipage}
\hfill
\begin{minipage}{0.48\textwidth}
    \centering
    \subfloat{
        \includegraphics[width=0.45\linewidth]{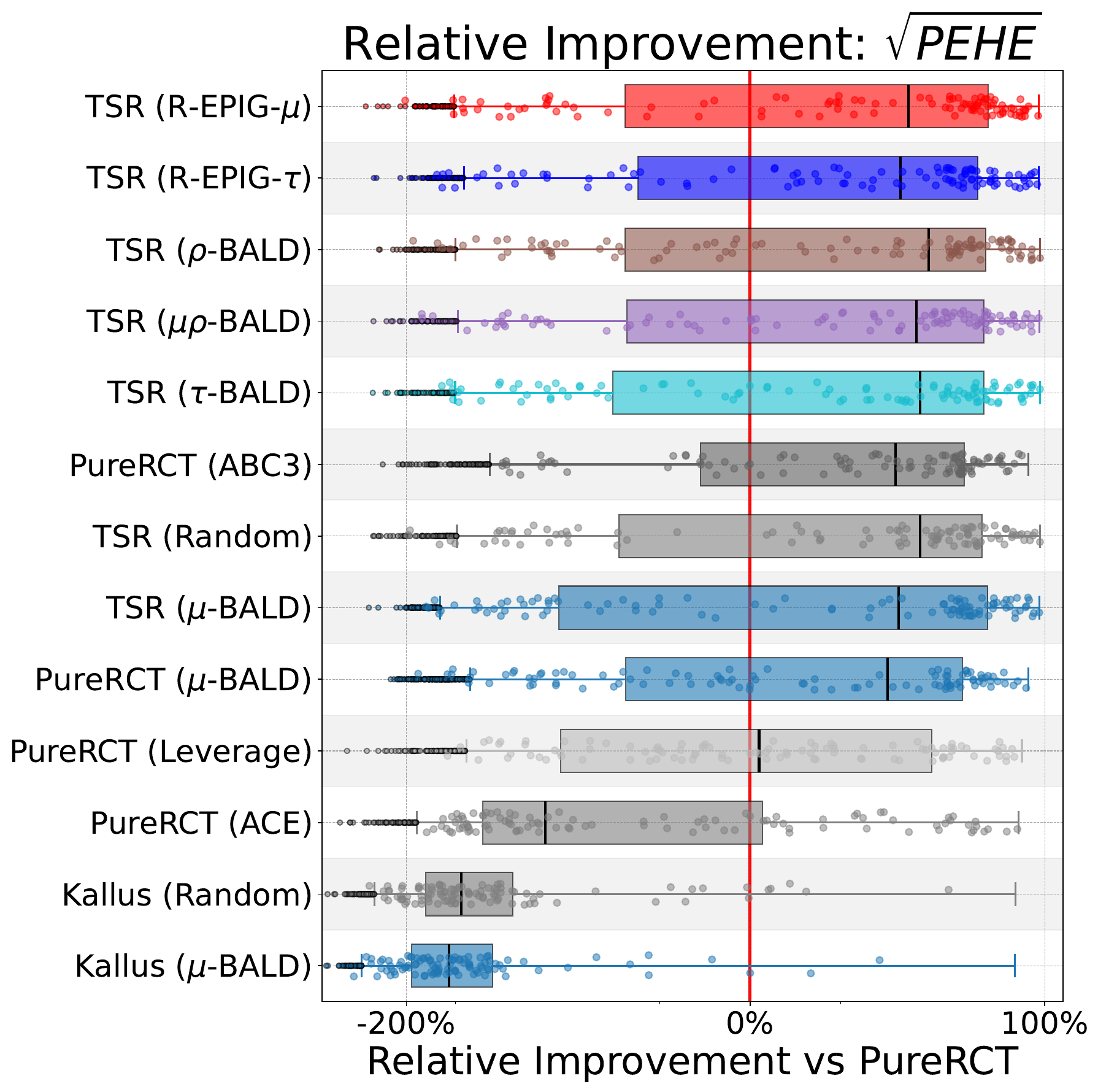}
    }
    \hfill
    \subfloat{
        \includegraphics[width=0.45\linewidth]{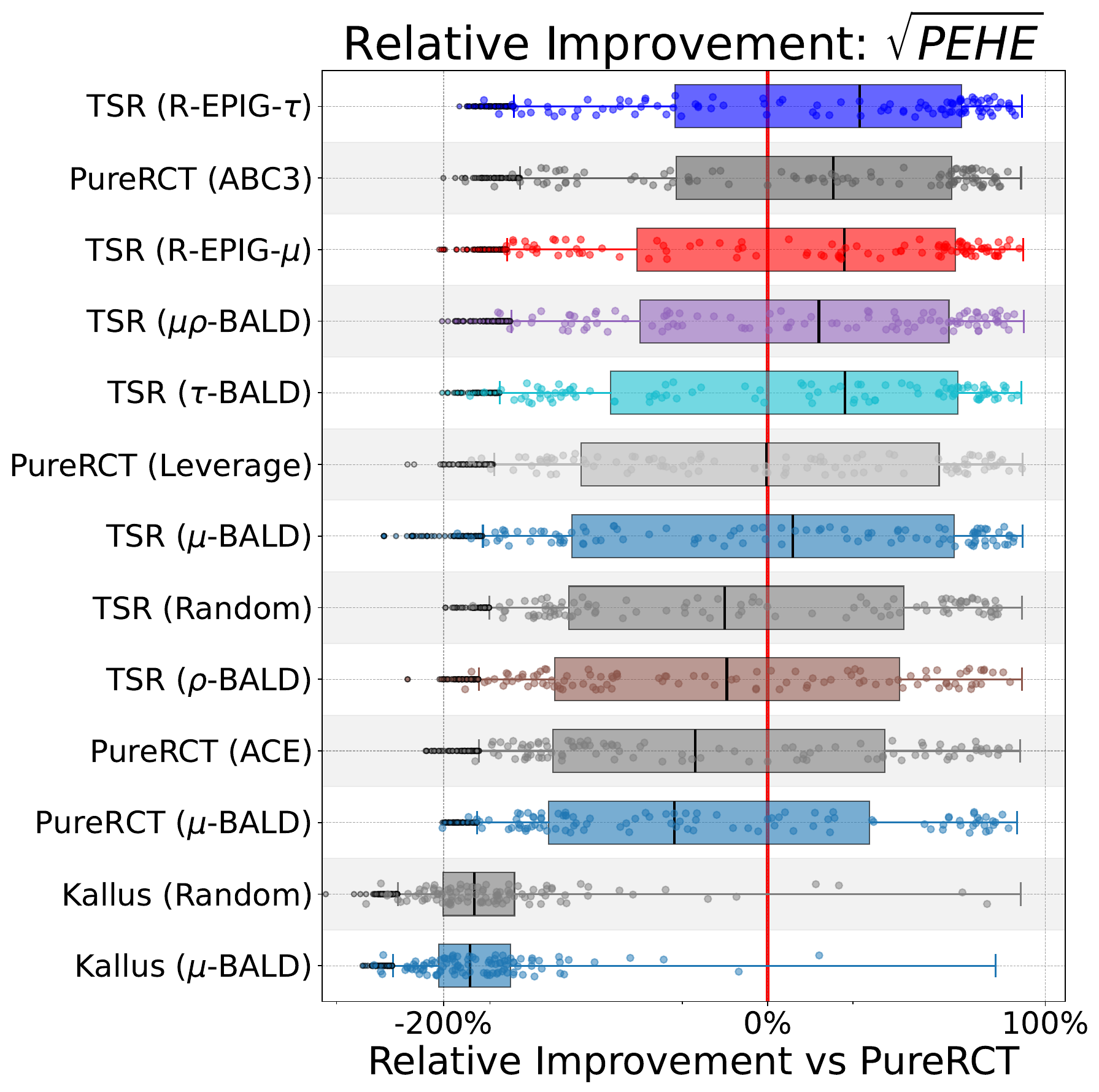}
    }
    \caption{Comparison of acquisition functions on semi-synthetic datasets. Relative performance improvement on the IHDP dataset (left) and the ACTG-175 dataset (right).}
    \label{fig:semi_synthetic_results}
\end{minipage}
\end{figure}

\subsection{Semi-synthetic Data.} 
We evaluate our framework on two well-established semi-synthetic benchmarks. The first, IHDP~\citep{hill2011bayesian} ($25$ covariates), induces selection bias by removing a non-random subset of treated units from a randomized trial. The second, ACTG-175~\citep{hammer1996trial} ($12$ covariates), constructs an observational cohort by excluding participants based on enrollment symptoms. Fig.~\ref{fig:semi_synthetic_results} reports relative performance improvements over the PureRCT strategy with random acquisition. On IHDP, TSR with R-EPIG-$\mu$ achieves the best performance, followed by R-EPIG-$\tau$. We attribute this to the higher covariate dimensionality of IHDP, where targeting the outcome surfaces provides more robust uncertainty quantification. In contrast, ACTG-175 has a lower-dimensional covariate space, under which R-EPIG-$\tau$ ranks first due to its direct focus on treatment effect heterogeneity, while R-EPIG-$\mu$ ranks third, demonstrating continued competitiveness across problem settings. Additional results are provided in the App.~\ref{app:more_results}.

%% file: Pages/Related_works.tex
\section{Related Works}
\label{sec:related_works}

Our framework lies at the intersection of AL and adaptive experimental design for causal inference. Existing literature can be broadly categorized by the level of control the learner has over the intervention mechanism. One line focuses on active CATE estimation from a fixed observational cohort~\citep{jesson2021causal, gao2025causal, gao2025active}. In this setting, treatments have already been assigned by an unknown policy, and the learner selectively queries costly outcomes to improve a CATE model. The core challenge here is a structural transduction problem: inferring pairs of unobservable potential outcomes from single factual queries. However, the learner is passive regarding the treatment assignment, limiting the ability to explore counterfactuals directly. A second, distinct line of research concerns adaptive experimental design~\citep{addanki2022sample, zhang2025active}. Here, the learner has the power to assign treatments. Recent works have explored adaptive randomization to minimize ATE variance~\citep{kato2024active}, methods for covariate balancing in a finite pool~\citep{harshaw2024balancing}, and Bayesian experimental design to reduce CATE posterior variance~\citep{cha2025abc3, klein2025towards}. However, these methods typically operate in a tabula rasa setting, designing experiments from scratch while ignoring the wealth of existing observational data. We bridge this gap by addressing a hybrid question: given a large but biased observational prior, how should we sequentially assign treatments to most efficiently learn the residual function required to debias the prior and recover the true CATE? We provide a detailed review of additional related topics in App.~\ref{app_sec:more_related_works}.

%% file: Pages/Conclusion.tex
\section{Discussion and Conclusion}
\label{sec:discussions}

We introduced R-Design, a paradigm shift that bridges the divide between large-scale observational studies and targeted experimental design. By formalizing the problem as active residual learning, we demonstrated that biased observational models should not be discarded, but rather repurposed as foundational priors. Our theoretical analysis rigorously established the structural efficiency gap, proving that estimating the smooth residual contrast requires significantly fewer samples than reconstructing complex outcome mechanisms from scratch. Furthermore, R-EPIG resolves the critical misalignment between active learning objectives and downstream causal tasks: by focusing strictly on the target population and adapting to specific goals (estimation vs. policy), it prevents budget waste on irrelevant uncertainties. Ultimately, R-Design offers a blueprint for resource-constrained causal inference: it suggests that instead of pursuing global perfection through massive experimentation, we can achieve safer, more efficient decision-making by strategically repairing the biases of the past.

\textbf{Future works.} While R-Design currently focuses on binary interventions with a single prior, future iterations will expand the scope of residual learning. We plan to extend the framework to continuous treatment regimes (e.g., dosage), where the objective shifts from estimating scalar contrasts to learning complex residual dose-response surfaces.

%% file: Pages/Appendix/Additional_related_works.tex
\section{Extended Related Works}
\label{app_sec:more_related_works}

In this section, we provide a more detailed review of the literature related to our paper, including the CATE estimation from observational data, causal data fusion (observational data and RCT data), and adaptive experimental design, identifying the specific differences and gaps that R-Design addresses.

\paragraph{CATE Estimation.}
A dominant paradigm for estimating CATE is called meta-learners, which decompose the problem into estimating nuisance components (e.g., outcome regressions and propensity scores) and then constructing a final CATE estimator from a pseudo-outcome~\citep{curth2021nonparametric}. Representative examples include the Two (T)-learner, Single (S)-learner, and X-learner~\citep{kunzel2019metalearners}, the Domain learner~\citep{shalit2017estimating}, as well as doubly robust methods such as the DR-learner~\citep{kennedy2023towards} and the Residual learner~\citep{nie2021quasi}. Beyond meta-learners, a variety of powerful alternatives have been proposed, including tree-based ensembles such as Causal Forests~\citep{wager2018estimation} and Bayesian Additive Regression Trees (BART)~\citep{hill2011bayesian}, Bayesian approaches such as Causal Gaussian Processes~\citep{alaa2017bayesian, alaa2018limits} and Bayesian causal forests~\citep{hahn2020bayesian}, CausalPFN~\citep{balazadeh2025causalpfn}, Causal Multi-task Deep Ensemble (CMDE)~\citep{jiang2023estimating}, as well as kernel-based methods~\citep{Sej2025a}. While our proposed data acquisition strategy is compatible with and can benefit all of these downstream estimators, our contribution is orthogonal to estimator design: rather than introducing a new CATE estimator, we develop a principled approach for collecting more informative data, thereby improving the sample efficiency of any chosen model.

\paragraph{Data Fusion for CATE Estimation.}
The complementary strengths of observational data and RCTs have motivated a substantial body of work on data fusion for causal inference. A prominent line of research leverages a small amount of unconfounded RCT data to correct for hidden confounding in much larger observational datasets~\citep{colnet2024causal}. One class of approaches explicitly models the bias or confounding function, ranging from strong parametric assumptions such as linearity~\citep{kallus2018removing} to more flexible nonparametric estimators~\citep{yang2025data, wu2022integrative}. A distinct paradigm is based on representation learning, typically implemented in a two-stage framework: a shared representation is first learned from the large observational dataset and subsequently refined or calibrated using unconfounded RCT data, giving rise to methods such as CorNet~\citep{hatt2022combining} and the Two-Stage Pretraining-Finetuning (TSPF)~\citep{zhou2025two}. Complementary Bayesian perspectives study the fundamental limits of validation under data fusion~\citep{fawkes2025the} or employ multi-task Gaussian processes to adaptively control the influence of biased data sources~\citep{dimitriou2024data}. While this literature provides powerful tools for the \emph{retrospective} analysis of fixed, heterogeneous datasets, our work addresses a fundamentally different and \emph{prospective} problem: how to exploit an existing observational dataset to actively guide the future collection of experimental data.

\paragraph{Adaptive Causal Experimental Design.}
A significant body of work focuses on optimizing the treatment assignment policy to minimize predictive error. This literature can be broadly categorized into four streams. 

(1) \textit{Adaptive Randomization:} One major line involves updating treatment probabilities based on accumulating data to minimize the variance of an estimator, typically the ATE. This area rests on firm theoretical foundations~\citep{van2008construction, hahn2011adaptive}, with recent works proposing refined designs that utilize online estimates of nuisance components~\citep{kato2021adaptive, tabord2023stratification}, specialized estimators like A2IPW~\citep{kato2020efficient}, or designs tailored for adaptive data collection~\citep{cook2024semiparametric}.

(2) \textit{Finite Pool Experimental Design:} A complementary line considers design choices for a fixed, finite pool of individuals. Research in this domain ranges from foundational analyses of the tradeoff between covariate balance and robustness~\citep{efron1971forcing} to modern frameworks offering finite-sample guarantees, such as leverage score sampling~\citep{addanki2022sample, ghadiri2023finite} and the Gram-Schmidt Walk~\citep{harshaw2024balancing}.

(3) \textit{Bayesian Design for CATE:} Recent advances have approached CATE estimation through the lens of Bayesian experimental design. Methods in this stream integrate regulatory constraints~\citep{klein2025towards} or address structural uncertainty~\citep{toth2022active}, while others develop GP-based acquisition functions to minimize posterior variance~\citep{cha2025abc3} or provide guarantees in network settings~\citep{zhang2025active}. Crucially, however, these methods typically operate in a \textit{tabula rasa} setting, designing experiments from scratch without leveraging prior observational data.

(4) \textit{Active CATE Estimation:} Finally, a fourth stream focuses on active outcome acquisition for CATE estimation, where treatment assignments are often observational or fixed, but outcomes are costly to acquire~\citep{nwankwo2025batch}. The goal is to selectively query outcomes from a cohort to refine CATE models. While some existing methods adapt standard active learning heuristics, such as diversity-based coreset selection~\citep{qin2021budgeted, wen2025enhancing} or targeting specific causal error components~\citep{sundin2019active}, these often rely on indirect proxies rather than directly reducing estimand uncertainty. Information-theoretic criteria offer a closer alignment: Causal-EPIG~\citep{gao2025causal} directly targets the uncertainty in the CATE estimand; Causal-EIG~\citep{fawkes2025is} and Causal-BALD~\citep{jesson2021causal} focus on model-internal parameters; and ActiveCQ~\citep{gao2025active} generalizes this principle using GP to guide acquisition across a wider range of causal quantities.

\paragraph{Observationally-Informed Experimental Design.}
While the hybrid paradigm of leveraging observational priors for experimental design has been explored, existing methods operate under fundamentally different assumptions than R-Design. For instance, \citet{rosenman2021designing} utilize observational data to inform experimental allocation, but their approach is strictly \textit{static} and \textit{parametric}. It operates within a pre-defined stratified setting, using observational data solely to construct robust confidence sets for stratum variances to optimize a single-batch minimax regret. Similarly, \citet{epanomeritakis2025choosing} propose a framework for selecting experiments to complement biased observational estimates. However, their approach is likewise limited to a static, parametric regime, designed to identify a low-dimensional parameter vector $\theta$ of a known structural model $\tau(\theta)$ via a one-shot selection. In contrast, R-Design is fully adaptive (sequential), non-parametric (compatible with universal kernels), and explicitly targets the learning of the residual function $\delta$, enabling flexible bias correction in complex, high-dimensional spaces.

%% file: Pages/Appendix/Models.tex
\begin{algorithm}[t]
\caption{R-Design: Residual-Based Adaptive Causal Experimental Design}
\label{alg:r_design_loop}
\begin{algorithmic}[1]
\Require Observational data $\gD_O$, Unlabeled candidate pool $\gD_P = \{\vx_i\}_{i=1}^n$, Budget $n_B$, Batch size $n_b$.
\Ensure CATE estimator $\hat{\tau}(\cdot) = \hat{\tau}_o(\cdot) + \hat{\tau}_\delta(\cdot)$.

\Statex \textcolor{blue!70!black}{// -- Stage 1: Observational Warm-Start --}
\State Train observational learners $\hat{\mu}_o(\cdot, 0)$ and $\hat{\mu}_o(\cdot, 1)$ on $\gD_O$.
\State Construct fixed observational contrast: $\hat{\tau}_o(\vx) \gets \hat{\mu}_o(\vx, 1) - \hat{\mu}_o(\vx, 0)$.
\State \textbf{Freeze} parameters of $\hat{\mu}_o(\cdot)$ and $\hat{\tau}_o(\cdot)$ to serve as functional offsets.

\Statex \textcolor{blue!70!black}{// -- Stage 2: Adaptive Residual Learning --}
\State Initialize experimental dataset $\gD_E \gets \emptyset$.
\State Initialize residual model $\hat{\boldsymbol{\delta}}(\cdot)$ (e.g., MTGP for vector $[\delta(\cdot, 0), \delta(\cdot, 1)]^\top$).

\While{$|\gD_E| < n_B$ \textbf{and} $\gD_P \neq \emptyset$}
    \Statex \quad \textcolor{gray}{// 1. Evaluation: Optimize Treatment Assignment per Unit}
    \State Initialize candidate scores $S \gets \emptyset$ and assignments $A \gets \emptyset$.
    \For{$\vx \in \gD_P$}
        \State Compute utilities for both arms: $u_t \gets U(\vx, t \mid \hat{\boldsymbol{\delta}}, \gD_E)$ for $t \in \{0, 1\}$.
        \State Select best arm: $t^* \gets \argmax_{t \in \{0,1\}} u_t$.
        \State Store max score: $S[\vx] \gets \max(u_0, u_1)$.
        \State Store assignment: $A[\vx] \gets t^*$.
    \EndFor
    
    \Statex \quad \textcolor{gray}{// 2. Selection: Stochastic Batch Acquisition (Softmax Trick)}
    \State Compute selection probabilities via Softmax: $\vp \gets \text{Softmax}(S / \text{temp})$.
    \State Sample batch $\gB_{x} \subset \gD_P$ of size $n_b$ based on $\vp$ (without replacement).
    \State Form final query batch: $\gB \gets \{ (\vx, A[\vx]) \mid \vx \in \gB_{x} \}$.
    \State Query experimental outcomes: $\gY_{\gB} \gets \{ y \mid (\vx, t) \in \gB \}$.
    \State $\gD_P \gets \gD_P \setminus \gB_{x}$.
    
    \Statex \quad \textcolor{gray}{// 3. Update: Residual Calculation and Posterior Update}
    \State Compute residual tuples: $\gD_{\text{new}} \gets \{ (\vx, t, r) \mid (\vx, t) \in \gB, y \in \gY_{\gB}, r = y - \hat{\mu}_o(\vx, t) \}$.
    \State Update $\gD_E \gets \gD_E \cup \gD_{\text{new}}$.
    \State Update $\hat{\boldsymbol{\delta}}(\cdot)$ given new data $\gD_E$.
\EndWhile
\State \Return $\hat{\tau}(\vx) = \hat{\tau}_o(\vx) + \underbrace{(\hat{\delta}(\vx, 1) - \hat{\delta}(\vx, 0))}_{\hat{\tau}_\delta(\vx)}$
\end{algorithmic}
\end{algorithm}

\section{Implementation Details}
\label{app:implementation}

In this section, we detail the implementation components of our R-Design framework and the baseline methods, including the adaptive experimental design loop, the baselien strategies, the Bayesian CATE estimators employed in this work, the dataset preparation procedures, and the hardware we used to run all experiments.

\subsection{Adaptive Causal Experimental Design Loop}
\label{app:active_loop}

The R-Design workflow distinguishes itself from standard tabula rasa, saying traditional adaptive causal experimental design through its reliance on a pre-computed observational. The procedure, outlined in Alg.~\ref{alg:r_design_loop}, proceeds in two phases: (1) \textit{Observational Warm-Start}, where the observational learners, usually any CATE estimators, $\mu_o(\cdot, 0)$ and $\mu_o(\cdot, 1)$ are trained on the static dataset $\gD_O$ to construct the observational CATE $\tau_o$, which is biased; and (2) \textit{Adaptive Residual (debiasing) function Learning}, where we sequentially query experimental outcomes. Crucially, during the evaluation step, we optimize the treatment assignment for each candidate $\vx$ by selecting the arm $t^* \in \{0, 1\}$ that maximizes the estimated utility, ensuring that experimental resources are directed toward the most informative counterfactuals.

\paragraph{Batch Selection Strategy.}
Constructing an optimal batch for information-theoretic acquisition functions typically requires expensive sequential greedy optimization (re-evaluating scores after each selection) to handle information redundancy~\citep{holzmuller2023framework}. To maintain computational efficiency with parallelized scoring (computing scores $S$ once per cycle), we adopt Stochastic Batch Acquisition via the Softmax trick~\citep{kirsch2023stochastic}. Instead of deterministically selecting the top-$n_b$ candidates, which sometimes leads to querying redundant clusters of points with similar high scores, we sample the batch $\gB$ from the pool with probabilities proportional to their softmax-normalized utility scores: $p(\vx) \propto \exp(S[\vx] / \tau)$. This strategy introduces necessary stochasticity to approximate batch diversity and cover the input space effectively without the computational overhead of sequential re-scoring.

\subsection{Model Architectures}
\label{app:models}

The R-Design framework is conceptually model-agnostic regarding the Stage 1 observational learner. However, guided by Lemma~\ref{lemma:complexity_decomp_hoelder}, the objective of this stage is to minimize the \textit{observational baseline error} ($n_O^{-\gamma_{\mu}}$). Consequently, we treat Stage 1 as a standard CATE estimation problem on the large observational dataset $\gD_O$, leveraging advanced estimators to maximize the accuracy of the observational contrast $\hat{\tau}_o$.

\paragraph{Stage 1: Observational Learner.}
For our primary experimental results, we employ TabPFN (v2.5)~\citep{grinsztajn2025tabpfn}, a SOTA Prior-data Fitted Network (PFN) designed for tabular data~\citep{zhang2025tabpfn}. We adopt the S-Learner meta-strategy, where a single model $\hat{\mu}_o(\vx, t)$ is trained on $\gD_O$ to predict outcomes, and the fixed observational contrast is derived as $\hat{\tau}_o(\vx) = \hat{\mu}_o(\vx, 1) - \hat{\mu}_o(\vx, 0)$. This choice effectively leverages the massive pre-training of TabPFN to capture complex covariate interactions~\citep{hollmann2023tabpfn, hollmann2025tabpfn}. To ensure robustness and comprehensive evaluation, we also benchmark other advanced estimators, including CausalPFN (a PFN variant fine-tuned for causal inference)~\citep{balazadeh2025causalpfn} and GP-based estimators such as CMGP~\citep{alaa2017bayesian} and NSGP~\citep{alaa2018limits}. Crucially, regardless of the architecture, this model is trained once on $\gD_O$ and treated as a fixed offset function during the subsequent experimental phase.

\paragraph{Stage 2: Residual Learner (Probabilistic CATE Estimator).}
Following Lemma~\ref{lemma:complexity_decomp_hoelder}, the objective of Stage $2$ is to estimate the residual process $\tau_{\delta}$. While formally a CATE estimation problem, it is structurally distinct: the target represents the bias function of the observational baseline, which typically exhibits higher smoothness than the full outcome surface~\citep{colnet2024causal}. Given that experimental data is costly and scarce ($n_E \ll n_O$), we require a model with high data efficiency. Furthermore, to drive our R-EPIG, the model must be strictly probabilistic, providing access to the joint posterior distribution (either in closed form or via sampling) to quantify epistemic uncertainty. To balance these requirements, we employ the CMGP~\citep{alaa2017bayesian} as our primary residual learner due to its stability in small-data regimes. For scalability to larger budgets, we employ Sparse Variational GPs (SVGP) and provide ablation studies. We also benchmark alternative probabilistic estimators, including NSGP~\citep{alaa2018limits}, BART~\citep{hill2011bayesian}, BCF~\citep{hahn2020bayesian}, and CMDE~\citep{jiang2023estimating}.

\begin{itemize}[leftmargin=*]
    \item \textbf{Kernel Architecture:} We mainly utilize the RBF kernel with automatic relevance determination. This choice allows the model to learn anisotropic lengthscales, effectively capturing the variable sensitivity of the residual bias across different covariate dimensions, as required in the Lemma.~\ref{lemma:complexity_decomp_hoelder}. We also provided the ablation studies of using different kernels.
    \item \textbf{Hyperparameter Optimization:} The kernel parameters (lengthscales, signal variance) and likelihood noise are optimized via Maximum Likelihood (marginal likelihood maximization) after each batch acquisition.
    \item \textbf{Implementation:} All GP models are implemented using \texttt{GPy}~\citep{gpy2014}\footnote{\url{https://github.com/SheffieldML/GPy}}.
\end{itemize}

\subsubsection{Baseline Architectures}
\label{app:baseline_architectures}

To show the contributions of our TSR paradigm, we benchmark against two fundamental architectural baselines: the Tabula Rasa approach (standard practice) and the parametric correction (marked as Kallus in our experiments) approach~\citep{kallus2018removing} (the structural precursor to our method).

\paragraph{1. The Tabula Rasa Framework (Direct Estimation).}
This represents the standard "Direct Learner" approach in experimental design~\citep{song2024ace,cha2025abc3}.
\begin{itemize}[leftmargin=*]
    \item \textbf{Structure:} It ignores the observational data $\gD_O$ entirely (or uses it only for covariate distribution estimation), adhering to the principle that observational correlations are unreliable due to hidden confounding. The learner attempts to reconstruct the full CATE function surface $\tau(\vx)$ from scratch using only the experimental data $\gD_E$.
    \item \textbf{Implementation:} We employ the same CMGP backbones as R-Design, but train them directly on the raw experimental outcomes $y \in \gD_E$ to predict $\mu_e(\vx, t)$.
    \item \textbf{Limitation:} As noted by \citet{kallus2018removing}, this approach is theoretically consistent but statistically inefficient. It discards the "global signal" contained in the large observational sample, forcing the learner to expend budget on relearning basic outcome structures that are likely shared between $\gD_O$ and $\gD_E$.
\end{itemize}

\paragraph{2. The Kallus Correction Framework (Passive Residuals).}
Our R-Design framework is structurally inspired by the "Recursive Partitioning for CATE" and specifically the bias-correction strategy introduced by~\citet{kallus2018removing}.
\begin{itemize}[leftmargin=*]
    \item \textbf{Structure:} Similar to R-Design, this method adopts a two-stage decomposition.
    \begin{itemize}
        \item \textit{Stage 1:} Learn a biased observational CATE $\omega(\vx)$ (equivalent to our $\hat{\tau}_o(\vx)$) from $\gD_O$.
        \item \textit{Stage 2:} Learn a correction term $\eta(\vx)$ (equivalent to our $\tau_\delta(\vx)$) using the experimental data $\gD_E$. The final estimator is constructed as $\hat{\tau}(\vx) = \hat{\omega}(\vx) + \hat{\eta}(\vx)$.
    \end{itemize}
    \item \textbf{Original vs. Our Implementation:} The original work by \citet{kallus2018removing} assumes a parametric (typically linear) form for the confounding function $\eta(\vx) = \theta^\top \vx$ to ensure identifiability with limited data. To provide a stronger, non-linear baseline that matches the capacity of our R-Design backbone, we upgrade the Stage 2 learner from a linear regression to a standard GP.
    \item \textbf{Distinction:} While structurally similar, the "Kallus Baseline" in our experiments represents the \textit{passive} (random sampling) or \textit{standard uncertainty} ($\tau$-BALD) version of this architecture. R-Design replaces the standard regression target with a Residual-Aware AL objective (R-EPIG), explicitly targeting the epistemic uncertainty of the correction term rather than the full outcome. Moreover, Kallus focuses on static CATE estimation. In contrast, our key contribution is to adaptively learn the residual term within the two-stage framework, enabling dynamic refinement of the CATE estimator as additional information is acquired.
\end{itemize}

\subsubsection{CATE Model Details}
\label{app:cate_model_details}

In this part, we introduce three families of established Bayesian CATE estimators. We explicitly detail how each architecture models the treatment effect and, crucially, how we extract the joint posterior distribution $p(y, \tau(\vx) \mid \vx, t, \gD)$ or $p(y, \boldsymbol{\delta}(\vx) \mid \vx, t, \gD)$ required for our information-theoretic acquisition functions. \textit{Note: While described here in their general form (modeling outcomes $y$ and effects $\tau$), when deployed in R-Design Stage $r$, these models are applied to residuals $r$ and residual contrasts $\tau_\delta$.}

\paragraph{1. Bayesian Tree Ensembles (BART \& BCF).}\footnote{\url{https://github.com/socket778/XBCF}}
Tree-based methods serve as robust baselines for tabular data. We utilize BART~\citep{hahn2011adaptive} and its causal extension BCF~\citep{hahn2020bayesian, krantsevich2023stochastic}.
\begin{itemize}[leftmargin=*]
    \item \textbf{CATE Estimation:} BCF explicitly parametrizes the conditional expectation as $\mathbb{E}[y \mid \vx, t] = \mu(\vx) + \tau(\vx)t$. Here, both the prognostic term $\mu(\vx)$ and the treatment effect $\tau(\vx)$ are modeled as independent sums-of-trees priors. To mitigate regularization-induced bias (regularization-induced confounding), BCF employs a propensity-based orthogonalization, fitting $\mu(\vx)$ on the propensity-adjusted outcomes.
    \item \textbf{Joint Posterior via Gaussian Approximation:} Inference in BART/BCF is performed via MCMC, yielding a set of posterior samples $\{ (\hat{y}^{(s)}, \hat{\tau}^{(s)}) \}_{s=1}^S$ rather than a closed-form distribution. To compute differential entropy for acquisition functions, we follow the standard approximation in causal active learning~\citep{jesson2021causal, kirsch2023blackbox, gao2025causal}. We treat the MCMC samples as draws from an implicit Multivariate Gaussian distribution. For a candidate $(\vx, t)$, we compute the empirical mean vector and covariance matrix of the concatenated vector $[\hat{y}, \hat{\tau}]^\top$ from the $S$ samples. This Gaussian proxy allows us to apply closed-form mutual information formulas efficiently.
\end{itemize}

\paragraph{2. GP-based Estimators (CMGP \& NSGP).}\footnote{\url{https://github.com/vanderschaarlab/mlforhealthlabpub/tree/main/alg/causal_multitask_gaussian_processes_ite}}
Gaussian Processes serve as the primary probabilistic backbone for R-Design (Stage 2) due to their analytical tractability~\citep{kanagawa2025gaussian}. We employ CMGP~\citep{alaa2017bayesian} and NSGP~\citep{alaa2018limits}.
\begin{itemize}[leftmargin=*]
    \item \textbf{CATE Estimation:} 
    CMGP treats potential outcomes as a multi-task learning problem, placing a joint GP prior over the vector of conditional means $\boldsymbol{\mu}(\vx) = [\mu(\vx, 0), \mu(\vx, 1)]^\top$. It employs a Linear Model of Coregionalization (LMC) kernel to capture correlations between treatment arms.
    NSGP handles heterogeneity by modeling the joint space $\gX \times \{0,1\}$ with a non-stationary kernel. It places a GP prior directly over the response function $\mu(\vx, t)$, where the kernel hyperparameters are treatment-dependent. In both cases, the CATE is derived as the linear transformation $\tau(\vx) = \mu(\vx, 1) - \mu(\vx, 0)$.
    
    \item \textbf{Joint Posterior (Closed-Form):} 
    Unlike trees or ensembles, GPs provide the \textit{exact} joint posterior. Since GP posteriors are closed under linear operations, the joint distribution of the observed outcome $y$ (at $\vx, t$) and the unobserved estimand $\tau(\vx)$ is a Multivariate Gaussian. We derive the mean vector and covariance matrix analytically from the GP posterior kernel, enabling precise and efficient computation of acquisition scores without sampling approximations.
\end{itemize}

\paragraph{3. Causal Multi-task Deep Ensemble (CMDE).}\footnote{\url{https://github.com/jzy95310/ICK/tree/main/experiments/causal_inference}}
For high-dimensional tasks where GPs scale poorly, we can employ CMDE~\citep{jiang2023estimating}, which adapts the Deep Ensemble insight to causal inference.
\begin{itemize}[leftmargin=*]
    \item \textbf{CATE Estimation:} CMDE utilizes a multi-head architecture with three components: a shared representation network $\phi(\vx)$ (capturing common features) and two treatment-specific heads $h_0(\phi(\vx))$ and $h_1(\phi(\vx))$. The potential outcomes are modeled as $\hat{y}_t = h_t(\phi(\vx))$. We train an ensemble of $M$ independently initialized networks to quantify epistemic uncertainty. The CATE is estimated as the average difference between the heads: $\hat{\tau}(\vx) = \frac{1}{M} \sum_{m=1}^{M} (h_1^{(m)}(\vx) - h_0^{(m)}(\vx))$.
    \item \textbf{Joint Posterior via Ensemble Aggregation:} Deep Ensembles do not provide an analytical posterior. We interpret the ensemble members as approximate samples from the posterior distribution of the \textit{potential outcome means}. For a query point $\vx$, each ensemble member $m \in \{1, \ldots, M\}$ outputs predictions for the mean vector $\hat{\boldsymbol{\mu}}^{(m)}(\vx) = [\hat{\mu}^{(m)}(\vx, 0), \hat{\mu}^{(m)}(\vx, 1)]^\top$. To construct the joint posterior, we compute the empirical moments across the ensemble predictions. This yields a Gaussian approximation:
    \begin{equation}
        p(\boldsymbol{\mu}(\vx) \mid \gD) \approx \gN\left( \bar{\boldsymbol{\mu}}(\vx), \hat{\mathbf{\Sigma}}(\vx) \right),
    \end{equation}
    where $\bar{\boldsymbol{\mu}}(\vx) = \frac{1}{M} \sum_{m=1}^{M} \hat{\boldsymbol{\mu}}^{(m)}(\vx)$ is the ensemble mean vector, and $\hat{\mathbf{\Sigma}}(\vx)$ is the empirical covariance matrix. The posterior for the CATE $\tau(\vx)$ (and its covariance with any candidate outcome $y$) can then be derived via linear transformations of this joint Gaussian (e.g., $\tau = \mu_1 - \mu_0$).
\end{itemize}

\subsection{Datasets}
\label{app:datasets}

To comprehensively evaluate R-Design across varying degrees of complexity, we employ a multiple validation strategies. We begin with \textit{Univariate Synthetic} scenarios to dissect the impact of specific functional forms (e.g., non-linearity, confounding frequency) on residual learning. We then progress to \textit{Multivariate Synthetic} environments to assess robustness to high dimensionality and covariate shift. Finally, we benchmark performance on \textit{Semi-Synthetic} datasets (IHDP, ACTG-175) that combine real-world covariate structures with simulated outcomes, providing a rigorous test ground with ground-truth causal estimands.

\subsubsection{Univariate Synthetic Dataset}

We adapt the simulation framework from Causal-ICM~\citep{dimitriou2024data} to construct a suite of $8$ scenarios that systematically vary three structural components: the treatment effect heterogeneity $\tau(\vx)$, the baseline outcome $\mu(\vx, 0)$, and the observational confounding mechanism $\eta(\vx)$.

\paragraph{Data Generation Process.}
For the experimental (RCT) source $\rs=e$, we simulate a trial participation model: candidate covariates are first drawn from $\tilde{\rvx} \sim \gU[-2, 2]$, then filtered through a logistic participation probability $p(\rs=e \mid \tilde{\rvx}) = \sigma(\beta_0 + \beta_1 \tilde{\rvx})$. Only candidates with $\rs=e$ enter the trial pool. Treatment assignment is balanced: $\rt \sim \text{Bernoulli}(0.5)$, and outcomes follow $\ry = \mu(\rvx, \rt) + \epsilon$ with $\epsilon \sim \gN(0, 1)$. For the observational source $\rs=o$, covariates are drawn from $\rvx \sim \gU[-2, 2]$, with confounded treatment assignment $e(\vx) = \sigma(\vx)$. The observational outcomes are structurally biased:
\begin{equation}
    \mu_o(\vx, t) = \mu(\vx, t) + (2t - 1)\eta(\vx),
\end{equation}
where $(2t - 1)\eta(\vx)$ introduces a treatment-dependent shift simulating unobserved confounding.

\paragraph{Component Functions.}
We define two complexity levels for each structural component:

\begin{itemize}
    \item \textbf{Treatment Effect} $\tau(\vx) = \mu(\vx, 1) - \mu(\vx, 0)$:
    \begin{itemize}
        \item Simple: $\tau(\vx) = 1 - \vx$ (Linear).
        \item Complex: $\tau(\vx) = 1 + \vx - \vx^2$ (Non-linear, Non-monotonic).
    \end{itemize}
    
    \item \textbf{Baseline Outcome} $\mu(\vx, 0)$:
    \begin{itemize}
        \item Simple: $\mu(\vx, 0) = 0.8\vx^3 - \vx$ (Polynomial).
        \item Complex: $\mu(\vx, 0) = 2\sin(3\pi \vx) - 1.5\exp(1.5(\vx - 0.8))$ (High-frequency).
    \end{itemize}
    
    \item \textbf{Confounding Bias} $\eta(\vx)$:
    \begin{itemize}
        \item Simple: $\eta(\vx) = 2\exp(-0.8(\vx + 2))$ (Smooth decay).
        \item Complex: $\eta(\vx) = 5\exp\left(-\frac{(\vx+3)^2}{12.5}\right)(1 + 0.2\cos(\vx))$ (Oscillatory).
    \end{itemize}
\end{itemize}

\paragraph{Simulation Scenarios.}
We evaluate the full factorial combination ($2^3$) of these components, resulting in 8 distinct scenarios (Table~\ref{tab:univariate_sims}).
\begin{table}[h]
\centering
\caption{Univariate simulation scenarios defined by component complexity.}
\label{tab:univariate_sims}
\small
\begin{tabular}{c|ccc}
\toprule
Scenario & Confounding $\eta(\vx)$ & Baseline $\mu(\vx, 0)$ & Treatment Effect $\tau(\vx)$ \\
\midrule
1 & Simple & Simple & Simple \\
2 & Simple & Simple & Complex \\
3 & Simple & Complex & Simple \\
4 & Simple & Complex & Complex \\
5 & Complex & Simple & Simple \\
6 & Complex & Simple & Complex \\
7 & Complex & Complex & Simple \\
8 & Complex & Complex & Complex \\
\bottomrule
\end{tabular}
\end{table}

\paragraph{Sample Sizes.} The RCT candidate pool $\gD_P$ contains $n_P = 1000$ samples, and the observational dataset $\gD_O$ contains $n_O = 2000$ samples. Results are averaged over 10 random seeds.

\subsubsection{Multivariate Synthetic Dataset}

We extend the simulation to a multivariate setting to introduce challenges such as prognostic noise, complex confounding, and covariate shift.

\paragraph{Feature Structure.}
We define a $d$-dimensional covariate space $\vx \in \R^d$, partitioning features into three groups:
\begin{itemize}
    \item \textbf{Confounders $\gC$:} Features affecting both $\rt$ and $\ry$ (primary bias source).
    \item \textbf{Prognostic Factors $\gP$:} Features affecting only $\ry$ (variance source).
    \item \textbf{Noise Features $\gN$:} Irrelevant features (robustness test).
\end{itemize}
We use $d=6$ for main experiments ($2$ per group) and scale up to $d \in \{9, 15, 24, 36\}$.

\paragraph{Data Generation.}
For the RCT pool, $\vx \sim \gU[-2, 2]^d$ with balanced assignment. For observational data, $\vx \sim \gU[-2, 2]^d$, but assignment is confounded via $\pi_o(\vx) = \sigma( f_{\text{prop}}(\vx_{\gC}) + 0.2 \tanh(\sum \vx_{\gC}) )$.
Outcomes are generated as $\ry = \mu(\vx, \rt) + \epsilon$ (RCT) and $\ry = \mu_o(\vx, \rt) + \epsilon$ (Obs), where $\mu_o$ includes the bias term.

\textbf{Baseline Outcome} $\mu(\vx, 0)$:
\begin{equation}
    \mu(\vx, 0) = f_{\mu}(\vx_{\gC}, \vx_{\gP}) + 0.5 \sin(3\pi \vx_{\gC, 1}) \cos(\pi \vx_{\gC, 2}).
\end{equation}
This combines a random MLP $f_{\mu}$ with a high-frequency component to challenge the learner.

\textbf{Treatment Effect} $\tau(\vx)$:
\begin{equation}
    \tau(\vx) = 1 + 2\sin(2\pi \vx_1) + 0.5\cos(\pi \vx_2).
\end{equation}

\textbf{Observational Bias} $\eta(\vx)$:
The observational mean is shifted by $\mu_o(\vx, t) = \mu(\vx, t) + (2t - 1)\eta(\vx)$, where:
\begin{equation}
    \eta(\vx) = 1.5 + (\vx_{\gC, 1} + \vx_{\gC, 2}) + 0.5 \vx_{\gC, 1}^2.
\end{equation}
This bias is smooth (quadratic), allowing the residual learner to recover $\tau_\delta(\vx)$ effectively.

\paragraph{Covariate Shift.}
In shift experiments, $\gD_P$ is drawn from $\gU[-4, 2]^d$ while $\gD_O$ remains $\gU[-2, 2]^d$, creating regions where the experimental policy must extrapolate.

\paragraph{Protocol.}
$n_P = 1000$, $n_O = 2000$. Active learning: warm-start 50, batch size 10, 30 iterations. For policy evaluation (Decision Making), we use warm-start 20, batch size 1, 100 iterations.

\subsubsection{Semi-Synthetic Datasets}

We validate R-Design on two benchmarks derived from real-world studies, providing realistic covariate correlations while maintaining ground-truth estimands.

\paragraph{IHDP Dataset.}
Based on the Infant Health and Development Program~\citep{hill2011bayesian}, pre-processed to induce selection bias ($n=747$, $d=25$).

\textit{Response Surface.} The potential outcomes are generated via a non-linear exponential model:
\begin{align}
    \mu(\vx, 0) &= \exp\left((\vx + 0.5)^\top \boldsymbol{\beta}\right), \\
    \mu(\vx, 1) &= (\vx + 0.5)^\top \boldsymbol{\beta} - \omega,
\end{align}
where $\boldsymbol{\beta}$ is a sparse coefficient vector.

\textit{Confounding Mechanism.}
For the observational split, we inject treatment-dependent bias $\eta(\vx)$ such that $\mu_o(\vx, t) = \mu(\vx, t) + (2t - 1)\eta(\vx)$:
\begin{equation}
    \eta(\vx) = \gamma \cdot \left(1 + c_1 \vx_1 + c_2 \vx_2 + c_3 \vx_1^2\right).
\end{equation}
Here, $\vx_1, \vx_2$ correspond to standardized `bw` and `b.head`. We use $\gamma=1.5$ and randomized coefficients $c_i$.

\paragraph{ACTG-175 Dataset.}
Derived from an AIDS clinical trial~\citep{hammer1996trial} ($n=813$, $d=12$).

\textit{Response Surface.} We define a complex outcome surface combining a random neural network with interpretable interactions:
\begin{equation}
\begin{aligned}
\mu(\vx, 0)
= {} & f_{\text{MLP}}(\vx) + 6 + 0.3 \vx_{\texttt{wtkg}}^2 \\
& {} - \sin(\vx_{\texttt{age}})\,(\vx_{\texttt{gender}} + 1)
+ 0.6 \vx_{\texttt{hemo}} \vx_{\texttt{race}}
- 0.2 \vx_{\texttt{z30}} .
\end{aligned}
\end{equation}
The treatment effect is $\tau(\vx) = 1.5 \sin(\vx_{\texttt{wtkg}}) (\vx_{\texttt{karnof\_hi}} + 1) + 2 \vx_{\texttt{age}}$.

\textit{Confounding \& Shift.}
We apply the same confounding bias function $\eta(\vx)$ as in IHDP. Furthermore, we induce covariate shift by under-sampling older patients in the RCT pool (probabilistic rejection sampling based on median age), simulating eligibility bias.

\paragraph{Experimental Protocol.}
Data is partitioned 4:1:2 (Obs Train / Obs Test / RCT Pool). Active learning proceeds with 30 warm-start samples and 150 single-sample acquisitions, averaged over 10 seeds.

\subsection{Hardware}

\paragraph{Computational Resources.}
All experiments were conducted on a workstation with the following specifications:
\begin{itemize}
    \item \textbf{CPU:} Intel Core i9-13900KF (24 cores, 32 threads)
    \item \textbf{RAM:} 128\,GB
    \item \textbf{GPU:} NVIDIA GeForce RTX 4090 (24\,GB VRAM)
    \item \textbf{OS:} Ubuntu 24.04 LTS
    \item \textbf{Software:} Python 3.10, PyTorch 2.3.1 (CUDA 12.1)
\end{itemize}

%% file: Pages/Appendix/Acq_functions.tex
\section{Acquisition Functions Details}
\label{app:acq_details}

This section provides the detailed derivations and computational strategies for the acquisition functions used in R-Design. We focus on two primary objectives: minimizing the estimation error of the CATE function (\textbf{R-EPIG-Est}) and minimizing the uncertainty of the optimal treatment decision (\textbf{R-EPIG-Policy}). Since the observational contrast $\hat{\tau}_o(\vx)$ is treated as a fixed offset during the experimental phase, the uncertainty in the full CATE $\tau(\vx)$ is entirely driven by the uncertainty in the residual contrast $\tau_\delta(\vx)$. Specifically, because $\tau(\vx) = \hat{\tau}_o(\vx) + \tau_\delta(\vx)$, it holds that:
\begin{equation}
\begin{aligned}
    \sV[\tau(\vx) \mid \gD_E] =& \sV[\tau_\delta(\vx) \mid \gD_E] \\
    \entropy(\tau(\vx) \mid \gD_E) =& \entropy(\tau_\delta(\vx) \mid \gD_E) + \text{const}.
\end{aligned}
\end{equation}
Thus, maximizing information gain regarding the residual $\tau_\delta(\vx)$ is mathematically equivalent to maximizing information gain regarding the true CATE $\tau(\vx)$.

\subsection{CATE Baseline Acquisition Strategies}
\label{app:cate_baselines}

We compare R-Design against a diverse set of acquisition strategies, ranging from random sampling to advanced information-theoretic criteria. To ensure a fair comparison, all baselines utilize the same underlying GP hyperparameters as our method (unless specific to a different model class) and select experimental points $(\vx, t)$ from the candidate pool $\gD_P$.

\paragraph{Random Sampling.}
The standard non-adaptive benchmark selects candidates uniformly at random from the pool:
\begin{equation}
    \alpha_{\text{Random}}(\vx, t) \coloneqq \gU(0, 1).
\end{equation}
This serves as a fundamental lower bound to quantify the benefits of active adaptation.

\paragraph{Leverage Score Sampling.}
A classical experimental design criterion from randomized linear algebra~\citep{addanki2022sample}. To adapt this to the causal setting, we define the kernel matrix $\mK$ over the augmented space $\gX \times \{0, 1\}$. The leverage score for a candidate design point $(\vx, t)$ is proportional to its diagonal entry in the regularized hat matrix:
\begin{equation}
    \alpha_{\text{Leverage}}(\vx, t) \coloneqq \left[ \mK (\mK + \sigma^2 \mI)^{-1} \mK \right]_{(\vx, t), (\vx, t)}.
\end{equation}
In the GP context, this score is monotonically related to the posterior variance of the potential outcome $\mu(\vx, t)$. Maximizing this criterion selects design points that exert the greatest influence on the model fit, prioritizing regions (in the joint covariate-treatment space) where the current estimation is least stable.

\paragraph{Causal-BALD series acquisition functions~\citep{jesson2021causal}.}

\paragraph{$\mu$-BALD.}
A heuristic that selects points where the current model is most uncertain about the potential outcomes, ignoring the correlation between arms. For a probabilistic model predicting $\mu(\vx, t)$:
\begin{equation}
    \alpha_{\mu\text{-BALD}}(\vx, t) \coloneqq \max_{t' \in \{0,1\}} \sV[\mu(\vx, t') \mid \gD_E].
\end{equation}
This method focuses on reducing global outcome uncertainty rather than targeting the CATE specifically.

\paragraph{$\tau$-BALD.}
Targets uncertainty about the treatment effect directly by maximizing the posterior variance of the contrast $\tau(\vx) = \mu(\vx, 1) - \mu(\vx, 0)$:
\begin{equation}
    \alpha_{\tau\text{-BALD}}(\vx) \coloneqq \sV[\tau(\vx) \mid \gD_E].
\end{equation}
Since $\tau$ is invariant to the specific treatment arm chosen for the query (in the model space), the arm $t$ is typically assigned randomly or to maximize marginal outcome variance.

\paragraph{$\rho$-BALD.}
Based on the derivation by \citet{jesson2021causal}, this criterion prioritizes points where the model is uncertain about the treatment effect (high $\sV[\tau]$) but confident about the counterfactual outcome (low $\sV[\mu_{1-t}]$). For a candidate arm $t$, the score is proportional to the mutual information lower bound:
\begin{equation}
    \alpha_{\rho\text{-BALD}}(\vx, t) \coloneqq \frac{1}{2} \log \left( 1 + \frac{\sV[\tau(\vx) \mid \gD_E]}{\sV[\mu(\vx, 1-t) \mid \gD_E]} \right).
\end{equation}
Note that unlike $\tau$-BALD, this score depends on the specific treatment arm $t$ being queried, penalizing candidates where the counterfactual $1-t$ is highly uncertain (as this makes disentangling the effect difficult).

\paragraph{$\mu$-$\rho$ BALD.}
This strategy combines the global exploration of $\mu$-BALD with the targeted refinement of $\rho$-BALD. It is defined as a multiplicative scaling of the outcome uncertainty by the signal-to-noise ratio:
\begin{equation}
    \alpha_{\mu\text{-}\rho}(\vx, t) \coloneqq \sV[\mu(\vx, t) \mid \gD_E] \cdot \frac{\sV[\tau(\vx) \mid \gD_E]}{\sV[\mu(\vx, 1-t) \mid \gD_E]}.
\end{equation}
This criterion aggressively selects points that are simultaneously uncertain in the predictive outcome (high $\sV[\mu_t]$), informative for the CATE (high $\sV[\tau]$), and anchored by a confident counterfactual (low $\sV[\mu_{1-t}]$).

\paragraph{ABC3 (Active Bayesian Causal Covariate-Conditioning).}
A variance reduction criterion that measures the expected reduction in CATE uncertainty across the target population $\gD_O$ from acquiring a single observation~\citep{cha2025abc3}. For a candidate $(\vx_i, t_i)$, the score is the total expected variance reduction in $\tau$ across all target points $\vx_j \in \gD_O$:
\begin{equation}
    \alpha_{\text{ABC3}}(\vx_i, t_i) \coloneqq \sum_{\vx_j \in \gD_O} \frac{\textup{Cov}[y(\vx_i, t_i), \tau(\vx_j) \mid \gD_E]^2}{\sV[y(\vx_i, t_i) \mid \gD_E]}.
\end{equation}
The arm $t_i \in \{0, 1\}$ yielding the higher total reduction is selected.

\paragraph{ACE (Active Covariance Exploitation).}
Prioritizes points with strong cross-arm covariance, targeting regions where observing one arm provides significant information about the counterfactual arm~\citep{song2024ace}:
\begin{equation}
    \alpha_{\text{ACE}}(\vx_i, t_i) \coloneqq \frac{|\textup{Cov}[\mu(\vx_i, 0), \mu(\vx_i, 1) \mid \gD_E]|}{\sV[\mu(\vx_i, t_i) \mid \gD_E]}.
\end{equation}

\paragraph{EPIG (Expected Predicted Information Gain).}
An information-theoretic baseline~\citep{smith2023prediction} that maximizes the mutual information between a candidate observation $(\vx_i, t_i)$ and the predictions on the observational target set $\gD_O$. It encourages acquiring points that best reduce the joint entropy of the target predictions:
\begin{equation}
    \alpha_{\text{EPIG}}(\vx_i, t_i) \coloneqq \frac{1}{|\gD_O|} \sum_{\vx_j \in \gD_O} \mi(y(\vx_i, t_i); y(\vx_j) \mid \gD_E),
\end{equation}
where $y(\vx_j)$ represents the predictive distribution over outcomes in the target set. Under Gaussian assumptions, this mutual information has a closed form:
\begin{equation}
    \mi(y_i; y_j) = -\frac{1}{2} \log \left( 1 - \frac{\textup{Cov}[y(\vx_i, t_i), y(\vx_j)]^2}{\sigma_i^2 \sigma_j^2} \right).
\end{equation}

\subsection{Decision-Making Acquisition Strategies}
\label{app:dm_baselines}

For the decision-making task, we focus on acquisition functions that target policy optimization rather than global CATE estimation accuracy. These methods aim to minimize the APE by reducing uncertainty specifically about the sign of the treatment effect, $\text{sign}(\tau(\vx))$.

\paragraph{Random Sampling.}
Identical to the strategy in CATE acquisition baselines, selecting candidates $(\vx, t)$ uniformly at random. This serves as the non-adaptive baseline for policy learning.

\paragraph{Sign-BALD.}
Targets uncertainty about the binary treatment decision $\pi(\vx) = \sI(\tau(\vx) > 0)$. It maximizes the variance of the decision indicator across the posterior~\citep{jesson2021causal, klein2025towards}:
\begin{equation}
    \alpha_{\text{Sign}}(\vx) \coloneqq \sV\left[\sI(\tau(\vx) > 0) \mid \gD_E\right].
\end{equation}
In practice, for a model producing $K$ posterior samples $\{\tau^{(k)}(\vx)\}_{k=1}^K$, we compute the empirical variance of the binary indicators $z^{(k)} = \sI(\tau^{(k)}(\vx) > 0)$. This score is maximized (at 0.25) when the posterior probability $\sP(\tau(\vx) > 0) = 0.5$, identifying points precisely at the decision boundary where the model is most ambiguous about the optimal treatment. The query arm $t$ is typically selected uniformly at random.

\paragraph{Type-S Error.}
Selects points with the highest probability of making a "Type S" (Sign) error in the treatment decision~\citep{sundin2019active}. Assuming a Gaussian posterior for the CATE, $\tau(\vx) \mid \gD_E \sim \gN(\hat{\mu}_\tau(\vx), \hat{\sigma}_\tau^2(\vx))$, the acquisition score is defined as the probability that the true effect sign differs from the estimated mean sign:
\begin{equation}
    \alpha_{\text{Type-S}}(\vx) \coloneqq \Phi\left(-\frac{|\hat{\mu}_\tau(\vx)|}{\hat{\sigma}_\tau(\vx)}\right),
\end{equation}
where $\Phi(\cdot)$ is the standard normal CDF. This criterion prioritizes points where the signal-to-noise ratio $|\hat{\mu}_\tau|/\hat{\sigma}_\tau$ is low, which occurs either when the effect is close to zero (decision boundary) or when the uncertainty $\hat{\sigma}_\tau$ is extremely high.

\subsection{R-EPIG: Residual Expected Predicted Information Gain}
\label{sec:repig}

In the main paper, we introduce the R-EPIG framework, adapting the information-theoretic criterion from EPIG~\citep{smith2023prediction} and Causal-EPIG~\citep{gao2025causal} to the new setting. R-EPIG operates in a natural shift mode: the candidate set comprises the experimental pool $\gD_P$, while the target set is usually the observational population $\gD_O$ where estimation accuracy matters most. Formally, our goal is to maximize the expected reduction in posterior entropy of the residual contrast $\tau_\delta(\vx^*)$ over the target population. Since the observational contrast $\hat{\tau}_o$ is fixed, uncertainty in the CATE $\tau$ is entirely affected by the residual function: $\entropy(\tau \mid \gD_E) \equiv \entropy(\tau_\delta \mid \gD_E)$. The acquisition score for a candidate $(\vx, t)$ is derived as the expected information gain averaged over the target set:
\begin{align}
\alpha_{\text{R-EPIG}}(\vx, t) 
&\coloneqq 
\E_{\vx^* \sim \gD_O} \, \E_{y \sim p(y \mid \vx, t, \gD_E)} 
\Big[
\entropy(\tau_\delta(\vx^*) \mid \gD_E) \notag\\
&\qquad - \entropy(\tau_\delta(\vx^*) \mid \gD_E \cup \{(\vx, t, y)\}) 
\Big] \\
&= \frac{1}{|\gD_O|} \sum_{\vx^* \in \gD_O} \mi\left(y; \tau_\delta(\vx^*) \mid \gD_E\right).
\end{align}

We present two variants of this criterion targeting different aspects of the residual uncertainty.

\subsubsection{R-EPIG-$\tau$: Targeting Residual Effect Uncertainty}

R-EPIG-$\tau$ directly maximizes the mutual information between the candidate experimental outcome $y$ (at $\vx, t$) and the residual contrast $\tau_\delta(\vx^*)$ at target points. 
Under the GP posterior, the joint distribution of the observed outcome and the target residual is Multivariate Gaussian. The mutual information admits a closed-form expression:
\begin{equation}
    \mi(y; \tau_\delta(\vx^*)) = \frac{1}{2} \log \left( \frac{\sigma_{y}^2 \sigma_{\tau_\delta^*}^2}{\sigma_{y}^2 \sigma_{\tau_\delta^*}^2 - \textup{Cov}(y, \tau_\delta(\vx^*))^2} \right) = \frac{1}{2} \log \left( \frac{1}{1 - \rho_{y, \tau_\delta^*}^2} \right).
\end{equation}
Intuitively, this criterion selects candidates whose outcomes are maximally correlated ($\rho^2 \to 1$) with the uncertainty in the target residual contrasts. 
To implement this efficiently, we expand the residual covariance terms using the GP kernel properties. For a target $\vx^*$, the residual contrast is defined as $\tau_\delta(\vx^*) = \delta(\vx^*, 1) - \delta(\vx^*, 0)$. Thus:
\begin{equation}
    \sigma_{\tau_\delta^*}^2 = \sV[\delta(\vx^*, 1)] + \sV[\delta(\vx^*, 0)] - 2\textup{Cov}(\delta(\vx^*, 1), \delta(\vx^*, 0)).
\end{equation}
The cross-covariance between the candidate outcome $y$ (under treatment $t$) and the target residual contrast is:
\begin{equation}
    \textup{Cov}(y, \tau_\delta(\vx^*)) = \textup{Cov}(\delta(\vx, t), \delta(\vx^*, 1)) - \textup{Cov}(\delta(\vx, t), \delta(\vx^*, 0)).
\end{equation}
Crucially, these terms are directly retrievable from the posterior covariance matrix of the CMGP~\citep{alaa2017bayesian} or the NSGP~\citep{alaa2018limits}. If we use other base CATE estimators, such as BART~\citep{hill2011bayesian}, BCF~\citep{hahn2020bayesian}, and CMDE~\citep{jiang2023estimating}, we can also approximate the joint posterior as MultiNormal distribution and use totally the same calculation method to get the score.

\subsubsection{R-EPIG-$\mu$: Targeting Joint Residual Outcomes}

Alternatively, R-EPIG-$\mu$ maximizes information gain regarding the joint vector of residual potential outcomes $\boldsymbol{\delta}(\vx^*) = [\delta(\vx^*, 0), \delta(\vx^*, 1)]^\top$. This variant recognizes that reducing uncertainty in the individual response surfaces implicitly constrains the effect difference:
\begin{equation}
    \alpha_{\text{R-EPIG-}\mu}(\vx, t) \coloneqq \frac{1}{|\gD_O|} \sum_{\vx^* \in \gD_O} \mi\left(y; \boldsymbol{\delta}(\vx^*) \mid \gD_E\right).
\end{equation}
The mutual information for this vector target is computed via the Schur complement. Let $\mathbf{\Sigma}_{\boldsymbol{\delta}^*}$ be the $2 \times 2$ covariance matrix of the target residuals. The information gain is:
\begin{equation}
    \mi(y; \boldsymbol{\delta}(\vx^*)) = \frac{1}{2} \log \frac{\sigma_{y}^2}{\sigma_{y | \boldsymbol{\delta}^*}^2}, \quad \text{where } \sigma_{y | \boldsymbol{\delta}^*}^2 = \sigma_{y}^2 - \mathbf{c}^\top \mathbf{\Sigma}_{\boldsymbol{\delta}^*}^{-1} \mathbf{c},
\end{equation}
and $\mathbf{c} = [\textup{Cov}(y, \delta(\vx^*, 0)), \textup{Cov}(y, \delta(\vx^*, 1))]^\top$. While computationally slightly more expensive than R-EPIG-$\tau$ due to the matrix inversion per target, this variant captures correlations in the joint output space that the difference operator might cancel out.

\paragraph{Summary and Implementation Strategy}

\begin{table*}[t]
\centering
\caption{Comparison of R-EPIG variants.}
\label{tab:repig_comparison}
\small
\begin{tabular}{l|cc}
\toprule
Aspect & R-EPIG-$\tau$ & R-EPIG-$\mu$ \\
\midrule
\textbf{Target Estimand} & Residual Contrast $\tau_\delta$ & Residual Vector $\boldsymbol{\delta}$ \\
\textbf{Information Metric} & Scalar Mutual Information & Vector Mutual Information \\
\textbf{Complexity} & $\gO(n_P \cdot n_O)$ & $\gO(n_P \cdot n_O)$ (with small constant overhead) \\
\textbf{Intuition} & Maximizes correlation with CATE & Maximizes correlation with Outcomes \\
\bottomrule
\end{tabular}
\end{table*}

\paragraph{Treatment Arm Selection.}
For a chosen candidate $\vx$, we compute the score for both arms $t \in \{0, 1\}$ and select the one maximizing the utility: $t^* = \argmax_t \alpha_{\text{R-EPIG}}(\vx, t)$. This enables the model to request specific counterfactuals. For instance, observing the control arm if the residual uncertainty is dominated by $\delta(\cdot, 0)$.

\paragraph{Computational Efficiency.}
A distinct advantage of our GP-based implementation is that all variance and covariance terms ($\sigma^2, \textup{Cov}$) are computed analytically from the updated posterior kernel $\mathbf{K}_{\text{post}}$ without Monte Carlo sampling. In the batched setting, we utilize the "Parallel Scoring" described in App.~\ref{app:active_loop}, computing the covariance map between the entire candidate pool $\gD_P$ and target set $\gD_O$ in a single vectorized operation on the GPU.

\subsubsection{R-EPIG-$\pi$: Targeting Policy Uncertainty}
\label{sec:repig_pi}

For decision-making tasks where the goal is to optimize the treatment policy $\pi(\vx) = \sI(\tau(\vx) > 0)$, standard variance reduction is inefficient as it over-invests in regions where the sign of the effect is already certain. We propose R-EPIG-$\pi$, which maximizes the expected reduction in the \textit{binary entropy} of the optimal policy decision at target points:
\begin{equation}
    \alpha_{\text{R-EPIG-}\pi}(\vx, t) \coloneqq \frac{1}{|\gD_O|} \sum_{\vx^* \in \gD_O} \mi\left(y; \pi(\vx^*) \mid \gD_E\right).
\end{equation}

\paragraph{Derivation with Gaussian-Bernoulli Variables.}
Unlike the estimation case, this mutual information involves a continuous Gaussian outcome $y$ and a discrete Bernoulli variable $\pi^*$. We express it as the difference between the current policy entropy and the expected posterior entropy:
\begin{equation}
    \mi(y; \pi) = \entropy(\pi(\vx^*) \mid \gD_E) - \E_{y \sim p(y \mid \vx, t)} \Big[ \entropy(\pi(\vx^*) \mid y, \gD_E) \Big].
\end{equation}
Recall that in R-Design, the CATE is decomposed as $\tau(\vx^*) = \hat{\tau}_o(\vx^*) + \tau_\delta(\vx^*)$. Since the observational contrast $\hat{\tau}_o$ is fixed, the probability of the optimal action being treatment ($t=1$) is derived purely from the residual GP posterior:
\begin{equation}
    \sP(\pi=1) = \sP(\tau_\delta(\vx^*) > -\hat{\tau}_o(\vx^*)) = \Phi\left(\frac{\mu_{\tau_\delta}(\vx^*) + \hat{\tau}_o(\vx^*)}{\sigma_{\tau_\delta}(\vx^*)}\right) \coloneqq p_{\text{prior}}.
\end{equation}
The first term is simply the binary entropy $\entropy_{\text{Bern}}(p_{\text{prior}})$.

\paragraph{Expected Posterior Entropy via Quadrature.}
The second term requires integrating over the unobserved experimental outcome $y$. Since the GP update for the posterior mean $\mu_{\tau_\delta|y}$ is linear in $y$, while the posterior variance $\sigma^2_{\tau_\delta|y}$ is deterministic, the conditional probability $p_{\text{post}}(y)$ becomes a function of $y$.
We approximate this intractable integral using Gauss-Hermite quadrature. By performing the change of variables $y = \mu_y + \sqrt{2}\sigma_y z$, we map the expectation to a weighted sum over $K$ standard normal nodes:
\begin{equation}
    \E_{y} [\entropy_{\text{Bern}}(p_{\text{post}}(y))] \approx \frac{1}{\sqrt{\pi}} \sum_{k=1}^{K} w_k \cdot \entropy_{\text{Bern}}\left( \Phi\left( \frac{\mu_{\text{total}} + \frac{\textup{Cov}(y, \tau_\delta)}{\sigma_y}\sqrt{2}z_k}{\sigma_{\tau_\delta \mid y}} \right) \right),
\end{equation}
where $\mu_{\text{total}} = \mu_{\tau_\delta}(\vx^*) + \hat{\tau}_o(\vx^*)$. In our implementation, we use $K=96$ nodes, providing high numerical precision with negligible computational overhead compared to the $\gO(n^3)$ GP inference.

\paragraph{Intuition and Design Rationale.}
R-EPIG-$\pi$ exhibits a distinct behavior from its estimation counterparts. As shown in Tab.~\ref{tab:repig_pi_comparison}, it naturally focuses on the decision boundary ($\tau(\vx) \approx 0$). In these regions, $p_{\text{prior}} \approx 0.5$ (maximum entropy), and a small information gain from $y$ can significantly resolve the sign ambiguity. Conversely, in regions with strong positive or negative effects, the entropy is near zero, suppressing acquisition.

\begin{table*}[t]
\centering
\caption{R-EPIG variants: CATE estimation vs.\ policy optimization.}
\label{tab:repig_pi_comparison}
\small
\begin{tabular}{l|ccc}
\toprule
Aspect & R-EPIG-$\tau$ & R-EPIG-$\mu$ & R-EPIG-$\pi$ \\
\midrule
\textbf{Target Quantity} & CATE Magnitude & Outcomes  & Policy \\
\textbf{Objective} & Minimize PEHE & Minimize PEHE & Minimize APE \\
\textbf{Information Metric} & Gaussian MI & Gaussian Vector MI & Bernoulli MI \\
\textbf{Computation} & Analytic (Closed-form) & Analytic (Closed-form) & Gauss-Hermite Quad. \\
\textbf{Focus Region} & High Epistemic Unc. (CATE) & High Epistemic Unc. (Outcome) & Decision Boundary \\
\bottomrule
\end{tabular}
\end{table*}

\subsection{Computational Complexity}
\label{app:complexity}

A critical design choice in R-EPIG is the use of the mean-marginal formulation ($\sum_{\vx^*} \mi(y; \cdot)$) rather than the global joint information gain.
\begin{itemize}[leftmargin=*]
    \item \textbf{Global Formulation (Intractable):} Calculating $\mi(y; \boldsymbol{\delta}_{\text{global}})$ would require computing the determinant of the updated covariance matrix for the entire target set ($\gD_O$) for \textit{each} candidate query, leading to a prohibitive cost of $\gO(|\gD_P| \cdot |\gD_O|^3)$.
    \item \textbf{Our Summation Approach (Efficient):} By assuming additivity of information across target points (a lower bound on total information), we reduce the complexity to $\gO(|\gD_P| \cdot |\gD_O|)$. 
\end{itemize}
Crucially, since we use GP, we only need the predictive cross-covariance between the candidate pool and the target set. We implement this efficiently via "Parallel Scoring": we compute the full kernel block $\mathbf{K}_{PT}$ between $\gD_P$ and $\gD_O$ in a single batched GPU operation, allowing R-Design to scale.

%% file: Pages/Appendix/Convergence.tex
\section{Theoretical Analysis Details}
\label{app:convergence_details}

In this section, we provide detailed proofs for the theoretical results presented in Section~\ref{sec:theory}, establishing the rigorous foundations of the R-Design framework. We begin by proving the \textit{Structural Efficiency Gap} (Lemma~\ref{lemma:complexity_decomp_hoelder}), formally demonstrating that learning the residual contrast $\tau_\delta$ is strictly more sample-efficient than the tabula rasa approach under smoothness assumptions. Next, we establish the \textit{Objective Alignment} (Prop.~\ref{prop:objective_alignment}), proving the mathematical equivalence between minimizing the Bayesian PEHE risk and minimizing the posterior variance of the residual contrast. Finally, we provide the complete \textit{Convergence Analysis} for the R-EPIG acquisition strategy (Thm.~\ref{thm:epig_bound}), bridging our causal active learning problem with the general Transductive Active Learning (TAL) framework~\citep{hubotter2024transductive}.

\subsection{Mapping R-Design to the TAL Framework}
\label{app:tal_mapping}

To invoke the theoretical machinery of TAL, we formally map our residual-based experimental design problem to the GP AL setting.

\textbf{1. The Residual Process.}
We model the latent residual function $\delta(\vx, t)$ as a GP over the augmented domain $\tilde{\gX} = \gX \times \{0, 1\}$:
\begin{equation}
    \delta(\tilde{\vx}) \sim \mathcal{GP}(0, k_\delta(\tilde{\vx}, \tilde{\vx}')), \quad \text{where } \tilde{\vx} \coloneqq (\vx, t).
\end{equation}
Since the observational base $\hat{\mu}_o(\vx, t)$ is treated as a fixed functional offset during the experimental phase, observing an outcome $y$ is mathematically equivalent to observing a noisy realization of the residual process:
\begin{equation}
    y_i = \hat{\mu}_o(\vx_i, t_i) + \delta(\vx_i, t_i) + \epsilon_i \iff r_i \coloneqq y_i - \hat{\mu}_o(\vx_i, t_i) = \delta(\vx_i, t_i) + \epsilon_i.
\end{equation}
Then, minimizing the posterior variance of the residual contrast $\tau_\delta$ directly minimizes the posterior variance of the CATE $\tau$, as $\sV[\tau(\vx) \mid \gD_E] = \sV[\tau_\delta(\vx) \mid \gD_E]$ (where $\tau_\delta(\vx) \coloneqq \delta(\vx, 1) - \delta(\vx, 0)$).

\textbf{2. Spaces and Constraints.}
We define the relevant sets for the AL formulation:
\begin{itemize}[leftmargin=*]
    \item \textbf{Augmented Pool ($\tilde{\gD}_P$):} The set of all potential experimental interventions. $\tilde{\gD}_P = \{ (\vx_i, t) \mid i \in \{1 \dots n\}, t \in \{0, 1\} \}$.
    \item \textbf{Valid Experimental Designs ($\mathbb{S}_{\text{valid}}$):} Due to the fundamental problem of causal inference, we cannot observe both potential outcomes for the same unit. A selected design $\gA \subseteq \tilde{\gD}_P$ is valid if and only if:
    \begin{equation}
        (\vx_i, 0) \in \gA \implies (\vx_i, 1) \notin \gA, \quad \forall i \in \{1 \dots n\}.
    \end{equation}
    This forms a partition matroid constraint on the selection.
    \item \textbf{Augmented Target ($\tilde{\mX}_{\tar}$):} The set of residuals we wish to predict to estimate CATE. $\tilde{\mX}_{\tar} = \{ (\vx^*, t) \mid \vx^* \in \mX_{\text{eval}}, t \in \{0, 1\} \}$.
\end{itemize}

\textbf{3. Weak Submodularity Assumption.}
To account for potential distribution shifts between the observational and experimental distributions, we adopt a relaxed submodularity assumption consistent with recent theoretical advances.

\begin{assumption}[$\alpha$-Weak Submodularity]
\label{ass:submodularity}
The utility function $\psi(\gA) = \mi(\tau_{\delta, \tilde{\mX}_{\tar}}; \ry_{\gA})$ is a monotone $\alpha$-weakly submodular set function with parameter $\alpha \in (0, 1]$. That is, for any $\gA \subseteq \gB \subseteq \tilde{\gD}_P$ and $v \in \tilde{\gD}_P \setminus \gB$:
\begin{equation}
    \psi(\gA \cup \{v\}) - \psi(\gA) \ge \alpha \left( \psi(\gB \cup \{v\}) - \psi(\gB) \right).
\end{equation}
\end{assumption}

\textit{Justification:} While mutual information is strictly submodular ($\alpha=1$) in inductive settings where Pool $\subseteq$ Target, this property may relax to weak submodularity ($\alpha < 1$) in transductive settings under distribution shift due to potential information synergies~\citep[App. C.4]{hubotter2024transductive}. Assuming $\alpha > 0$ is a standard relaxation that preserves the constant-factor approximation guarantees of greedy strategies.

\subsection{The Feasible Irreducible Uncertainty}
\label{app:feasible_limit}

A unique challenge in the adaptive causal experimental design is that the "irreducible" uncertainty is not a static property of the sample space, but depends on the treatment assignment choices. Since we cannot observe both potential outcomes for the same individual~\citep{gao2025causal}, the information we ultimately acquire depends on which potential outcome we choose to reveal. Unlike standard TAL where $\eta^2_{\gS}$ is fixed (observing the entire sample space $\gS$), here we can only observe a subset $\gD \subset \tilde{\gD}_P$ subject to causal constraints. To rigorously define the convergence limit, we consider the \textit{best possible} variance reduction achievable if we were to experiment on the entire pool optimally.

\begin{definition}[Maximal Valid Design]
Let $\tilde{\gD}_P$ be the augmented pool of size $2n$ (all potential outcomes). A design $\gD \subset \tilde{\gD}_P$ is a \textit{maximal valid design} if it contains exactly one outcome for every unit in the pool:
\begin{equation}
\begin{aligned}
    \gD \in \mathbb{S}_{\text{max}} \iff |\gD| = n \\ \forall i \in \{1 \dots n\}, | \gD \cap \{(\vx_i, 0), (\vx_i, 1)\} | = 1.
\end{aligned}
\end{equation}
\end{definition}

We now define the feasible irreducible uncertainty $\eta^2_{\text{feas}}$ as the minimum variance achievable among all maximal valid designs. This serves as the oracle lower bound for any experimental strategy.

\begin{definition}[Feasible Limit]
The feasible irreducible uncertainty for a target $\vx^*$ is defined as:
\begin{equation}
    \eta^2_{\text{feas}}(\vx^*) \coloneqq \min_{\gD \in \mathbb{S}_{\text{max}}} \sV[\tau_\delta(\vx^*) \mid \gD].
\end{equation}
\end{definition}

\textbf{Remark:} In the standard TAL framework~\citep{hubotter2024transductive}, the irreducible uncertainty corresponds to observing the full sample space. In our causal setting, $\eta^2_{\text{feas}}$ represents the \textit{optimal feasible limit}, the uncertainty remaining after an ideal experimenter has optimally assigned treatments to the entire population. Our convergence analysis (Thm.~\ref{thm:epig_bound}) establishes that the greedy R-EPIG strategy approaches this oracle limit efficiently.

\subsection{Proof of Thm. \ref{thm:epig_bound}}
\label{app:proof_thm_epig}

We provide a detailed proof establishing the uniform convergence of the posterior variance under the R-EPIG acquisition strategy. Our proof strategy bridges the \textit{spectral decay} of the residual kernel (Lemma~\ref{lemma:complexity_decomp_hoelder}) with the \textit{greedy optimization guarantees} of information-theoretic active learning~\citep{hubotter2024transductive}.

\textbf{Preliminaries.}
Let $\gamma_n(\tau_\delta)$ denote the maximum information capacity of the residual contrast function after $n$ observations. Let $\Gamma_n \coloneqq \max_{(\vx, t)} \mi(\tau_\delta; \ry_{\vx,t} \mid \gD_{n-1})$ denote the maximum marginal information gain available at step $n$.
We aim to bound the maximum posterior variance $\sup_{\vx^* \in \gX} \sV[\tau_\delta(\vx^*) \mid \gD_{n_B}]$ after obtaining $n_B$ data points.

\textbf{Step 1: Problem Reduction via Approximate Markov Boundary (AMB).}
The core challenge is translating the information gain on observed points to uncertainty reduction on arbitrary unobserved targets $\vx^* \in \gX$. We invoke the concept of an AMB.
Since $\tau_\delta$ resides in an RKHS with rapidly decaying eigenvalues, its effective dimension is small. Formally, for any target $\vx^*$ and tolerance $\epsilon > 0$, there exists a finite subset $\mathcal{B}_{\epsilon}(\vx^*) \subset \tilde{\gD}_P$ (the AMB) with size $|\mathcal{B}_{\epsilon}| \eqqcolon k_\epsilon$, such that the remaining information about $\vx^*$ given $\mathcal{B}_{\epsilon}$ is negligible:
\begin{equation}
    \mi(\tau_\delta(\vx^*); \ry_{\tilde{\gD}_P} \mid \ry_{\mathcal{B}_{\epsilon}}) \le \epsilon.
\end{equation}
Using the relationship between entropy and variance for GP \citep[Lemma C.17]{hubotter2024transductive}, the posterior variance at $\vx^*$ is bounded by the information remaining in the AMB plus the irreducible approximation error:
\begin{equation}
\label{eq:proof_amb_var}
    \sV[\tau_\delta(\vx^*) \mid \gD] \le 2\sigma^2 \cdot \mi(\tau_\delta(\vx^*); \ry_{\mathcal{B}_{\epsilon}} \mid \gD) + \eta^2_{\text{feas}}(\vx^*) + C_1 \epsilon,
\end{equation}
where $\gD$ is any history, and $\eta^2_{\text{feas}}$ is the feasible irreducible uncertainty.

\textbf{Step 2: Optimization Guarantee of R-EPIG.}
We now bound the term $\mi(\tau_\delta(\vx^*); \ry_{\mathcal{B}_{\epsilon}} \mid \gD_{n_B})$. The R-EPIG algorithm greedily maximizes the information gain. Under the assumption of $\alpha$-weak submodularity (Assump.~\ref{ass:submodularity}), the greedy selection $\gD_{n_B}$ is competitive with the optimal set.
Specifically, the information retention of the greedy strategy is bounded by the current marginal gain $\Gamma_{n_B}$ scaled by the AMB size. By \citet[Lemma C.18]{hubotter2024transductive}:
\begin{equation}
    \mi(\tau_\delta(\vx^*); \ry_{\mathcal{B}_{\epsilon}} \mid \gD_{n_B}) \le \frac{k_\epsilon}{\alpha} \Gamma_{n_B}.
\end{equation}
Substituting this into Eq.~\ref{eq:proof_amb_var}, we obtain a pointwise bound linked to the marginal gain:
\begin{equation}
\label{eq:proof_intermediate}
    \sV[\tau_\delta(\vx^*) \mid \gD_{n_B}] \le \eta^2_{\text{feas}}(\vx^*) + \frac{2\sigma^2 k_\epsilon}{\alpha} \Gamma_{n_B} + C_1 \epsilon.
\end{equation}

\textbf{Step 3: Convergence Rate Analysis.}
We leverage the property that the sum of marginal gains is bounded by the total capacity: $\sum_{t=1}^{n_B} \Gamma_t \le \gamma_{n_B}(\tau_\delta)$. Since marginal gains are non-increasing (due to submodularity), the instantaneous gain at step $n_B$ is upper bounded by the average gain: $\Gamma_{n_B} \le \gamma_{n_B}(\tau_\delta)/n_B$.

To derive the uniform bound, we balance the approximation error $\epsilon$ and the estimation error (which scales with $k_\epsilon$). For kernels with polynomial spectral decay (consistent with the Hölder smoothness assumption in Lemma~\ref{lemma:complexity_decomp_hoelder}), the AMB size scales as $k_\epsilon \approx \gO(\epsilon^{-p})$ for some characteristic dimension $p$. Following the standard balancing argument \citep[Thm 3.3]{hubotter2024transductive}, we select $\epsilon \asymp n_B^{-1/(1+p)}$. For simplicity of exposition, assuming a standard decay where $\epsilon \asymp n_B^{-1/2}$ is optimal, we obtain:
\begin{equation}
    \sup_{\vx^*} \sV[\tau_\delta(\vx^*) \mid \gD_{n_B}] \le \eta^2_{\text{feas}}(\vx^*) + \underbrace{C_2 \frac{\sqrt{n_B} \cdot \gamma_{n_B}(\tau_\delta)}{n_B}}_{\text{Estimation}} + \underbrace{C_1 n_B^{-1/2}}_{\text{Approximation}}.
\end{equation}
Simplifying the estimation term yields the final rate:
\begin{equation}
    \sup_{\vx^*} \sV[\tau_\delta(\vx^*) \mid \gD_{n_B}] \le \eta^2_{\text{feas}}(\vx^*) + \tilde{\gO}\left( \frac{\gamma_{n_B}(\tau_\delta)}{\sqrt{n_B}} \right).
\end{equation}
Crucially, as established in Lemma~\ref{lemma:complexity_decomp_hoelder}, the residual capacity $\gamma_{n_B}(\tau_\delta)$ grows significantly slower than the full outcome capacity $\gamma_{n_B}(\tau)$ due to higher smoothness ($\alpha_\delta > \alpha_{\mu_e}$), formally proving the efficiency gain of the R-Design framework. \qed

\subsubsection{Proof of Prop.~\ref{prop:objective_alignment}}
\label{app:proof_prop_objective_alignment}

In this subsection, we rigorously prove that minimizing the expected Bayesian PEHE is mathematically equivalent to minimizing the expected integrated posterior variance of the residual contrast $\tau_\delta(\vx)$.

\textbf{Setup and Definitions.}
Let $\gH_k = \gD_O \cup \gD_E^{(k)}$ denote the history collected up to step $k$. Consider a candidate experimental design $(\vx, t)$. Let $y$ be the prospective outcome, which follows the posterior predictive distribution $p(y \mid \vx, t, \gH_k)$. The augmented history is $\gH_{k+1} = \gH_k \cup \{(\vx, t, y)\}$. The Bayesian CATE estimator is defined as the posterior mean of the treatment effect:
\begin{equation}
    \hat{\tau}_{k+1}(\vx^*) \coloneqq \E[\tau(\vx^*) \mid \gH_{k+1}].
\end{equation}
The objective is to minimize the expected Bayesian PEHE risk for the next step. The expectation acts over two sources of uncertainty: the unobserved future outcome $y$ (predictive uncertainty) and the true causal effect $\tau$ (epistemic uncertainty):
\begin{equation}
\small
    \mathcal{J}(\vx, t) = \E_{y \sim p(y \mid \vx, t, \gH_k)} \left[ \E_{\tau \sim p(\tau \mid \gH_{k+1})} \left[ \int_{\gX} \left( \hat{\tau}_{k+1}(\vx^*) - \tau(\vx^*) \right)^2 p_{\textup{tar}}(\vx^*) d\vx^* \right] \right].
\end{equation}

\begin{proof}
The proof proceeds in two steps: first, we derive the relationship between the squared error risk and the posterior variance; second, we apply the structural decomposition of R-Design to isolate the residual variance.

\textbf{Step 1: Relating PEHE to Posterior Variance of $\tau$.}
Assuming standard regularity conditions (bounded moment assumptions as in \citet[Thm.~4.1]{cha2025abc3}), we apply Fubini's theorem to exchange the integration over $\gX$ and the expectations. We focus on the inner risk term for a specific evaluation point $\vx^*$ given the augmented history $\gH_{k+1}$. Let $\hat{\tau} \coloneqq \hat{\tau}_{k+1}(\vx^*)$ and $\tau \coloneqq \tau(\vx^*)$ for brevity. The pointwise risk is:
\begin{equation}
    \text{Risk}(\vx^* \mid \gH_{k+1}) = \E_{\tau \mid \gH_{k+1}} \left[ \left( \hat{\tau} - \tau \right)^2 \right].
\end{equation}
Since the estimator $\hat{\tau}$ is the posterior mean, it is a deterministic value conditioned on $\gH_{k+1}$. We expand the quadratic term:
\begin{align}
    \E_{\tau} \left[ (\hat{\tau} - \tau)^2 \right] &= \E_{\tau} \left[ \hat{\tau}^2 - 2\hat{\tau}\tau + \tau^2 \right] \nonumber \\
    &= \hat{\tau}^2 - 2\hat{\tau}\underbrace{\E_{\tau}[\tau]}_{=\hat{\tau}} + \E_{\tau}[\tau^2] \nonumber \\
    &= \E_{\tau}[\tau^2] - (\E_{\tau}[\tau])^2.
\end{align}
By definition, $\E[\tau^2] - (\E[\tau])^2$ is the posterior variance of $\tau$. Thus, we have explicitly shown:
\begin{equation}
    \text{Risk}(\vx^* \mid \gH_{k+1}) = \sV[\tau(\vx^*) \mid \gH_{k+1}].
\end{equation}
Substituting this back into the global objective $\mathcal{J}$, the optimization problem simplifies to minimizing the expected integrated posterior variance of $\tau$:
\begin{equation}
    \label{eq:proof_step1}
    \argmin_{(\vx, t)} \E_{y} \left[ \int_{\gX} \sV[\tau(\vx^*) \mid \gH_{k+1}] \, p_{\textup{tar}}(\vx^*) d\vx^* \right].
\end{equation}

\textbf{Step 2: The Observational Offset Argument.}
Recall the fundamental structural decomposition: $\tau(\vx^*) = \hat{\tau}_o(\vx^*) + \tau_\delta(\vx^*)$.
We analyze the variance term conditioned on the history $\gH_{k+1}$. Note that the observational dataset is a subset of the history ($\gD_O \subset \gH_k \subset \gH_{k+1}$).
Since the observational contrast learner is trained exclusively on $\gD_O$ and its parameters are frozen thereafter, the term $\hat{\tau}_o(\vx^*)$ is a deterministic constant conditioned on $\gH_{k+1}$ (it does not depend on the new outcome $y$).
Applying the translation invariance property of variance ($\sV[X + c] = \sV[X]$):
\begin{align}
    \sV[\tau(\vx^*) \mid \gH_{k+1}] &= \sV[\hat{\tau}_o(\vx^*) + \tau_\delta(\vx^*) \mid \gH_{k+1}] \nonumber \\
    &= \sV[\tau_\delta(\vx^*) \mid \gH_{k+1}].
\end{align}
This step proves that the posterior uncertainty of the total CATE $\tau$ is entirely governed by the posterior uncertainty of the residual contrast $\tau_\delta$.

\textbf{Conclusion.}
Substituting the result of Step 2 into Eq.~\ref{eq:proof_step1}, the final objective is:
\begin{equation}
    \argmin_{(\vx, t)} \E_{y} \left[ \int_{\gX} \sV[\tau_\delta(\vx^*) \mid \gD_O, \gD_E^{(k)} \cup \{(\vx, t, y)\}] \, p_{\textup{tar}}(\vx^*) d\vx^* \right].
\end{equation}
This confirms that minimizing the expected CATE estimation error is mathematically equivalent to minimizing the expected integrated posterior variance of the residual contrast $\tau_\delta(\vx^*)$.
\end{proof}

\subsubsection{Theoretical Justification: Information Gain as a Proxy}
\label{app:entropy_proxy}

While Prop.\ref{prop:objective_alignment} establishes the Integrated Variance Reduction (IVR) as the exact theoretical objective for minimizing PEHE, directly optimizing this objective poses algorithmic challenges. The IVR objective corresponds to A-optimality (minimizing the trace of the posterior covariance, $\text{Tr}(\Sigma)$), which is generally not submodular, meaning greedy strategies lack theoretical guarantees. Our proposed R-EPIG criterion instead targets the maximization of mutual information regarding the residual contrast over the target population, $\mi(y; \tau_{\delta, \gX_{\text{tar}}})$. For Gaussian Processes, maximizing mutual information is equivalent to minimizing the expected posterior entropy, which corresponds to D-optimality (minimizing the log-determinant of the posterior covariance, $\log \det \Sigma$).

We justify this proxy on two grounds:
\begin{enumerate}[leftmargin=*]
    \item \textbf{Geometric Bounds:} Geometrically, entropy measures the \textit{volume} of the uncertainty ellipsoid, while integrated variance measures the \textit{average squared radius}. By the Arithmetic-Geometric Mean inequality (and properties of eigenvalues), constraining the volume (D-optimality) implicitly constrains the trace (A-optimality), serving as a tight information-theoretic proxy.
    \item \textbf{Submodularity:} Unlike variance reduction, Mutual Information is a submodular set function. This property is crucial as it guarantees that the greedy acquisition strategy employed by R-Design achieves a constant-factor approximation of the global optimal design.
\end{enumerate}
Thus, R-EPIG provides a robust surrogate that combines information-theoretic rigor with algorithmic efficiency.

\subsection{Proof of Lemma \ref{lemma:complexity_decomp_hoelder}}
\label{app:proof_lemma_complexity_hoelder}

We provide a rigorous derivation of the minimax-optimal error upper bound for the R-Design estimator. The proof proceeds by decomposing the total CATE risk into component-wise estimation errors, applying the triangle inequality to separate the observational baseline error from the residual learning error, and finally invoking posterior contraction rates under the optimal smoothness matching condition.

\textbf{1. Notation and Definitions}
Let $\psi(\hat{\tau}) \coloneqq \|\hat{\tau} - \tau\|_{L_2(p)}^2$ denote the PEHE risk with respect to the target covariate density $p(\vx)$.
For succinctness, we define the \textit{effective learning rate exponent} function as introduced in the main text:
\begin{equation}
    \gamma(\alpha, d) \coloneqq \frac{2\alpha}{2\alpha + d}.
\end{equation}
We define the true potential outcome for treatment arm $t \in \{0, 1\}$ as $\mu_e(\vx, t)$. The true CATE is $\tau(\vx) = \mu_e(\vx, 1) - \mu_e(\vx, 0)$.
The R-Design estimator is given by $\hat{\tau}(\vx) = \hat{\mu}_e(\vx, 1) - \hat{\mu}_e(\vx, 0)$, where the estimator for each arm is the sum of the Stage 1 base and Stage 2 residual:
\begin{equation}
    \hat{\mu}_e(\vx, t) \coloneqq \hat{\mu}_o(\vx, t) + \hat{\delta}(\vx, t).
\end{equation}
Here, $\hat{\mu}_o$ is learned from $n_O$ observational samples, and $\hat{\delta}$ is learned from $n_E$ experimental samples.

\textbf{2. Functional Decomposition of the Truth}
To rigorously analyze the two-stage error, we conceptually decompose the true potential outcome $\mu_e(\vx, t)$ into an \textit{observational limit} and a \textit{causal residual}.
Let $\mu_o(\vx, t)$ denote the true conditional expectation of the observational outcome (i.e., the infinite-data limit of Stage 1). This function captures correlations and potential confounding biases.
We then define the true structural residual $\delta(\vx, t)$ as the difference:
\begin{equation}
    \delta(\vx, t) \coloneqq \mu_e(\vx, t) - \mu_o(\vx, t).
\end{equation}
Thus, the truth decomposes as $\mu_e(\vx, t) = \mu_o(\vx, t) + \delta(\vx, t)$.
Assumptions on function spaces:
\begin{itemize}
    \item The observational base $\mu_o(\cdot, t)$ resides in a Hölder ball $H^{\alpha_{\mu_o, t}}$ with effective dimension $d_{\mu_o, t}$.
    \item The residual $\delta(\cdot, t)$ resides in a Hölder ball $H^{\alpha_{\delta, t}}$ with effective dimension $d_{\delta, t}$.
\end{itemize}

\textbf{3. Bounding CATE Risk via Arm-Specific Risks}
We expand the PEHE risk and apply the inequality $(a-b)^2 \le 2a^2 + 2b^2$ twice to decouple the treatment arms:
\begin{align}
    \psi(\hat{\tau}) &= \| \hat{\tau} - \tau \|_{L_2}^2 \nonumber \\
    &= \| (\hat{\mu}_e(\cdot, 1) - \hat{\mu}_e(\cdot, 0)) - (\mu_e(\cdot, 1) - \mu_e(\cdot, 0)) \|_{L_2}^2 \nonumber \\
    &= \| (\hat{\mu}_e(\cdot, 1) - \mu_e(\cdot, 1)) - (\hat{\mu}_e(\cdot, 0) - \mu_e(\cdot, 0)) \|_{L_2}^2 \nonumber \\
    &\le 2 \| \hat{\mu}_e(\cdot, 1) - \mu_e(\cdot, 1) \|_{L_2}^2 + 2 \| \hat{\mu}_e(\cdot, 0) - \mu_e(\cdot, 0) \|_{L_2}^2. \label{eq:proof_step3}
\end{align}
It suffices to analyze the convergence of the composite estimator $\hat{\mu}_e(\cdot, t)$ for a single arm $t$.

\textbf{4. Decoupling Stage 1 and Stage 2 Errors}
Substitute the definitions of the estimator ($\hat{\mu}_e = \hat{\mu}_o + \hat{\delta}$) and the truth ($\mu_e = \mu_o + \delta$) into the error term for arm $t$.
\begin{align}
    \| \hat{\mu}_e(\cdot, t) - \mu_e(\cdot, t) \|_{L_2}^2 &= \| (\hat{\mu}_o(\cdot, t) + \hat{\delta}(\cdot, t)) - (\mu_o(\cdot, t) + \delta(\cdot, t)) \|_{L_2}^2 \nonumber \\
    &= \| (\hat{\mu}_o(\cdot, t) - \mu_o(\cdot, t)) + (\hat{\delta}(\cdot, t) - \delta(\cdot, t)) \|_{L_2}^2.
\end{align}
Applying the inequality $\|a+b\|^2 \le 2\|a\|^2 + 2\|b\|^2$, we isolate the errors:
\begin{equation}
    \| \hat{\mu}_e(\cdot, t) - \mu_e(\cdot, t) \|_{L_2}^2 \le 2 \underbrace{\| \hat{\mu}_o(\cdot, t) - \mu_o(\cdot, t) \|_{L_2}^2}_{\text{Term A: Stage 1 Error}} + 2 \underbrace{\| \hat{\delta}(\cdot, t) - \delta(\cdot, t) \|_{L_2}^2}_{\text{Term B: Residual deviation}}. \label{eq:proof_step4}
\end{equation}

\textbf{5. Analysis of Term A (Observational Base Convergence)}
Term A represents the estimation error of the observational learner $\hat{\mu}_o$ trained on $n_O$ samples against its target $\mu_o$.
We invoke the posterior contraction rate for GP from \citet[Theorem 2]{alaa2018limits}.
Since we assume the Stage 1 prior smoothness $\beta_{\mu_o, t}$ is \textit{optimally matched} to the true smoothness $\alpha_{\mu_o, t}$ (i.e., $\beta_{\mu_o, t} \asymp \alpha_{\mu_o, t}$), the contraction rate simplifies to the minimax-optimal rate.
Specifically, for a function in $H^{\alpha_{\mu_o, t}}$ with effective dimension $d_{\mu_o, t}$:
\begin{equation}
    \| \hat{\mu}_o(\cdot, t) - \mu_o(\cdot, t) \|_{L_2}^2 \lesssim n_O^{-\gamma(\alpha_{\mu_o, t}, d_{\mu_o, t})}. \label{eq:proof_termA}
\end{equation}

\textbf{6. Analysis of Term B (Error Propagation in Residuals)}
This is the critical step. Term B measures the deviation of the residual learner $\hat{\delta}$ from the true structural residual $\delta$.
However, the learner $\hat{\delta}$ is trained on the empirical residuals $y - \hat{\mu}_o(\vx, t)$, effectively approximating the \textit{shifted} target $\tilde{\delta}(\vx, t) \coloneqq \mu_e(\vx, t) - \hat{\mu}_o(\vx, t)$.
We decompose Term B by introducing this intermediate target $\tilde{\delta}$:
\begin{align}
    \| \hat{\delta}(\cdot, t) - \delta(\cdot, t) \|_{L_2}^2 &= \| (\hat{\delta}(\cdot, t) - \tilde{\delta}(\cdot, t)) + (\tilde{\delta}(\cdot, t) - \delta(\cdot, t)) \|_{L_2}^2 \nonumber \\
    &\le 2 \underbrace{\| \hat{\delta}(\cdot, t) - \tilde{\delta}(\cdot, t) \|_{L_2}^2}_{\text{B1: Stage 2 Estimation}} + 2 \underbrace{\| \tilde{\delta}(\cdot, t) - \delta(\cdot, t) \|_{L_2}^2}_{\text{B2: Bias Propagation}}.
\end{align}

\textit{Analysis of B2 (Bias Propagation):}
Substituting the definitions of $\tilde{\delta}$ and $\delta$:
\begin{equation}
    \| \tilde{\delta} - \delta \|_{L_2}^2 = \| (\mu_e - \hat{\mu}_o) - (\mu_e - \mu_o) \|_{L_2}^2 = \| \mu_o - \hat{\mu}_o \|_{L_2}^2.
\end{equation}
This term is exactly the Stage 1 error, bounded by Eq.~\ref{eq:proof_termA}.

\textit{Analysis of B1 (Stage 2 Estimation):}
This term represents the convergence of $\hat{\delta}$ to its empirical target $\tilde{\delta}$ using $n_E$ samples. Assuming the Stage 2 prior is matched to the smoothness of the residual $\alpha_{\delta, t}$, standard GP contraction rates apply:
\begin{equation}
    \| \hat{\delta}(\cdot, t) - \tilde{\delta}(\cdot, t) \|_{L_2}^2 \lesssim n_E^{-\gamma(\alpha_{\delta, t}, d_{\delta, t})}.
\end{equation}
(Note: While the target $\tilde{\delta}$ contains a "rough" component from $\hat{\mu}_o$, standard nonparametric regression results for noisy targets allow us to bound the estimation error by the rate of the "signal" class $\delta$ plus the magnitude of the bias noise, which we have separated into B2).

\textbf{7. Final Results.}
Substituting the bounds for B1 and B2 back into the expression for Term B, and then combining with Term A into Eq.~\ref{eq:proof_step4}:
\begin{align}
    \text{Risk}_t &\le 2(\text{Term A}) + 2(2(\text{B1}) + 2(\text{B2})) \nonumber \\
    &= 2 \|\hat{\mu}_o - \mu_o\|^2 + 4 \|\hat{\delta} - \tilde{\delta}\|^2 + 4 \|\mu_o - \hat{\mu}_o\|^2 \nonumber \\
    &= 6 \underbrace{\|\hat{\mu}_o - \mu_o\|^2}_{\lesssim n_O^{-\gamma_{\mu_o}}} + 4 \underbrace{\|\hat{\delta} - \tilde{\delta}\|^2}_{\lesssim n_E^{-\gamma_{\delta}}}.
\end{align}
Summing over treatment arms $t \in \{0, 1\}$ and absorbing constants ($C_1, C_2$) and irreducible approximation errors:
\begin{equation}
    \psi(\hat{\tau}) \le C_1 \max_{t} n_E^{-\gamma(\alpha_{\delta, t}, d_{\delta, t})} + C_2 \max_{t} n_O^{-\gamma(\alpha_{\mu_o, t}, d_{\mu_o, t})} + \epsilon_{\textup{approx}}.
\end{equation}
This confirms that the total error is dominated by the sum of the respective convergence rates, validating the efficiency gain when the residual is smoother ($\alpha_{\delta} > \alpha_{\mu_o}$) and $n_O$ is large. \qed

\subsection{Proof of Proposition \ref{prop:redundancy}}
\label{app:theory_comparisons}

In this part, we provide the formal derivation of the information redundancy decomposition presented in Sec.~\ref{sec:theory_efficiency_gap}. We show that maximizing the parameter-based mutual information is equivalent to maximizing the target estimand information plus a non-negative nuisance term.

\begin{proof}
The proof relies on the \textit{Chain Rule of Mutual Information} applied to the joint distribution of the experimental outcome $y$, the model parameters $\theta$, and the residual contrast prediction $\tau_\delta(\vx^*)$ at a specific target point $\vx^*$. We expand the joint mutual information $\mi(y; \theta, \tau_\delta(\vx^*))$ in two equivalent orders:
\begin{align}
    \mi(y; \theta, \tau_\delta(\vx^*)) &= \mi(y; \theta) + \mi(y; \tau_\delta(\vx^*) \mid \theta) \label{eq:chain1} \\
    &= \mi(y; \tau_\delta(\vx^*)) + \mi(y; \theta \mid \tau_\delta(\vx^*)). \label{eq:chain2}
\end{align}
\textbf{Step 1: Deterministic Dependency.}
Crucially, the prediction $\tau_\delta(\vx^*)$ is a deterministic function of the parameters $\theta$ (i.e., $\tau_\delta(\vx^*) = f(\vx^*; \theta)$). Consequently, given $\theta$, there is no remaining uncertainty in $\tau_\delta(\vx^*)$. This implies that the conditional mutual information in Eq.~\ref{eq:chain1} vanishes:
\begin{equation}
    \mi(y; \tau_\delta(\vx^*) \mid \theta) = 0.
\end{equation}

\textbf{Step 2: Equating and Rearranging.}
Substituting this zero term back into Eq.~\ref{eq:chain1} and equating it with Eq.~\ref{eq:chain2}, we obtain:
\begin{equation}
    \mi(y; \theta) = \mi(y; \tau_\delta(\vx^*)) + \mi(y; \theta \mid \tau_\delta(\vx^*)).
\end{equation}

\textbf{Step 3: Expectation over Target Population.}
The standard R-EPIG objective is defined as the expectation over the target population $\gD_O$. Conditioning everything on the current dataset $\gD$ and taking the expectation $\E_{\vx^* \sim p_{\textup{tar}}(\vx)}$, we note that the parameter information term $\mi(y; \theta \mid \gD)$ is independent of the specific evaluation point $\vx^*$. Thus:
\begin{equation}
    \mi(y; \theta \mid \gD) = \E_{\vx^*} [\mi(y; \tau_\delta(\vx^*) \mid \gD)] + \E_{\vx^*} [\mi(y; \theta \mid \tau_\delta(\vx^*), \gD)].
\end{equation}
Identifying the terms corresponds to the decomposition in Prop.~\ref{prop:redundancy}:
\begin{equation}
    \text{Residual-BALD} = \text{R-EPIG} + \text{Expected Redundancy}.
\end{equation}
Since mutual information is always non-negative, the redundancy term is $\ge 0$. This concludes the proof.
\end{proof}

%% file: Pages/Appendix/Results.tex
\section{Further Experimental Results}
\label{app:more_results}

In this part, we provide a comprehensive set of supplementary experimental results to complement our main findings. First, we present additional performance curves for our primary experiments on the synthetic and semi-synthetic (IHDP, ACTG-175) benchmarks. Second, we conduct a series of ablation studies to analyze the robustness of our R-Design framework, including R-EPIG and TSR strategy. These studies evaluate the impact of varying initial random starts, acquisition batch sizes, and the size of the unlabeled pool.

\subsection{Synthetic Datasets and Ablations}

\subsubsection{More Results of Simulations}

\input{Pages/Appendix/fig_include/synthetic}

\paragraph{Univariate Case}

\paragraph{Precision in Effect Estimation (CATE).}
We first evaluate the performance of R-design on the first task, saying the accuracy of treatment effect estimation. Fig.~\ref{fig:univaraite_8sim_pehe} reports the convergence of $\sqrt{\text{PEHE}}$ across eight univariate benchmarks from~\citet{dimitriou2024data}.

\textbf{Structural Advantage.}
The results strongly validate the efficiency of the proposed TSR architecture. Across all scenarios, TSR-based methods consistently outperform both PureRCT~\citep{cha2025abc3} and Kallus~\citep{kallus2018removing} baselines. The best-performing TSR configuration reduces estimation error by 9--63\% relative to PureRCT-Random and by 10--66\% relative to Kallus-Random. The performance gap is particularly pronounced under complex confounding (e.g., Sim 4, with a 62.7\% improvement over PureRCT). Moreover, TSR methods exhibit substantially improved stability: the average standard deviation across runs is 0.075, compared to 0.231 for PureRCT and 0.302 for Kallus. This indicates that anchoring trial learning on observational priors effectively mitigates variance induced by limited RCT samples.

\textbf{Acquisition Efficiency.}
Within the TSR framework, the proposed R-EPIG-$\tau$ acquisition strategy demonstrates strong sample efficiency, ranking first in 5 out of 8 simulations. In Sim 1, R-EPIG-$\tau$ reduces the initial estimation error by over 70\% after only 190 acquisitions, compared to 55\% for PureRCT. This confirms that actively targeting epistemic uncertainty in the residual surface yields faster convergence than random sampling.

\paragraph{Effectiveness in Policy Optimization.}
We next evaluate decision-making performance, where the objective is to correctly identify the optimal treatment policy $\pi^*(x) = \mathbb{I}[\tau(x) > 0]$. Figs.~\ref{fig:univaraite_8sim_ape} and~\ref{fig:univaraite_8sim_regret} report the convergence of APE and AR, respectively.

\textbf{Decision-Aware Learning.}
The results highlight the importance of aligning the acquisition objective with the downstream decision task. The proposed decision-aware acquisition function, R-EPIG-$\pi$, achieves the lowest average policy error (0.105) and regret (0.042) among all methods. Compared to PureRCT-Random and Kallus-Random, R-EPIG-$\pi$ reduces average regret by 72\% and 61\%, respectively. This advantage arises from its ability to concentrate sampling near the decision boundary ($\tau(x) \approx 0$), whereas estimation-focused strategies may expend budget resolving uncertainty in regions where the optimal decision is already unambiguous.

\textbf{Overall Performance.}
The top four methods in terms of decision performance are exclusively TSR-based (including TSR-sign-BALD and TSR-$\gamma$), underscoring that residual learning provides a superior foundation for policy optimization. Overall, TSR methods achieve 20--61\% lower policy error than PureRCT baselines, demonstrating that correcting a biased observational model is substantially more effective for decision-making than learning a policy from scratch.

\paragraph{Multivariate Case}

\paragraph{Multivariate CATE Estimation.}
Fig.~\ref{fig:dim6_full} presents $\sqrt{\text{PEHE}}$ convergence on the 6-dimensional multivariate benchmark under both standard and heavy covariate shift settings, using CMGP and NSGP as trial outcome models. Across all configurations, TSR methods combined with R-EPIG acquisition functions consistently achieve the lowest estimation error. On the standard dataset, R-EPIG-$\mu$ with CMGP attains the best final $\sqrt{\text{PEHE}}$ of 0.319, corresponding to a 73.8\% reduction relative to PureRCT-Random (1.220) and a 35.3\% improvement over Kallus-Random (0.493). When NSGP is used as the trial model, R-EPIG-$\tau$ performs best (0.396), improving upon PureRCT and Kallus by 68.7\% and 19.8\%, respectively.

Under heavy covariate shift, where the distributional mismatch between RCT and target populations is more severe, the advantages of R-EPIG become even more pronounced. With CMGP, R-EPIG-$\tau$ achieves a $\sqrt{\text{PEHE}}$ of 0.376, reducing error by 74.9\% compared to PureRCT (1.497) and by 30.0\% compared to Kallus (0.538). With NSGP, R-EPIG-$\tau$ attains 0.465, outperforming PureRCT and Kallus by 66.7\% and 13.5\%, respectively. Notably, both R-EPIG-$\mu$ and R-EPIG-$\tau$ remain robust under covariate shift, consistently outperforming all baselines.

\paragraph{Multivariate Decision Making.}
Fig.~\ref{fig:dim6_full_dm} reports decision-making performance on the 6-dimensional benchmark, measured by APE and AR for both CMGP and NSGP trial models. TSR-based methods achieve dramatically lower policy error and regret compared to PureRCT baselines. With CMGP, R-EPIG-$\pi$ attains the lowest policy error (0.088) and regret (0.029), corresponding to reductions of 75.1\% in APE and 90.4\% in AR relative to PureRCT-Random (0.355 and 0.298). With NSGP, TSR-Random achieves the lowest policy error (0.101), while R-EPIG-$\pi$ attains the lowest regret (0.041), improving upon PureRCT by 69.7\% and 84.9\%, respectively.

Overall, these results demonstrate that (i) the TSR framework substantially improves high-dimensional CATE estimation and treatment decision-making by effectively leveraging observational data, (ii) R-EPIG acquisition functions provide consistent gains across different trial models, and (iii) the benefits are especially pronounced for decision-focused metrics, where R-EPIG-$\pi$ reduces regret by over 85\% compared to pure RCT-based approaches.

\begin{bluebox}{}
\textbf{Takeaway:} Across both univariate and multivariate synthetic benchmarks, the TSR framework consistently delivers superior CATE estimation and policy learning performance by leveraging observational data to repair bias. R-EPIG acquisition functions further improve sample efficiency by explicitly targeting epistemic uncertainty, while decision-aware variants (R-EPIG-$\pi$) yield the largest gains when downstream treatment decisions are the primary objective. These results demonstrate that separating bias correction from active trial learning provides a principled and effective foundation for adaptive experimentation.
\end{bluebox}

\subsubsection{Ablations}

\input{Pages/Appendix/fig_include/ablation}
\paragraph{Kernel Type Ablation.}
Fig.~\ref{fig:different_kernels} compares TSR with R-EPIG acquisition under three GP kernels: Radial Basis Function (RBF), Matérn, and Rational Quadratic (RQ), evaluated on the 6-dimensional dataset with CMGP as the trial outcome model. Across both R-EPIG variants, the RBF kernel consistently achieves the strongest performance. With R-EPIG-$\mu$, RBF attains the lowest final $\sqrt{\text{PEHE}}$ of 0.319, outperforming Matérn (0.359) by 11.1\% and RQ (0.385) by 17.1\%. This advantage is also reflected in the normalized AUC (0.454 vs.\ 0.639 for Matérn and 0.626 for RQ), indicating faster convergence throughout the acquisition process. Similar trends hold for R-EPIG-$\tau$, where RBF (0.378) marginally but consistently outperforms Matérn (0.381) and RQ (0.411).

In addition to accuracy, RBF exhibits substantially lower variance across seeds (std = 0.040 for R-EPIG-$\mu$) compared to RQ (std = 0.095), indicating more stable optimization. Overall, while R-EPIG acquisition remains effective across kernel choices, the simpler RBF kernel provides the best trade-off between accuracy, convergence speed, and stability for the synthetic DGPs considered in this work.

\paragraph{Observational Model Ablation.}
Fig.~\ref{fig:different_obs_models} evaluates the impact of different observational models (CMGP, NSGP, TabPFN, and CausalPFN) within the TSR framework using R-EPIG acquisition. All experiments are conducted on the 6-dimensional dataset with CMGP as the trial outcome model. TabPFN substantially outperforms all alternative observational models. With R-EPIG-$\mu$, TabPFN achieves a final $\sqrt{\text{PEHE}}$ of 0.319, improving over CMGP (0.897), NSGP (0.608), and CausalPFN (0.762) by 64.4\%, 47.5\%, and 58.1\%, respectively. The improvement margin remains similarly large for R-EPIG-$\tau$, where TabPFN (0.378) outperforms CMGP, NSGP, and CausalPFN by 59.4\%, 38.6\%, and 48.6\%.

Normalized AUC further highlights TabPFN’s advantage: with R-EPIG-$\mu$, TabPFN achieves an AUC of 0.454 compared to 1.106 for CMGP (a 2.4$\times$ reduction), indicating substantially faster convergence. Moreover, TabPFN exhibits dramatically lower variance (std = 0.040) compared to GP-based observational models (CMGP: 0.559; NSGP: 0.448), suggesting more robust bias estimation. These results indicate that TabPFN’s in-context learning provides a stronger observational prior for two-stage regression, enabling more accurate bias correction and more effective active acquisition.

\paragraph{Pool Size Ablation.}
Fig.~\ref{fig:different_pool_sizes} studies the effect of candidate pool size on acquisition performance, evaluating pool sizes from 500 to 2500 on the 6-dimensional dataset with CMGP as the trial model. R-EPIG acquisition functions consistently outperform both random sampling and baseline methods across all pool sizes. R-EPIG variants achieve the best performance at all five pool sizes: R-EPIG-$\mu$ wins at pool sizes 500, 1000, 1500, and 2500, while R-EPIG-$\tau$ performs best at pool size 2000. Across settings, the best TSR method improves over PureRCT-Random by 71--76\% and over Kallus-Random by 66--83\%.

Notably, TSR methods remain stable as pool size increases, with R-EPIG-$\mu$ achieving $\sqrt{\text{PEHE}}$ values between 0.319 and 0.401. In contrast, baseline methods deteriorate as the candidate pool grows: Kallus-Random degrades sharply (from 1.09 at pool size 500 to 1.94 at 2500), while PureRCT-Random also worsens (from 1.29 to 1.41). These results demonstrate that intelligent acquisition becomes increasingly valuable as the search space expands, whereas random selection scales poorly with pool size.

\paragraph{Observational Data Size Ablation.}
Fig.~\ref{fig:different_prior_sizes} examines how the amount of available observational data affects TSR performance, varying observational sample sizes from 50 to 5000. As observational data increases, TSR performance improves substantially. R-EPIG-$\mu$ reduces $\sqrt{\text{PEHE}}$ from 1.410 at 50 samples to 0.336 at 5000 samples, corresponding to a 76.2\% reduction. This trend confirms that the two-stage framework effectively leverages larger observational datasets for improved bias correction.

Importantly, targeted acquisition only becomes beneficial once the observational model is sufficiently accurate. At very small observational sizes (50--100), TSR-Random outperforms R-EPIG, as uncertainty estimates from the observational model are unreliable. However, R-EPIG becomes dominant as data grows, achieving the best performance in 4 out of 6 settings (500, 1000, 2000, 5000). At 5000 samples, R-EPIG-$\tau$ achieves the lowest $\sqrt{\text{PEHE}}$ of 0.309. Overall, when sufficient observational data is available (500+ samples), TSR with R-EPIG improves over PureRCT by 69--76\% and over Kallus by 57--74\%.

\paragraph{Batch Size Ablation.}
Fig.~\ref{fig:different_step_sizes} analyzes the effect of acquisition batch size, evaluating batch sizes from 2 to 30. R-EPIG acquisition functions outperform random sampling at 5 out of 6 batch sizes, with the largest gains occurring at small to moderate batch sizes. At batch size 10, R-EPIG-$\mu$ achieves the best $\sqrt{\text{PEHE}}$ of 0.319, a 15.7\% improvement over random sampling (0.379). At batch sizes 2, 5, and 20, improvements range from 8.7\% to 13.1\%.

As batch size increases beyond 20, the benefit of targeted acquisition diminishes. At batch size 30, random sampling slightly outperforms R-EPIG (0.371 vs.\ 0.385), suggesting that selecting large batches dilutes the effectiveness of information-theoretic criteria. Overall, R-EPIG performs best in sequential or small-batch regimes, where fine-grained sample selection can be fully exploited. Across batch sizes 2--20, both R-EPIG variants remain stable (0.319--0.385), indicating robustness to this hyperparameter.

\paragraph{Trial Model Ablation.}
Fig.~\ref{fig:different_trial_models} (left and center) compares TSR performance across five trial outcome models: CMGP, NSGP, BART, BCF, and CMDE, using TabPFN as the observational model. GP-based trial models consistently outperform tree-based and ensemble alternatives. CMGP achieves the best performance, with R-EPIG-$\mu$ reaching a $\sqrt{\text{PEHE}}$ of 0.319, followed by NSGP (0.396 with R-EPIG-$\tau$). In contrast, BART (0.681), BCF (0.799), and CMDE (0.812) exhibit substantially higher estimation error. This gap highlights the importance of well-calibrated uncertainty estimates, which GP-based models provide and which are critical for effective information-theoretic acquisition.

Consistently, R-EPIG improves over random sampling for GP-based trial models (15.7\% for CMGP and 7.1\% for NSGP), while gains are limited or absent for tree-based models, whose uncertainty estimates are less reliable.

\paragraph{High-Dimensional Scalability.}
Fig.~\ref{fig:different_trial_models} (right) evaluates TSR scalability as dimensionality increases from 15 to 60 using CMGP as the trial model. R-EPIG-$\mu$ achieves the best performance at all five dimensions, demonstrating that the information-theoretic acquisition remains effective in higher-dimensional settings. Performance degrades gracefully: $\sqrt{\text{PEHE}}$ increases from 0.462 at 15 dimensions to 0.691 at 60 dimensions, a moderate 49\% increase despite a 4$\times$ increase in dimensionality. These results confirm that TSR with R-EPIG acquisition scales robustly to moderately high-dimensional problems.

\begin{bluebox}{}
\textbf{Takeaway:} The ablation studies reveal that R-EPIG’s effectiveness is robust across kernel choices, candidate pool sizes, batch sizes, and moderate increases in dimensionality, while benefiting most from well-calibrated uncertainty and sufficiently informative observational models. Performance gains diminish only when uncertainty estimates are unreliable (e.g., extremely small observational datasets or poorly calibrated trial models), highlighting the central role of uncertainty quality in information-theoretic acquisition. Overall, these findings confirm that R-EPIG provides consistent and predictable improvements under realistic modeling and system design choices.
\end{bluebox}

\subsection{Semi-Synthetic Datasets}

\input{Pages/Appendix/fig_include/real}

\paragraph{IHDP CATE Estimation.}
Fig.~\ref{fig:ihdp_cate} reports CATE estimation results on the IHDP dataset, a widely used semi-synthetic benchmark constructed from a real-world infant health study. With CMGP as the trial outcome model, TSR combined with R-EPIG-$\mu$ achieves the best performance, attaining a $\sqrt{\text{PEHE}}$ of 0.980, followed closely by R-EPIG-$\tau$ (0.999). Relative to PureRCT-Random (1.297), the best TSR configuration reduces estimation error by 24.5\%. The improvement over Kallus-Random is substantially larger at 69.5\% (3.213), highlighting the benefit of explicitly correcting observational bias via residual learning.

When NSGP is used as the trial model, TSR-Random achieves competitive performance (1.032), suggesting that the additional gains from targeted acquisition depend on the calibration quality of the trial model’s uncertainty. Nonetheless, across both trial models, TSR-based methods consistently outperform PureRCT and Kallus baselines, demonstrating the effectiveness of leveraging observational data for bias correction in realistic, non-synthetic settings.

\paragraph{IHDP Decision Making.}
Fig.~\ref{fig:ihdp_dm} evaluates treatment policy learning on IHDP using APE and AR. With CMGP, the proposed decision-aware acquisition function R-EPIG-$\pi$ achieves the lowest policy error (0.018) and regret (0.015), improving upon PureRCT-Random by 47.0\% in policy error (0.034) and 61.9\% in regret (0.040). With NSGP, TSR-Random and R-EPIG-$\pi$ achieve comparable policy error (0.020), while TSR-Random attains slightly lower regret (0.012 vs.\ 0.017).

Overall, these results validate that decision-focused acquisition remains effective on real-world data, yielding substantial improvements over pure RCT-based approaches, while also revealing that the magnitude of gains depends on the reliability of uncertainty estimates provided by the trial model.

\paragraph{ACTG-175 CATE Estimation.}
Fig.~\ref{fig:actg_cate} reports CATE estimation performance on the ACTG-175 dataset, derived from an AIDS clinical trial and characterized by more complex and heterogeneous treatment effects. With CMGP, TSR combined with R-EPIG-$\tau$ achieves the best $\sqrt{\text{PEHE}}$ of 1.647, representing a 13.7\% improvement over PureRCT-Random (1.908). With NSGP, R-EPIG-$\tau$ (1.778) slightly outperforms TSR-Random (1.880), indicating that targeted acquisition remains beneficial, though the margin is smaller than on IHDP.

Notably, Kallus-based methods perform poorly on this dataset: Kallus-$\mu$ reaches a $\sqrt{\text{PEHE}}$ of 5.993, over three times worse than TSR-based approaches. This gap highlights the limitations of methods that rely solely on trial data without explicitly correcting for observational bias, particularly in challenging real-world settings.

\paragraph{ACTG-175 Decision Making.}
Fig.~\ref{fig:actg_dm} presents treatment policy learning results on ACTG-175. With CMGP, decision-focused TSR methods achieve the strongest performance. TSR-sign-BALD attains the lowest policy error (0.138) and regret (0.113), followed closely by TSR-$\gamma$ (PE: 0.139, Reg: 0.122) and R-EPIG-$\pi$ (PE: 0.138, Reg: 0.130). All TSR variants substantially outperform PureRCT-Random (PE: 0.190, Reg: 0.195), yielding improvements of 27--42\% in policy error and 33--42\% in regret. With NSGP, TSR-sign-BALD again achieves the best overall performance (PE: 0.131, Reg: 0.116).

Taken together, these results demonstrate that decision-aware acquisition functions (sign-BALD, $\gamma$, and R-EPIG-$\pi$) provide consistent benefits for treatment policy learning on real-world datasets. Across both IHDP and ACTG-175, TSR-based methods reliably outperform PureRCT baselines, confirming that residual learning combined with principled acquisition leads to more effective and robust decision-making in practice.

\begin{bluebox}{}
\textbf{Takeaway:} On real-world benchmarks (IHDP and ACTG-175), TSR-based methods consistently outperform pure RCT baselines in both CATE estimation and treatment decision-making, despite the absence of oracle assumptions. Decision-aware acquisition functions yield the strongest improvements when uncertainty is well-calibrated, while TSR-Random remains a robust fallback in noisier regimes. These results demonstrate that residual learning combined with principled acquisition is not only effective in controlled synthetic settings, but also practical and reliable for real-world causal decision problems.
\end{bluebox}

%% file: Pages/Appendix/fig_include/synthetic.tex
\begin{figure*}[h]
    \centering   
    \begin{minipage}{0.24\linewidth}
        \centering
        \includegraphics[width=\linewidth]{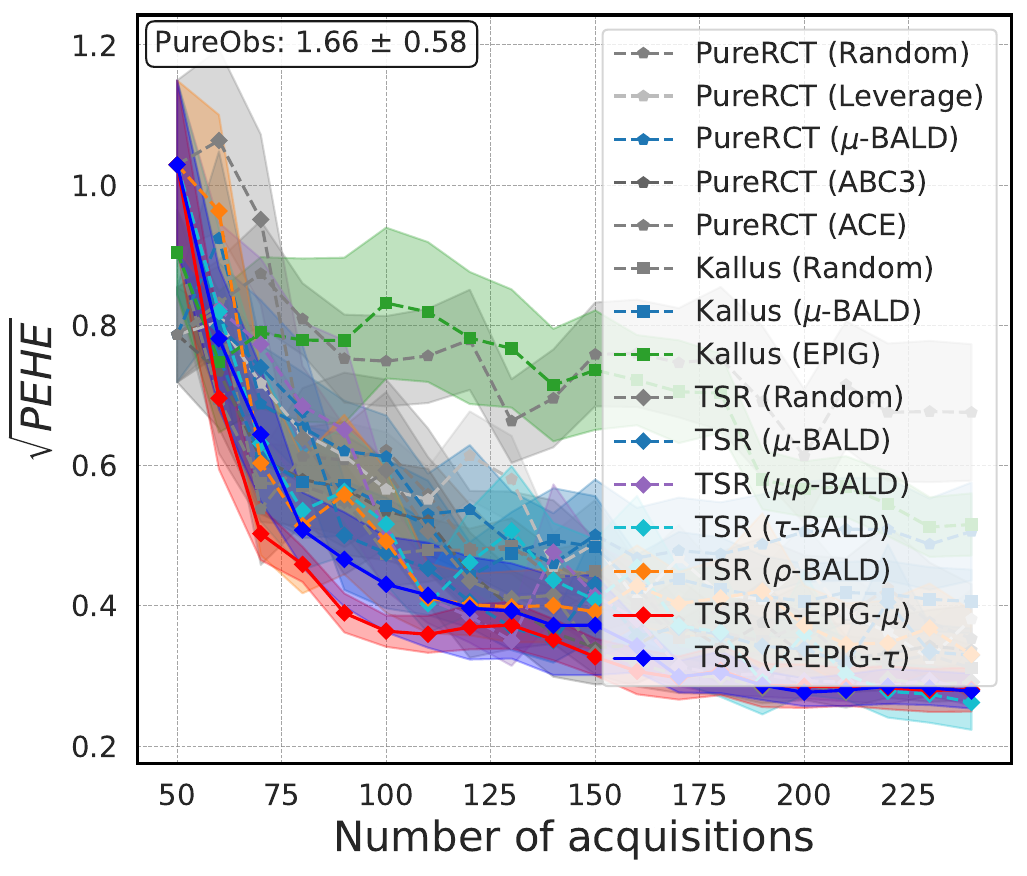}
    \end{minipage}
    \begin{minipage}{0.24\linewidth}
        \centering
        \includegraphics[width=\linewidth]{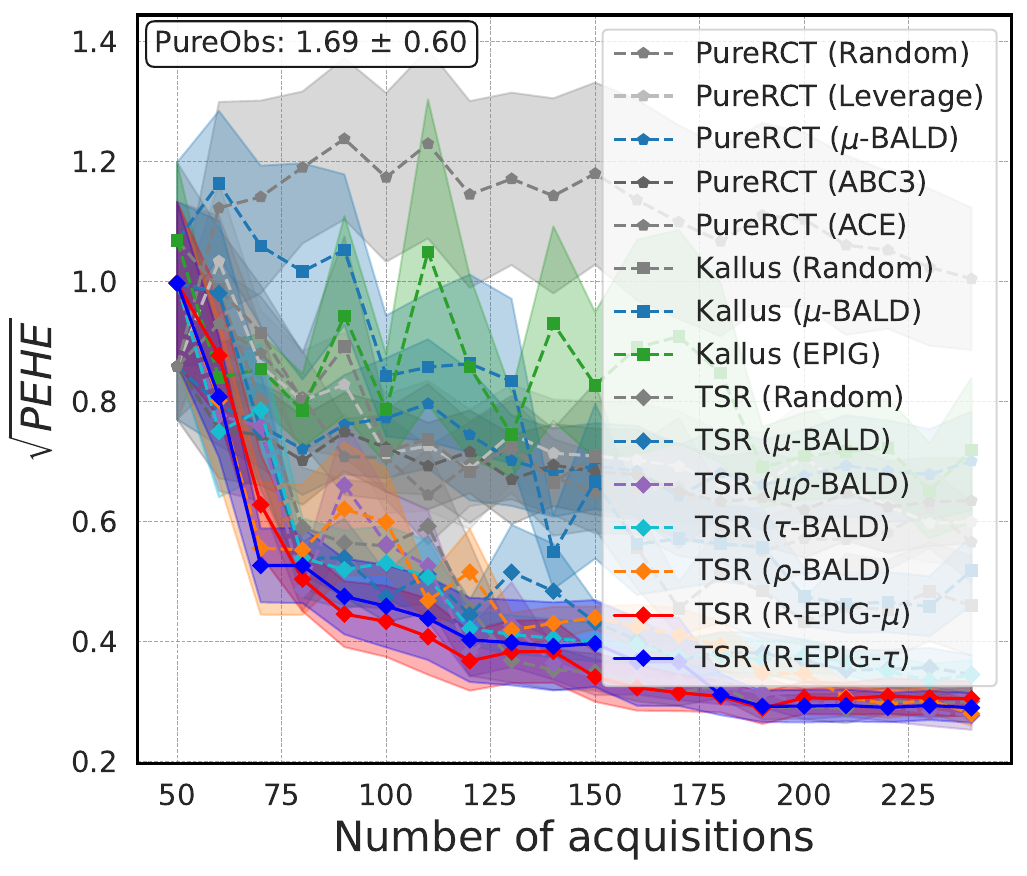}
    \end{minipage}
    \begin{minipage}{0.24\linewidth}
        \centering
        \includegraphics[width=\linewidth]{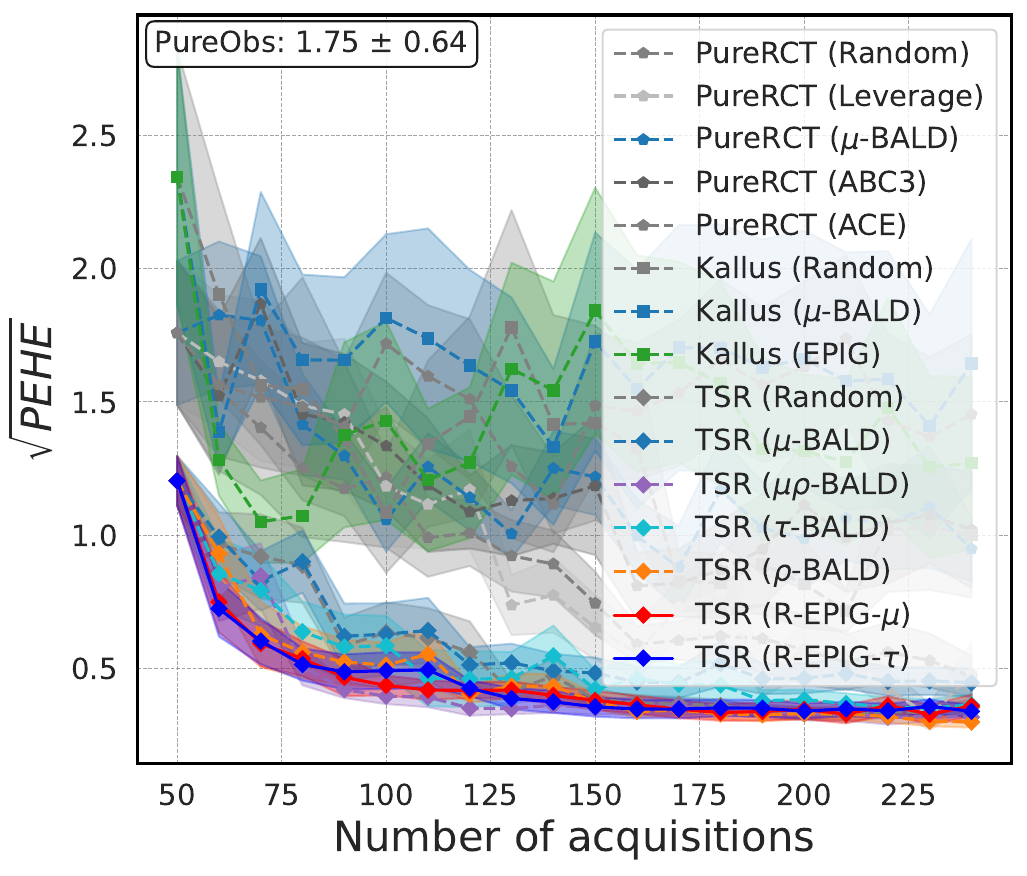}
    \end{minipage}
    \begin{minipage}{0.24\linewidth}
        \centering
        \includegraphics[width=\linewidth]{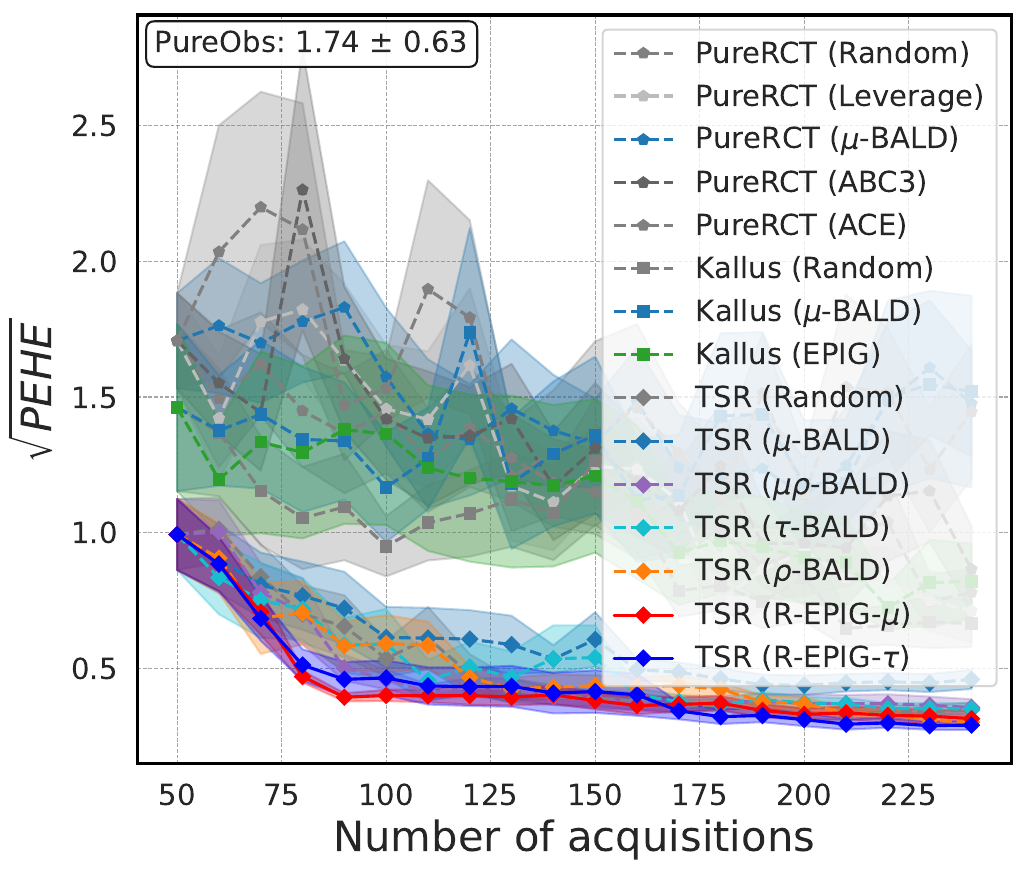}
    \end{minipage} \\
    
    \begin{minipage}{0.24\linewidth}
        \centering
        \includegraphics[width=\linewidth]{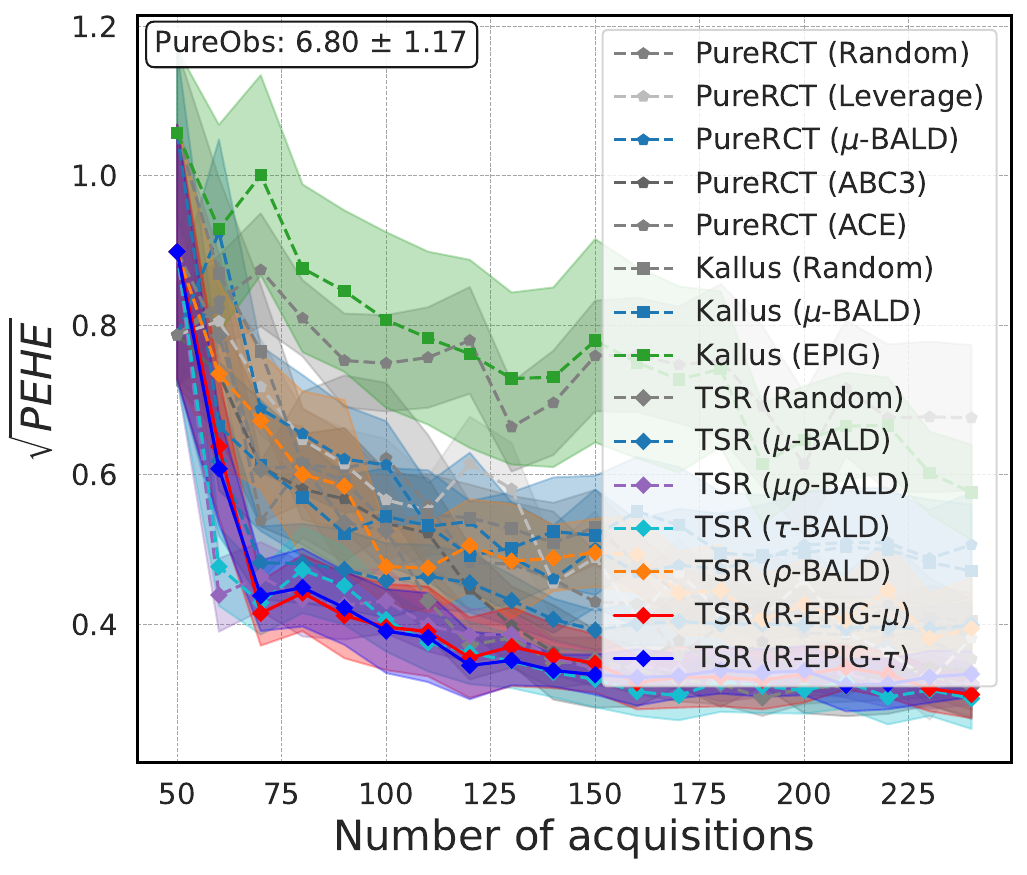}
    \end{minipage}
    \begin{minipage}{0.24\linewidth}
        \centering
        \includegraphics[width=\linewidth]{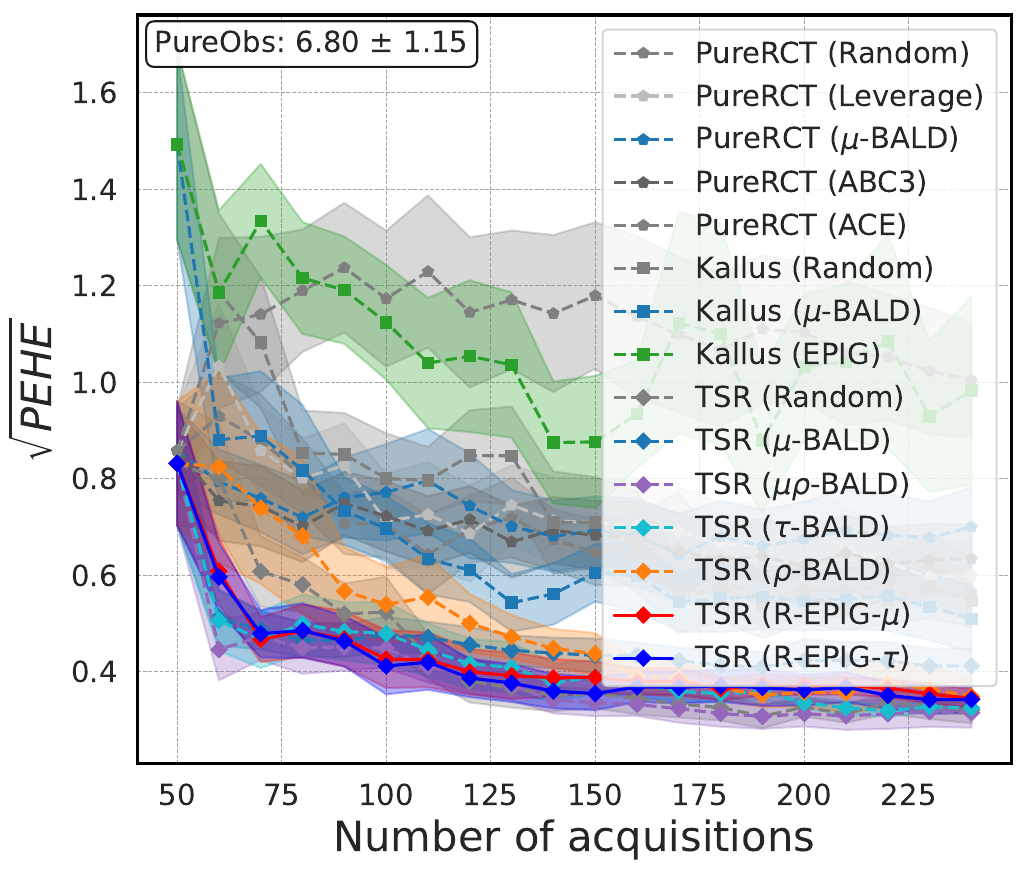}
    \end{minipage}
    \begin{minipage}{0.24\linewidth}
        \centering
        \includegraphics[width=\linewidth]{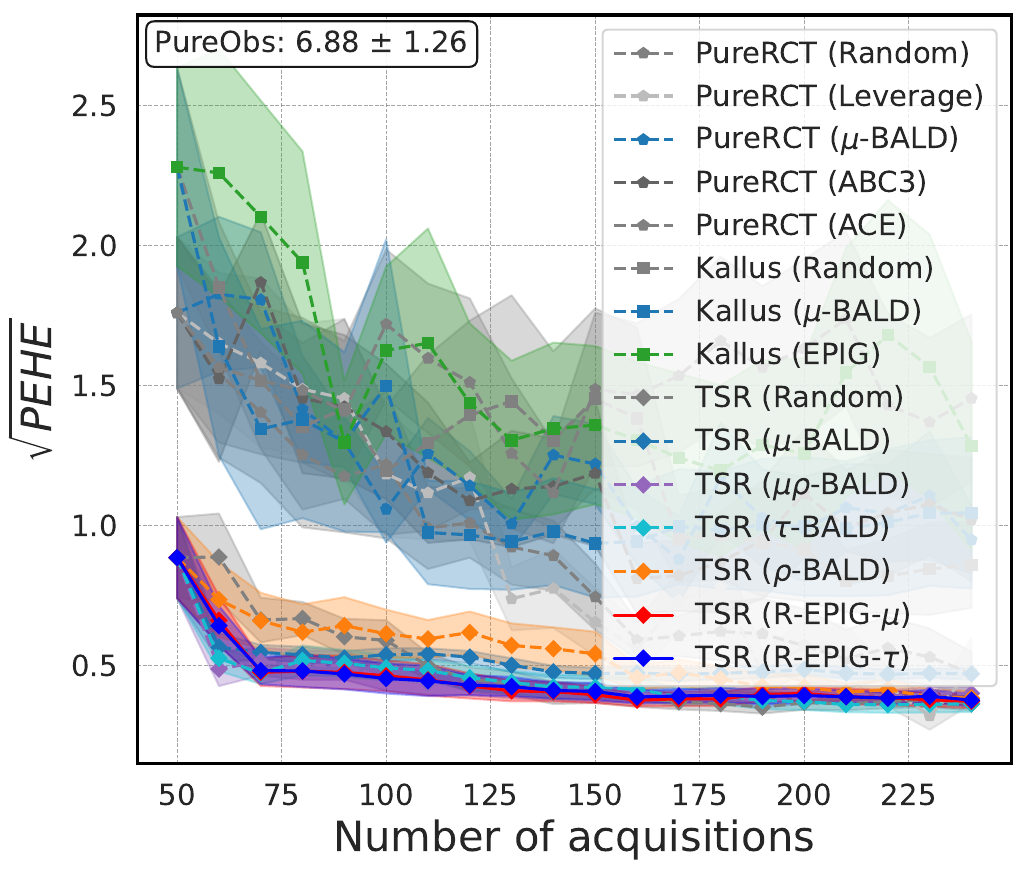}
    \end{minipage}
    \begin{minipage}{0.24\linewidth}
        \centering
        \includegraphics[width=\linewidth]{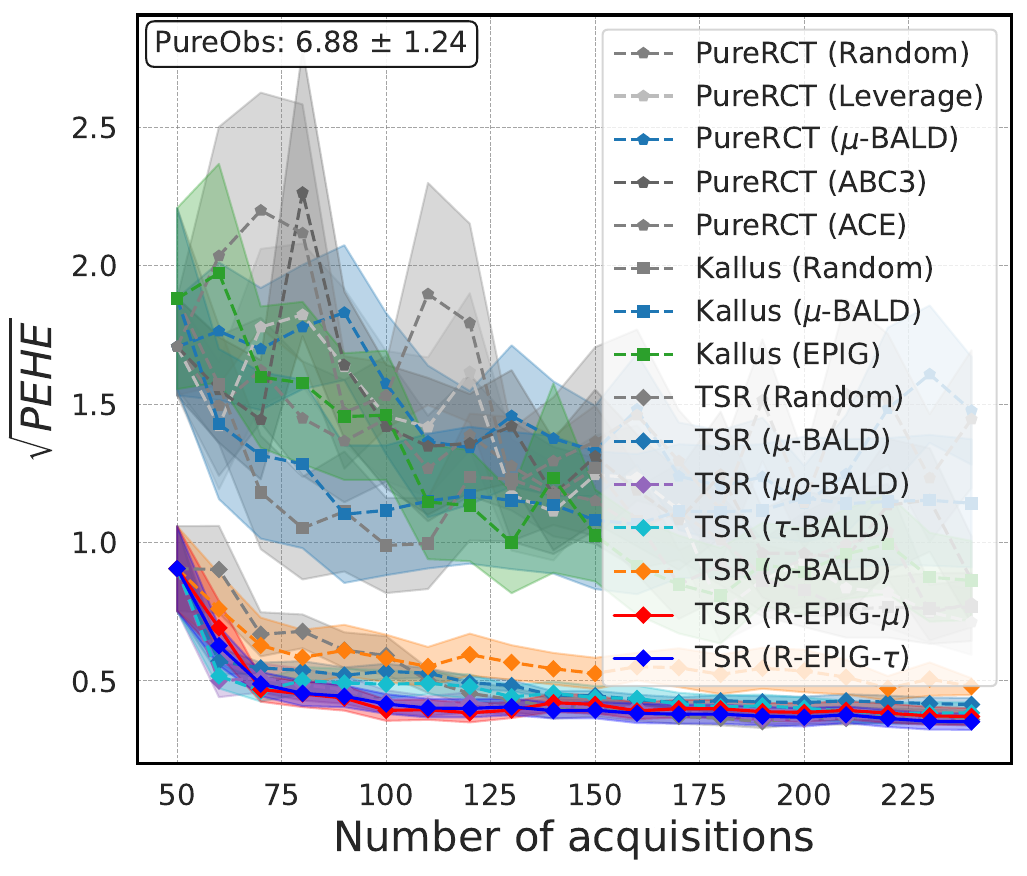}
    \end{minipage} 
    
    \caption{Comparison of $\sqrt{\text{PEHE}}$ univaraite simulation setup. The first row shows the results of simulation setups indexed from 1-4 and the second row shows the results of 5-8.}
    \label{fig:univaraite_8sim_pehe}
\end{figure*}

\begin{figure*}[h]
    \centering   
    \begin{minipage}{0.24\linewidth}
        \centering
        \includegraphics[width=\linewidth]{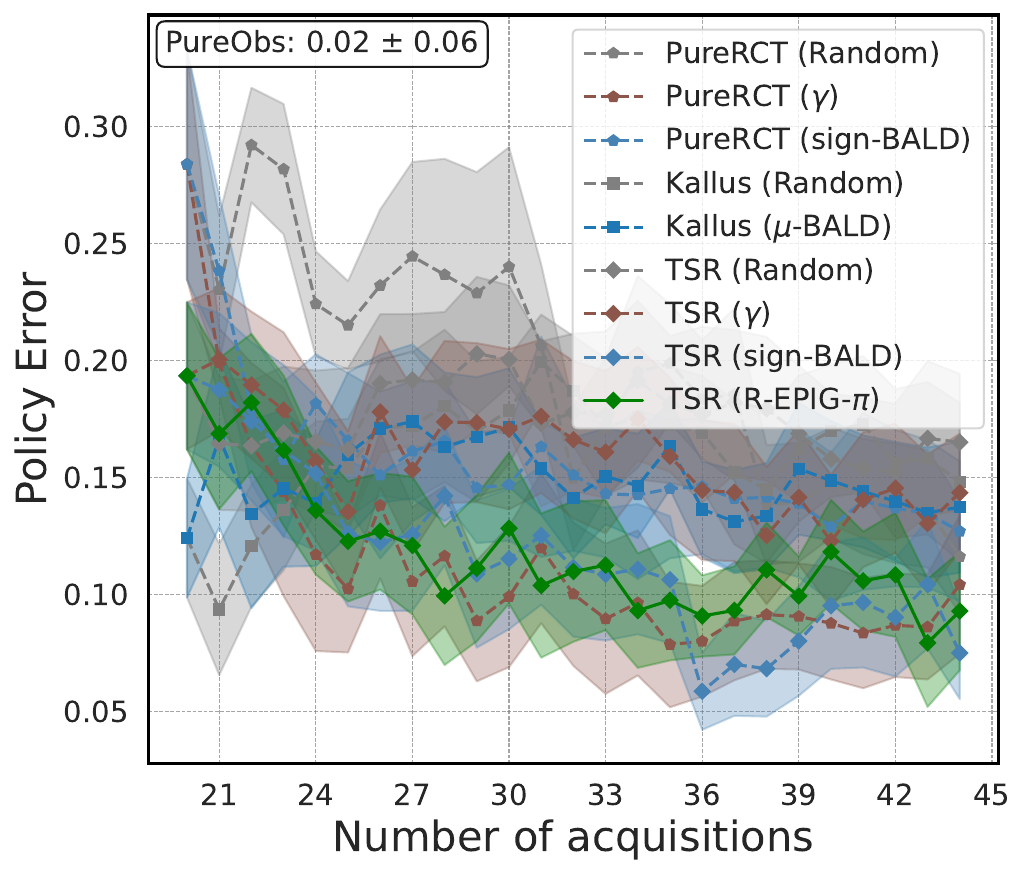}
    \end{minipage}
    \begin{minipage}{0.24\linewidth}
        \centering
        \includegraphics[width=\linewidth]{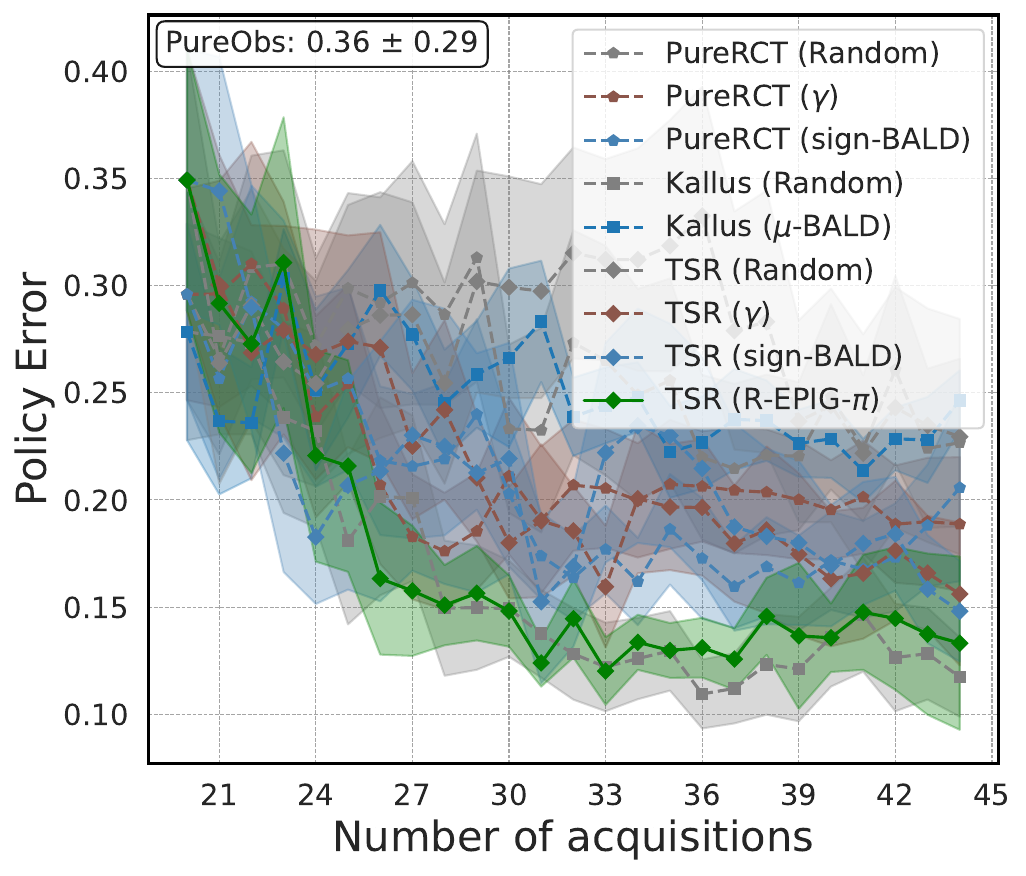}
    \end{minipage}
    \begin{minipage}{0.24\linewidth}
        \centering
        \includegraphics[width=\linewidth]{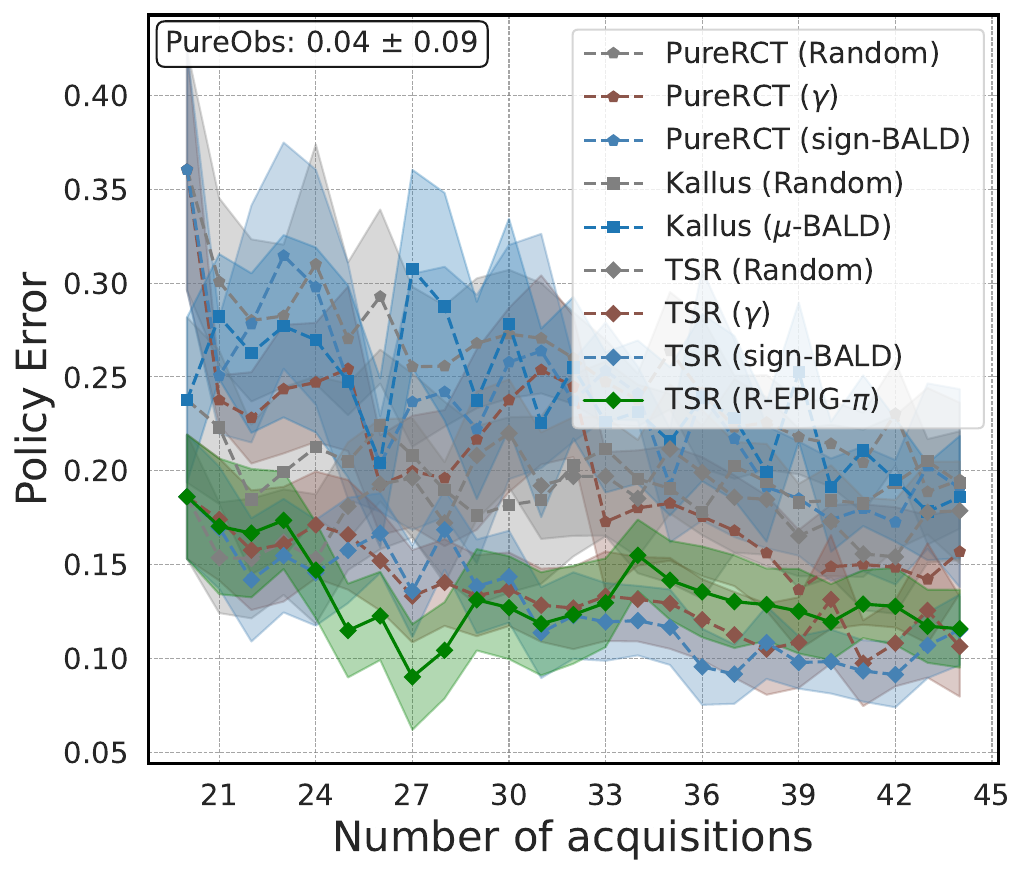}
    \end{minipage}
    \begin{minipage}{0.24\linewidth}
        \centering
        \includegraphics[width=\linewidth]{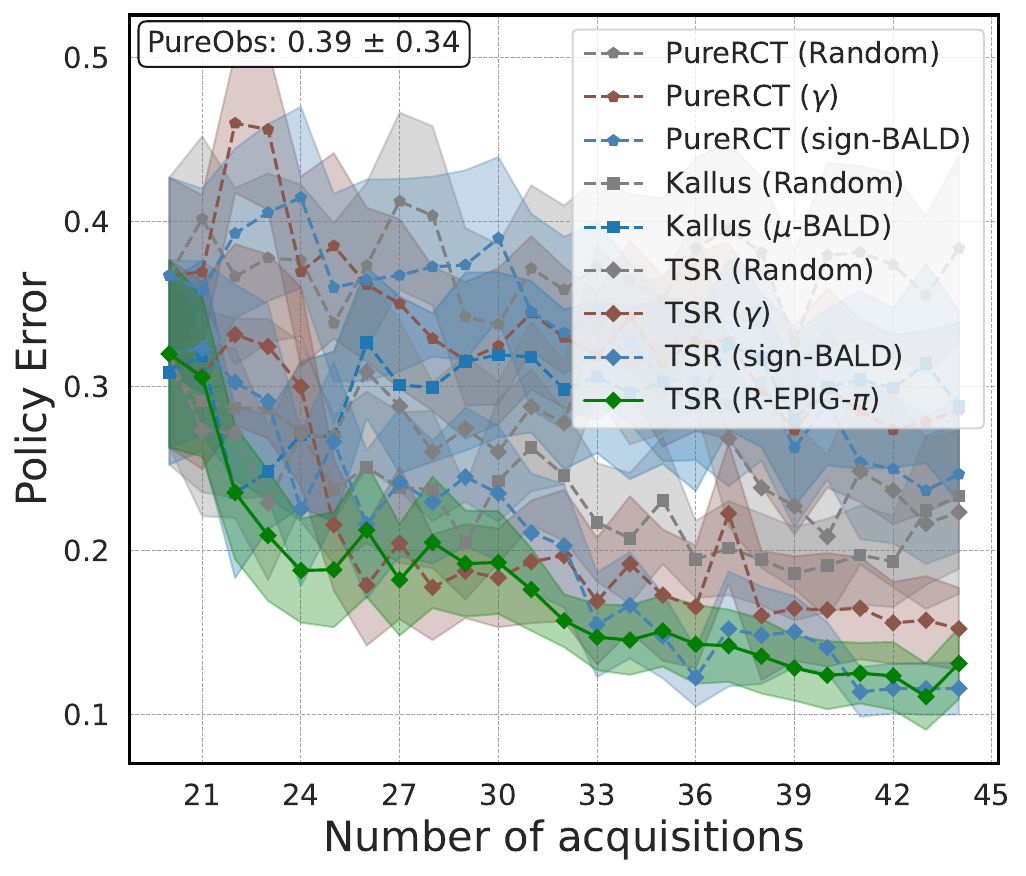}
    \end{minipage} \\
    
    \begin{minipage}{0.24\linewidth}
        \centering
        \includegraphics[width=\linewidth]{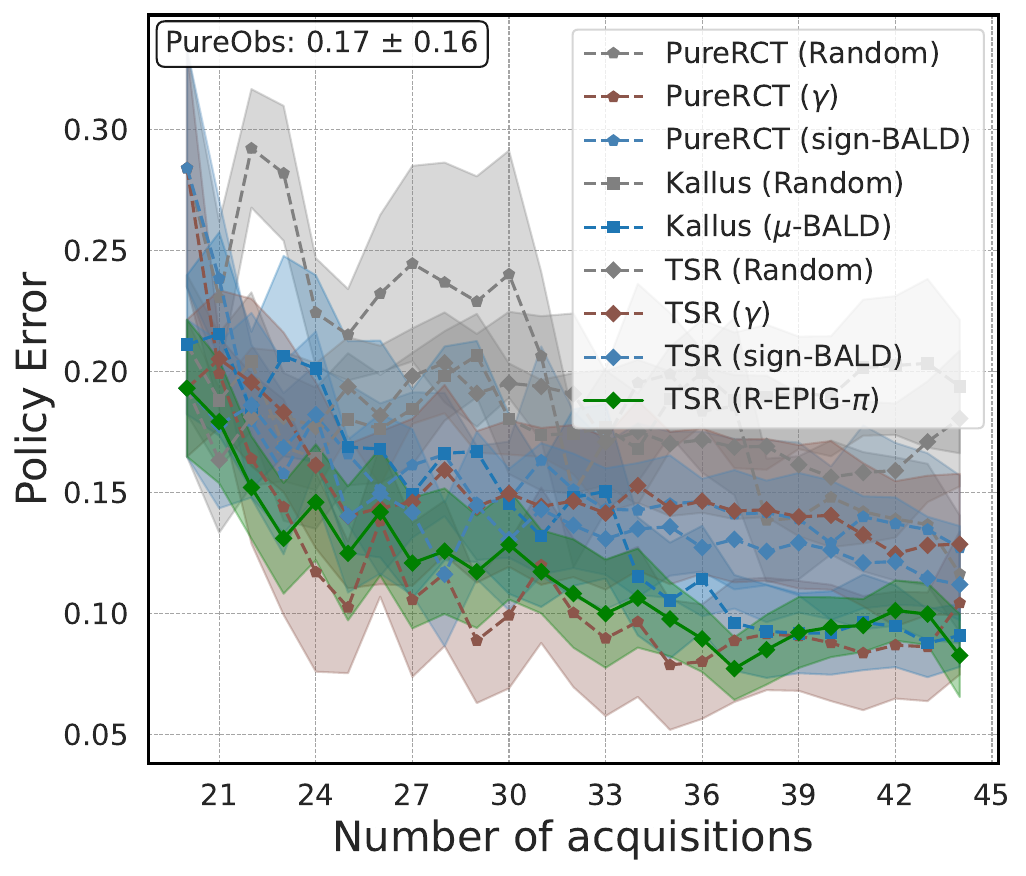}
    \end{minipage}
    \begin{minipage}{0.24\linewidth}
        \centering
        \includegraphics[width=\linewidth]{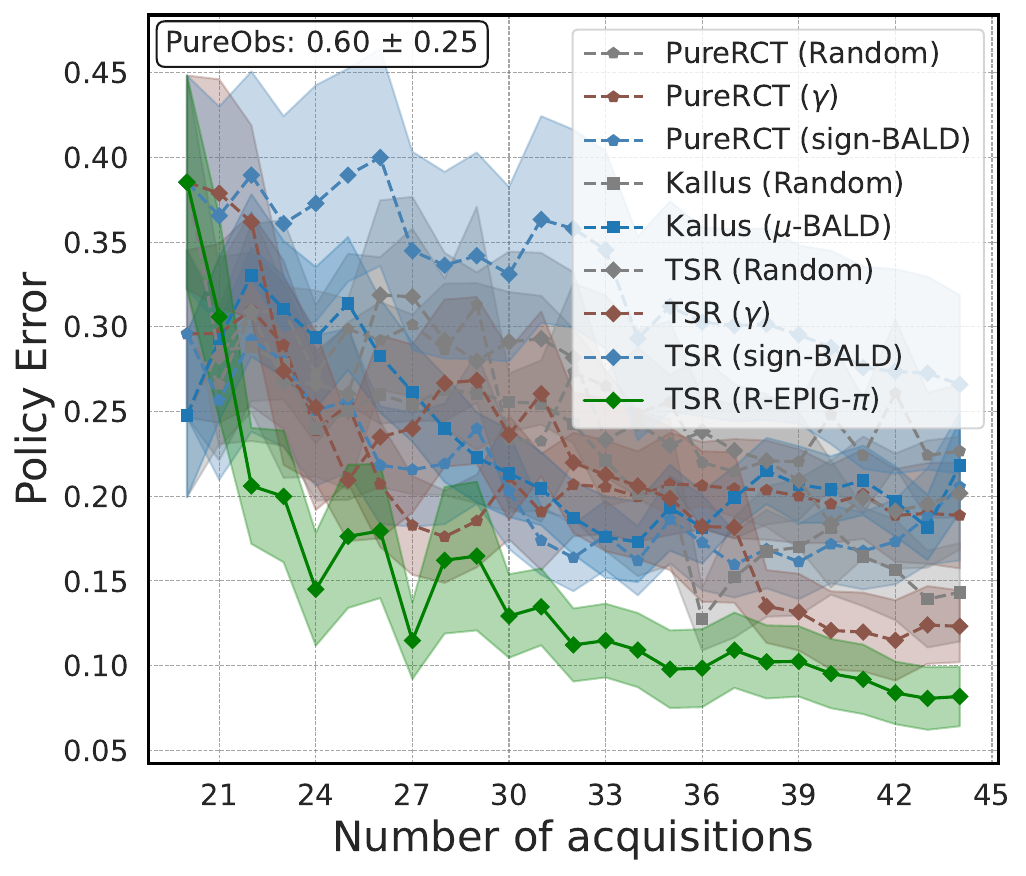}
    \end{minipage}
    \begin{minipage}{0.24\linewidth}
        \centering
        \includegraphics[width=\linewidth]{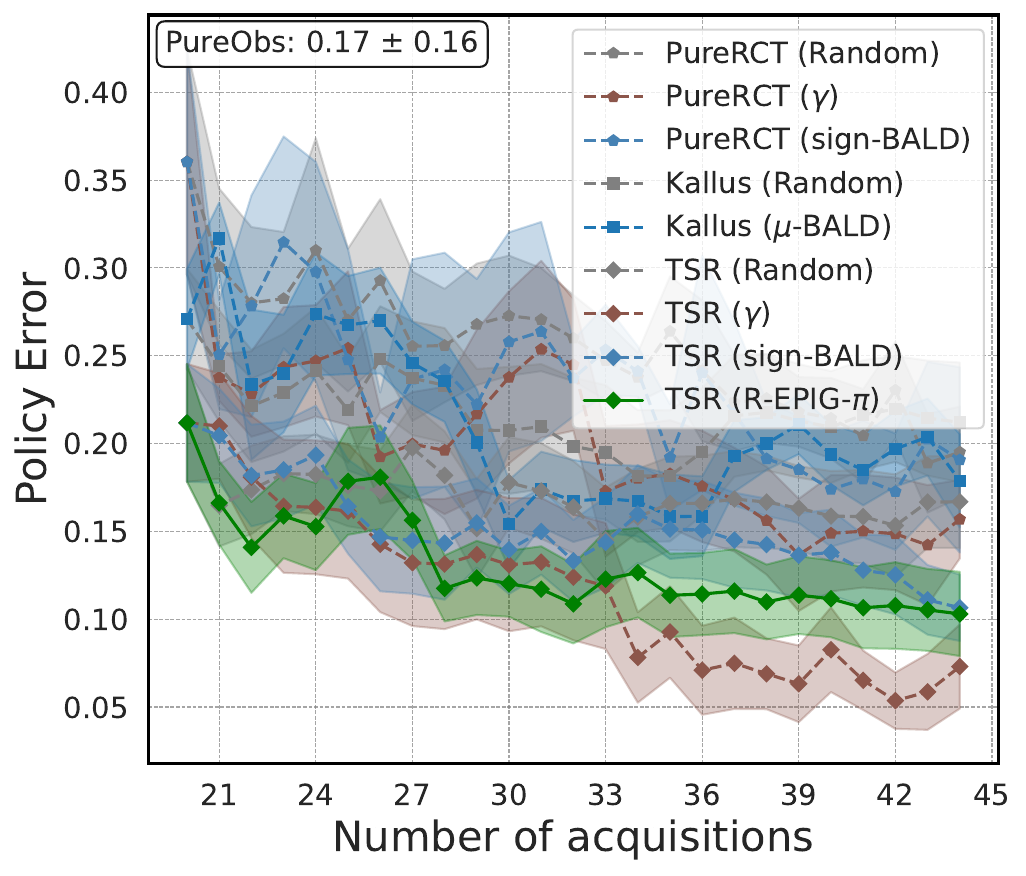}
    \end{minipage}
    \begin{minipage}{0.24\linewidth}
        \centering
        \includegraphics[width=\linewidth]{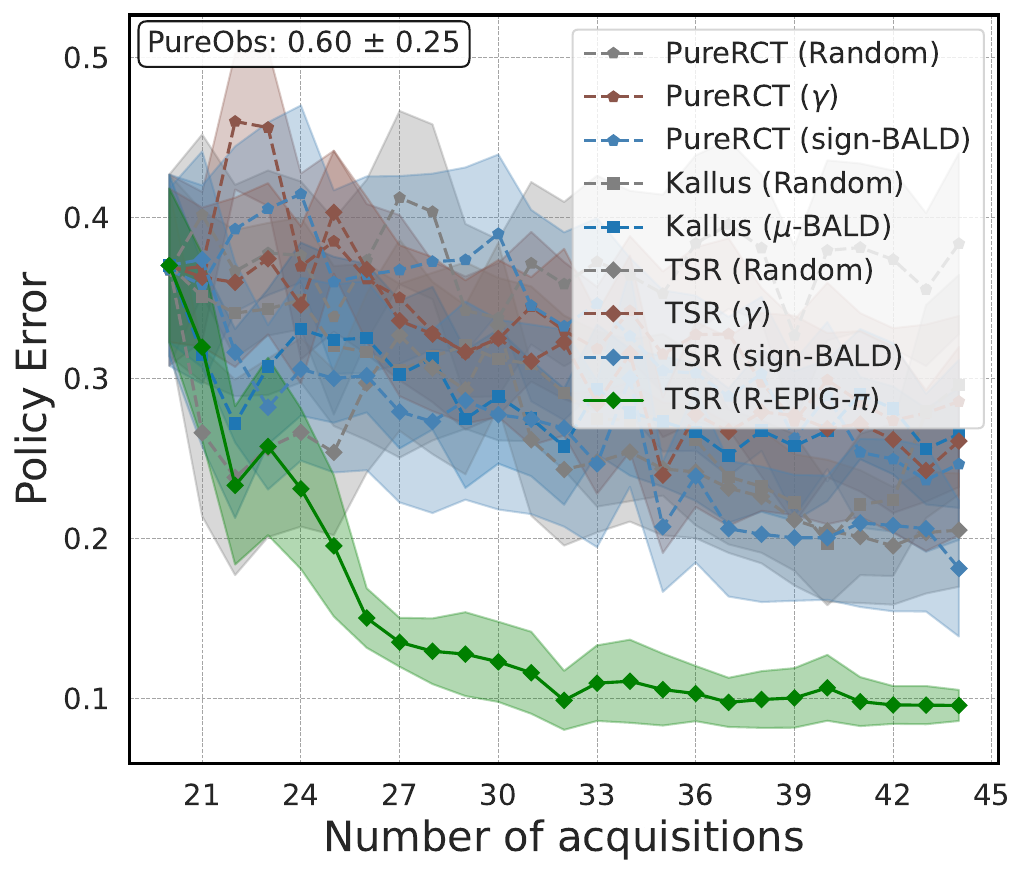}
    \end{minipage} 
    
    \caption{\textbf{Average Policy Error (APE) Convergence.} Comparison across eight univariate simulation benchmarks. The top row displays results for scenarios 1--4, and the bottom row for scenarios 5--8. Lower APE indicates higher accuracy in identifying the optimal treatment arm.}
    \label{fig:univaraite_8sim_ape}
\end{figure*}

\begin{figure*}[h]
    \centering   
    \begin{minipage}{0.24\linewidth}
        \centering
        \includegraphics[width=\linewidth]{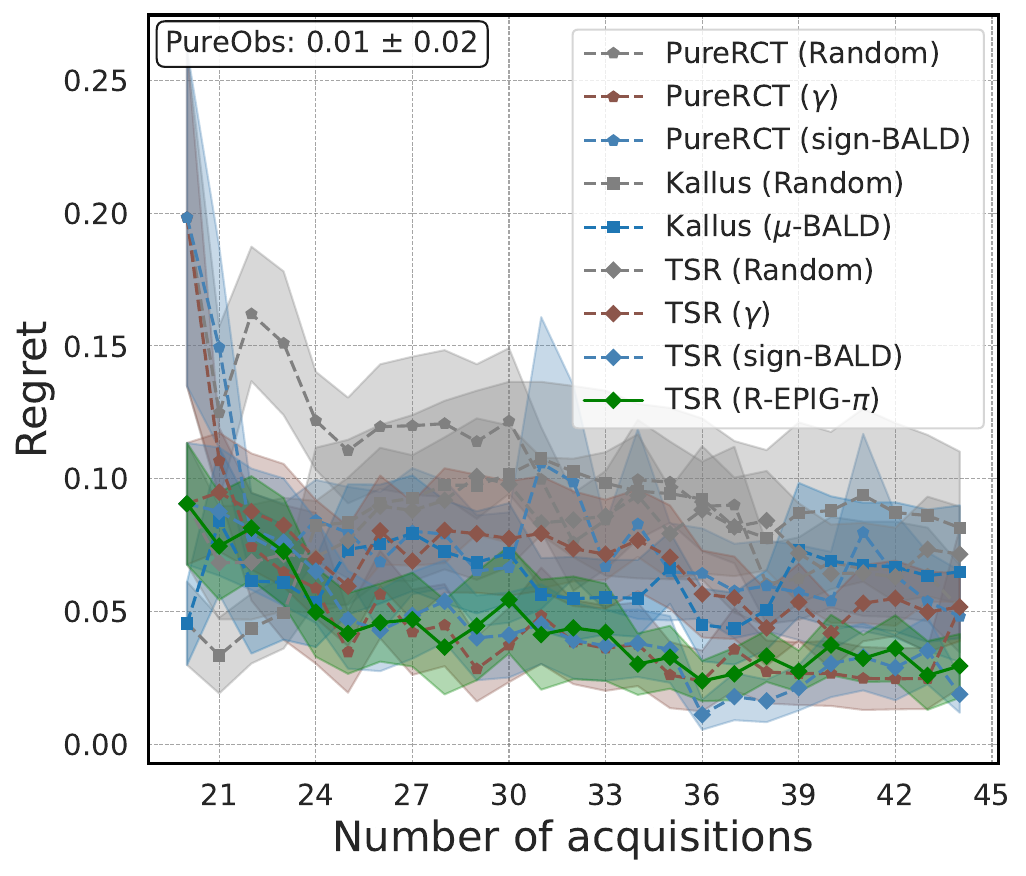}
    \end{minipage}
    \begin{minipage}{0.24\linewidth}
        \centering
        \includegraphics[width=\linewidth]{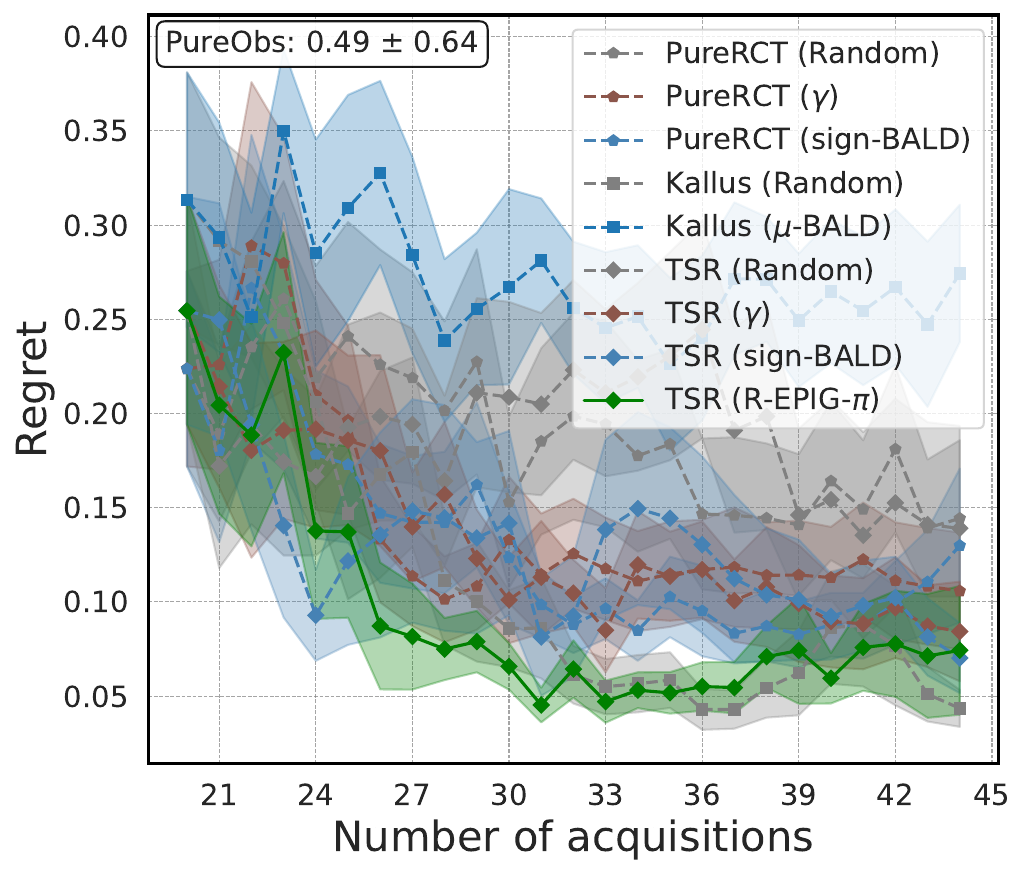}
    \end{minipage}
    \begin{minipage}{0.24\linewidth}
        \centering
        \includegraphics[width=\linewidth]{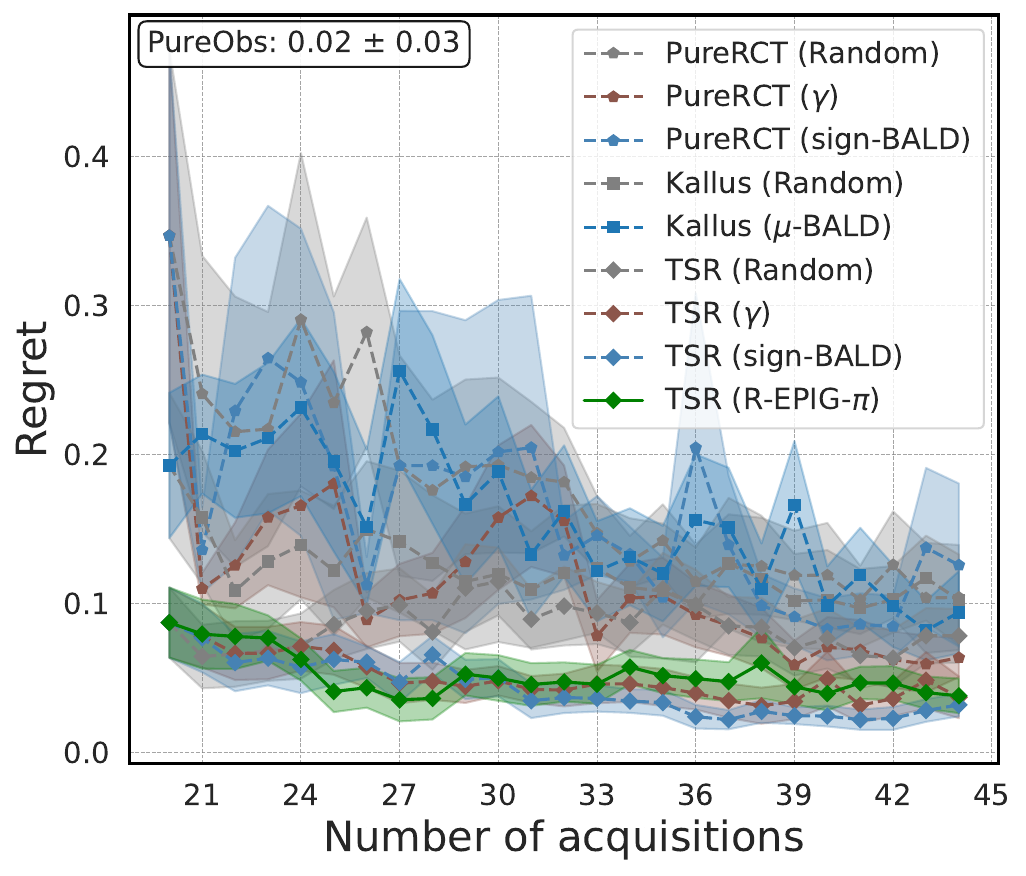}
    \end{minipage}
    \begin{minipage}{0.24\linewidth}
        \centering
        \includegraphics[width=\linewidth]{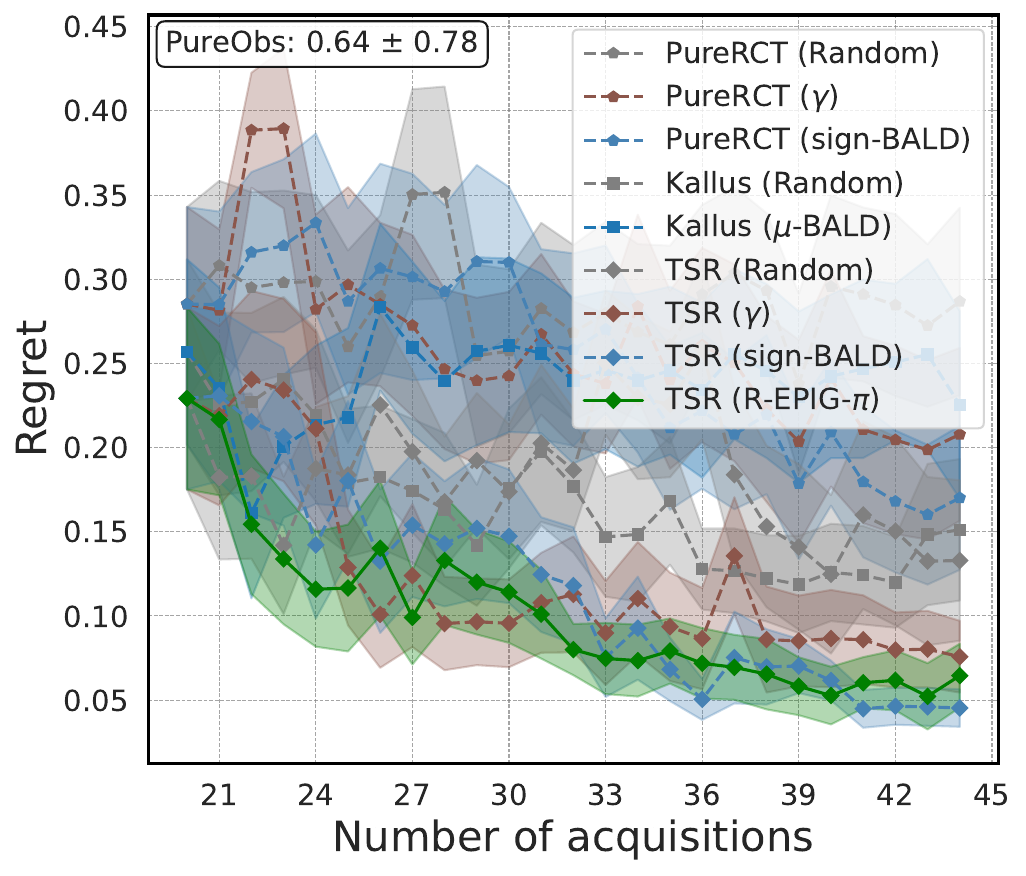}
    \end{minipage} \\
    
    \begin{minipage}{0.24\linewidth}
        \centering
        \includegraphics[width=\linewidth]{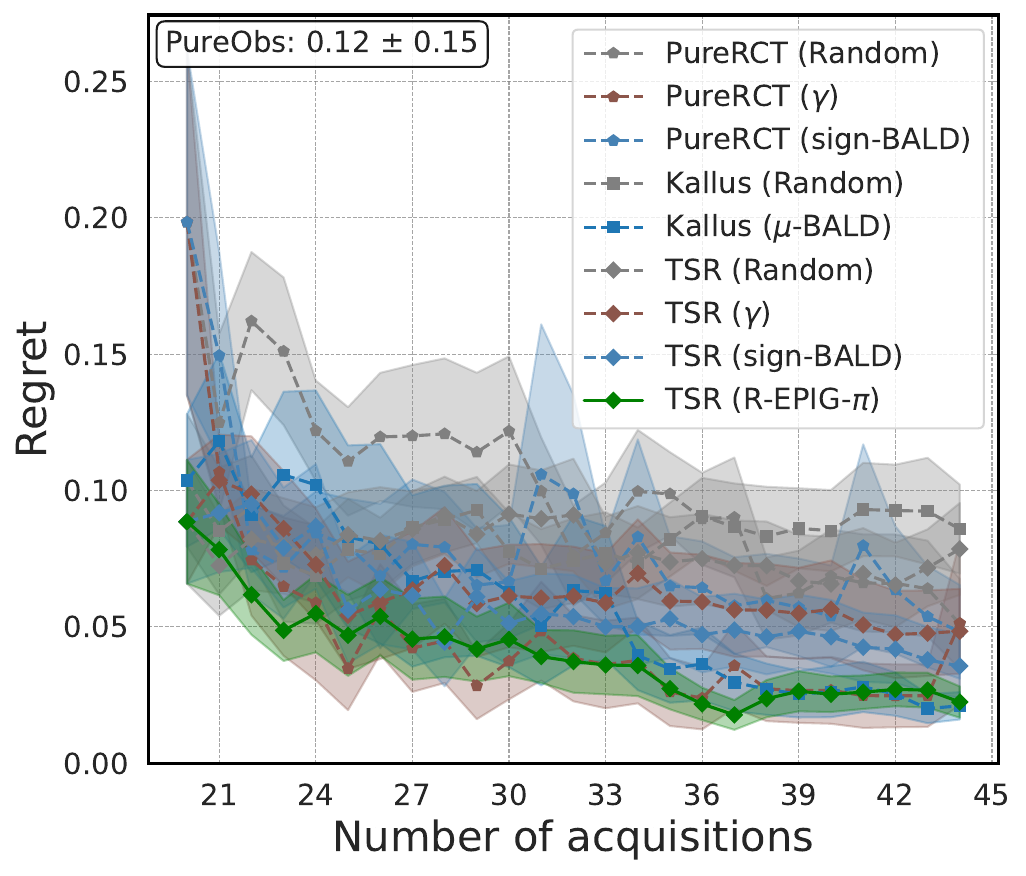}
    \end{minipage}
    \begin{minipage}{0.24\linewidth}
        \centering
        \includegraphics[width=\linewidth]{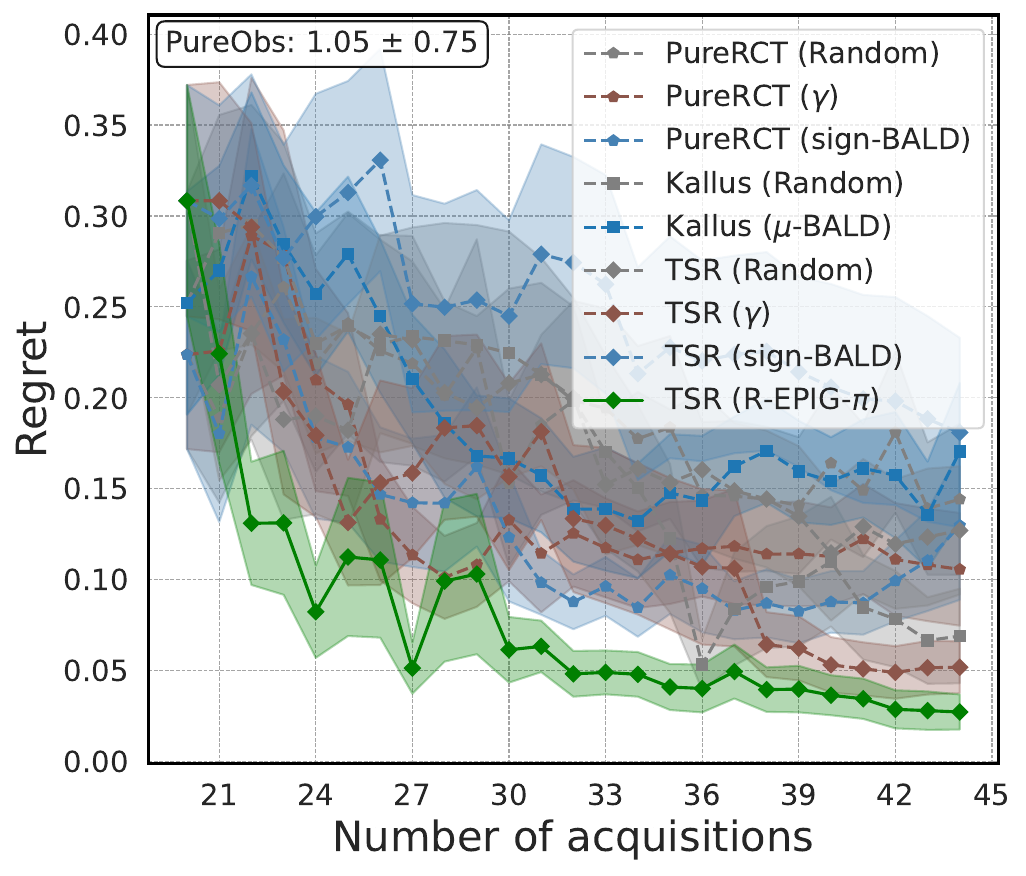}
    \end{minipage}
    \begin{minipage}{0.24\linewidth}
        \centering
        \includegraphics[width=\linewidth]{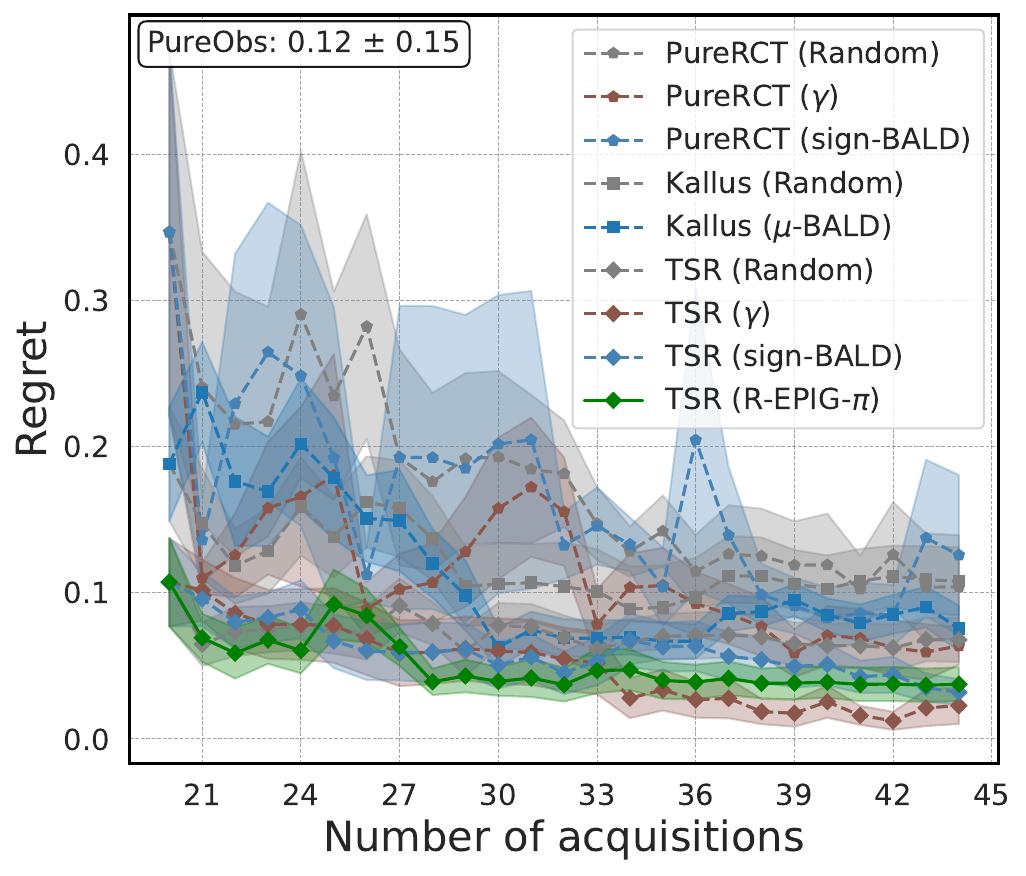}
    \end{minipage}
    \begin{minipage}{0.24\linewidth}
        \centering
        \includegraphics[width=\linewidth]{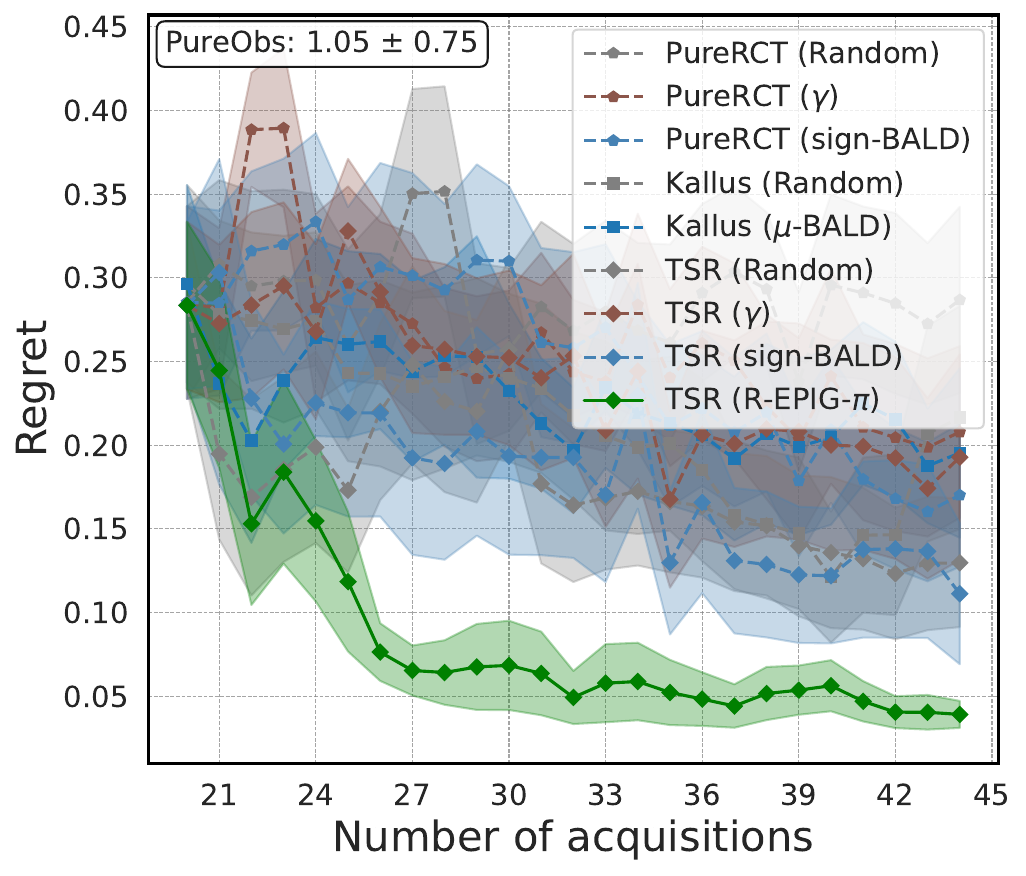}
    \end{minipage} 
    
    \caption{\textbf{Policy Regret Convergence.} Comparison across eight univariate simulation benchmarks. The top row displays results for scenarios 1--4, and the bottom row for scenarios 5--8. R-EPIG-$\pi$ achieves the fastest convergence to zero regret in the majority of tasks.}
    \label{fig:univaraite_8sim_regret}
\end{figure*}

\begin{figure*}[h]
    \centering   
    \begin{minipage}{0.24\linewidth}
        \centering
        \includegraphics[width=\linewidth]{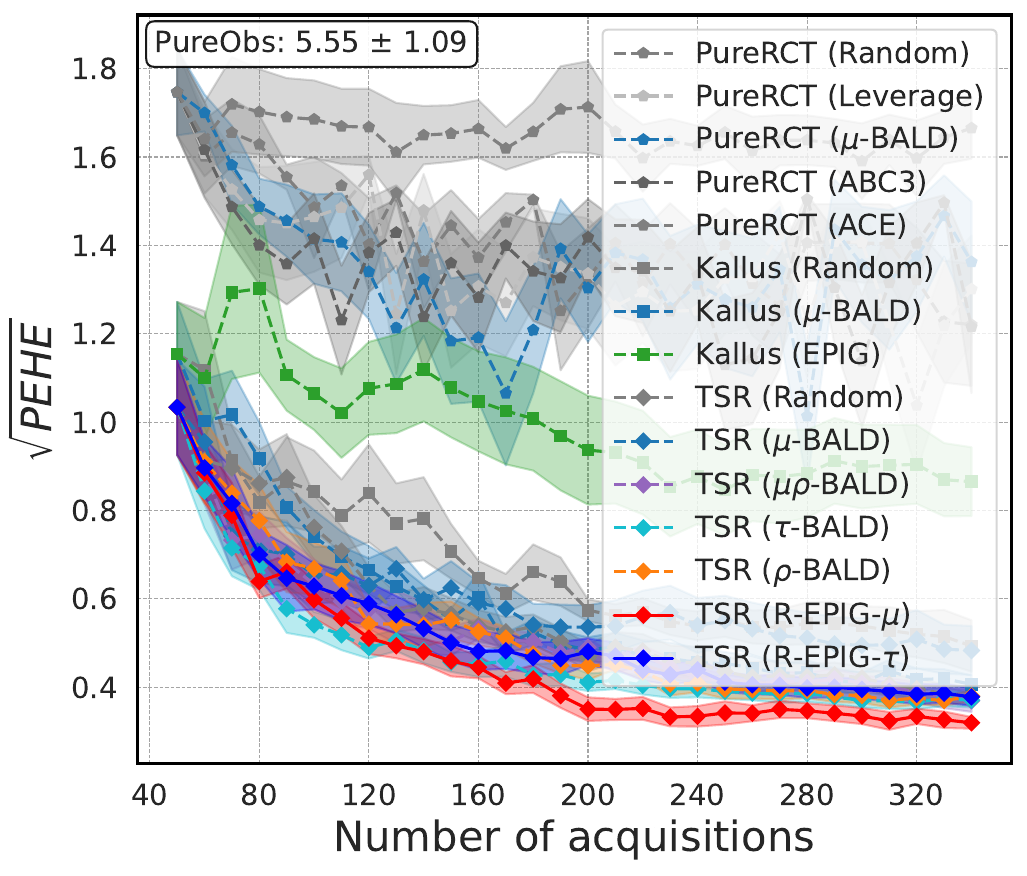}
    \end{minipage}
    \begin{minipage}{0.24\linewidth}
        \centering
        \includegraphics[width=\linewidth]{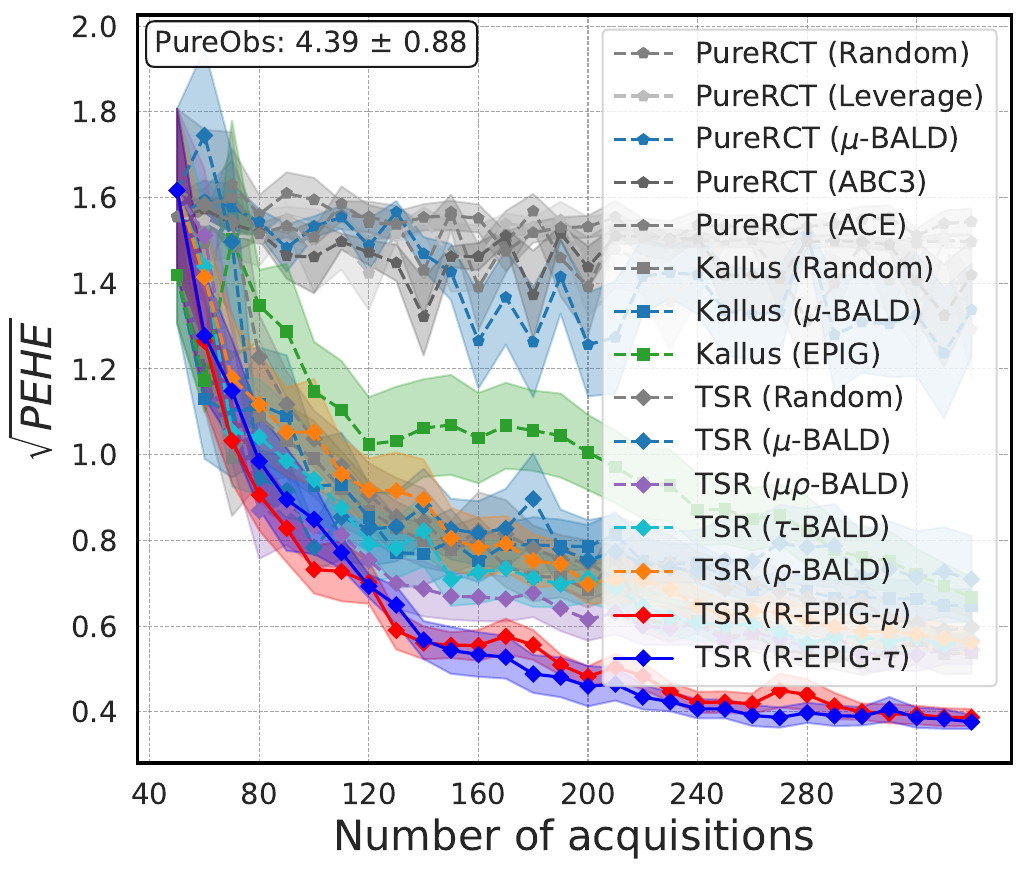}
    \end{minipage}
    \begin{minipage}{0.24\linewidth}
        \centering
        \includegraphics[width=\linewidth]{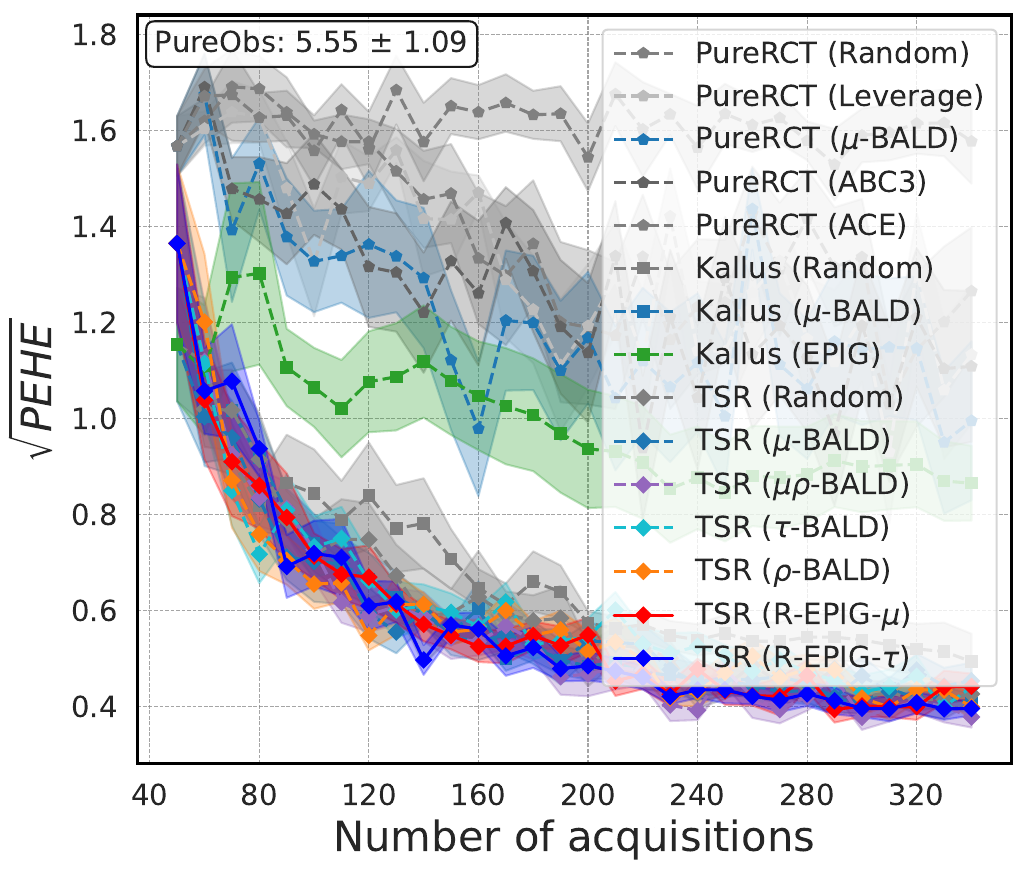}
    \end{minipage}
    \begin{minipage}{0.24\linewidth}
        \centering
        \includegraphics[width=\linewidth]{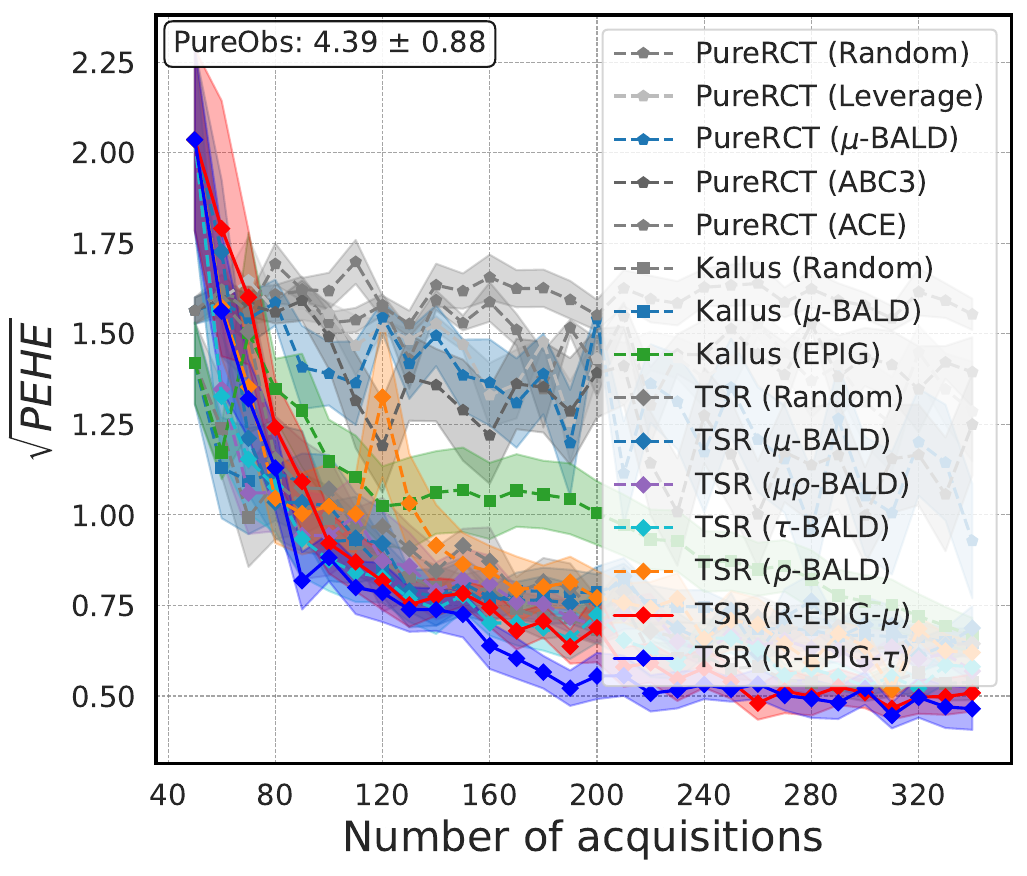}
    \end{minipage} 
    
    \caption{Comparison of PEHE over dim 6 datasets with different trail models. From left to right: CMGP on normal dataset, CMGP on heavy covaraite shift, NSGP on normal dataset, NSGP on heavy covariate shift.}
    \label{fig:dim6_full}
\end{figure*}

\begin{figure*}[h]
    \centering   
    \begin{minipage}{0.24\linewidth}
        \centering
        \includegraphics[width=\linewidth]{Figures/synthetic/dim6/dm/cmgp/dim6_cmgp_all_policy_error_test.pdf}
    \end{minipage}
    \begin{minipage}{0.24\linewidth}
        \centering
        \includegraphics[width=\linewidth]{Figures/synthetic/dim6/dm/cmgp/dim6_cmgp_all_regret.pdf}
    \end{minipage}
    \begin{minipage}{0.24\linewidth}
        \centering
        \includegraphics[width=\linewidth]{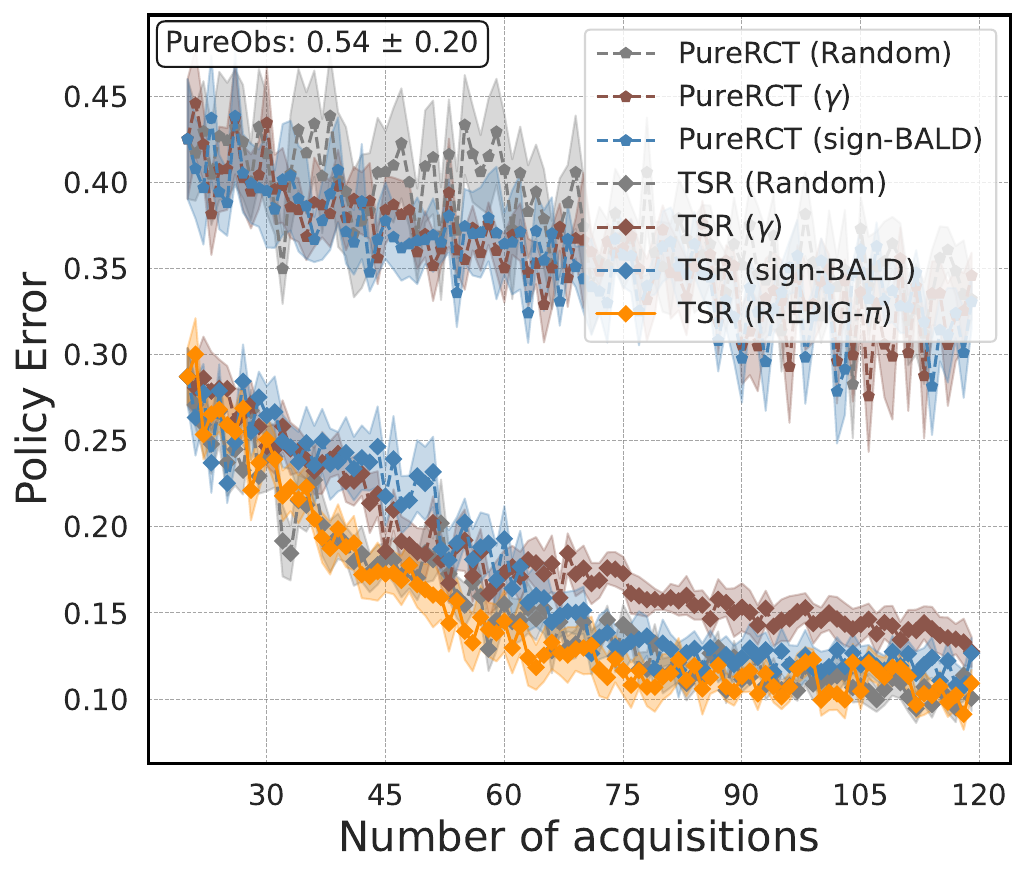}
    \end{minipage}
    \begin{minipage}{0.24\linewidth}
        \centering
        \includegraphics[width=\linewidth]{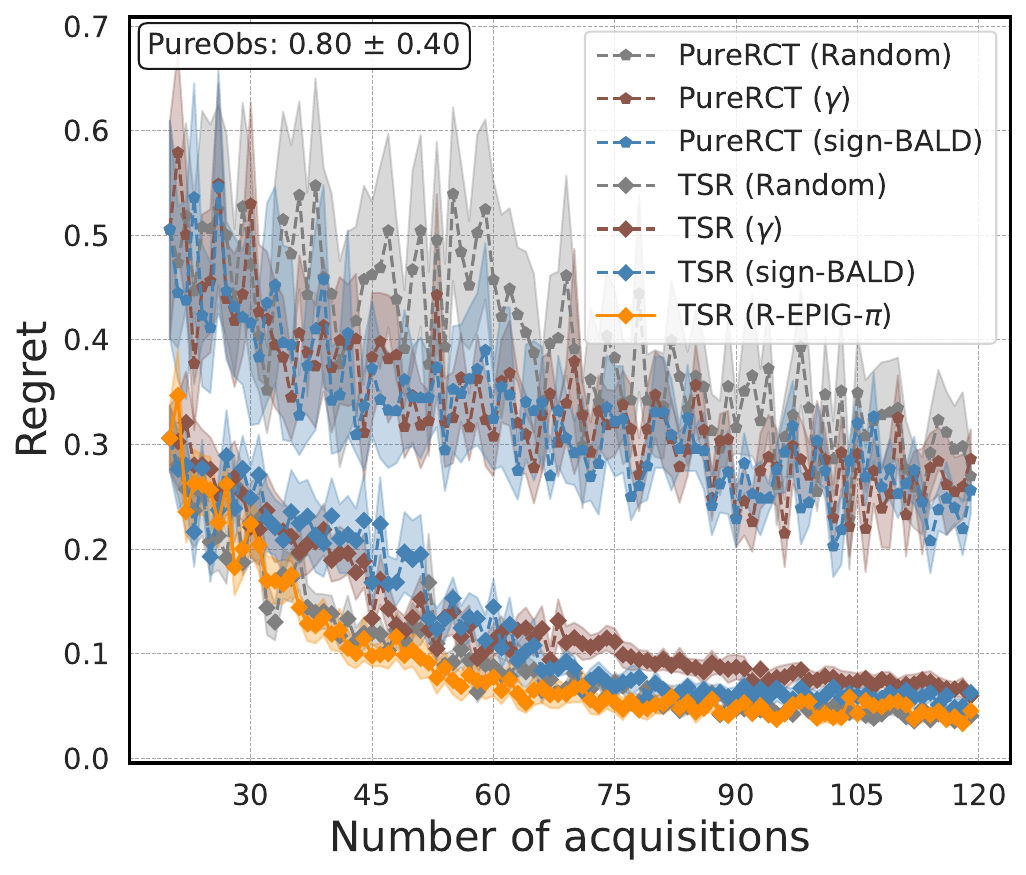}
    \end{minipage} 
    
    \caption{Comparison of decision making performance over the dim 6 dataset with different trail models. From left to right: CMGP (APE), CMGP (AR), NSGP (APE), NSGP (AR).}
    \label{fig:dim6_full_dm}
\end{figure*}

%% file: Pages/Appendix/fig_include/ablation.tex
\begin{figure*}[h]
    \centering   
    \begin{minipage}{0.24\linewidth}
        \centering
        \includegraphics[width=\linewidth]{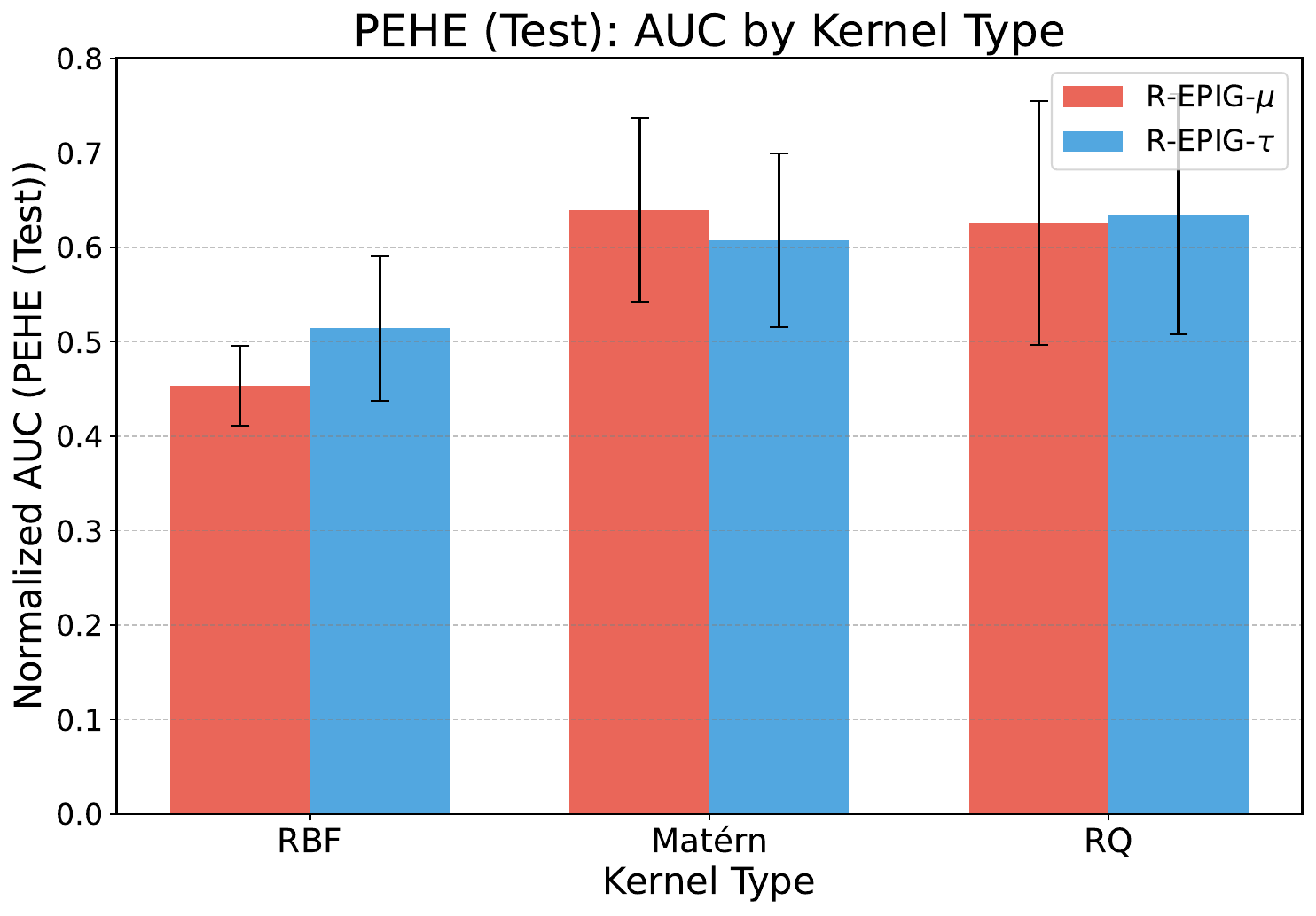}
    \end{minipage}
    \begin{minipage}{0.5\linewidth}
        \centering
        \includegraphics[width=\linewidth]{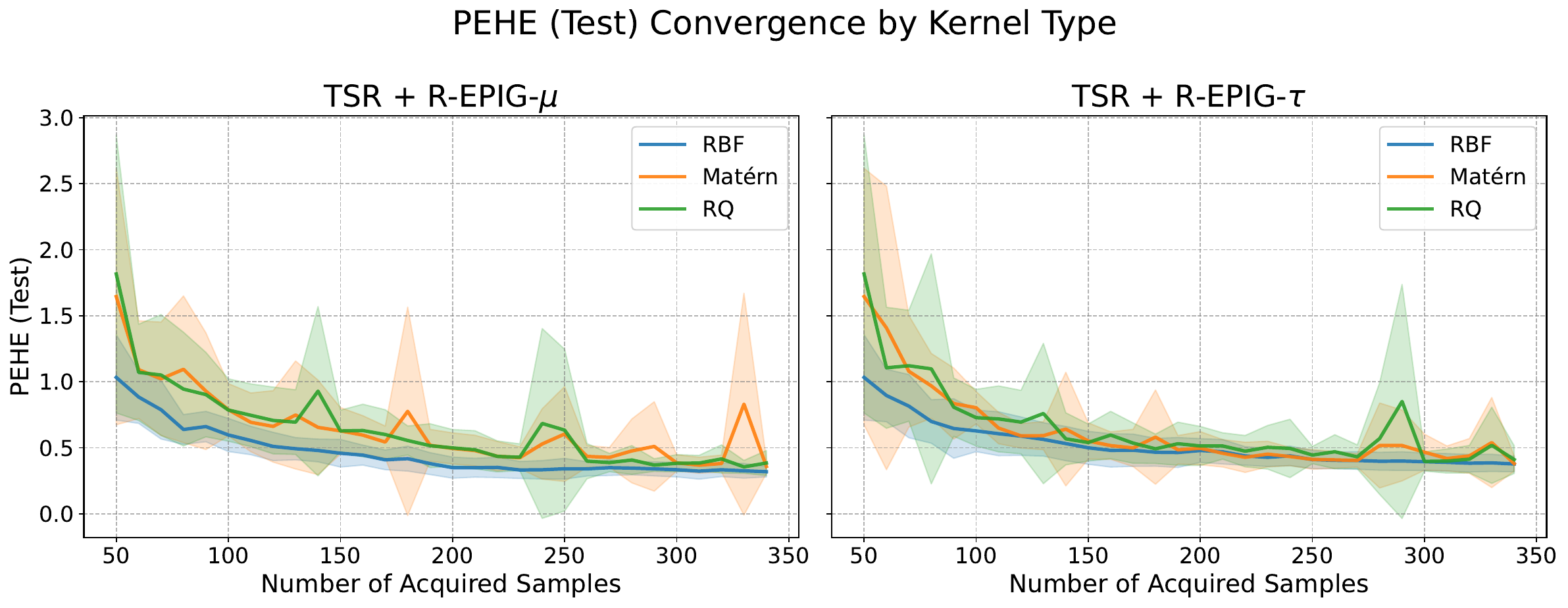}
    \end{minipage}
    
    \caption{Comparison of CATE estimation performance with different kernels.}
    \label{fig:different_kernels}
\end{figure*}

\begin{figure*}[h]
    \centering   
    \begin{minipage}{0.24\linewidth}
        \centering
        \includegraphics[width=\linewidth]{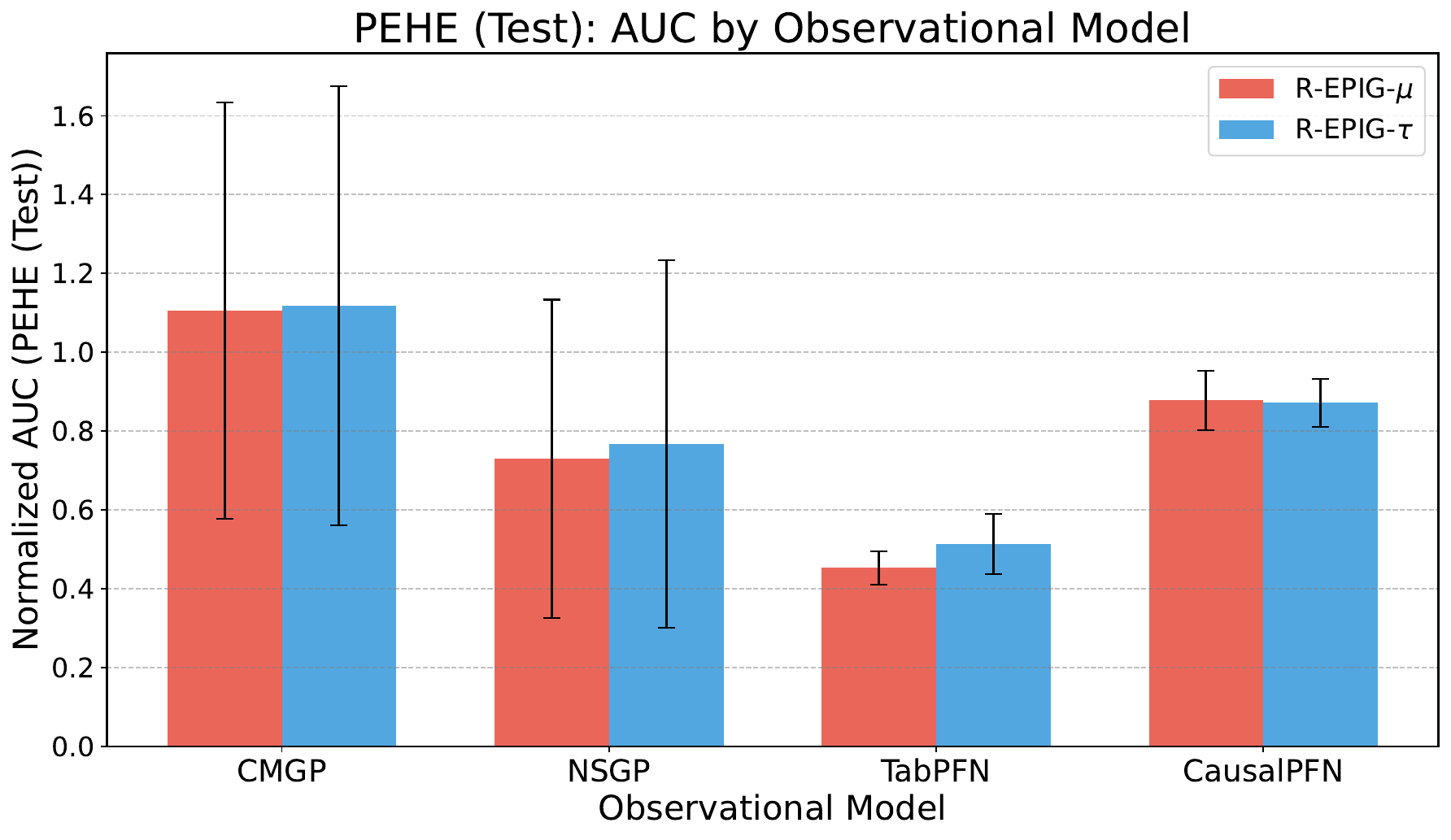}
    \end{minipage}
    \begin{minipage}{0.5\linewidth}
        \centering
        \includegraphics[width=\linewidth]{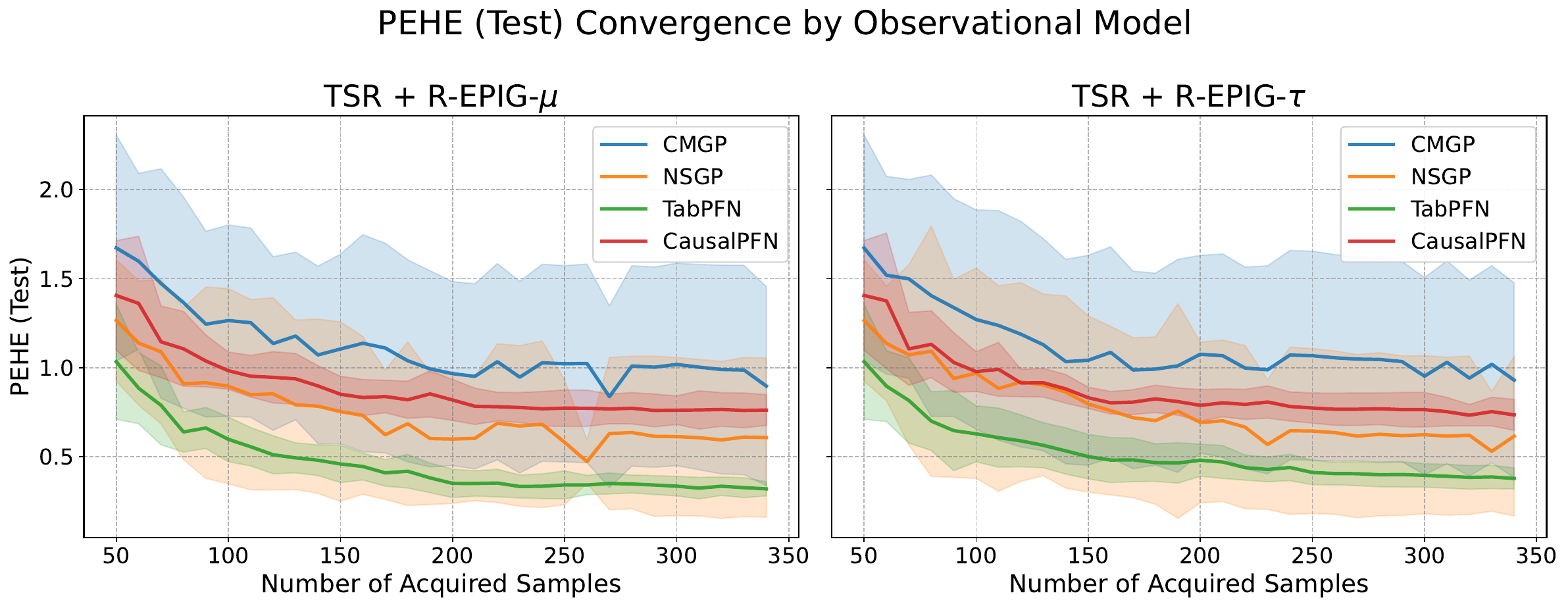}
    \end{minipage}
    
    \caption{Comparison of CATE estimation performance with different Observational models.}
    \label{fig:different_obs_models}
\end{figure*}

\begin{figure*}[h]
    \centering   
    \begin{minipage}{0.24\linewidth}
        \centering
        \includegraphics[width=\linewidth]{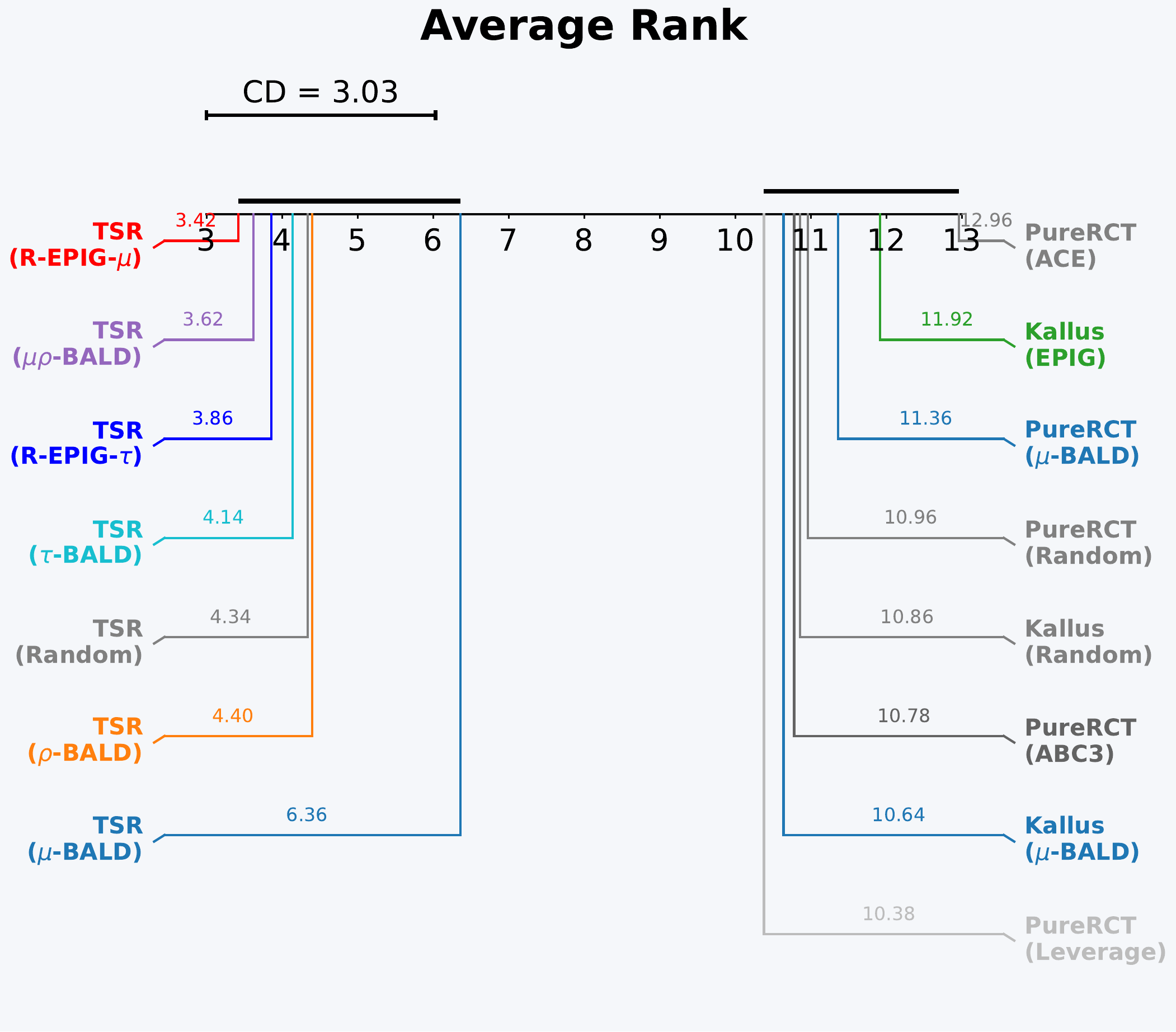}
    \end{minipage}
    \begin{minipage}{0.24\linewidth}
        \centering
        \includegraphics[width=\linewidth]{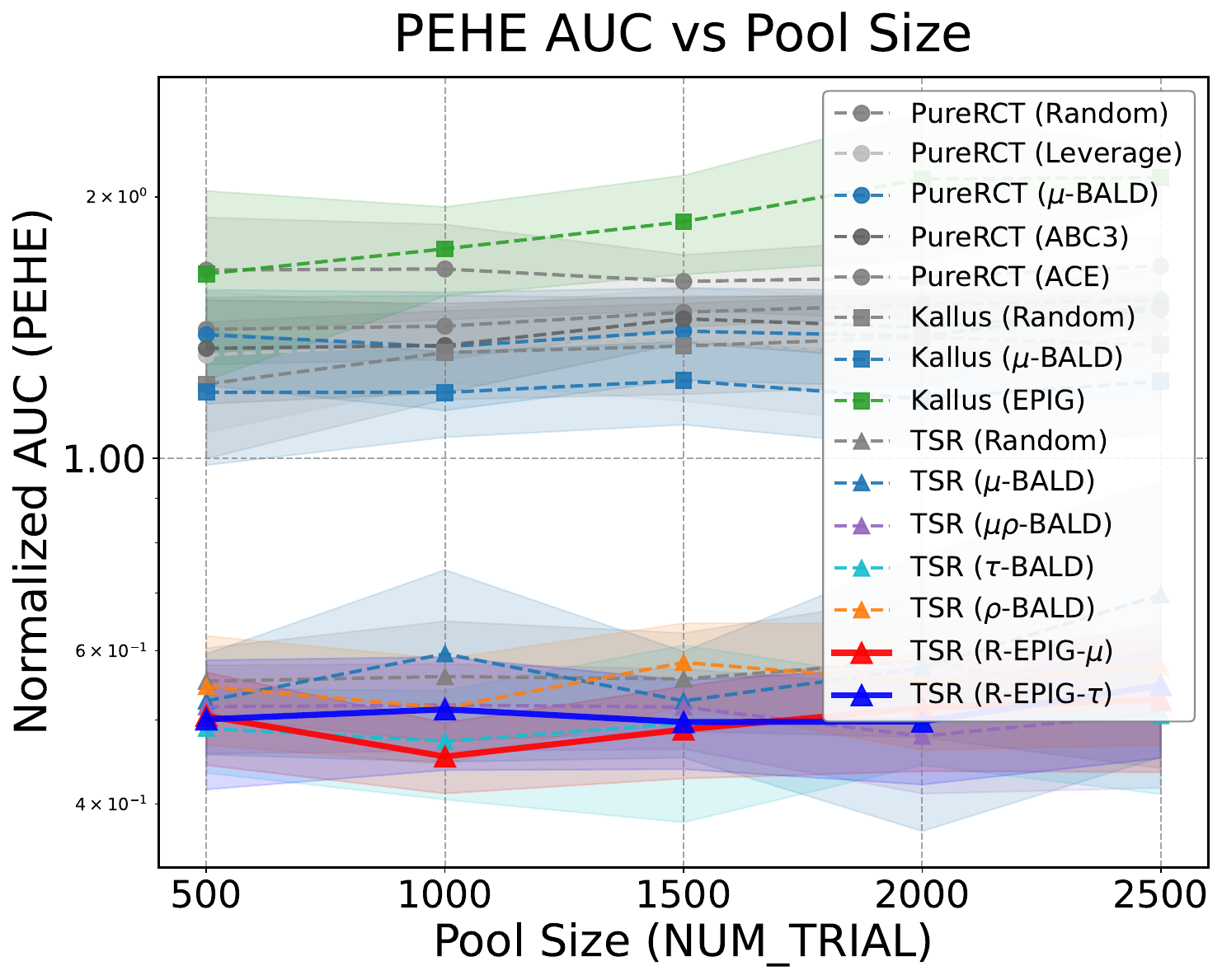}
    \end{minipage}
    \begin{minipage}{0.48\linewidth}
        \centering
        \includegraphics[width=\linewidth]{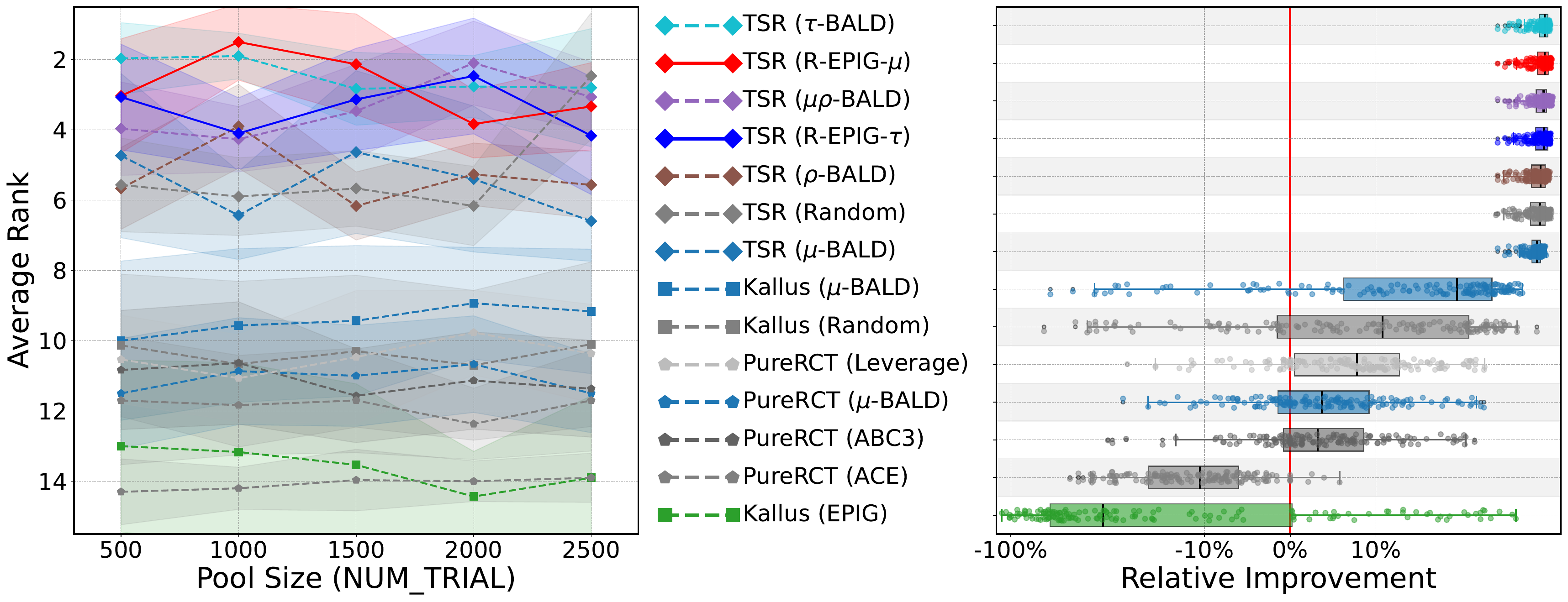}
    \end{minipage}
    
    \caption{Comparison of CATE estimation performance with different pool sizes.}
    \label{fig:different_pool_sizes}
\end{figure*}

\begin{figure*}[h]
    \centering   
    \begin{minipage}{0.24\linewidth}
        \centering
        \includegraphics[width=\linewidth]{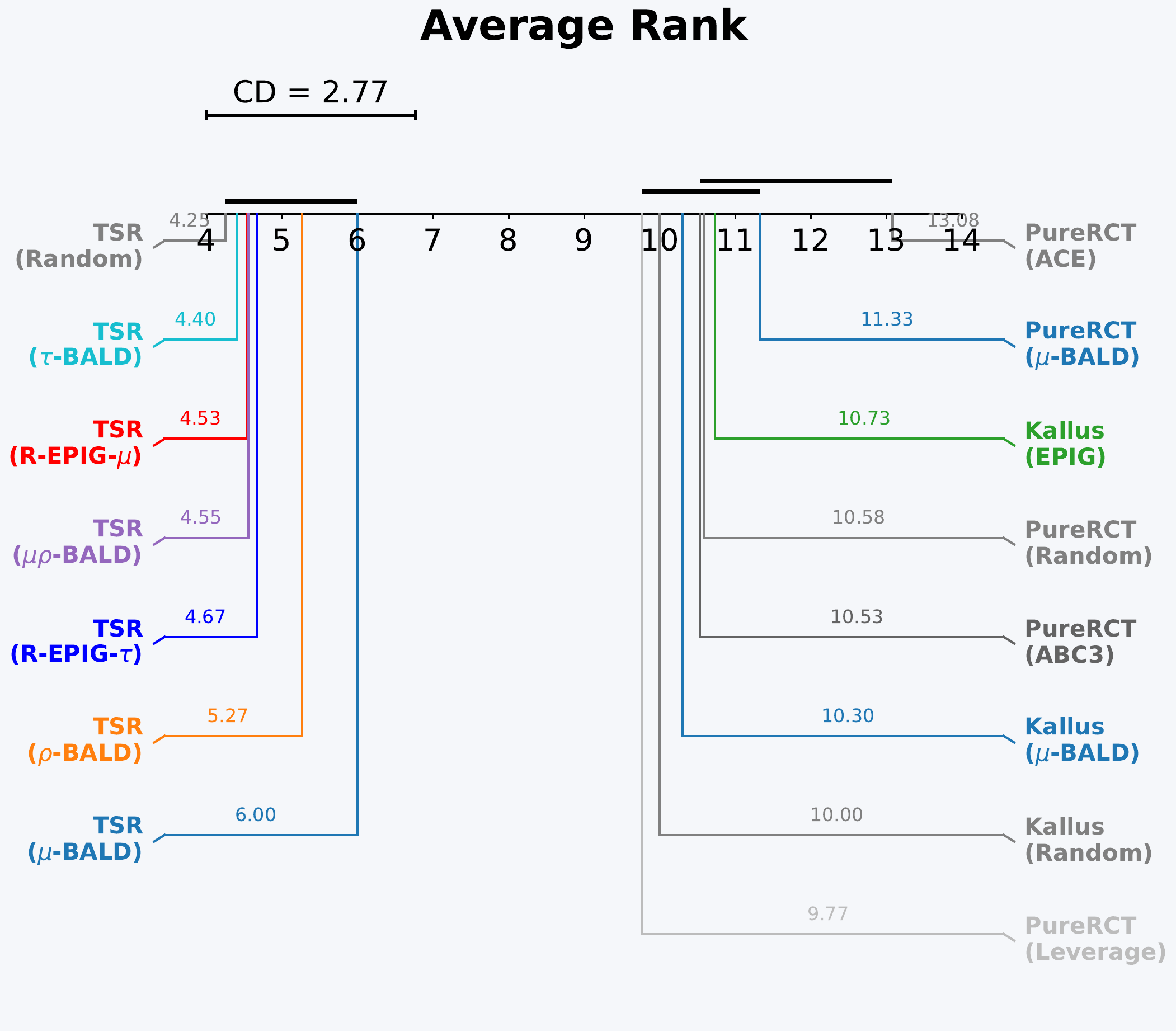}
    \end{minipage}
    \begin{minipage}{0.24\linewidth}
        \centering
        \includegraphics[width=\linewidth]{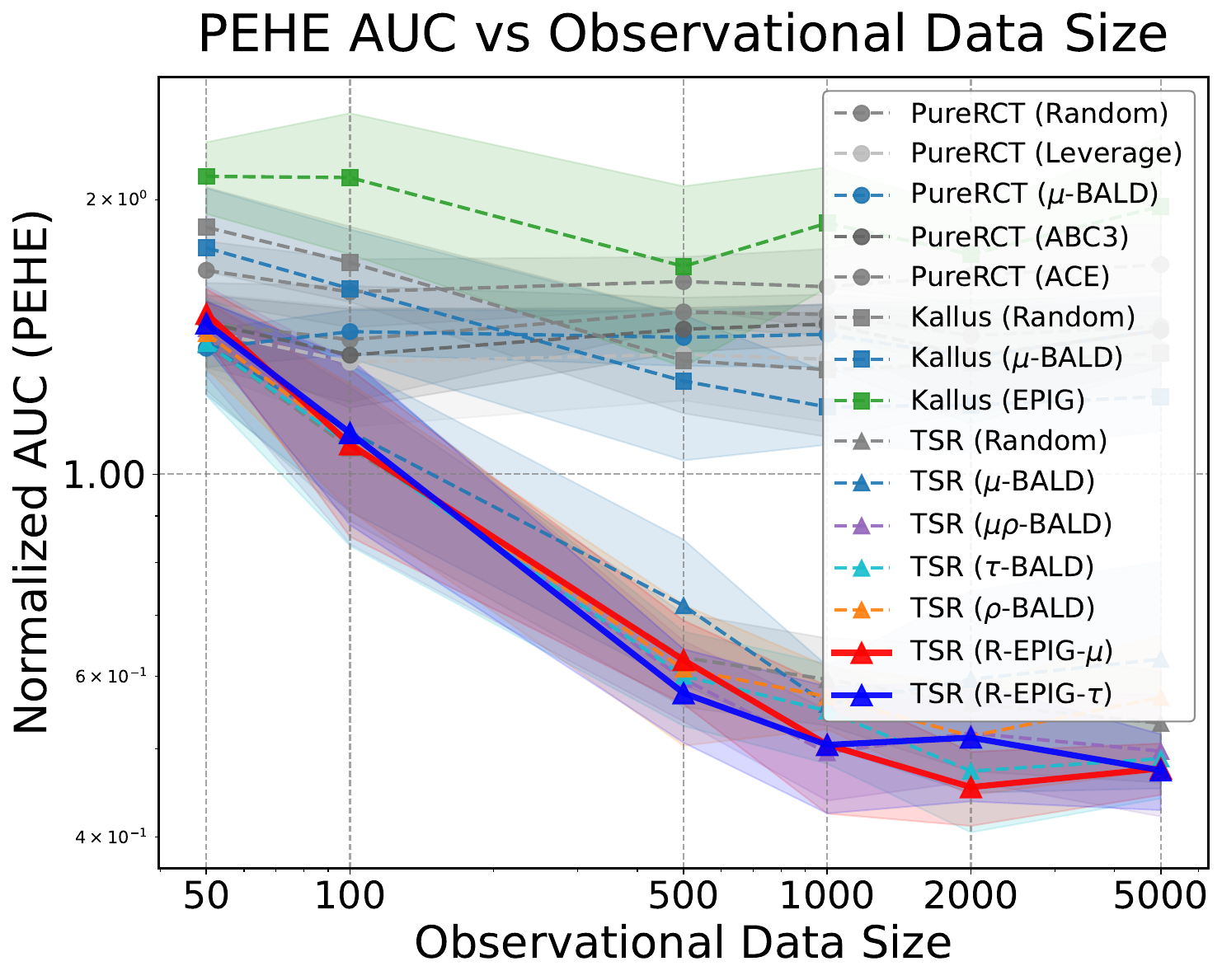}
    \end{minipage}
    \begin{minipage}{0.48\linewidth}
        \centering
        \includegraphics[width=\linewidth]{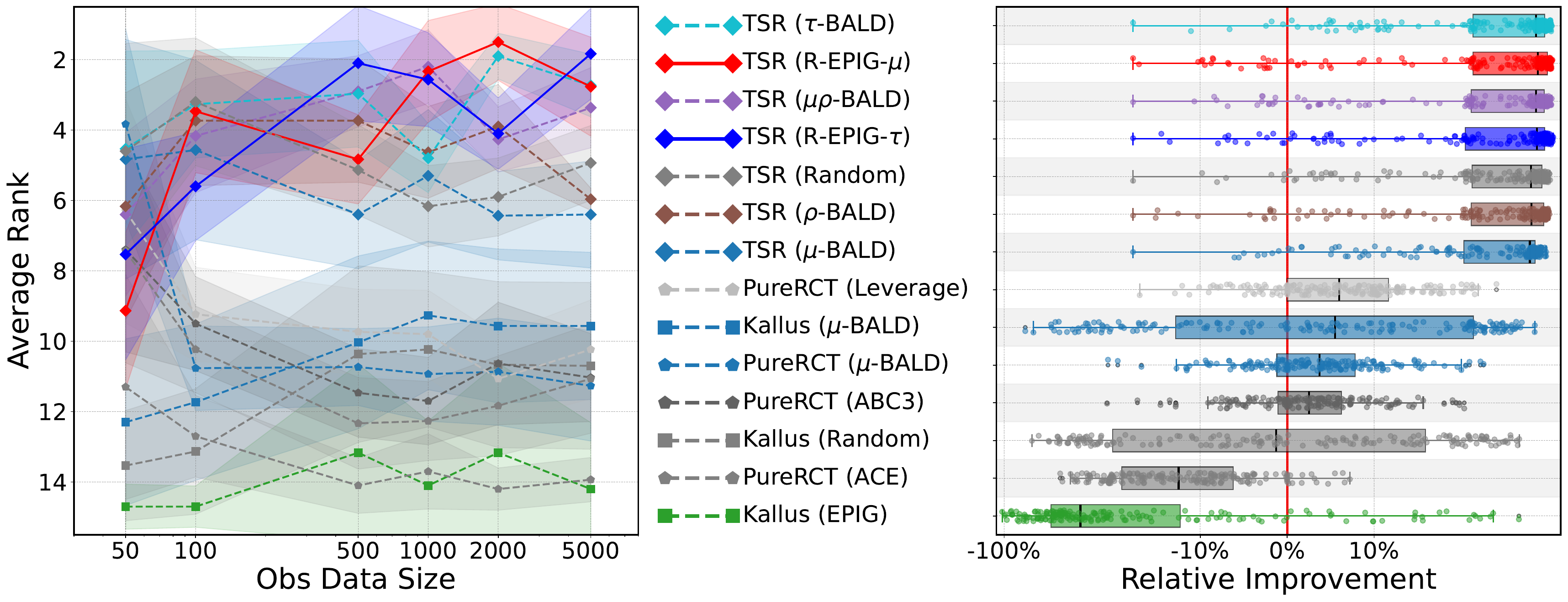}
    \end{minipage}
    
    \caption{Comparison of CATE estimation performance with different observational  sizes.}
    \label{fig:different_prior_sizes}
\end{figure*}

\begin{figure*}[h]
    \centering   
    \begin{minipage}{0.24\linewidth}
        \centering
        \includegraphics[width=\linewidth]{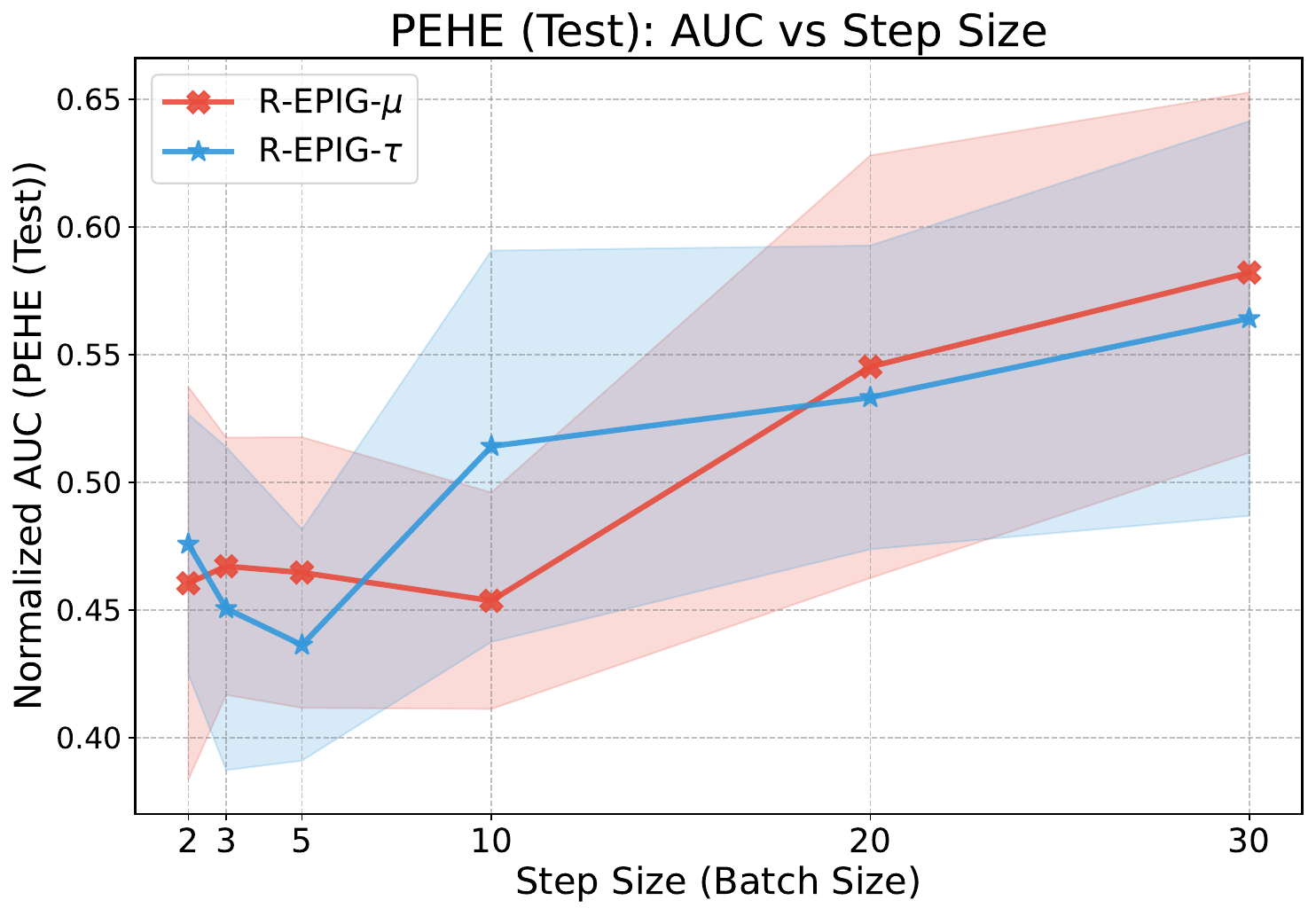}
    \end{minipage}
    \begin{minipage}{0.5\linewidth}
        \centering
        \includegraphics[width=\linewidth]{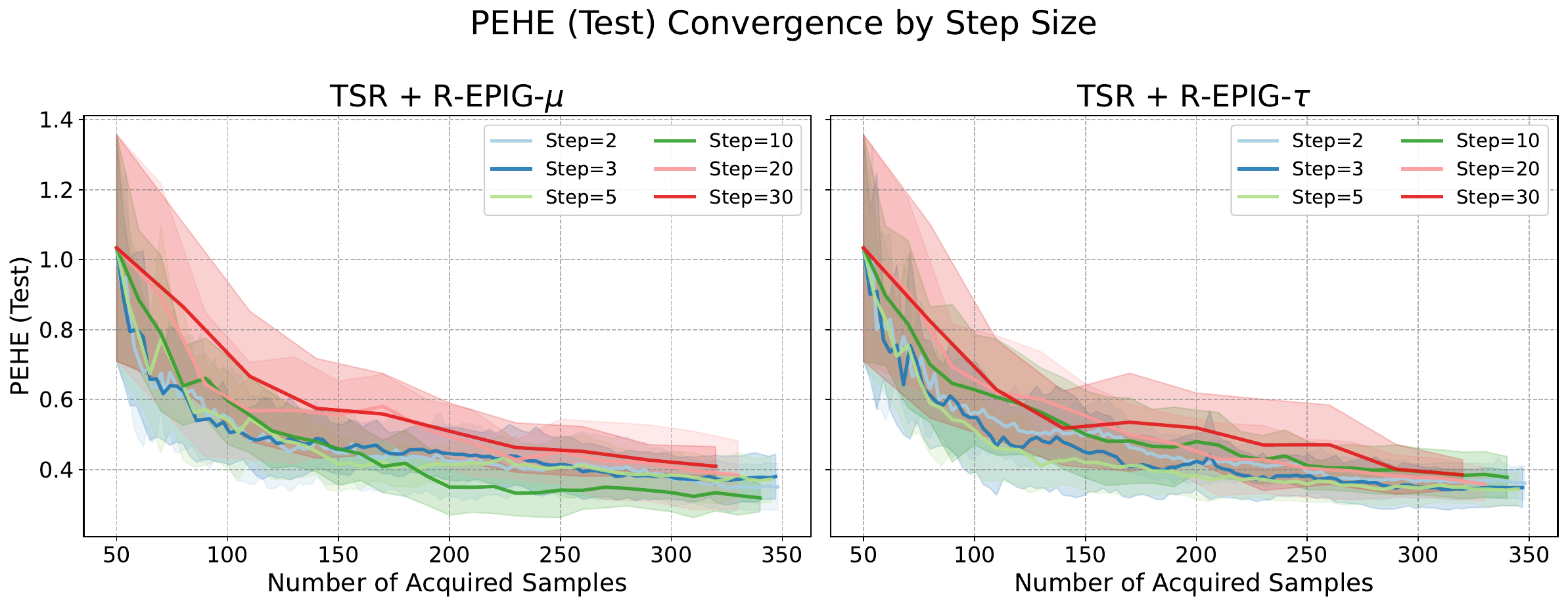}
    \end{minipage}
    
    \caption{Comparison of CATE estimation performance with different batch sizes.}
    \label{fig:different_step_sizes}
\end{figure*}

\begin{figure*}[h]
    \centering   
    \begin{minipage}{0.24\linewidth}
        \centering
        \includegraphics[width=\linewidth]{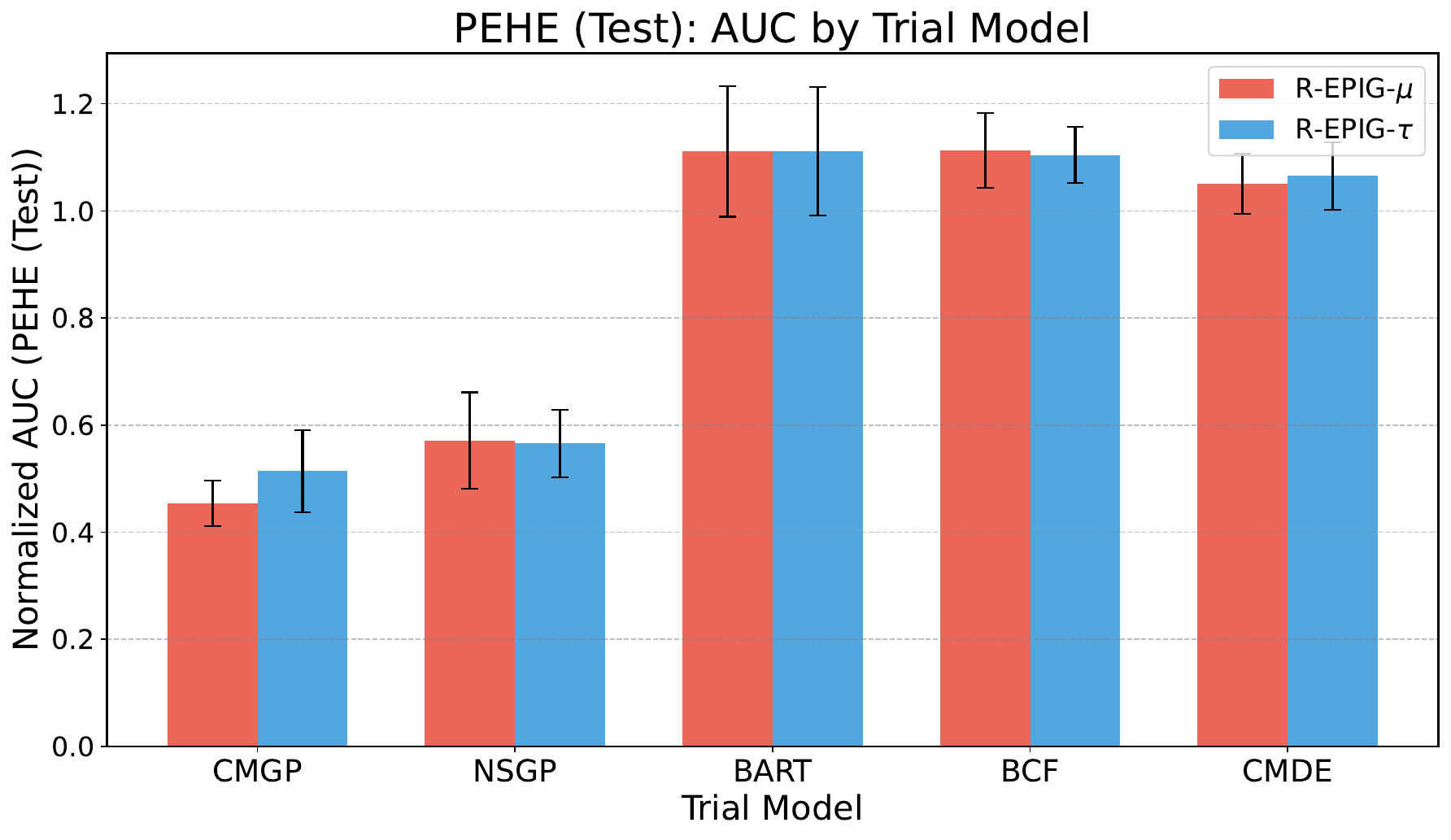}
    \end{minipage}
    \begin{minipage}{0.48\linewidth}
        \centering
        \includegraphics[width=\linewidth]{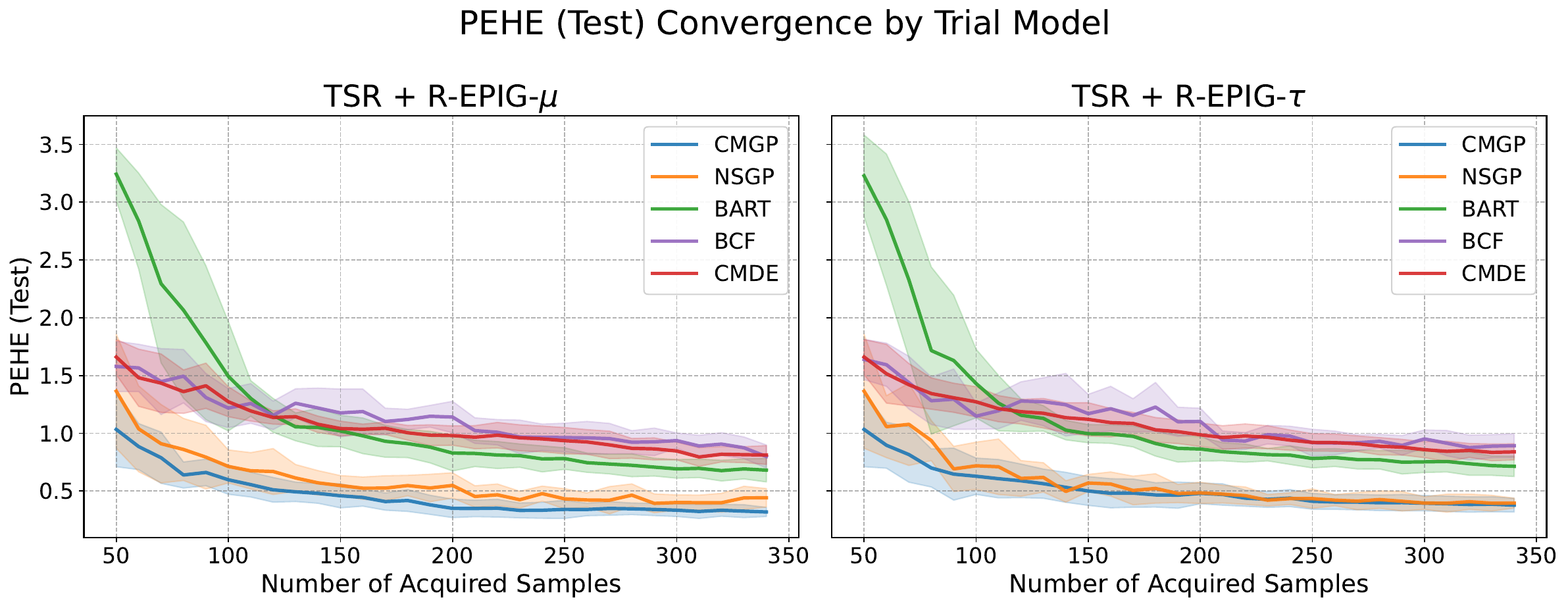}
    \end{minipage}
    \begin{minipage}{0.24\linewidth}
        \centering
        \includegraphics[width=\linewidth]{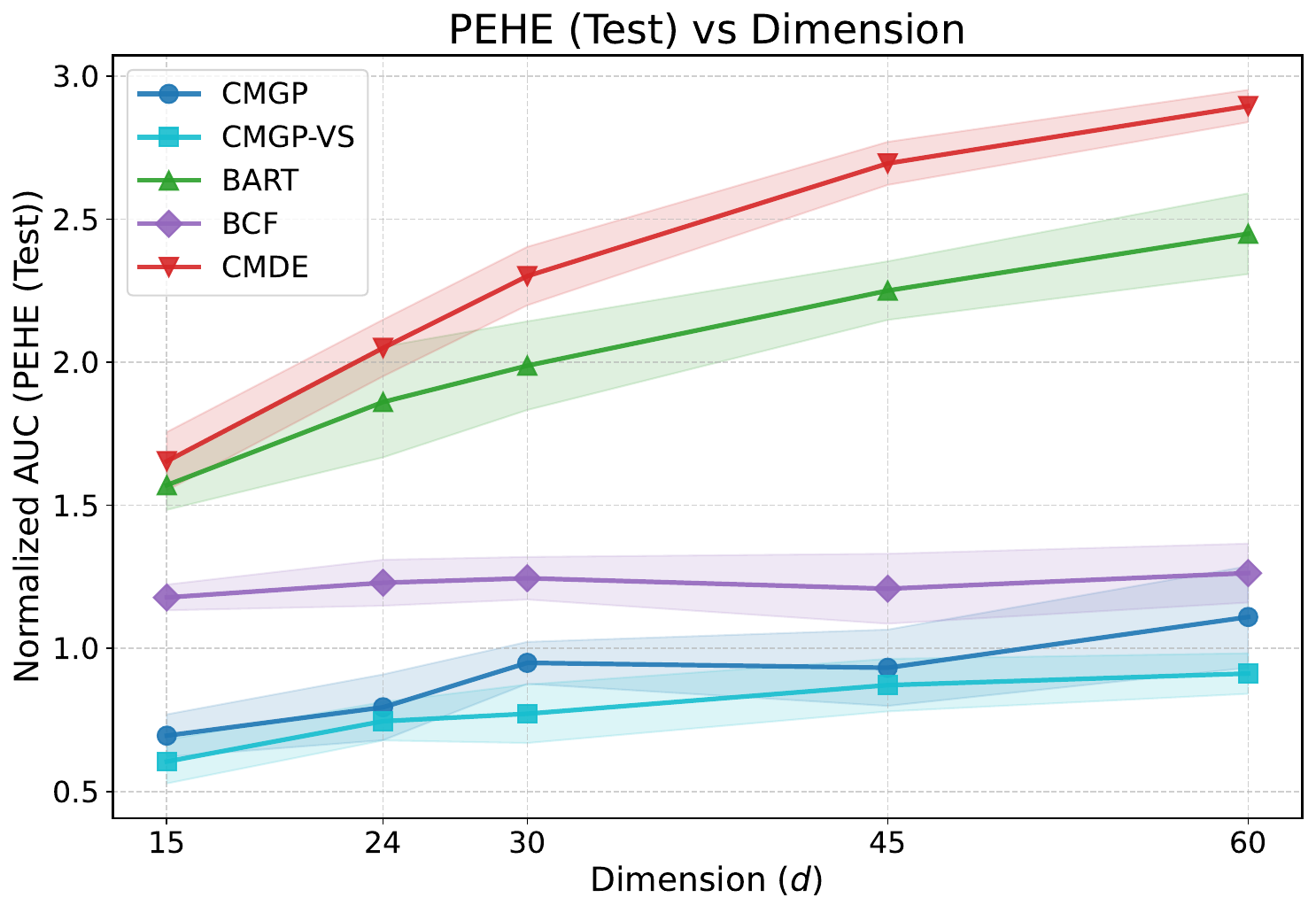}
    \end{minipage}
    \caption{Comparison of CATE estimation performance with different trial models.}
    \label{fig:different_trial_models}
\end{figure*}

%% file: Pages/Appendix/fig_include/real.tex
\begin{figure*}[h]
    \centering   
    \begin{minipage}{0.27\linewidth}
        \centering
        \includegraphics[width=\linewidth]{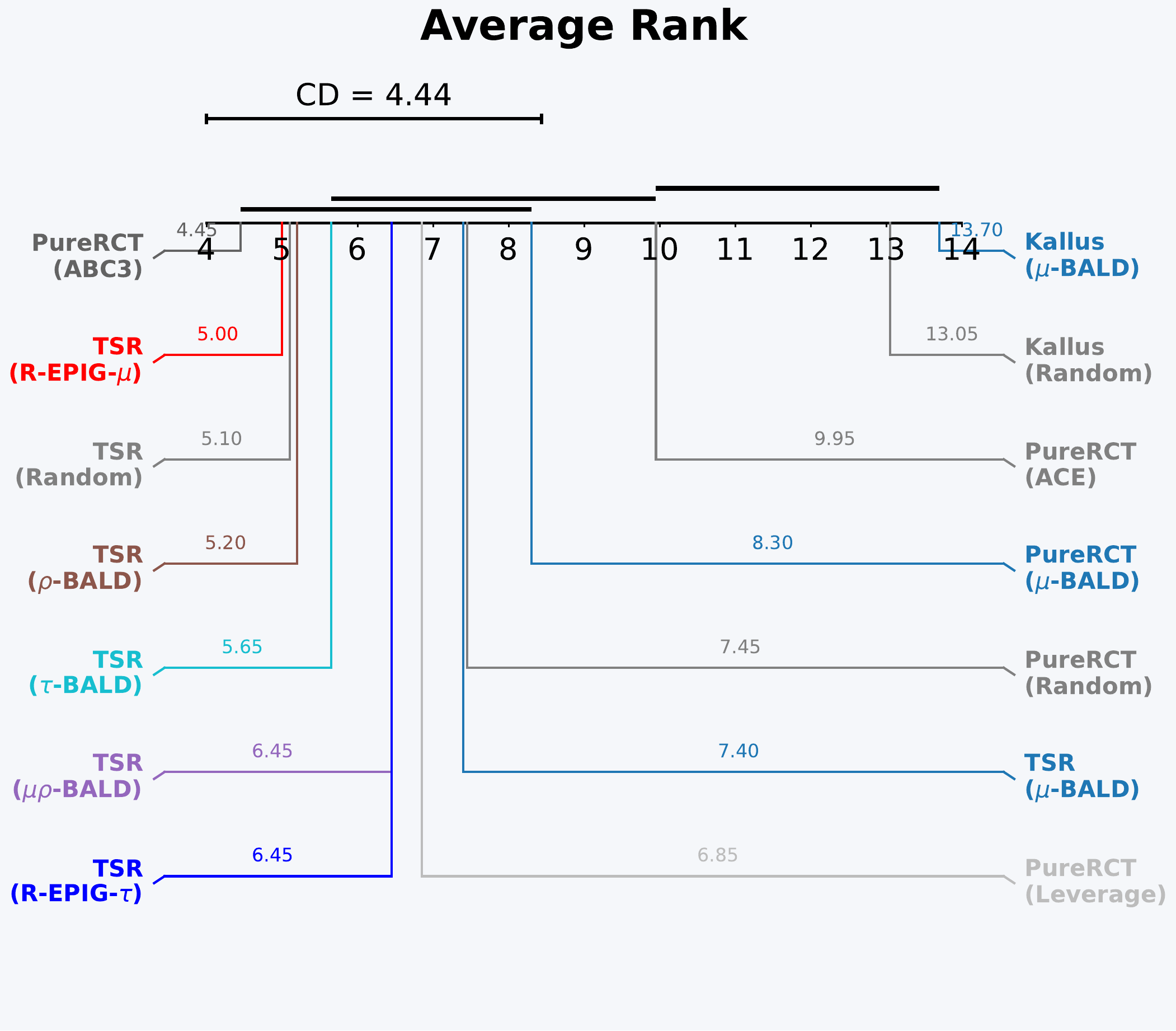}
    \end{minipage}
    \begin{minipage}{0.27\linewidth}
        \centering
        \includegraphics[width=\linewidth]{Figures/synthetic/notebook/real/ihdp/cate/all_methods/relative_pehe_test_boxplot.pdf}
    \end{minipage}
    \begin{minipage}{0.27\linewidth}
        \centering
        \includegraphics[width=\linewidth]{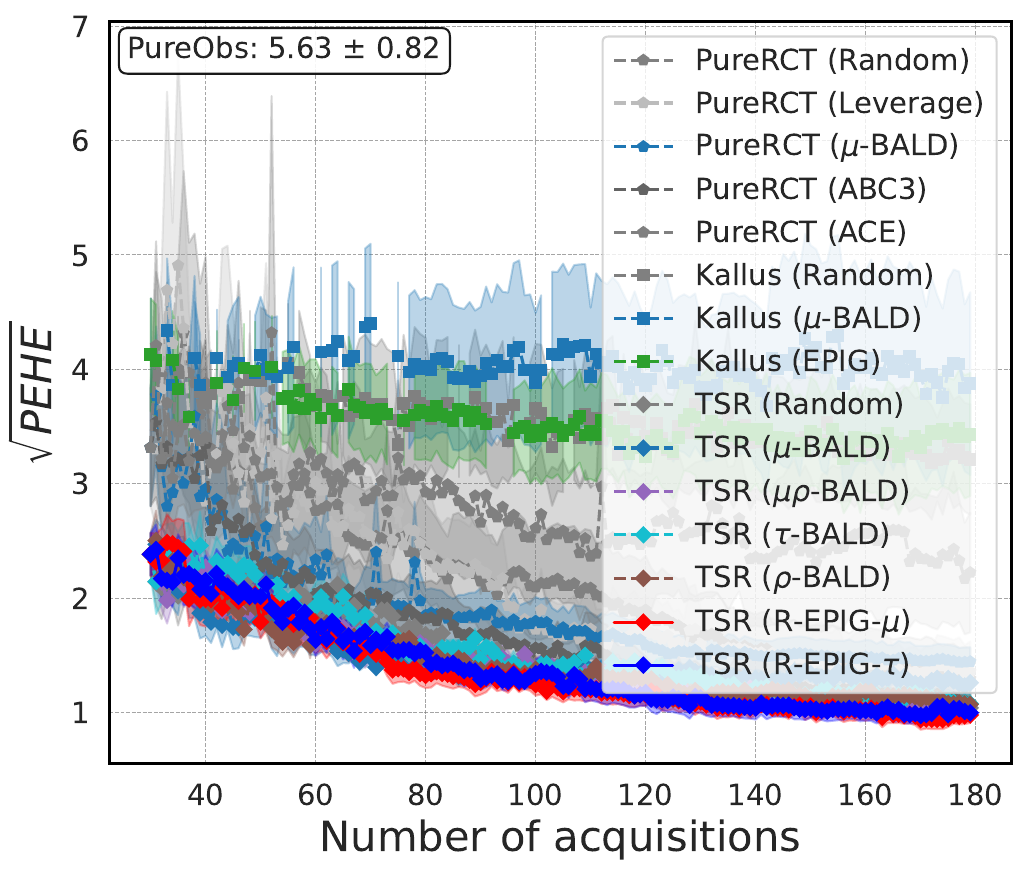}
    \end{minipage}
    \caption{Comparison of CATE estimation performance on IHDP dataset.}
    \label{fig:ihdp_cate}
\end{figure*}

\begin{figure*}[h]
    \centering   
    \begin{minipage}{0.27\linewidth}
        \centering
        \includegraphics[width=\linewidth]{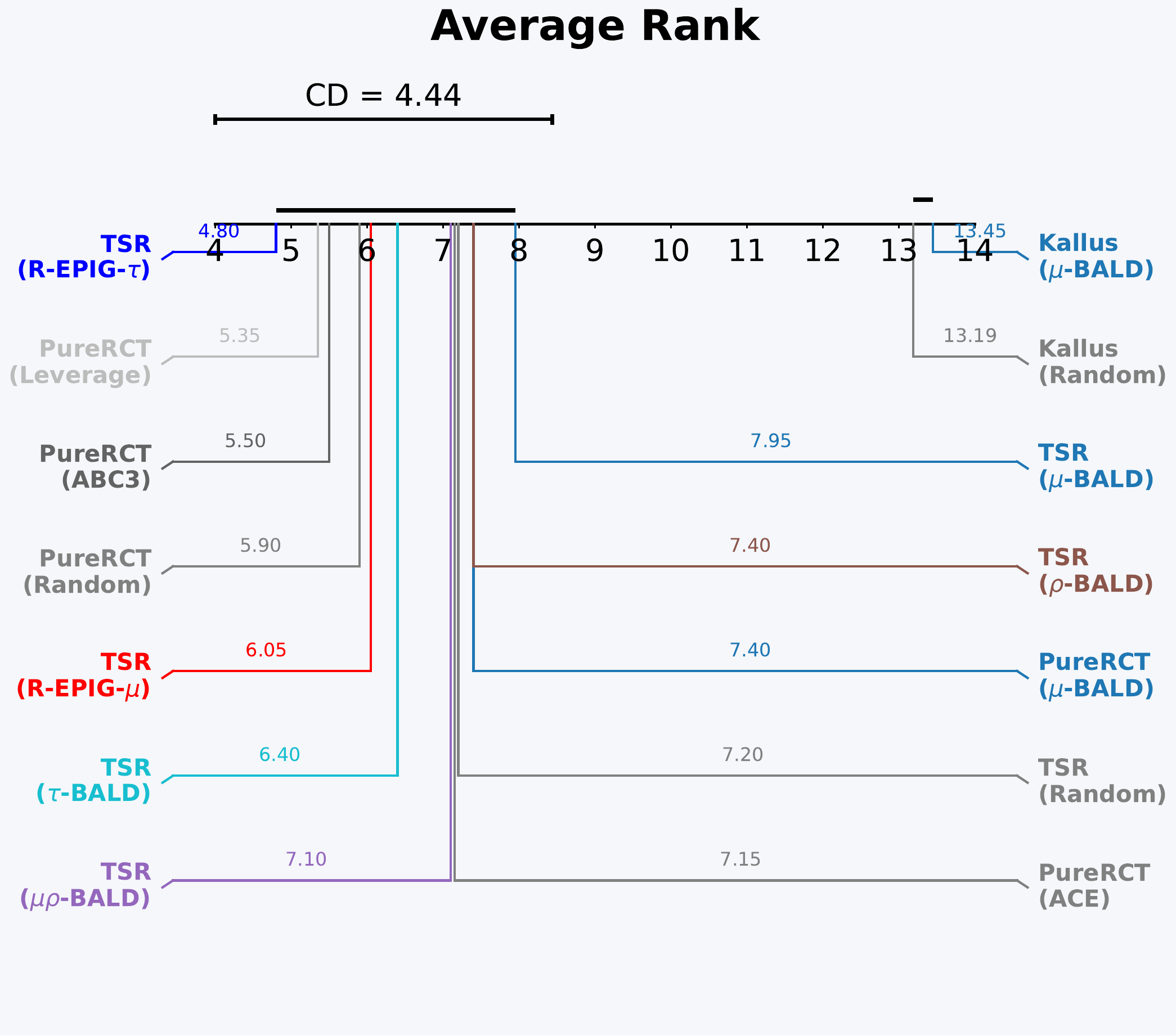}
    \end{minipage}
    \begin{minipage}{0.27\linewidth}
        \centering
        \includegraphics[width=\linewidth]{Figures/synthetic/notebook/real/actg/cate/all_methods/relative_pehe_test_boxplot.pdf}
    \end{minipage}
    \begin{minipage}{0.27\linewidth}
        \centering
        \includegraphics[width=\linewidth]{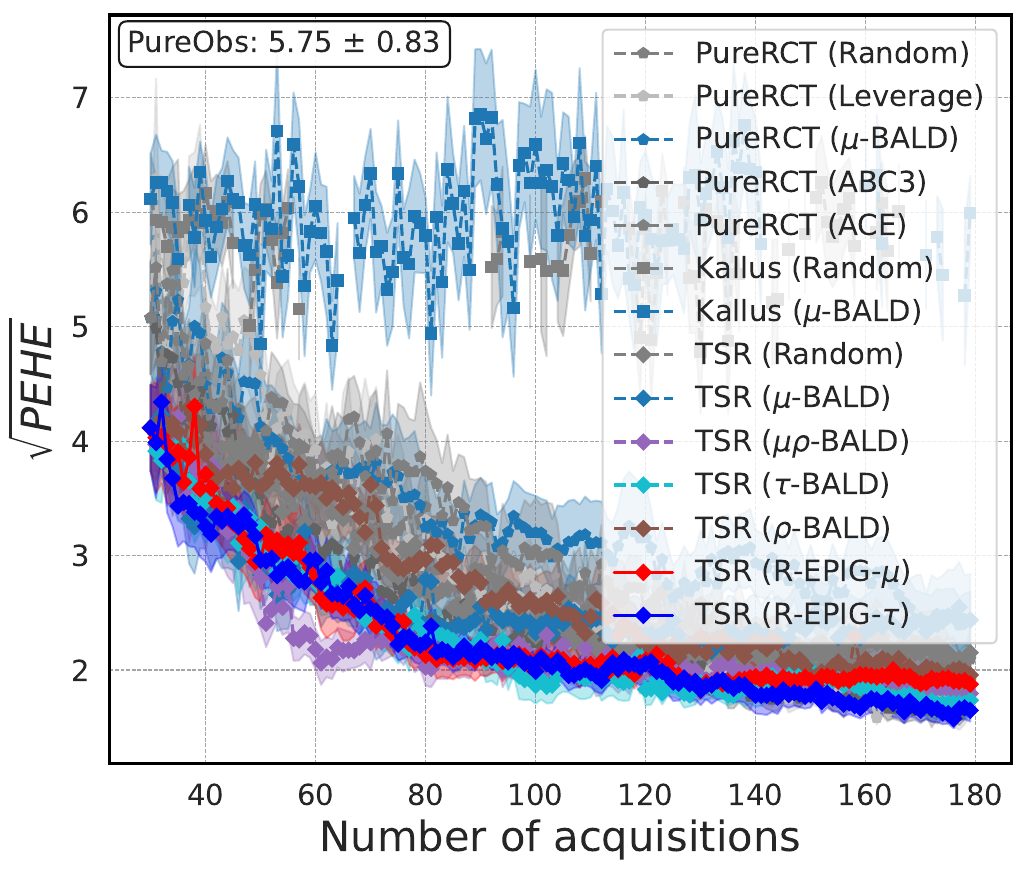}
    \end{minipage}
    \caption{Comparison of CATE estimation performance on ACTG-175 dataset.}
    \label{fig:actg_cate}
\end{figure*}

\begin{figure*}[h]
    \centering   
    \begin{minipage}{0.24\linewidth}
        \centering
        \includegraphics[width=\linewidth]{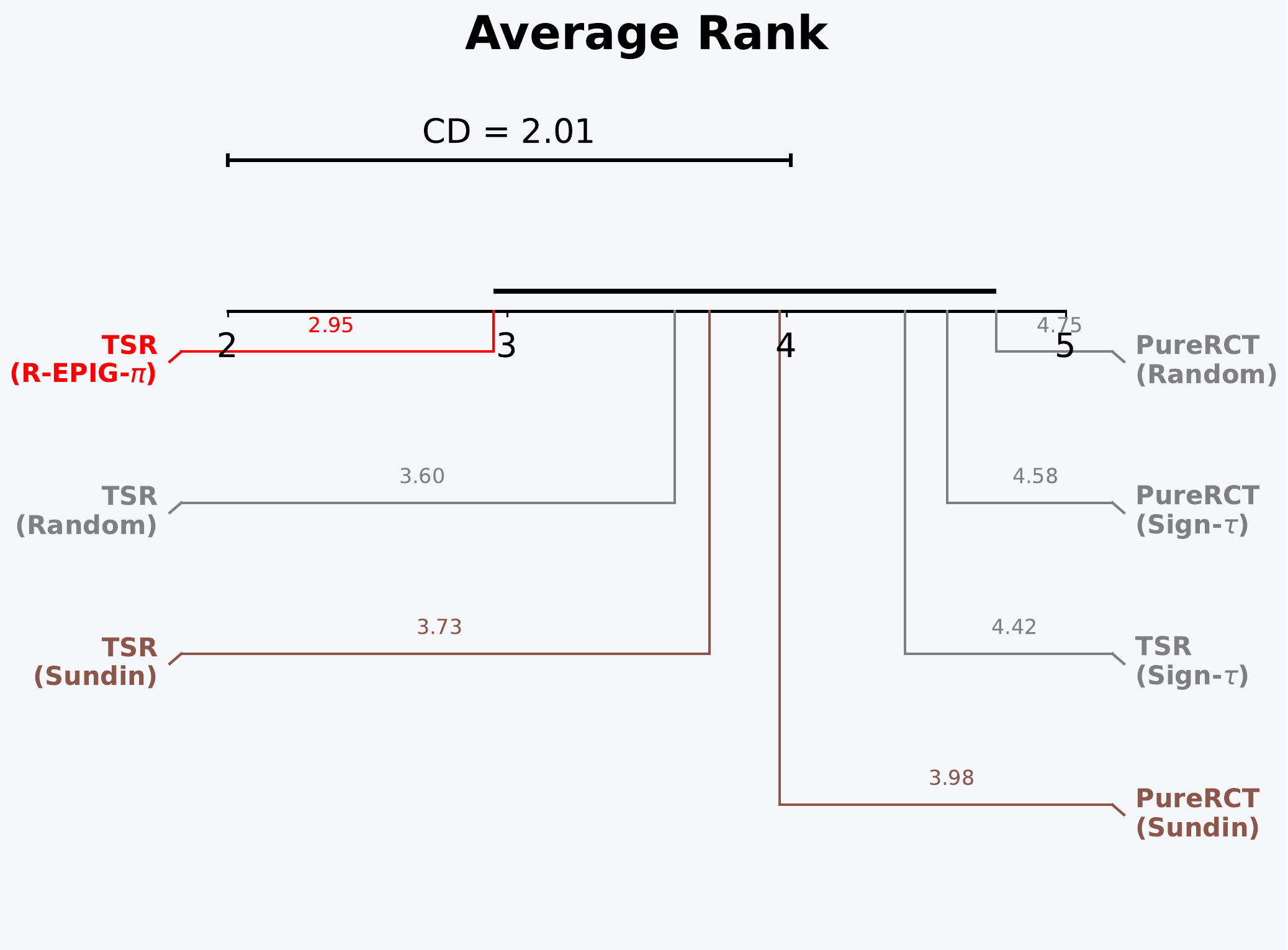}
    \end{minipage}
    \begin{minipage}{0.24\linewidth}
        \centering
        \includegraphics[width=\linewidth]{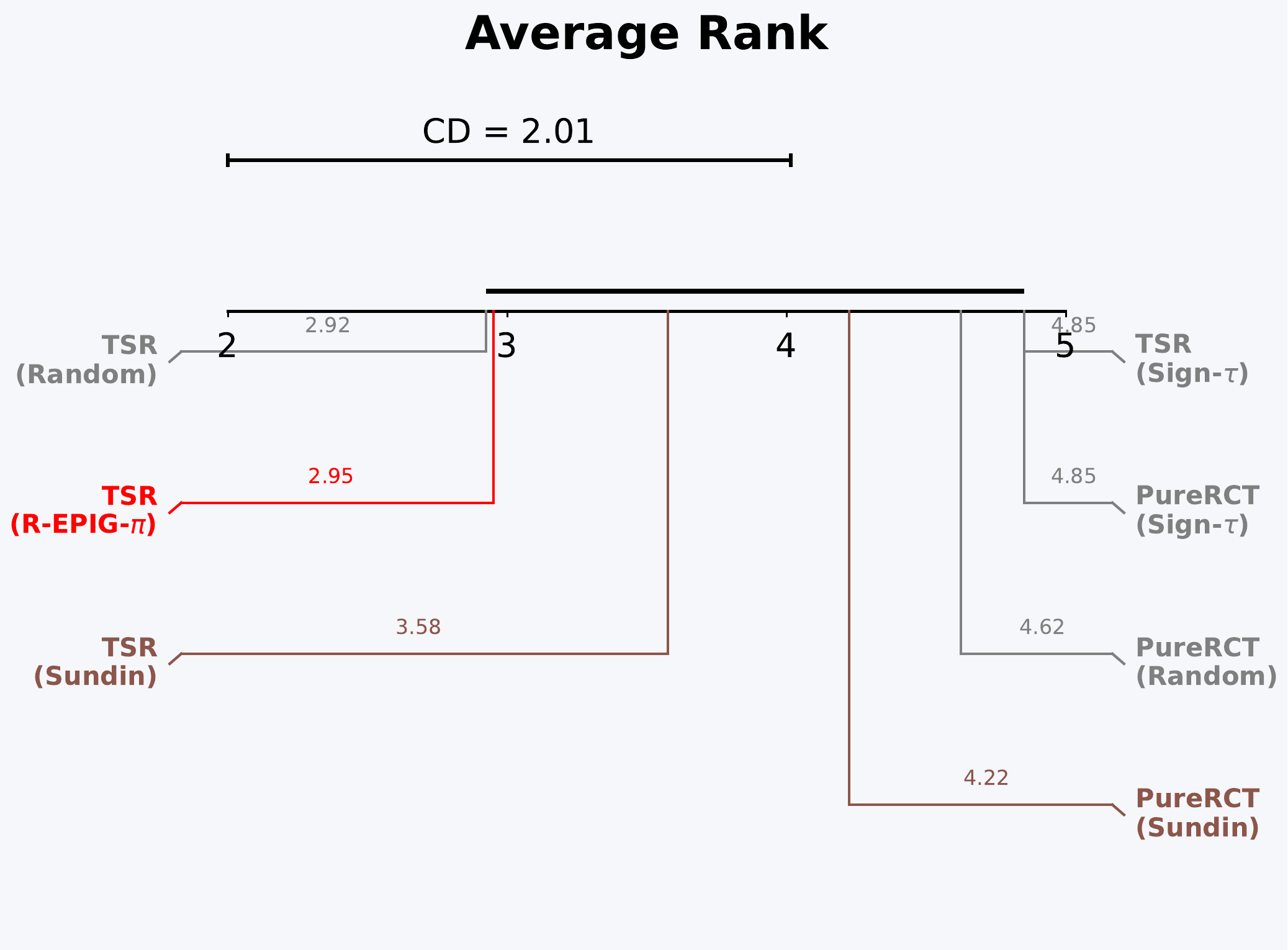}
    \end{minipage}
    \begin{minipage}{0.24\linewidth}
        \centering
        \includegraphics[width=\linewidth]{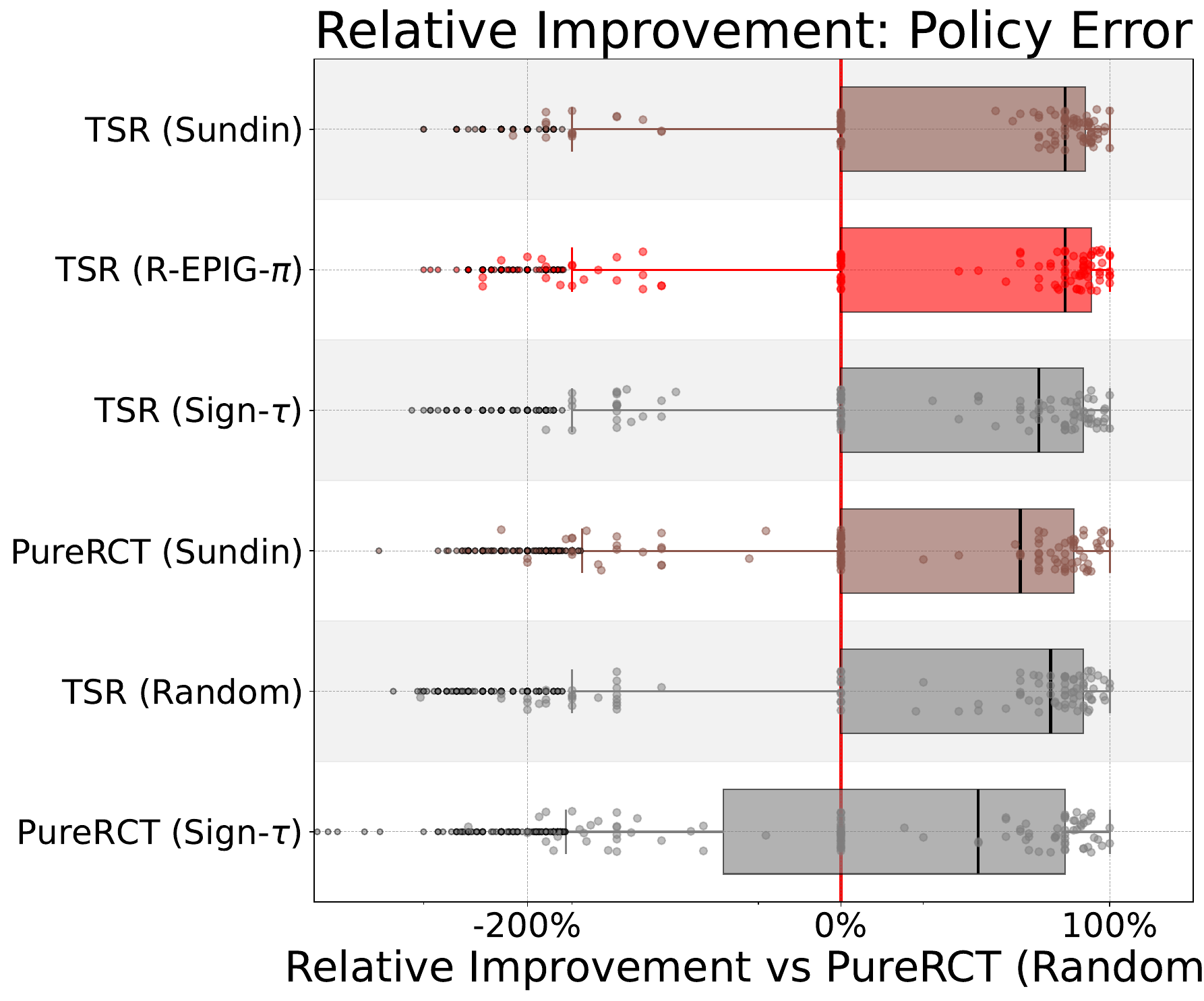}
    \end{minipage}
    \begin{minipage}{0.24\linewidth}
        \centering
        \includegraphics[width=\linewidth]{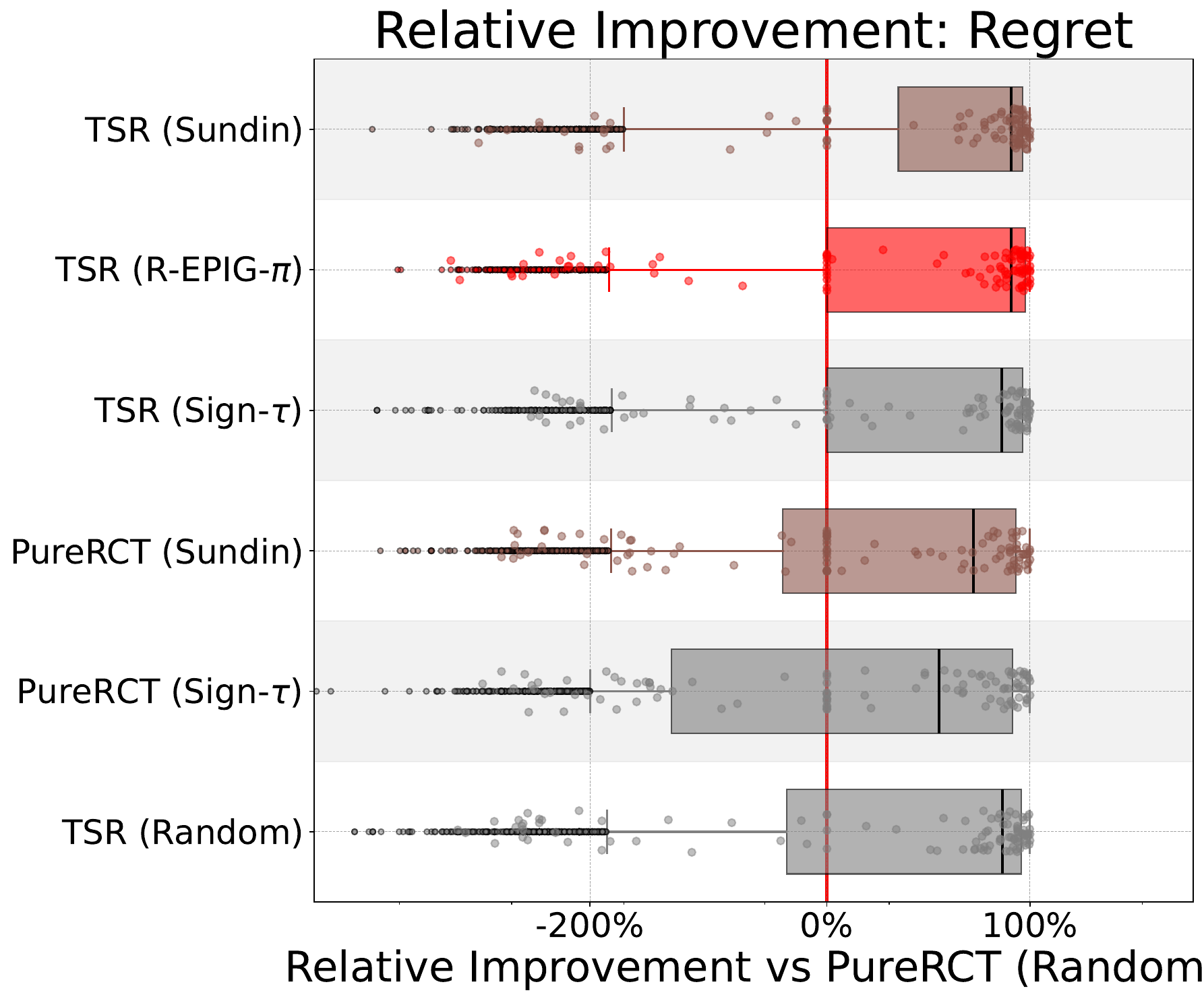}
    \end{minipage}
    \caption{Comparison of policy estimation performance on IHDP dataset.}
    \label{fig:ihdp_dm}
\end{figure*}

\begin{figure*}[h]
    \centering   
    \begin{minipage}{0.24\linewidth}
        \centering
        \includegraphics[width=\linewidth]{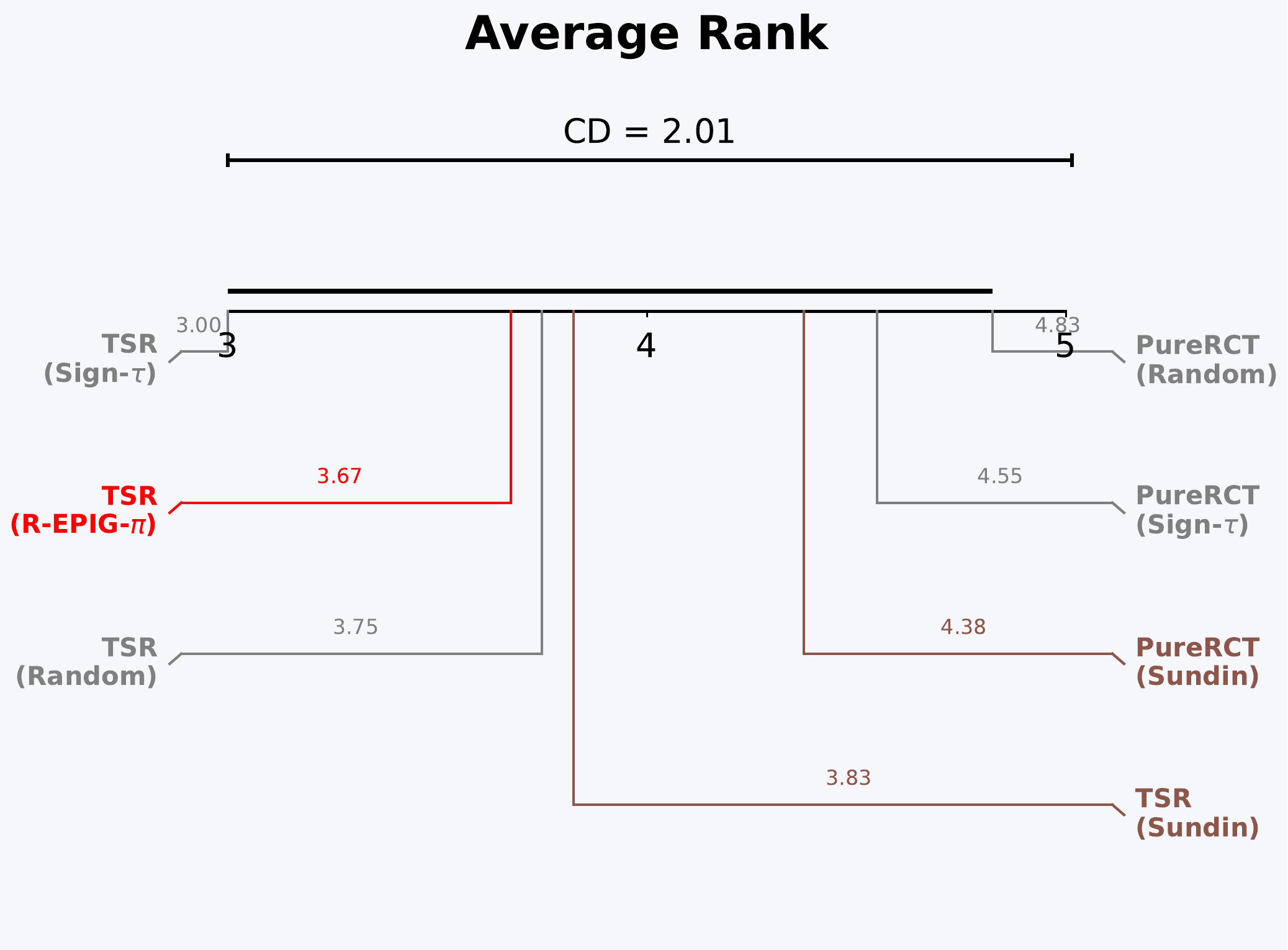}
    \end{minipage}
    \begin{minipage}{0.24\linewidth}
        \centering
        \includegraphics[width=\linewidth]{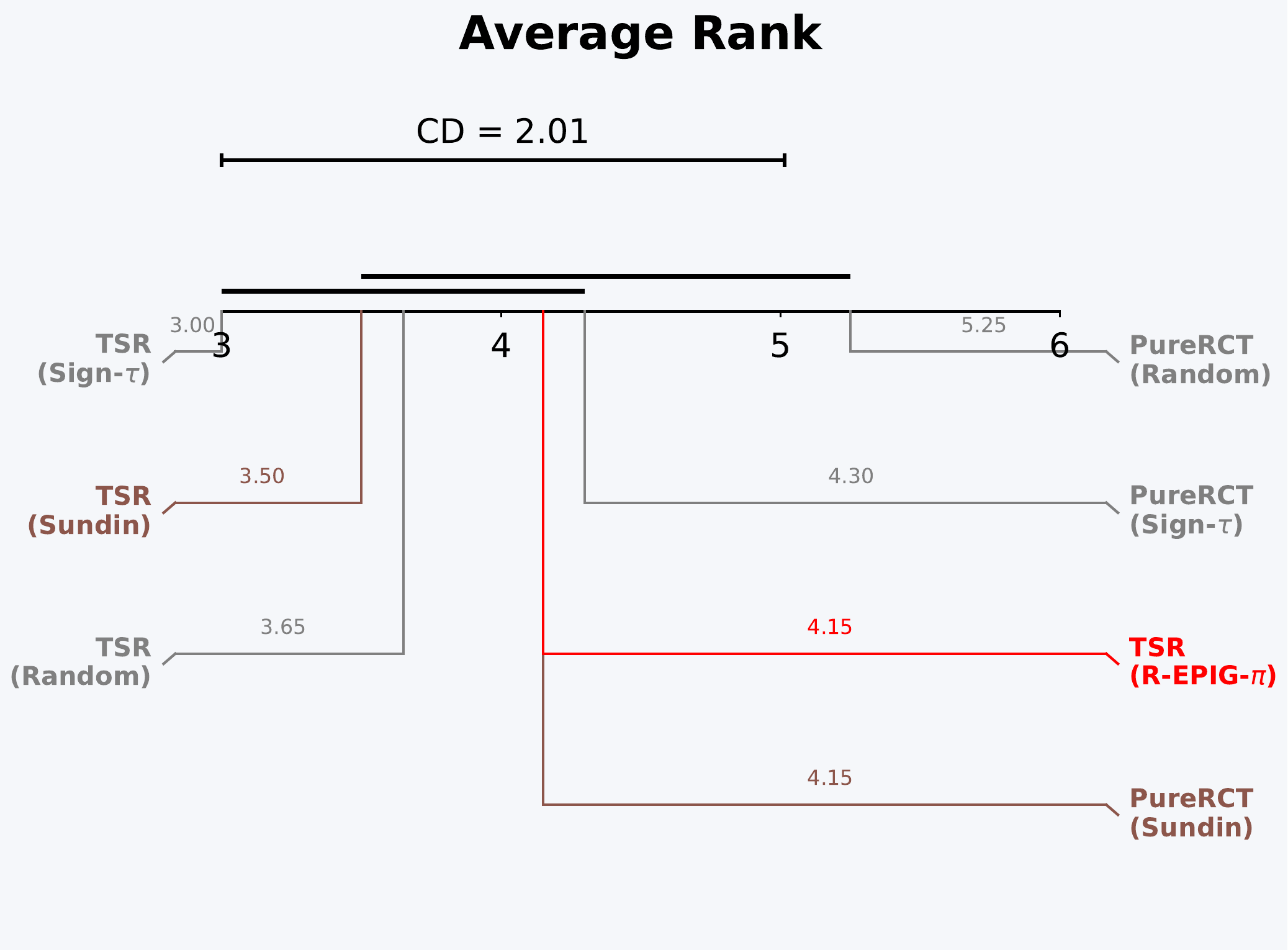}
    \end{minipage}
    \begin{minipage}{0.24\linewidth}
        \centering
        \includegraphics[width=\linewidth]{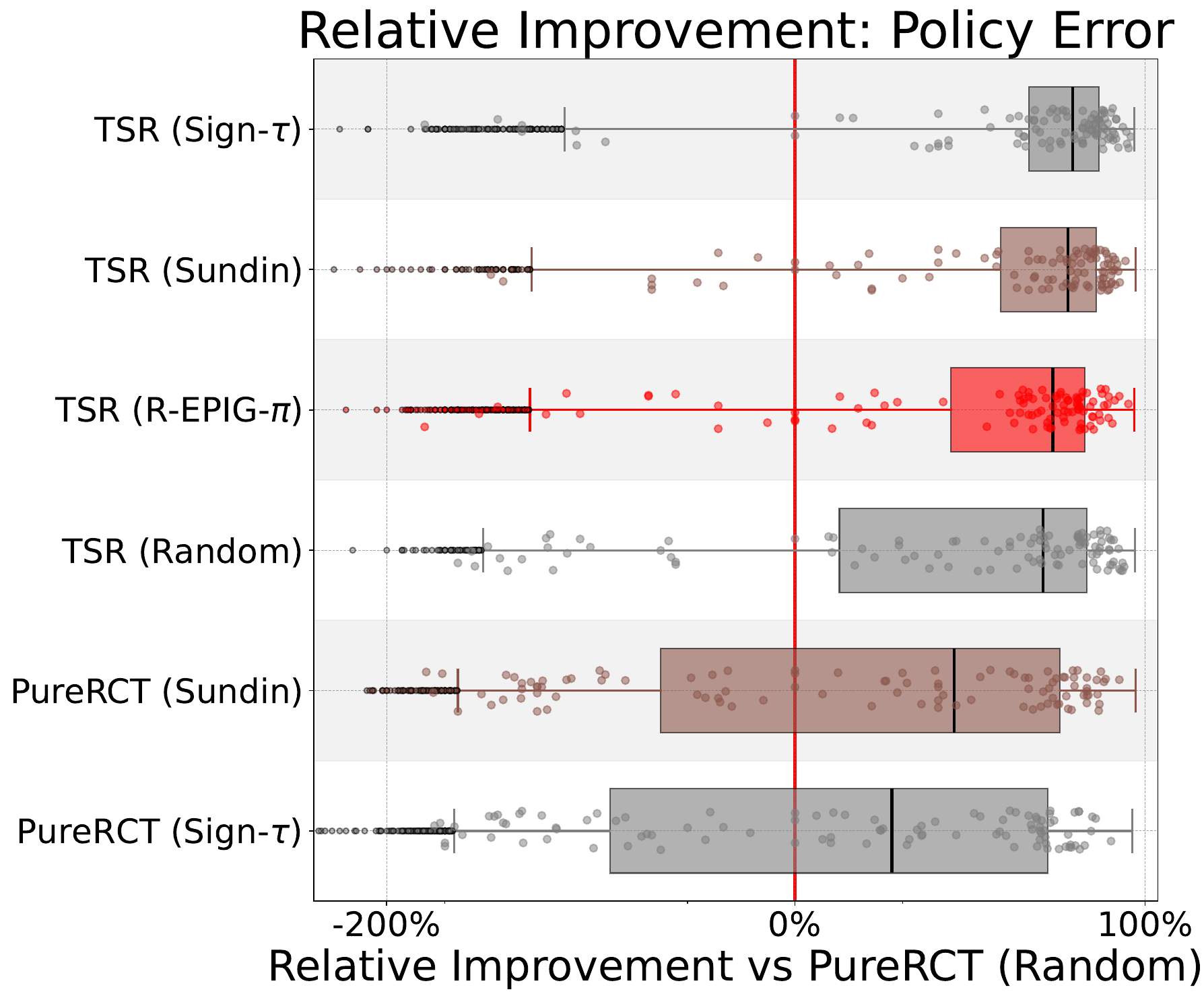}
    \end{minipage}
    \begin{minipage}{0.24\linewidth}
        \centering
        \includegraphics[width=\linewidth]{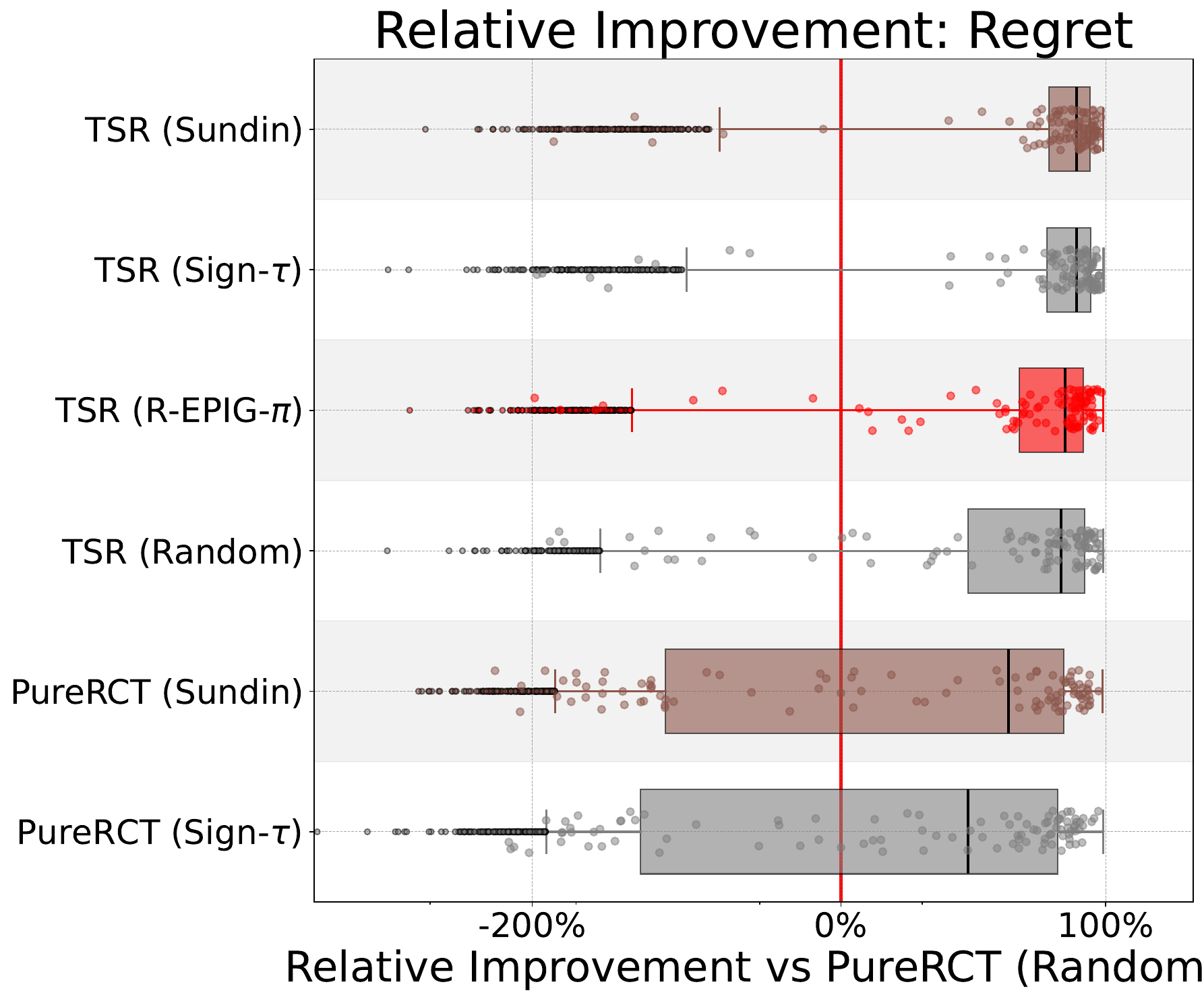}
    \end{minipage}
    \caption{Comparison of policy estimation performance on ACTG-175 dataset.}
    \label{fig:actg_dm}
\end{figure*}

%% file: Pages/Appendix/Discussions.tex
\section{Discussion: Joint Model or TSR?}
\label{app_sec:discussions}

\begin{figure*}[h]
    \centering   
    \begin{minipage}{0.7\linewidth}
        \centering
        \includegraphics[width=\linewidth]{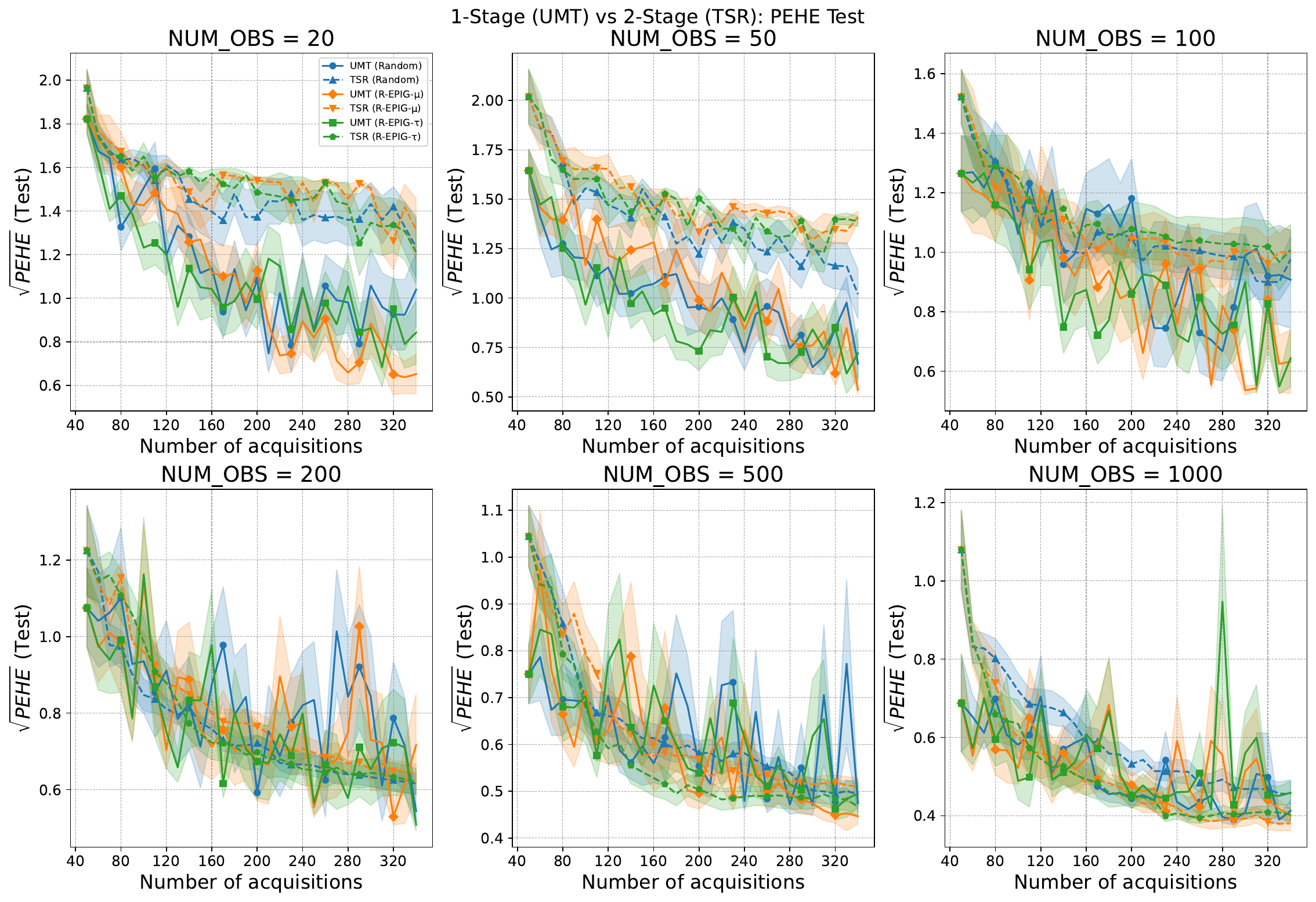}
    \end{minipage}
    \begin{minipage}{0.24\linewidth}
        \centering
        \includegraphics[width=\linewidth]{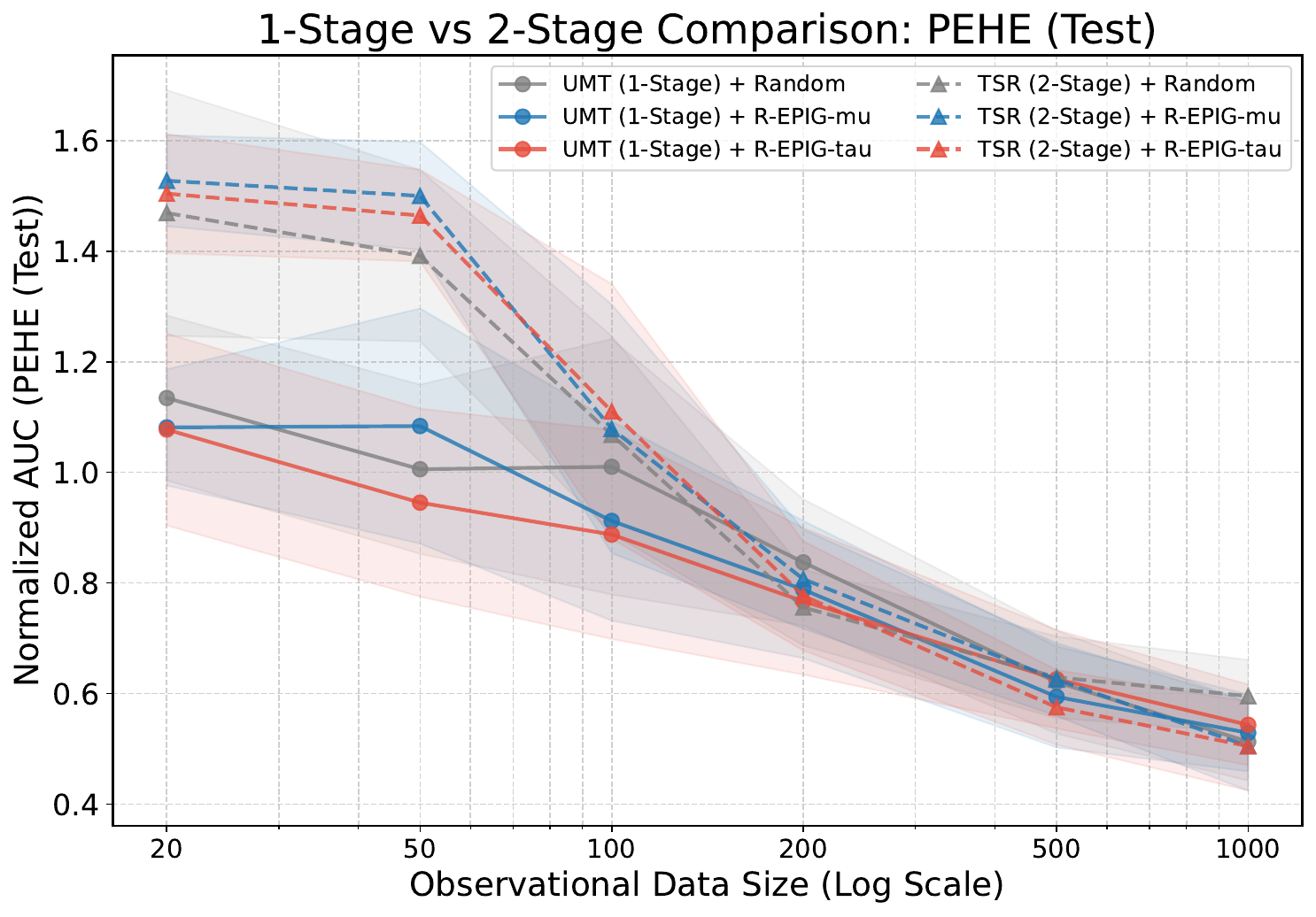}
    \end{minipage}
    \caption{Comparison of CATE estimation performance with two different strategies.}
    \label{fig:umt_and_tsr}
\end{figure*}

In the main text, we presented R-Design using the TSR framework. However, an alternative strategy exists: jointly modeling the observational and experimental data in a unified probabilistic framework, which follows the very similar trend as CausalICM~\citep{dimitriou2024data}. However, the CausalICM only model the correlation within each group, saying using two CMGPs for control and treatment group, respectively. This is for leveraging the information of large observational data to help predict the RCT data. In this part, we discuss a more general case that models all cared outputs together using one model and formally define this Unified Multi-Task (UMT) architecture and provide a comprehensive comparison with TSR regarding computational complexity, error propagation, and sample efficiency.

\subsubsection{The UMT Framework}

Unlike the sequential approach of R-Design, which freezes the observational model parameters before the stage-2 adaptive experiential design, the UMT approach treats the observational and experimental outcomes as outputs of a single, jointly trained Gaussian Process.

\textbf{Formulation.}
We construct a MTGP over the augmented input space $\tilde{\gX} = \gX \times \{0, 1\} \times \{s_o, s_e\}$, where $s$ denotes the data source. The joint model $f_{\text{joint}}(\vx, t, s)$ captures the correlations between the biased observational source and the unbiased experimental source. The kernel function is typically defined using an Intrinsic Coregionalization Model (ICM):
\begin{equation}
    k_{\text{joint}}((\vx, t, s), (\vx', t', s')) = k_{\text{input}}((\vx, t), (\vx', t')) \otimes \mathbf{B}_{s, s'},
\end{equation}
where $\mathbf{B}$ is a $2 \times 2$ coregionalization matrix capturing the covariance between sources $s_o$ and $s_e$.
In this framework, the active acquisition step involves:
1.  Training the global GP on $\gD_O \cup \gD_E^{(k)}$.
2.  Computing the acquisition score (e.g., R-EPIG) using the posterior of the experimental task $f(\cdot, \cdot, s_e)$.
3.  Updating the \textit{entire} model (including source correlations) upon observing a new sample $y_{new}$.

\subsubsection{Theoretical Comparison}

The choice between 1-Stage (UMT) and 2-Stage (TSR) involves a fundamental trade-off between information flow and computational tractability.

\textbf{1. Information Flow and Error Propagation.}
\begin{itemize}[leftmargin=*]
    \item \textbf{UMT (1-Stage):} Allows for bidirectional information flow. Experimental data can retrospectively update the belief about the observational parameters. This is advantageous in \textbf{low-data regimes}, where the observational dataset $\gD_O$ is too small to form a reliable prior. In such cases, joint training prevents the "frozen error" problem, where a poor Stage 1 model permanently handicaps the residual learner.
    \item \textbf{TSR (2-Stage):} Imposes a hard structural constraint. By fixing $\hat{\mu}_o$, it forces the Stage 2 model to explain all discrepancies as residuals. This isolates the experimental signal, preventing it from being overwhelmed by the observational likelihood when $n_O$ is large.
\end{itemize}

\textbf{2. Computational Complexity.}
This is the critical bottleneck for Active Learning.
\begin{itemize}[leftmargin=*]
    \item \textbf{UMT:} The computational cost scales cubically with the \textit{total} data size: $\gO((n_O + n_E)^3)$. For large observational studies (e.g., $n_O > 10^4$), re-training the joint model and computing acquisition scores at every step is computationally prohibitive.
    \item \textbf{TSR:} The active loop scales only with the experimental data size: $\gO(n_E^3)$. The massive observational data is processed once offline. This makes TSR scalable to large-scale real-world settings.
\end{itemize}

\subsubsection{Empirical Analysis}

We validate these theoretical insights through an ablation study comparing UMT and TSR. We vary the observational data size $n_O$ while keeping the experimental budget fixed, reporting the PEHE performance.

\begin{table*}[h]
    \centering
    \caption{Comparison of normalized PEHE (Test) AUC for 1-Stage (UMT) and 2-Stage (TSR) methods across varying observational data sizes ($n_O$). Both methods utilize the R-EPIG-$\mu$ acquisition function.}
    \label{tab:1vs2}
    \begin{tabular}{lcccccc}
        \toprule
        \textbf{Method} & \multicolumn{6}{c}{\textbf{Observational Data Size ($n_O$)}} \\
        \cmidrule(lr){2-7}
        & $n_O=20$ & $n_O=50$ & $n_O=100$ & $n_O=200$ & $n_O=500$ & $n_O=1000$ \\
        \midrule
        UMT (1-Stage) & \textbf{1.08} & \textbf{1.08} & \textbf{0.91} & \textbf{0.79} & \textbf{0.59} & 0.53 \\
        TSR (2-Stage) & 1.53 & 1.50 & 1.08 & 0.81 & 0.63 & \textbf{0.50} \\
        \bottomrule
    \end{tabular}
\end{table*}

\textbf{Discussion of Results.}
The results in Tab.~\ref{tab:1vs2} align perfectly with our theoretical analysis:

\begin{enumerate}[leftmargin=*]
    \item \textbf{Low-Data Regime ($n_O < 200$):} The UMT approach significantly outperforms TSR. When observational data is scarce, the Stage 1 estimator in TSR is highly inaccurate. Since TSR freezes this "garbage" model, the Stage 2 learner faces the difficult task of learning a highly complex, high-variance residual surface to compensate. UMT avoids this by jointly smoothing both sources, leveraging the experimental data to stabilize the observational estimates.
    
    \item \textbf{Transition and Crossover:} As $n_O$ increases, the Stage 1 model in TSR becomes reliable, and the performance gap narrows rapidly. The "residual" becomes simpler and smoother (matching the structural assumption of Lemma~\ref{lemma:complexity_decomp_hoelder}).
    
    \item \textbf{High-Data Regime ($n_O \ge 1000$):} A crossover occurs at $n=1000$, where TSR (0.50) surpasses UMT (0.53). This highlights the "Bias Isolation" advantage of the 2-Stage architecture. In the joint model, as $n_O$ becomes dominant, the likelihood function may prioritize fitting the massive observational data over the sparse experimental points, potentially diluting the bias correction. TSR, by treating the observational model as a fixed anchor, ensures that the experimental budget is strictly dedicated to refining the difference, yielding superior asymptotic performance when a strong prior is available.
\end{enumerate}

\textbf{Conclusion.} While UMT is robust in data-scarce settings, R-Design (TSR) is the optimal choice for the problem setting of this paper, where we assume access to a large, pre-existing observational dataset ($n_O \gg n_E$). TSR offers superior scalability and asymptotic precision by effectively decoupling bias estimation from residual correction.